\patchcmd{\@chapter}{\addtocontents{lof}{\protect\addvspace{10\p@}}}{}{}{}
\patchcmd{\@chapter}{\addtocontents{lot}{\protect\addvspace{10\p@}}}{}{}{}
\theoremstyle{theorem}
\newtheorem{lemma}{Lemma}[chapter]
\newtheorem{proposition}{Proposition}[chapter]
\newtheorem{theorem}{Theorem}[chapter]
\theoremstyle{definition}
\newtheorem{descrip}{Description}[chapter]
\newtheorem{example}{Example}[chapter]
\newtheorem{remark}{Remark}[chapter]
\theoremstyle{plain}
\newtheorem{definition}{Definition}[chapter]
\def\cpar{\hss\egroup\line\bgroup\hss}
\title{Investigating Graph Neural Networks and Classical Feature-Extraction Techniques in Activity-Cliff and Molecular Property Prediction}   
\author{Markus Ferdinand Dablander}             
\begin{document}

\baselineskip=18pt plus1pt

\setcounter{secnumdepth}{3}
\setcounter{tocdepth}{3}

\maketitle                  
\include{acknowledgements}  

$\text{}$
\newpage
\begin{acknowledgements}
	First and foremost, I would like to say thank you to my scientific supervisors Prof.~Garrett M.~Morris, Prof.~Renaud Lambiotte, and Dr~Thierry Hanser, who always took the time to have deep discussions, who supported me in my years of doctoral research to explore novel ideas, and who encouraged me in the right moments to push forward.
	
	I would like to thank Garrett for sharing with me his vast knowledge of the cheminformatics literature, teaching me important chemistry facts, providing exceptionally detailed and dedicated feedback on my work, helping me to navigate all the formal milestones of my doctoral journey, and always making scientific meetings with me and his other doctoral students a priority. Throughout the years, I have immensely benefitted from Garrett's knowledge and have more than once felt the genuine care he actively shows for the well-being of his students.
	
	I would like to thank Renaud for offering his profound and rigorous mathematical expertise, continually encouraging me throughout the ups and downs of my research, reliably finding kind and motivational words for me during the most intensive parts of my doctoral studies, and sharing his insights with me about how to develop an effective and creative mindset for advanced research. One of the most impactful pieces of advice I received from Renaud during my early days as a doctoral student was that the best ideas often come when working on something concrete.

	I would like to thank Thierry for consistently giving precise advice on the deepest technical aspects of my work, supporting me to find the right research direction whenever I was facing a difficult crossroad, cordially welcoming me into his industrial research group when I visited him at my partner company Lhasa, and always being interested in talking about new and fascinating ideas from science and beyond. I will warmly remember the long in-person conversations I had with Thierry that did not only revolve around my scientific work but also around philosophy, cosmology, consciousness, history, music, technological progress and the future of artificial intelligence.
	
	In addition to thanking my scientific supervisors, I would like to express further gratitude to: The InFoMM CDT directors, Chris Breward and Colin Please, for granting me the privilege to do research at the Mathematical Institute and for guiding me and many other students through our formal doctoral path. The EPSRC~(EP/L015803/1) as well as my industrial partner Lhasa for their financial support. Mansfield College for enriching my life at Oxford with a great community, engaging events and a beautiful place to retreat. Stéphane Werner and Jean-Francois Marchaland from Lhasa for always being approachable and staying up to date with my scientific work. My mother and my father for the sacrifices they made without which I would have never been able to become a mathematician. My grandparents and my aunt for their continuous loving support. My brother for always having my back and believing in me. My sister-in-law and my little niece for brightening the mood after a long day. My cousin who is also a mathematician for going on long walks with me in the Austrian countryside to discuss science and life. My old friends from UCL, who have since scattered around the world but have still managed to preserve and foster the seed of our friendship. My new friends from Oxford who have accompanied me along my path. All my other friends from Austria, the UK, and elsewhere who have grown with me throughout the years. And finally, my girlfriend who has deeply shared with me all the highs and lows of my time as a doctoral student, and who has always been there for me with her whole heart.
	
	\thispagestyle{empty}

\end{acknowledgements}

$\text{}$
\newpage

\begin{abstract}
	
	Molecular featurisation refers to the process of transforming molecular data into numerical feature vectors. It is one of the key research areas in molecular machine learning and computational drug discovery. Recently, message-passing graph neural networks~(GNNs) have emerged as a novel method to learn differentiable features directly from molecular graphs. While such graph-based techniques hold great theoretical promise, further investigations are needed to clarify if and when they indeed manage to definitively outcompete classical molecular featurisations such as extended-connectivity fingerprints~(ECFPs) and physicochemical-descriptor vectors~(PDVs). 
	
	In this thesis, we systematically explore and further develop classical as well as graph-based molecular featurisation methods for two important tasks: the well-studied problem of molecular property prediction, in particular, quantitative structure-activity relationship~(QSAR) prediction, and the largely unexplored challenge of activity-cliff~(AC) prediction. We first give a mathematical description and critical analysis of PDVs, ECFPs and message-passing GNNs, with a focus on graph isomorphism networks~(GINs). We then conduct a rigorous computational study to compare the performance of PDVs, ECFPs and GINs for QSAR and AC-prediction. Following this, we mathematically describe a novel twin neural network model for AC-prediction and experimentally evaluate ECFP-based and GIN-based versions of this dual architecture. In an additional project, we introduce \textit{substructure pooling} as a general mathematical operation for the vectorisation of structural fingerprints that represents a natural counterpart to graph pooling in GNN architectures. We propose \textit{Sort \& Slice} as a simple substructure-pooling technique for ECFPs that robustly outperforms hashing at molecular property prediction. Finally, we outline two ideas for future research: (i) a graph-based self-supervised learning strategy to make classical molecular featurisations trainable, and (ii) trainable substructure-pooling via differentiable self-attention.

\end{abstract}

\newpage $\text{}$
\newpage

\begin{romanpages}          

\tableofcontents            

\listoffigures

\listoftables

\newpage
{\noindent \LARGE \textbf{List of Abbreviations}}
\newline
\begin{itemize}
	\itemsep-0.7em
	
	\item AC = Activity Cliff
	\item AUPRC = Area Under the Precision Recall Curve
	\item AUROC = Area Under the Receiver Operating Characteristic Curve
	\item ECFP = Extended-Connectivity Fingerprint
	\item GCN = Graph Convolutional Network
	\item GIN = Graph Isomorphism Network
	\item GNN = Graph Neural Network
	\item InChI = International Chemical Identifier
	\item kNN = k-Nearest Neighbour
	\item MAE = Mean Absolute Error
	\item MCC = Matthews Correlation Coefficient
	\item MIM = Mutual-Information Maximisation
	\item MLP = Multilayer Perceptron
	\item MMP = Matched Molecular Pair
	\item NFP = Neural Fingerprint
	\item PD = Potency Direction
	\item PDV = Physicochemical-Descriptor Vector
	\item QSAR = Quantitative Structure-Activity Relationship
	\item RF = Random Forest
	\item SAR = Structure-Activity Relationship
	\item SELFIE = Self-Referencing Embedded String
	\item SMILES = Simplified Molecular-Input Line-Entry System
	\item TPE = Tree-structured Parzen Estimator
	\item WL = Weisfeiler-Lehman
	
\end{itemize}

\newpage $\text{}$ \thispagestyle{empty}
\newpage
\end{romanpages}            

\newpage
\chapter[Introduction]{Introduction} \label{chap: intro}

How to best represent a chemical compound in computational form is one of the most fundamental questions in cheminformatics and molecular machine learning. Molecules are complex physical entities whose properties depend on a large number of intertwined factors such as the types of their involved atoms and bonds, their graph-theoretic connectivity structure, their three-dimensional geometry, and the quantum-mechanical behaviour of their associated electrons. Whether a given computational molecular representation is useful intricately depends on the task one wants to solve, on its required data format and on its level of abstraction.

In this thesis, we are interested in the representation of chemical compounds as real-valued feature vectors for the purpose of molecular machine learning. We focus on the process of \textit{molecular featurisation} which describes the computational transformation of a non-vectorial molecular representation (often a detailed string of symbols or mathematical graph that fully encodes the chemical composition and structure of an input compound) into an abstract representation as a numerical vector than can readily be processed by a machine-learning system. As we will see, the impact of the chosen molecular featurisation on the predictive performance of a machine-learning system is often profound.

In the field of computer vision, the advent of convolutional neural networks has led to dramatic performance breakthroughs in the last decade~\citep{krizhevsky2012imagenet, simonyan2014very, zeiler2014visualizing, szegedy2015going, he2016deep}. These breakthroughs were caused by the remarkable ability of convolutional architectures to automatically extract relevant high-level features directly from raw visual data. For machine-learning tasks on images, this has since led to an almost universal shift from traditional expert-based \textit{feature engineering} towards differentiable and automatic \textit{feature learning}. This can be seen in contrast to the area of molecular machine learning, where trainable feature learning techniques that operate directly on molecular graphs without intermediate feature-engineering steps have only emerged in the last few years and have still not managed to conclusively outperform classical non-trainable featurisation methods in a variety of studies~\citep{stepivsnik2021comprehensive, mayr2018large, jiang2021could, wang2021molclr, menke2021using, chithrananda2020chemberta, sabando2021using, winter2019learning}. For machine learning on molecular data, breakthroughs comparable to the ones caused by the feature-extraction capabilities of convolutional neural networks in computer vision remain elusive.

For chemical prediction tasks, molecules have classically been represented via physicochemical-descriptor vectors~(PDVs)~\citep{todeschini2008handbook} or via structural fingerprints such as extended-connectivity fingerprints~(ECFPs)~\citep{bajusz2017fingerprints, rogers2010extended}. Both featurisation types rely on non-trainable algorithms to compute specific properties of an input molecule. In the case of physicochemical descriptors, these properties are typically high-level characteristics of an input compound such as the predicted molecular partition coefficient $\log(P)$ that quantifies lipophilicity; in the case of structural fingerprints, these properties are frequently binary features that indicate the presence or absence of certain chemical substructures. The computed properties are collected in numerical feature vectors that can be subsequently used for a downstream machine-learning task. While certain descriptors and fingerprints can lead to considerable predictive performance~\citep{stepivsnik2021comprehensive, mayr2018large, jiang2021could, menke2021using}, they also have some serious drawbacks. The most significant of these is that the algorithmic transformation of a detailed molecular representation into a mere vector of selected structural or physicochemical properties can lead to a loss of crucial chemical information. Classical descriptor and fingerprint-based featurisations thus form an information bottleneck which separates detailed molecular data on one side from its computationally processed form on the other side.

Recently, message-passing graph neural networks~(GNNs)~\citep{gilmer2017neural, kipf2016semi, kearnes2016molecular,duvenaud2015convolutional, hu2019strategies,yang2019analyzing, wu2020comprehensive, wieder2020compact, li2015gated,battaglia2016interaction,defferrard2016convolutional,liu2019chemi} have entered the field of cheminformatics as a potential solution for this limitation. GNNs are trainable deep-learning architectures that allow the differentiable extraction of continuous features directly from molecular graphs. These graphs can fully specify the connectivity structure and chemical composition of a molecule; even simple stereochemical properties such as tetrahedral R-S chirality and E-Z double-bond geometry can easily be encoded in the atom and bond features of molecular graphs. Such graphs thus form highly explicit and detailed molecular representations. Learning differentiable features directly from molecular graph structures holds the promise of overcoming potential information bottlenecks during molecular featurisation. Unfortunately, GNNs come with their own set of technical challenges: Some popular GNN models lack theoretical expressivity and thus cannot learn to distinguish certain simple graph structures~\citep{xu2018powerful}; many GNN architectures cannot be made deep due to a tendency of successively convolved node features to become indistinguishable~\citep{liu2020towards}; GNNs have to learn a reasonable embedding of chemical space from scratch every time they are trained on a novel task; almost all current GNN models are based on a local neighbourhood-aggregation scheme~\citep{gilmer2017neural} which hinders information flow between distant graph nodes; and GNNs require a global pooling step to eventually reduce the graph to a vector that can potentially form a dangerous information bottleneck in and of itself~\citep{navarin2019universal}. A variety of studies have found GNNs to exhibit superior predictive performance compared to the simpler and more computationally efficient fingerprint and descriptor-based featurisations~\citep{duvenaud2015convolutional, gilmer2017neural,yang2019analyzing, rao2022quantitative, hop2018geometric, shang2018edge, li2017learning, xiong2019pushing}; however, other experiments have pointed towards the exact opposite~\citep{stepivsnik2021comprehensive, mayr2018large, jiang2021could, wang2021molclr, menke2021using, chithrananda2020chemberta, sabando2021using, winter2019learning, van2022exposing}. The superiority of GNNs over classical techniques thus remains questionable and requires more investigation.

In this work, we aim to explore and further develop classical as well as graph-based molecular featurisation methods with a focus on two important challenges in computational drug discovery: the canonical problem of molecular property prediction, in particular quantitative structure-activity relationship~(QSAR) prediction, and the largely unexplored task of activity-cliff~(AC) prediction. Molecular property prediction encompasses a diverse number of tasks such as the prediction of lipophilicity, aqueous solubility, toxicity, or mutagenicity of a chemical compound. QSAR-prediction represents a special case of molecular property prediction and refers to the problem of predicting the biological activity of a compound with respect to a given pharmacological target from its chemical structure. QSAR modelling often takes the form of predicting the binding affinity of a molecule for a specific protein target such as an enzyme or a receptor. ACs refer to pairs of molecular compounds whose chemical structures only differ by a small change at a specific site but which exhibit an unexpectedly high difference in their binding affinity for a given pharmacological target~\citep{silipo1991qsar, maggiora_outliers_2006, sheridan_experimental_2020, cruz-monteagudo_activity_2014, stumpfe_recent_2014, stumpfe_evolving_2019, stumpfe_advances_2020}. AC-prediction does not only refer to the classification whether a given compound pair forms and AC or not but usually also implicitly encompasses the prediction of the potency direction~(PD) of the pair (i.e.~which of both compounds is more active). Molecular property prediction, QSAR-prediction, and AC-prediction represent three important challenges of high relevance to the \textit{in silico} identification and optimisation of novel pharmacological compounds. In particular, our work aims to contribute to the development of computational tools that can accurately predict important properties of novel compounds and whether they will bind tightly to a biological target. This could help to tackle one of the central questions in drug discovery: what should a medicinal chemist synthesise next?

\section{Outline of Thesis and Research Contributions}

The rest of thesis is organised as follows:

\begin{itemize}

\item In Chapter~\ref{chap: mol_reps} we first introduce SMILES strings and molecular graphs which both constitute detailed molecular representations that fully encode the chemical composition and structure of an input compound. We then give a technical introduction to PDVs, ECFPs and GNNs as the three most frequently used molecular featurisation methods in the current literature. We critically discuss all three featurisations and contrast their respective strengths and weaknesses. In the case of GNNs, we also shortly discuss the issue of theoretical expressivity in mathematical terms and give a description of the graph isomorphism network~(GIN) as perhaps the simplest and most prototypical GNN in the $1$-Weisfeiler-Lehman class.

\item In Chapter~\ref{chap: qsar_ac_study} we conduct a rigorous computational study to explore the relative performance of PDVs, ECFPs and GINs for QSAR and AC-prediction. We investigate nine separate QSAR models by integrating each of the three studied featurisations with a random forest, a multilayer perceptron, and a k-nearest neighbour regressor. We use each model to classify whether a compound pair forms an AC, to classify which compound in the pair is more active, and to predict the binding affinities of individual molecules for three pharmacological targets: dopamine receptor D2, factor Xa, and SARS-CoV-2 main protease. We further develop a novel pair-based data-splitting strategy for the evaluation of distinct AC-prediction scenarios. 

Our results strongly support the hypothesis that QSAR models frequently fail to predict ACs. We observe low AC-sensitivity when the activities of both compounds are unknown, but a substantial increase in AC-sensitivity when the actual activity of one of the compounds is given. GIN features are found to be competitive with or superior to classical molecular featurisations for AC-classification. For general QSAR-prediction, however, ECFPs still consistently deliver the best performance amongst the tested featurisations.

Our study represents the first work that investigates the capabilities of QSAR models to classify between ACs and non-ACs. Moreover, it provides additional evidence that standard message-passing GNNs may need to be improved further to robustly outperform classical ECFPs. Our study results have been published as a \href{https://doi.org/10.1186/s13321-023-00708-w}{peer-reviewed research paper}~\citep{dablander2023exploring} in the Journal of Cheminformatics and as a \href{http://dx.doi.org/10.13140/RG.2.2.35914.34241}{scientific poster}~\citep{dablander2023exploringqsaracpredposter} at the 10th International Congress on Industrial and Applied Mathematics (ICIAM 2023, Tokyo).

\item In Chapter~\ref{chap: twin_net_ac_pred} we design and evaluate a novel twin neural network model for AC-prediction. We provide a mathematical description of our proposed dual architecture along with proofs of its built-in symmetry properties. We then conduct computational experiments to compare the AC-prediction performance of the twin architecture and the strongest QSAR models from the previous chapter on a data set of SARS-CoV-2 main protease inhibitors. We explore four distinct molecular featurisations for the twin model, based on either ECFP, GINs, or two transfer learning techniques we developed.

Our experiments suggest that the twin architecture outperforms standard QSAR models at AC-prediction in a variety of scenarios. In particular, twin networks appear to strike a superior balance between AC-sensitivity and AC-precision which increases their practical utility.

To the best of our knowledge, this work represents the first application of twin neural networks to AC-prediction and the first investigation of a novel AC-prediction technique that includes important control experiments based on standard QSAR models. We have presented our twin architecture as a \href{http://dx.doi.org/10.13140/RG.2.2.18137.60000}{scientific poster}~\citep{dablander2021siamese} at the 4th RSC-BMCS / RSC-CICAG Artificial Intelligence in Chemistry Symposium (2021, virtual) where we were subsequently awarded the prize for the best scientific poster.

\item In Chapter~\ref{chap: ecfps_sort_and_slice} we introduce a mathematical operation called \textit{substructure pooling} that formalises the transformation of sets of substructures (i.e.~unordered set representations of structural fingerprints) into numerical vectors. We draw an analogy between substructure pooling for structural fingerprints and global graph pooling for GNN architectures. However, unlike global graph pooling, substructure pooling remains largely unexplored. We show that the standard hashing procedure for the vectorisation of ECFPs is a form of substructure pooling. We go on to describe a straightforward alternative to hashing for ECFP vectorisation called \textit{Sort \& Slice}. This technique is simply based on first sorting all detected circular substructures according to their frequency of appearance in the training set and then only accepting the most frequent substructures into the final vectorial fingerprint. This naturally leads to a higher interpretability than hashing due to an absence of bit collisions in the final vector representation.

Under reasonable theoretical assumptions that hold approximately true for real-world data sets, we mathematically prove that Sort \& Slice only selects the most informative substructures from an entropic point of view. Furthermore, we demonstrate via a set of strict computational experiments that Sort \& Slice robustly leads to higher predictive performance than hashing as well as two advanced supervised substructure selection schemes for molecular property prediction across a large number of settings. In particular, we observe a predictive advantage of Sort \& Slice over hashing across varying data sets, data splitting techniques, machine-learning regressors, and ECFP hyperparameters. This advantage seems to increase with the expected number of bit collisions in the hashed ECFP.

Based on our current knowledge, the work in this chapter constitutes the first study that robustly demonstrates the existence of a technically simple and more interpretable alternative to hashing for the vectorisation of ECFPs that leads to higher predictive performance at supervised molecular property prediction. We have only recently become aware of one other work~\citep{macdougall2022reduced} that has investigated a technique similar to Sort \& Slice which we acknowledge. We intend to submit the results in this chapter for publication in the near future.

\item In Chapter~\ref{chap: future_research} we shortly present two potential ideas for future research in the domain of molecular featurisation: 

Firstly, we describe a self-supervised graph-based learning strategy that can intuitively be interpreted as a method to make classical molecular featurisations differentiable and trainable. Our central idea involves first pre-training a modern GNN to predict classical featurisations such as ECFPs or PDVs directly from molecular graphs and then fine-tuning the GNN on a supervised task. 

Secondly, we outline a trainable substructure-pooling method based on a differentiable self-attention mechanism that might enable the learning of contextual substructural features. This technique could for instance be combined with MACCS fingerprints or ECFPs. It could potentially provide a useful inductive bias for the learning of task-specific compound-level featurisations that depend not only on individually present substructures but also on their interactions.

\item In Chapter~\ref{chap: conclusions} we give some final conclusions and further thoughts on our work.

\end{itemize}

\newpage $\text{}$ 
\newpage
\chapter[Molecular Representations and Featurisations for Machine Learning]{Molecular Representations and Featurisations for Machine Learning} \label{chap: mol_reps}

\section{Overview} \label{sec: mol_reps_overview}

In its most general interpretation, \textit{molecular representation} refers to the problem of describing a molecular compound in an abstract machine-readable format for the purpose of computational processing. In this Chapter, we focus on three of the most common types of molecular representation:
\begin{enumerate}
	\item \textbf{Graph-based representation methods} that represent molecules as mathematical graphs with numerical node and edge features.
	\item \textbf{String-based representation methods} that represent molecules as linear chains of symbols.
	\item \textbf{Feature-based representation methods} that represent molecules as vectors of real numbers.
\end{enumerate}
There are other types of molecular representations, including image-based representations~\citep{goh2017chemception,yoshimori2021prediction,iqbal_prediction_2021} and three-dimensional coordinate-based representations~\citep{schrier2020can,zhou2022unimol}. However, the above three categories cover a large part of modern use cases in cheminformatics and currently dominate the field of molecular machine learning. 

The key advantage of most non-feature-based representations like graphs or strings is that they allow for highly explicit representations of molecular compounds. Such representations are generally designed to fully encode the chemical structure of real molecules and as a result contain high levels of information. However, standard machine learning techniques such as random forests and k-nearest neighbours cannot directly utilise this rich information content because they are only able to process feature-based representations that consist of vectors of real numbers. One of the central challenges for machine learning in chemistry is thus the developments of \textit{molecular featurisation methods}. 
\begin{definition}[Molecular Featurisation]
A mathematical or algorithmic technique that turns a non-feature-based molecular representation $\mathcal{R}$ (such as a graph or a string) into a feature-based representation $\mathcal{F}$,
$$\mathcal{R} \mapsto \mathcal{F} = (f_1,...,f_l) \in \mathbb{R}^{l},$$
is called a molecular featurisation method.
\end{definition}
The usual purpose of molecular featurisation is to encode useful predictive information within the feature vector $\mathcal{F}$ for a downstream molecular machine learning task such as activity prediction. The chosen molecular featurisation method is often the key ingredient that determines the relative success or failure of a predictive algorithm. Current molecular featurisation methods can be subdivided into two distinct categories: \textit{trainable-} and \textit{non-trainable}. Non-trainable methods are based on fixed non-differentiable algorithms that compute specific predefined properties of an input molecule. The most widely-used examples in this category are physicochemical-descriptor vectors and structural fingerprints. Trainable methods, on the other hand, are based on recently developed deep-learning architectures that can learn to extract molecular features in a differentiable and task-specific manner. At present, message-passing GNNs that operate on top of molecular graphs are the most widely-used method in this class.

We start off this chapter by describing molecular graphs as mathematical representations of chemical compounds. We then go on to discuss SMILES strings which are the prevalent string-based molecular representation in cheminformatics. Finally, we give a technical description and critical discussion of the three most important molecular featurisation methods in the current literature: physicochemical descriptors~\citep{todeschini2008handbook}, extended-connectivity fingerprints~\citep{rogers2010extended} and message-passing GNNs~\citep{gilmer2017neural}. In recent years, all three of these competing techniques have led to state-of-the-art results in a wide variety of molecular machine learning tasks~\citep{stepivsnik2021comprehensive, mayr2018large, jiang2021could, wang2021molclr, menke2021using, chithrananda2020chemberta, sabando2021using, winter2019learning,duvenaud2015convolutional,kearnes2016molecular,gilmer2017neural,yang2019analyzing, wieder2020compact, liu2019chemi}.

\section{Molecular Graphs} \label{sec: mol_graphs}

The molecular graph of a chemical compound is a formal representation of its chemical structure via the language of mathematical graph theory.
\begin{definition}[Molecular Graph] 
Let $\mathcal{M}$ be a molecule composed of $n$ atoms and let 
$$A = \{a_1, ..., a_n\} $$
be a mathematical set of elements representing these atoms. Furthermore, let
$$ B = \{\{a, \tilde{a}\} \ \vert \ a,\tilde{a} \in A  \textit{ and there is a chemical bond between } a \text{ and } \tilde{a} \text{ in } \mathcal{M} \} $$
be a set of unordered pairs of elements of $A$ which describe existing chemical bonds between the atoms in $\mathcal{M}$. Then the pair $\mathcal{G} = (A, B)$ is called the molecular graph of~$\mathcal{M}$. If all hydrogen atoms are removed from $A$ before constructing $\mathcal{G}$, then $\mathcal{G}$ is called hydrogen-depleted.
\end{definition}

Those familiar with the field of graph theory will recognise a molecular graph as a connected, undirected, unweighted graph without self-loops and without multiple edges. The idea to view molecules as abstract networks of linked entitites has deep historical roots and was already employed in the 19th century by the British mathematician Arthur Cayley~\citep{biggs1986graph}. Using the hydrogen-depleted version of a molecular graph usually reduces the number of involved atoms significantly while preserving the essential chemical connectivity pattern between the heavy atoms. Hydrogen-depleted molecular graphs are therefore routinely preferred over their complete counterparts, especially in settings where computational cost plays a role. In this work, we too exclusively use hydrogen-depleted molecular graphs unless otherwise specified.

In almost all cases, the pure structural information contained in the molecular graph is enriched via atom and bond feature vectors.
\begin{definition}[Atom and Bond Feature Vectors]
Let $\mathcal{G} = (A, B)$ be the molecular graph of a molecule $\mathcal{M}$ and let
$$f : A \to \mathbb{R}^k$$
be a function that assigns real-valued vectors in $\mathbb{R}^k$ to the atoms in $A$. Then $f$ is called an atom featurisation function. For each atom $a \in A$, the vector $f(a) \in \mathbb{R}^k$ is called its atom feature vector. Similarly, let
$$g : B \to \mathbb{R}^j$$
be a function that assigns real-valued vectors in $\mathbb{R}^j$ to the chemical bonds in $B$. Then $g$ is called a bond featurisation function. For each chemical bond $\{a,\tilde{a}\} \in B$ the vector $g(\{a,\tilde{a}\}) \in \mathbb{R}^j$ is called its bond feature vector. 
\end{definition}

\begin{table}[h]
	\centering
	{\renewcommand{\arraystretch}{1.4}
	\begin{tabular}{V{3} p{6.2cm}|p{5.1cm}|p{2.4cm} V{3}}
		
		\hlineB{3}
		\multicolumn{3}{V{3} c V{3}}{\textbf{Selected Atom and Bond Features for Molecular Graphs}} \\
		\hline 
		
		\textbf{Atom Feature} & \textbf{Encoding} & \textbf{Dimensions}\\
		\hline
		
		element type   & one-hot encoding    & 43 \\
		
		number of heavy neighbours   & one-hot encoding    & 6 \\
		
		number of hydrogen neighbours   & one-hot encoding    & 6 \\
		
		formal charge   & one-hot encoding    & 8 \\
		
		hybridisation type   & one-hot encoding    & 7 \\
		
		R-S chirality type   & one-hot encoding    & 4 \\
		
		is part of a ring   & binary integer    & 1 \\
		
		is aromatic   & binary integer    & 1 \\
		
		atomic mass   & min-max scaled real number    & 1 \\
		
		van der Waals radius   & min-max scaled real number     & 1 \\
		
		covalent radius   & min-max scaled real number     & 1 \\
		\hline
		\textbf{Bond Feature}  &   -   &  - \\
		\hline
		
		bond type   & one-hot encoding    & 4 \\
		
		E-Z double-bond geometry  & one-hot encoding    & 4 \\
		
		is part of a ring  & binary integer    & 1 \\
		
		is conjugated  & binary integer    & 1 \\
		
		\hlineB{3}

	\end{tabular}}
	
	\caption[Atom and bond features for molecular graphs.]{Overview of the chosen atom and bond features for the molecular graphs used in our experiments.}
	
	\label{tab: atom_bond_features}
\end{table}

The purpose of atom and bond feature vectors is to add extra physicochemical information to the molecular graph. At the very least, the atom feature vectors should allow for a distinction between different atomic numbers, i.e.~element types
$$\{\text{C, N, O, S, F, Si, Cl, Br,...}\}$$
and the bond feature vectors should allow for a distinction between different bond types
$$\{\text{single, double, triple, aromatic}\}.$$ 
However, the exact composition of atom and bond feature vectors beyond basic indications of atom and bond types can vary substantially from author to author. For example,~\citet{kearnes2016molecular} include tetrahedral R-S chirality in their atom feature vectors while~\citet{gilmer2017neural} do not. Such differences can make an objective comparison of results across studies challenging.~\citet{pocha2020comparison} have recently made a systematic effort to compare the effects of differing atom feature vectors on the performance of a popular GNN architecture, a graph convolutional network~(GCN)~\citep{kipf2016semi}, in several molecular property prediction tasks. They found that the overall influence of the chosen atom featurisation on downstream performance was modest. However, representations that included more atomic features (and thus more information) tended to deliver better results.

Motivated by this, we decided to extensively leverage the information-storing capacities of molecular graphs in our own work and employ $11$ atom and four bond features. An overview of the chosen features that were used throughout our experiments is given in Table~\ref{tab: atom_bond_features}. In our atom featurisation, we considered $42$ common element types of heavy atoms, alond with a generic \textit{other-element} type for atoms whose element types were not on our predefined list. Element types were thus represented via $43$-dimensional one-hot encoded vectors. Note that while we used the usual hydrogen-depleted form of the molecular graph, we still implicitly considered hydrogen-related information by including the number of hydrogen neighbours in the list of atom features.

\section{SMILES Strings} \label{sec: smiles}

The most frequently used method to store molecules in digital form is via Simplified Molecular-Input Line-Entry System (SMILES) strings~\citep{weininger1988smiles}. A SMILES string is a sequence of ASCII characters which is generated by a fixed set of rules in order to specify the chemical identity of an input molecule. The most common version of the SMILES string can be interpreted as a sequential encoding of a (hydrogen-depleted) molecular graph that takes into account atom and bond types. The SMILES string of a molecule is thus fully sufficient to construct a molecular graph equipped with basic atom and bond features. On the other hand, every molecular graph with basic atom and bond features can be used to generate a SMILES string.

\begin{descrip}[SMILES String]

A valid SMILES string can be obtained from a molecular graph by turning the graph into a spanning tree via breaking all existing cycles, and then printing out the (capitalised or lower case) atom symbols encountered in a depth-first traversal of the spanning tree. Branches are specified via parentheses while double and triple bonds are written using the symbols $\{=,\#\}$ respectively; the existence of aromatic bonds is either inferred from the context or is expressed via the use of lower case for the involved atoms. The SMILES algorithm for an example molecule is illustrated in Figure~\ref{fig:smiles_generation_example}. For the exact procedure we refer the reader to the original paper series of Weininger et al.~\citep{weininger1988smiles, weininger1989smiles, weininger1990smiles}. 
\end{descrip} 
\begin{figure}[!t]
	\centering
	\includegraphics[width=0.69\linewidth]{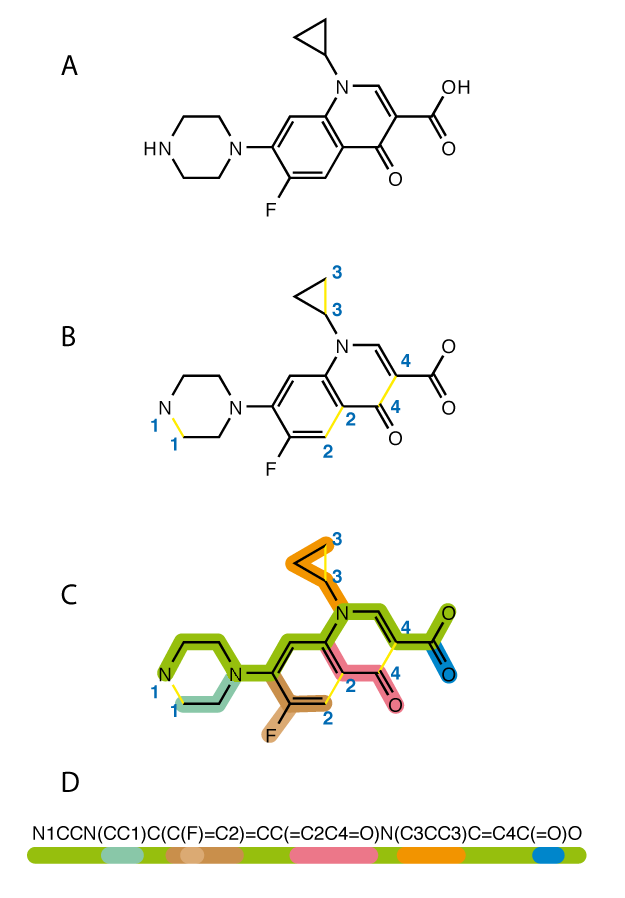}
	\caption[Generation of a SMILES string.]{Generation of a Simplified Molecular-Input Line-Entry System (SMILES) string from the molecular graph of the antibiotic molecule Ciprofloxacin. First the molecular graph is reduced to its hydrogen-depleted version. Then cycles are broken to turn the graph into a spanning tree. Finally, a depth-first traversal of the spanning tree (here starting with the leftmost nitrogen atom as a root) produces the SMILES string whereby branches are specified via parentheses. The integers in the SMILES string indicate which ring bonds were broken to produce the spanning tree and the equality signs indicate double bonds. Image source:~\citep{smileswikifigure2023}.}
	\label{fig:smiles_generation_example}
\end{figure}
The SMILES string of a molecule depends on the bonds chosen to break rings, the atom chosen as a starting point for the depth-first traversal of the spanning tree, and the order in which encountered branches are listed. Thus, a molecule can have many distinct and valid SMILES strings. However, canonicalisation algorithms exist that make the mapping from compound to SMILES string unambiguous. Basic SMILES strings do not contain explicit information about the three-dimensional positioning of the atoms in a molecule, but extensions of the SMILES system such as \textit{isomeric} SMILES strings exist which are able to express tetrahedral R-S chirality and E-Z double-bond geometry. Throughout our work, we exclusively use isomeric SMILES strings that include specifications for these two important stereochemical features.

The SMILES notation has become the default molecular representation in cheminformatics due to its relative simplicity, readability and expressivity. It also allows for the computational storage of large numbers of molecular graphs in a highly compressed and efficient format. However, the SMILES methodology does have some technical drawbacks. For example, since the SMILES string of a molecule is not unique, one has to arbitrarily commit to a fixed canonicalisation algorithms in order to guarantee the consistency of the SMILES representation across applications. In comparison, the international chemical identifier (InChI)~\citep{heller2013inchi} string which was initially developed by the International Union of Pure and Applied Chemistry~(IUPAC) is more expressive and is guaranteed to produce a unique string-label for every chemical structure. These advantages, however, come at the cost of reduced simplicity and readability. 

Another weakness of the SMILES notation is that a slight perturbation of symbols can easily lead to either a syntactically invalid expression or an impossible molecular structure that violates the basic laws of chemistry. This fragility may cause SMILES-based deep generative models trained for \textit{de novo} molecular design to output large fractions of broken and invalid SMILES strings; a challenge that can already be observed in the pioneering work of~\citet{gomez2018automatic}. To address this issue, more robust string representations such as the self-referencing embedded string~(SELFIE) notation~\citep{krenn2020self} and the DeepSMILES notation~\citep{oboyle2018} have recently been developed. These string systems have been specifically designed to facilitate the generation of valid molecules by deep generative models and may thus be preferrable to SMILES strings in this context.

\section{Physicochemical-Descriptor Vectors} \label{sec: pdvs}

\subsection{Mathematical Description}\label{subsec: pdvs_description}

Physicochemical descriptors~\citep{todeschini2008handbook} are likely the oldest and most established molecular featurisation method and play a fundamental role in classical cheminformatics. Such descriptors are able quantify an abundant number of important characteristics of a molecule such as lipophilicity, druglikeness, molecular refractivity, electrotopological state, molecular graph structure, fragment-profile, molecular charge, and molecular surface properties. Descriptor-based molecular featurisations have been used in QSAR-modelling~\citep{puzyn2010recent}, toxicity prediction~\citep{hong2008mold2}, virtual screening~\citep{xue2000molecular}, combinatorial chemistry~\citep{xue2000molecular}, the organisation of latent spaces of deep generative models~\citep{gomez2018automatic} by chemical properties of interest, and countless other areas of cheminformatics. In the context of our work, we formally define physicochemical-descriptor vectors~(PDVs) as follows:

\begin{definition}[Physicochemical-Descriptor Vector]
Let $\mathcal{R}$ be a non-feature-based molecular representation (such as a SMILES string or a molecular graph). A physico\-chemical-descriptor function is an algorithmic procedure $f^{\text{desc}}$ that transforms $\mathcal{R}$ into a real number $f^{\text{desc}}(\mathcal{R}) \in \mathbb{R}$ that quantitatively describes a physicochemical property of the input molecule. A vector $\mathcal{F}$ composed of the outputs of $l \in \mathbb{N}$ distinct physicochemical-descriptor functions,
$$ \mathcal{F} \coloneqq \mathcal (f_1^{\text{desc}}(\mathcal{R}), ..., f_l^{\text{desc}}(\mathcal{R})) \in \mathbb{R}^{l}, $$
is called a physicochemical-descriptor vector for the molecule represented by $\mathcal{R}$.
\end{definition}

Ideally physicochemical descriptors should be interpretable, conceptually simply, and computationally efficient to generate. Moreover, they should preferably give distinct values for distinct input molecules, not require experimental measurements, and vary continuously with continuous changes in molecular structure. Several thousand physicochemical descriptors have been developed in the last decades~\citep{consonni2010molecular} which fulfill these desired properties to varying degrees. Below we take a closer look at two well-known descriptors that have been frequently used throughout the literature.

\begin{example}[Predicted $\log(P)$] \label{ex: logP}
	The $1$-octanol-water partition coefficient $P$ of a molecule $\mathcal{M}$ describes the ratio of its concentrations in an (immiscible) mixture of $1$-octanol and water at equilibrium:
	$$ P \coloneqq \frac{[\mathcal{M}]_{\text{$1$-octanol}}}{[\mathcal{M}]_{\text{water}}} . $$
	Since $1$-octanol is a fatty alcohol, the decadic logarithm $\log(P)$ is used as a measure of the lipophilicity of $\mathcal{M}$. Lipophilicity in turn plays an important role in drug discovery since orally active drugs cannot be too lipophilic; the famous Lipinski's rule of five~\citep{lipinski1997experimental} states that a $\log(P)$ over $5$ is a risk factor for poor absorption or permeation of an oral drug.
	
	It is common to determine $\log(P)$ experimentally, but it can also be treated as a physicochemical descriptor that can be approximately calculated \textit{in-silico}. A simple method for the computational estimation of $\log(P)$ has been given by~\citet{wildman1999prediction} who employed a summation over the contributions of individual atoms:
	$$ \log(P) = \sum_{a \in A} c(f(a)). $$
	Here $A$ is a set containing symbolic representations of the atoms in $\mathcal{M}$. The function 
	$$f : A \to \{1, ..., 68\} $$
	assigns each atom to one of $68$ developed classes according to properties related to element type, neighbouring atoms and aromaticity. The function
	$$c : \{1, ..., 68\} \to \mathbb{R}$$
	then maps each atomic class to its additive contribution towards the overall $\log(P)$ of~$\mathcal{M}$. The number of atomic classes as well as the functions $f$ and $c$ were constructed and calibrated on the basis of a training set comprising of experimental $\log(P)$-data. 
	
	While it might seem simplistic at first sight to try to predict the lipophilicity of a compound merely via a sum over atomic contributions, this method has been shown to provide reasonably accurate and computationally fast $\log(P)$-estimations for many compounds. The intuitive motivation behind the summation of atomic contributions is that the $\log(P)$ may be approximately thought of as reflecting a balance between hydrophobic and hydrophilic parts of a compound whose contributions can be summed up. However, there are limitations to this strategy, and predictions might be less accurate for molecules that significantly deviate from the training set. The Wildmann-Crippen method has been implemented in the \texttt{Python} cheminformatics-library \texttt{RDKit}~\citep{landrum2006rdkit}. This implementation performs a series of substructure searches on an input compound (usually given via its SMILES string) to assign the appropriate atom classes used for the calculation of the $\log(P)$ estimate.

\end{example}

\begin{example}[Balaban Index] \label{ex: balaban_index}
	Let $\mathcal{M}$ be a molecule represented via an hydrogen-depleted molecular graph $\mathcal{G} = (A,B)$ with $\vert A \vert \coloneqq n$ atoms and $\vert B \vert \coloneqq m$ bonds and let $g : B \to \mathbb{R}$ be a scalar bond featurisation function that maps single bonds to $1$, double bonds to $1/2$, triple bonds to $1/3$ and aromatic bonds to $2/3$. For $a, \tilde{a} \in A$ let 
	$$d_g(a, \tilde{a}) \coloneqq \min \left\{ \sum_{\{a_1, a_2\} \in P} g(\{a_1, a_2\}) \ \vert \ P \text{ is  a shortest path between } a \text{ and } \tilde{a} \text{ in } B	\right\} $$
	be the minimal bond-order-weighted graph distance between $a$ and $\tilde{a}$ in $\mathcal{G}$ and let
	$$s_a  \coloneqq \sum_{\bar{a} \in A} d_g(a, \bar{a}). $$
	Then the Balaban index~\citep{balaban1982highly} of $\mathcal{M}$ is defined via
	$$J^{\text{bal}}(\mathcal{G}, g) \coloneqq \frac{m}{m-n+2} \sum_{\{a, \tilde{a}\} \in B} \frac{1}{\sqrt{s_a s_{\tilde{a}}}}. $$
	It may not be straightforward to interpret the quantity $J^{\text{bal}}(\mathcal{G}, g)$ in structural terms. However, it can be shown to be extraordinary useful in detecting the existence of structural differences: for two distinct molecular graphs $\mathcal{G}_1 \neq \mathcal{G}_2$ it almost always holds that $$J^{\text{bal}}(\mathcal{G}_1, g) \neq J^{\text{bal}}
	(\mathcal{G}_2, g).$$
	The feature $J^{\text{bal}}(\mathcal{G}, g)$ may in some cases encode useful structural information for a downstream prediction task and could then help machine-learning models to detect relevant differences even between very similar input molecules. The Balaban index of a compound can be calculated quickly from its SMILES string using \texttt{RDKit}~\citep{landrum2006rdkit}. 
	
\end{example}

For the machine-learning tasks in our work, we employed a $200$-dimensional PDV composed of a diverse selection of $l = 200$ descriptors that together give a robust overall physicochemical profile of a molecule. Our chosen descriptors are identical to the ones used in the study of~\citet{fabian2020molecular} who successfully used a self-supervised transformer-architecture to learn a useful latent embedding of chemical space via the prediction of physicochemical properties from SMILES strings of chemical compounds. Each selected descriptor can be computed quickly from the compound SMILES-string using the \texttt{Python} cheminformatics-package \texttt{RDKit}~\citep{landrum2006rdkit}. A full list of our chosen descriptors, which includes the predicted $\log(P)$ from Example~\ref{ex: logP} and the Balaban index from Example~\ref{ex: balaban_index}, can be found in Table~\ref{tab: physchem_descriptors_rdkit}.

\begin{table}
	\centering
	{\renewcommand{\arraystretch}{1}
		\begin{tabular}{V{3} p{14.6cm} V{3}}
			
			\hlineB{3}
			\multicolumn{1}{V{3} c V{3}}{\textbf{Selected} \textbf{\texttt{RDKit}} \textbf{Physicochemical Descriptors}} \\
			\hlineB{3}
			
			\footnotesize BalabanJ, BertzCT, Chi0, Chi0n, Chi0v, Chi1, Chi1n, Chi1v, Chi2n, Chi2v,
			Chi3n, Chi3v, Chi4n, Chi4v, EState\_VSA1, EState\_VSA10, EState\_VSA11,
			EState\_VSA2, EState\_VSA3, EState\_VSA4, EState\_VSA5, EState\_VSA6, EState\_VSA7,
			EState\_VSA8, EState\_VSA9, ExactMolWt, FpDensityMorgan1, FpDensityMorgan2,
			FpDensityMorgan3, FractionCSP3, HallKierAlpha, HeavyAtomCount, HeavyAtomMolWt,
			Ipc, Kappa1, Kappa2, Kappa3, LabuteASA, MaxAbsEStateIndex, MaxAbsPartialCharge,
			MaxEStateIndex, MaxPartialCharge, MinAbsEStateIndex, MinAbsPartialCharge,
			MinEStateIndex, MinPartialCharge, MolLogP, MolMR, MolWt, NHOHCount, NOCount,
			NumAliphaticCarbocycles, NumAliphaticHeterocycles, NumAliphaticRings,
			NumAromaticCarbocycles, NumAromaticHeterocycles, NumAromaticRings, NumHAcceptors,
			NumHDonors, NumHeteroatoms, NumRadicalElectrons, NumRotatableBonds,
			NumSaturatedCarbocycles, NumSaturatedHeterocycles, NumSaturatedRings,
			NumValenceElectrons, PEOE\_VSA1, PEOE\_VSA10, PEOE\_VSA11, PEOE\_VSA12, PEOE\_VSA13,
			PEOE\_VSA14, PEOE\_VSA2, PEOE\_VSA3, PEOE\_VSA4, PEOE\_VSA5, PEOE\_VSA6, PEOE\_VSA7,
			PEOE\_VSA8, PEOE\_VSA9, RingCount, SMR\_VSA1, SMR\_VSA10, SMR\_VSA2, SMR\_VSA3,
			SMR\_VSA4, SMR\_VSA5, SMR\_VSA6, SMR\_VSA7, SMR\_VSA8, SMR\_VSA9, SlogP\_VSA1,
			SlogP\_VSA10, SlogP\_VSA11, SlogP\_VSA12, SlogP\_VSA2, SlogP\_VSA3, SlogP\_VSA4,
			SlogP\_VSA5, SlogP\_VSA6, SlogP\_VSA7, SlogP\_VSA8, SlogP\_VSA9, TPSA, VSA\_EState1,
			VSA\_EState10, VSA\_EState2, VSA\_EState3, VSA\_EState4, VSA\_EState5, VSA\_EState6,
			VSA\_EState7, VSA\_EState8, VSA\_EState9, fr\_Al\_COO, fr\_Al\_OH, fr\_Al\_OH\_noTert,
			fr\_ArN, fr\_Ar\_COO, fr\_Ar\_N, fr\_Ar\_NH, fr\_Ar\_OH, fr\_COO, fr\_COO2, fr\_C\_O,
			fr\_C\_O\_noCOO, fr\_C\_S, fr\_HOCCN, fr\_Imine, fr\_NH0, fr\_NH1, fr\_NH2, fr\_N\_O,
			fr\_Ndealkylation1, fr\_Ndealkylation2, fr\_Nhpyrrole, fr\_SH, fr\_aldehyde,
			fr\_alkyl\_carbamate, fr\_alkyl\_halide, fr\_allylic\_oxid, fr\_amide, fr\_amidine,
			fr\_aniline, fr\_aryl\_methyl, fr\_azide, fr\_azo, fr\_barbitur, fr\_benzene,
			fr\_benzodiazepine, fr\_bicyclic, fr\_diazo, fr\_dihydropyridine, fr\_epoxide,
			fr\_ester, fr\_ether, fr\_furan, fr\_guanido, fr\_halogen, fr\_hdrzine, fr\_hdrzone,
			fr\_imidazole, fr\_imide, fr\_isocyan, fr\_isothiocyan, fr\_ketone, fr\_ketone\_Topliss,
			fr\_lactam, fr\_lactone, fr\_methoxy, fr\_morpholine, fr\_nitrile, fr\_nitro,
			fr\_nitro\_arom, fr\_nitro\_arom\_nonortho, fr\_nitroso, fr\_oxazole, fr\_oxime,
			fr\_para\_hydroxylation, fr\_phenol, fr\_phenol\_noOrthoHbond, fr\_phos\_acid,
			fr\_phos\_ester, fr\_piperdine, fr\_piperzine, fr\_priamide, fr\_prisulfonamd,
			fr\_pyridine, fr\_quatN, fr\_sulfide, fr\_sulfonamd, fr\_sulfone, fr\_term\_acetylene,
			fr\_tetrazole, fr\_thiazole, fr\_thiocyan, fr\_thiophene, fr\_unbrch\_alkane, fr\_urea,
			qed \\
			
			\hlineB{3}
			
	\end{tabular}}
	
	\caption[Physicochemical descriptors for PDVs.]{Overview of the $200$ distinct molecular descriptors selected for the $200$-dimensional physicochemical-descriptor vectors~(PDVs) used in our experiments. Each descriptor is named after its associated command in \texttt{RDKit}~\citep{landrum2006rdkit}. The descriptors are identical to the ones used in the work of~\citet{fabian2020molecular} and cover a broad spectrum of molecular properties related to lipophilicity, druglikeness, electrotopological state, molecular refractivity, molecular surface, molecular-graph structure, charge, and fragment count. }
	
	\label{tab: physchem_descriptors_rdkit}
\end{table}

Before using the PDV for any prediction task, we canonically derived the cumulative distribution function for each individual descriptor from the training set and used it to normalise the associated descriptor-values. The final PDV $\mathcal{F}$ was thus always contained in the hypercube $[0,1]^{200}$. This normalisation-step prevents certain machine-learning algorithms such as multilayer perceptrons from putting disproportionate attention on large-range features while ignoring small-range features.

\subsection{Critical View} \label{subsec: pdvs_pro_con}

Below we give a list of advantages $\bm{(+)}$ and disadvantages $\bm{(-)}$ of PDVs as a molecular featurisation method.

\begin{itemize}

\item[$\bm{(+)}$] \textbf{Low computational cost.} PDVs can be computed quickly, even for large molecular data sets.

\item[$\bm{(+)}$] \textbf{Interpretability (in some cases).} Certain descriptors quantify straightforward physicochemical properties of a compound such as its molecular weight or its heavy atom count. Such descriptors can be immediately understood and interpreted by chemical experts. However, it is important to note that there also exists a large number of potentially useful descriptors whose chemical interpretation is not obvious at all, such as the Balaban index discussed in Example~\ref{ex: balaban_index}.

\item[$\bm{(+)}$] \textbf{Simplicity of implementation.} PDVs are easy to use and can be automatically generated without sophisticated technical knowledge via publicly available cheminformatics libraries such as the \texttt{Python}-package \texttt{RDKit}~\citep{landrum2006rdkit}.

\item[$\bm{(+)}$] \textbf{Chemically reasonable initial embedding of chemical space.} PDVs tend to automatically calculate basic physicochemical features of molecules and thus immediately provide an \textit{a priori} embedding of chemical space that is useful across a wide range of cheminformatics tasks. In particular, this means that basic chemical features that are relevant across many applications do not need to be inferred statistically from a molecular representation from scratch every time a model is trained on a new data set. This can be seen in contrast to trainable methods like GNNs which continuously need to relearn a reasonable embedding of chemical space at every new training cycle (unless they have been combined with suitable self-supervised or transfer learning approaches).

\item[$\bm{(+)}$] \textbf{Low dimensionality.} PDVs with a dimensionality $l \leq 200$ often already provide useful feature vectors for molecular machine learning tasks. This can be seen in contrast to extended-connectivity fingerprints (ECFPs) which we will introduce in Section~\ref{sec: ecfps} and which normally require a length of at least $l \geq 1024$ to reach acceptable performance. The relatively low dimensionality of PDVs can decrease the risk of costly downstream-computations and overfitting.

\item[$\bm{(+)}$] \textbf{Global receptive field.} While ECFPs and message-passing GNNs, which we will both introduce in Section~\ref{sec: ecfps} and Section~\ref{sec: gnns}, are only able to extract features from local circular substructures, PDVs can directly express global high-level properties of a molecule such as its predicted $\log(P)$.

\item[$\bm{(-)}$] \textbf{Non-differentiability.} PDVs are generally based on fixed algorithms that cannot be trained in a differentiable manner to learn features from more explicit molecular representations. In this particular sense we refer to PDVs as non-trainable. It is worth noting though that computed PDVs can still be adapted to a given data set to some extent via the application of additional downstream methods such as data-dependent feature selection and normalisation procedures (like the removal of low-variance or correlated descriptors). Strictly speaking such common procedures can certainly also be considered forms of training (i.e.~forms or learning from input data). The non-differentiability of PDVs might cause an information bottleneck through which important chemical information cannot pass.

\item[$\bm{(-)}$] \textbf{Necessity for feature selection.} Since PDVs are not differentiable and normally do not contain tunable hyperparameters, one of the few ways to adapt them to a given prediction task is via the choice of physicochemical-descriptor functions that compose the final PDV. However, the optimal descriptor-choice is almost never obvious and usually requires the application of expert-knowledge or technical feature selection algorithms.

\item[$\bm{(-)}$] \textbf{Finite number of descriptors.} Trainable molecular featurisation methods like message-passing GNNs are parametric families of functions that can represent an infinite number of potential feature mappings into a continuous real vector space. The composition of a PDV on the other hand is restricted to a finite number of a few thousand human-engineered non-trainable non-tunable physicochemical-descriptor functions. As a result, PDVs can only represent a finite number of distinct feature mappings defined by the initial descriptor-choices. This finite family of feature mappings might not always be sufficient to describe all of the relevant information for a chemical prediction task; a suitable descriptor-function for the problem might simply not be available.

\end{itemize}

\section{Extended-Connectivity Fingerprints} \label{sec: ecfps}

\subsection{A Short Background on Structural Fingerprints}

In this section, we will discuss the extended-connectivity fingerprint~(ECFP)~\citep{morgan1965generation,rogers2010extended}, which is part of a larger family of molecular featurisation methods called \textit{structural fingerprints}. The essential idea behind structural fingerprints is to algorithmically detect and encode features that express parts of the chemical structure of a molecule. Often, structural fingerprints can be represented as bit vectors that express the presence or absence of certain chemical substructures within the input compound. 

Well-known structural fingerprints include the $166$-bit MACCS fingerprint~\citep{durant2002reoptimization} and the $881$-bit PubChem fingerprint~\citep{pubchemfp2009,han2008developing}. Both of these examples belong to a family of \textit{dictionary-based} fingerprints that check the presence or absence of substructures from a predefined dictionary. The dictionaries underlying MACCS and PubChem fingerprints include $166$ and $881$ unique substructures respectively, which explains the dimensionalities of these featurisations.

Another important family of structural fingerprints is given by \textit{enumeration-based} fingerprints. These fingerprints exhaustively enumerate all substructures of a molecule that can be detected via a certain search strategy. An example are Daylight fingerprints~\citep{daylightfp} which enumerate all substructures within an input compound that correspond to groups of atoms and bonds that form paths up to a chosen length. 
As we will see in this section, ECFPs too fall into the category of enumeration-based fingerprints since they enumerate all circular substructures in a molecule up to a chosen radius.
Because enumeration-based fingerprints are not constrained by a fixed-sized dictionary of substructures, they can often detect an extremely large number of distinct fragments. It is thus not immediately obvious how the substructures found within a particular compound should be transformed into a reasonably-sized bit-vector. The standard technique to address this issue is by using a hash function to fold the detected fragments into a bit-vector representation of feasible dimensionality and accept the fact that colliding bits might lead to a certain level of ambiguity. In Chapter~\ref{chap: ecfps_sort_and_slice} and in Section~\ref{sec: diff_substruc_pool}, we will explore some interesting alternatives to this hashing procedure for the vectorisation of structural fingerprints.

In our work, we focus on ECFPs out of all available structural fingerprints. Our reasons for this include that ECFPs have been used widely in the area of cheminformatics and molecular machine-learning and are well-known to perform well on a diverse set of tasks~\citep{riniker2013open,duvenaud2015convolutional,webel2020revealing,rogers2005using,alvarsson2014ligand,gilmer2017neural}. Moreover, as we will discover in Section~\ref{sec: gnns}, ECFPs also exhibit some striking similarities with modern message-passing-GNN architectures. In some sense, ECFPs can be interpreted as a non-differentiable version of message-passing GNNs, which makes a comparison between these two approaches interesting. We will use the rest of this section to give a mathematical description of ECFPs along with a critical discussion of their strengths and weaknesses.

\subsection{Mathematical Description}\label{subsec: ecfps_description}

The ECFP algorithm is a molecular featurisation method that maps a SMILES string (or its associated molecular graph) to a structural fingerprint that can be represented as a bit vector. The fundamental ideas underlying ECFPs were originally introduced by~\citet{morgan1965generation} in $1965$; however, modern implementations of the ECFP are largely based on a detailed technical description given by~\citet{rogers2010extended} in $2010$. ECFPs provide a simple yet powerful technique to encode information about the chemical structure of an input compound. The algorithm is dependent on two predefined hyperparameters: the desired fingerprint length $l \in \mathbb{N}$ and the maximum radius $R \in \mathbb{N}_{0}$ of the receptive field. An ECFP of length $l$ takes the form of a binary vector
$$\mathcal{F} = (f_1,...,f_l) \in \{0,1\}^{l}.$$ 
Up to a certain level of ambiguity due to bit collisions which we will discuss below, each component $f_i$ in $\mathcal{F}$ is associated with the presence or absence of a particular \textit{circular subgraph}, equipped with specific atom and bond features inherited from the input molecule, centered around a given atom.
\begin{definition}[Circular Subgraph]
	Let $\mathcal{G} = (A,B)$ be a molecular graph with atom set $A$ and bond set $B$ and let $a \in A$. Denote with
	$$N(a) \coloneqq \{\tilde{a} \in A \ \vert \ \{ \tilde{a}, a	\} \in B \} $$
	the set of neighbouring atoms of $a$ and denote with
	$$ M(a) \coloneqq \{\{a_1, a_2\} \in B \ \vert \ a \in \{a_1, a_2\} \} $$
	the set of bonds that are attached to $a$. Now let $(A^a_r, B^a_r)$ be a sequence of subgraphs of $(A,B)$ for $r \in \mathbb{N}_{0}$ constructed via the following iterative scheme:
	\begin{gather*}
	(A^a_0, B^a_0) \coloneqq (\{a\}, \{\}), \\[8pt]
	A^a_r = A^a_{r-1} \cup \bigcup_{\tilde{a} \in A^a_{r-1}} N(\tilde{a}), \quad
	B^a_r = B^a_{r-1} \cup \bigcup_{\tilde{a} \in A^a_{r-1}} M(\tilde{a}).
	\end{gather*}
	Then $(A^a_r, B^a_r)$ is called the circular subgraph of $\mathcal{G}$ with center atom $a$ and radius $r$. If $\mathcal{G}$ is equipped with atom and bond feature vectors, then these are inherited by $(A^a_r, B^a_r)$.
\end{definition}
Circular subgraphs with varying radii for an example compound are depicted in Figure~\ref{fig:circular_subgraphs_example}.
\begin{figure}[ht]
	\centering
	\includegraphics[width = 0.8 \linewidth]{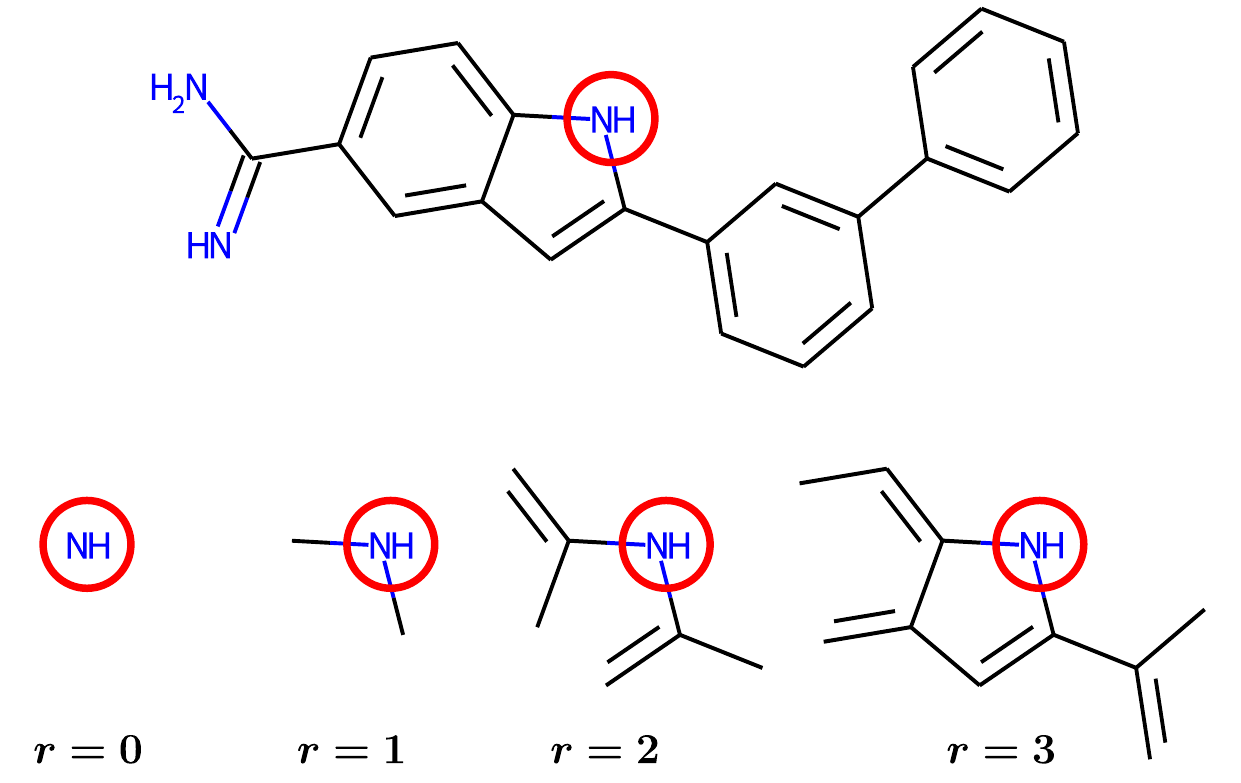}
	\caption[Circular chemical subgraphs of varying radii.]{Circular subgraphs of varying radii for a central nitrogen atom in an example molecule.}
	\label{fig:circular_subgraphs_example}
\end{figure}
The fingerprint hyperparameter $R$ defines the maximum radius of any circular subgraph whose presence or absence is indicated in the fingerprint $\mathcal{F}$. Circular subgraphs that are structurally isomorphic are further distinguished according to their inherited atom and bond features, i.e.~two structurally isomorphic circular subgraphs with distinct atom or bond features correspond to different components of $\mathcal{F}$. For chemical bonds, this distinction is made on the basis of simple bond types: single, double, triple, or aromatic. To distinguish atoms, ECFPs usually either use standard or pharmacophoric atom features and we discuss both variants below. 

There also exists a non-binary version of the ECFP in which each component of $\mathcal{F}$ does not simply indicate the presence or absence of a circular subgraph, but the exact number of occurrences of the subgraph within the input compound. This version is sometimes referred to as \textit{ECFP with counts} as opposed to the binary version mentioned above which is also called \textit{ECFP without counts}. Throughout this thesis, unless specifically stated otherwise, we focus on ECFPs without counts, as they are used significantly more frequently in practice and allow for an easier algorithmic description. However, it is straightforward to extend the methods described in this section to ECFPs with counts.

We now give a technical description of the algorithm used for the generation of ECFPs as specified in the article of~\citet{rogers2010extended}. \vspace{6pt}

\begin{mdframed} \vspace{10pt}
\begin{center} \textbf{ECFP algorithm} \end{center}
\noindent \hrulefill \vspace{10pt}

\noindent \textbf{List of inputs:}
\begin{itemize}
	\item A molecular compound $\mathcal{M}$, usually represented via a SMILES string $\mathcal{S}$. Note that $\mathcal{S}$ contains the structural information of the molecular graph $\mathcal{G} = (A,B)$.
	
	\item A function $f$ specifying the atom feature vectors for $A$ using either standard or pharmacophoric features.
	
	\item A function $g$ specifying the bond types for $B$ using encodings for labels in the set $\{\text{single, double, triple, aromatic}\}$.
	
	\item A fingerprint length $l \in \mathbb{N}$, often chosen to be in $\{1024, 2048\}$.
	
	\item A fingerprint radius $R \in \mathbb{N}_{0}$, often chosen to be in $\{1, 2, 3\}$.
\end{itemize}

\noindent \textbf{Initialisation:}
A deterministic hash function $h$ is chosen which maps integer vectors of arbitrary lengths into a large index set such as
$$ \{1,...,2^{32}\}$$ in a pseudo-random and uniformly distributed manner. The map $h$ is then used to hash each atom feature vector $f(a)$ to a single integer $ h(f(a)) \in \{1,...,2^{32}\}$. The integers in the set
$$I_{0} \coloneqq \{ h(f(a)) \ \vert \ a \in A\} \subseteq \{1,...,2^{32}\} $$
are called \textit{initial atom identifiers}.  \vspace{10pt}

\noindent \textbf{Iterative phase for substructure-enumeration:}
The initialisation step is followed by $R$ iterative steps, whereby at each step all atom identifiers are updated and saved. At each step $r \in \{1,..., R\}$, each atom $a \in A$ first starts off equipped with its respective atom identifier in $I_{r-1}$. The new integer atom identifier for $a$ is generated as follows:

\begin{enumerate}

\item A vector of integers $J_{a,r}$ is created that contains the integer $r$ in the first position and the current atom identifier of $a$ in the second position.

\item All neighbouring atoms that have a bond to $a$ are ordered in lexicographic order according to their bond types (single $= 1$, double $ = 2$, triple $ = 3$, aromatic $ = 4$) and then according to their current integer atom identifiers. 

\item A loop is performed over all neighbouring atoms according to this established order; for each neighbouring atom, first the bond type and then the current atom identifier is appended to the end of the integer vector $J_{a,r}$. 

\item The integer $h(J_{a,r}) \in \{1,...,2^{32}\}$ is computed and interpreted as the new atom identifier for $a$. 

\end{enumerate}
After the set of new atom identifiers
$$ I_{r} \coloneqq \{ h(J_{a,r}) \ \vert \ a \in A	\} \subseteq \{1,...,2^{32}\} $$
has been computed, the current atom identifiers are simultaneously updated to correspond to the elements in $I_{r}$. Note that by construction each atom identifier in $I_{r}$ represents an integer label that can be mapped to a particular circular subgraph with radius $r$ and associated atom and bond features. Thus, if two circular subgraphs receive the same integer label they can be assumed to be isomorphic (ignoring rare ambiguities caused by possible hash collision that could lead to two distinct circular subgraphs having the same integer label).

After the completion of $R$ iterations, the atom identifiers from all iterative stages are collected in one set:
$$ \mathcal{I} \coloneqq  \bigcup_{r = 0}^{R} I_r \subseteq \{1, ..., 2^{32}\}.$$
\newline
\noindent \textbf{Structural-duplicate removal:}
Note that it is possible for several distinct atom identifiers in $\mathcal{I}$ to correspond to the same circular subgraph, for example when an iteratively constructed circular subgraph can be traced back to two or more possible center atoms. To eliminate this redundancy, all but one of the structural-duplicate-identifiers are systematically removed from $\mathcal{I}$ using a method based on sets of bond features. For sake of brevity, further details of this technical removal process are omitted from this description, but full details can be found in the article from~\citet{rogers2010extended}.\newline

\noindent \textbf{Creation of hashed vectorial fingerprint:}
After the elimination of all structural duplicates in $\mathcal{I}$, yet another (arbitrary) standard hash function
$$\tilde{h} : \{1, ..., 2^{32}\} \to \{1, ..., l\} $$
is chosen to map the integers in $\mathcal{I}$ into the much smaller set
$$\{1, ..., l\}. $$
This folding procedure creates a fingerprint-set
$$F \coloneqq \{\tilde{h}(i) \ \vert \ i \in \mathcal{I} \} \subseteq \{1, ..., l\}. $$
Finally, $F$ is transformed into a binary fingerprint-vector
$$\mathcal{F} \coloneqq (f_1, ..., f_l) \in \{0,1\}^l $$
by setting
$$
f_i \coloneqq \left\{
\begin{array}{ll}
1 & \quad i \in F, \\
0 & \quad i \notin F, \\
\end{array}
\right. \quad \quad  \forall i \in \{1,...,l\}.  $$
\end{mdframed}
\vspace{19pt}

The total number of detected substructures in a chemical data set naturally increases with the fingerprint radius $R$. Note that the larger the fingerprint dimension $l$ gets compared to the number of detected substructures (as controlled by $R$), the more likely it becomes that there are no bit collisions. In other words, the more likely it becomes that each dimensional component $f_i$ of $\mathcal{F}$ informs about the presence or absence of one particular atom identifier in $\mathcal{I}$ and thus about the presence or absence of one unambiguous circular subgraph with atom and bond features within the input compound (ignoring rare ambiguities where two distinct circular subgraphs end up with the same atom identifier due to hash collisions). The bit $f_i$ is then set to $1$ if and only if the circular substructure is present anywhere in the molecule, otherwise $f_i$ is set to~$0$. However, if $l$ becomes small relative to the number of detected substructures then more and more hash collisions start to occur, which degrades the quality of the fingerprint. Such hash collisions cause a fingerprint-component $f_i$ to become ambiguous and correspond to one out of several possible atom identifiers and thus to distinct circular substructures. Therefore, $l$ must be chosen sufficiently large as to guarantee the expressivity of the ECFP. 

In the literature, ECFP featurisations with radius $R$ are often written in the form ECFP$2R$ with $2R$ being the fingerprint diameter. For example, the frequently used $1024$-bit ECFP$4$-featurisation describes an ECFP with radius $R = 2$ and length $l = 1024$. In our work, we used the \texttt{Python} cheminformatics-library \texttt{RDKit}~\citep{landrum2006rdkit} to generate ECFPs from SMILES strings. 

\subsection{Standard and Pharmacophoric Atom Features} \label{subsec: ecfp_and_fcfp_atom_features}

To distinguish atoms, ECFPs as implemented in \texttt{RDKit}~\citep{landrum2006rdkit} use six standard features. Optionally, the algorithm also allows for the stereochemical distinction between atoms with respect to tetrahedral R-S chirality. There also exist alternative binary atom features that were designed to be more reflective of the abstract function that an atom might play in pharmacological chemistry. When these \textit{pharmacophoric} atom features~\citep{rogers2010extended} are used instead of the standard features, then one speaks of functional-connectivity fingerprints (FCFPs). The standard and pharmacophoric atom features for the ECFP algorithm are listed in Table~\ref{tab: ecfp_atom_features}.
\begin{table}[h]
	\centering
	{\renewcommand{\arraystretch}{1.3}
		\begin{tabular}{V{3} p{7cm}| p{7cm} V{3}}
			
			\hlineB{3}
			\multicolumn{1}{V{3} c|}{\textbf{Standard Atom Features}} & \multicolumn{1}{c V{3}}{\textbf{Pharmacophoric Atom Features}}  \\
			\hline 
			Atomic number & Hydrogen-bond acceptor (yes/no) \\
			Total degree  & Hydrogen-bond donor (yes/no) \\
			Number of hydrogen neighbours & Negatively ionisable (yes/no) \\
			Formal charge & Positively ionisable (yes/no) \\
			Isotope & Aromatic (yes/no) \\
			Part of a ring (yes/no) & Halogen (yes/no) \\
			Optional: tetrahedral R-S chirality & Optional: tetrahedral R-S chirality   \\ 
			
			\hlineB{3}
			
	\end{tabular}}
	
	\caption[Atom features for ECFPs.]{Standard and pharmacophoric atom features used for the two versions of the ECFP algorithm.}
	
	\label{tab: ecfp_atom_features}
\end{table}

As can be seen, there is an overlap between the standard atom features for ECFPs and the atom features in our molecular graphs. In certain molecular machine learning applications, replacing standard with pharmacophoric atom features might lead to increased performance and decreased learning time since important high-level atomic properties are presented to the learning model from the start and do not need to be inferred statistically. However, standard atom features contain more detailed information that could still be relevant for the prediction task and thus be leveraged by the learning algorithm.

\subsection{Critical View} \label{subsec: ecfps_pro_con}

Below we give a list of advantages $\bm{(+)}$ and disadvantages $\bm{(-)}$ of the ECFP algorithm as a molecular featurisation method.

\begin{itemize}
	\item[$\bm{(+)}$] \textbf{Low computational cost.} ECFPs can be computed rapidly, even for large molecular data sets.
	
	\item[$\bm{(+)}$] \textbf{Interpretability.} Since ECFPs contain information about the presence or absence of concrete chemical substructures (up to hash collisions) they can often be understood and interpreted in a straightforward manner.
	
	\item[$\bm{(+)}$] \textbf{Simplicity of implementation.} ECFPs are easy to use and can be automatically generated without sophisticated technical knowledge via publicly available cheminformatics libraries such as the \texttt{Python}-package \texttt{RDKit}~\citep{landrum2006rdkit}.
	
	\item[$\bm{(+)}$] \textbf{Chemically reasonable initial embedding of chemical space.} ECFPs automatically express basic structural features of molecules and thus immediately provide an \textit{a priori} embedding of chemical space that is useful across a wide range of cheminformatics tasks. In particular, this implies that basic structural features that are important across many applications do not need to be relearned from a molecular representation from scratch every time a model is trained on a new data set. This can be seen in contrast to trainable methods like GNNs which continuously need to relearn a reasonable embedding of chemical space at every new training cycle (unless they have been combined with suitable self-supervised or transfer learning approaches).
	
	\item[$\bm{(+)}$] \textbf{Arbitrary circular subgraphs.} ECFPs do not simply check the existence of substructures from a predefined finite list of chemical substructures (like dictionary-based structural fingerprints do), but are able to distinguish between an essentially infinite number of chemical subgraphs, albeit only circular ones.
	
	\item[$\bm{(-)}$] \textbf{Non-differentiability.} After its hyperparameters have been chosen, the ECFP transformation becomes a fixed method that produces the same task-agnostic features for a given compound across data sets. In particular, ECFPs cannot be trained in a differentiable manner to learn features from more explicit molecular representations. In this particular sense we refer to ECFPs as non-trainable. It should be noted though that computed ECFPs can nevertheless be adapted to a given data set to some degree via the application of additional downstream methods such as data-dependent feature selection and normalisation procedures; strictly speaking such procedures can certainly also be considered forms of training (i.e.~forms or learning from input data) and we will explore some of these techniques in Chapter~\ref{chap: ecfps_sort_and_slice}. The non-differentiability of ECFPs might cause an information bottleneck through which important chemical information cannot pass.

	\item[$\bm{(-)}$] \textbf{Trade-off between dimensionality and hash collisions.} Decreasing the fingerprint dimension~$l$ relative to the number of detected substructures increases the number of hash collisions. These collisions cause a loss of information and interpretability due to an increasing inability of the fingerprint to distinguish between non-identical circular substructures. Thus, lengths of $l \geq 1024$ are often a necessity to reach an acceptable performance. In a machine learning setting, this high dimensionality can lead to costly downstream-computations and increased risk of overfitting.
	
	\item[$\bm{(-)}$] \textbf{Locality of receptive field.} ECFPs are based on a local neighbourhood-aggregation scheme for atoms in a molecule. By design, they are thus only capable of indicating the existence of local circular subgraphs. In particular, they cannot directly express global properties of a molecule.

\end{itemize}

\section{Message-Passing Graph Neural Networks} \label{sec: gnns}

\subsection{Mathematical Description} \label{subsec: gnns_description}

In $2015$, \citet{duvenaud2015convolutional} published an influential article where they proposed an adaptive counterpart to ECFPs. Their goal was to overcome some of the inherent limitations of the ECFP featurisation such as non-differentiability and high dimensionality. Their work resulted in a trainable GNN architecture that could directly process molecular graphs of varying sizes as input; they reported competitive performance of their method relative to classical ECFPs at several canonical molecular prediction tasks. After this initial step, a multitude of other promising GNN architectures appeared rapidly within the molecular machine-learning community~\citep{kipf2016semi, kearnes2016molecular, hu2019strategies,yang2019analyzing, wu2020comprehensive, wieder2020compact, li2015gated,battaglia2016interaction,defferrard2016convolutional,liu2019chemi}. In a seminal article from $2017$,~\citet{gilmer2017neural} managed to subsume almost all modern GNN architectures under an overarching mathematical framework which was further generalised and brought into its modern form by~\citet{bronstein2017geometric,bronstein2021geometric}. Below we give a brief technical description of this framework known as \textit{message-passing}. \vspace{18pt}

\begin{mdframed} \vspace{10pt}
\begin{center} \textbf{Message-passing-GNN algorithm} \end{center}
\noindent \hrulefill \vspace{10pt}

\noindent \textbf{List of inputs:}
\begin{itemize}
	\item A chemical compound $\mathcal{M}$, represented via its molecular graph $\mathcal{G} = (A,B)$.
	
	\item An function $f$ specifying the atom feature vectors for $A$.
	
	\item A function $g$ specifying the bond feature vectors for $B$.
	
	\item A fingerprint length $l \in \mathbb{N}$, often chosen to be in $\{50,...,500\}$.
	
	\item A fingerprint radius $R \in \mathbb{N}_{0}$, often chosen to be in $\{1, 2, 3, 4, 5\}$.
\end{itemize}

\noindent \textbf{Message-passing phase:}
The first part of the algorithm consists of a message-passing phase in which the atom feature vectors of $\mathcal{G}$ are iteratively updated over $r \in \{1,...,R\}$ steps via a local neighbourhood-aggregation scheme. We denote the set of neighbours of an atom $a \in A$ as
$$N(a)\coloneqq \{\tilde{a} \in A: \{a, \tilde{a}\} \in B\}. $$
At each step $r$, every atom $a \in A$ updates its associated feature vector from $f_{r-1}(a)$ to $f_{r}(a)$ according to the following recursive relations:
\begin{equation}
\begin{gathered} \label{eq: mpnn_equations}
f_{0}(a) \coloneqq f(a), \\[12pt]
m_{r}(a) = \bigoplus \ \left\{w_r(f_{r-1}(a), f_{r-1}(\tilde{a}), g(\{a, \tilde{a}\})) \ \vert \ \tilde{a} \in N(a) \right\}_{\text{mul}}, \\[12pt]
f_{r}(a) = u_r(f_{r-1}(a), m_{r}(a)). \\[4.5pt]
\end{gathered}
\end{equation}
The vector-valued functions $(w_r)_{r = 1}^{R}$ are called message-passing functions and the vector-valued functions $(u_r)_{r = 1}^{R}$ are called atom-updating functions; both types of maps frequently contain trainable neural networks. The vector $m_{r}(a)$ can be interpreted as an aggregated message that the atom $a$ receives at step $r$ from neighbouring atoms along its associated chemical bonds to update its current feature vector.

Multisets, i.e.~sets that can contain multiple instances of the same element, denoted via $\{\}_{\text{mul}}$ instead of $\{\}$, are required in the above recursion. This is because two atom neighbours $\tilde{a}_1, \tilde{a}_2$ of $a$ could in theory produce identical messages
$$w_r(f_{r-1}(a), f_{r-1}(\tilde{a}_1), g(\{a, \tilde{a}_1\})) = w_r(f_{r-1}(a), f_{r-1}(\tilde{a}_2), g(\{a, \tilde{a}_2\})), $$
but two such messages should still be counted as separate elements. 

The $\bm{\oplus}$-symbol represents a placeholder for a vector-valued and permutation-invariant multiset-function. Note that permutation-invariance is a basic requirement that each multiset functions must possess in order to be a well-defined function in the first place. The permutation-invariance is necessary to guarantee that the aggregated message $m_{r}(a)$ is invariant under any (arbitrary) ordering imposed on the neighbours of~$a$. Possible operators for $\bm{\oplus}$ include summation, averaging, or the componentwise computation of maxima. In most cases, however, the sum-operator is selected,
\begin{equation} \label{eq: sum_op_pool}
\bigoplus \coloneqq \sum,
\end{equation}
and we assume this choice by default unless stated otherwise.

The multiset of atom feature vectors at a particular step $r \in \{0,...,R\}$,
$$f_r(A)_{\text{mul}} \coloneqq \{f_r(a) \ \vert \ a \in A\}_{\text{mul}},$$ is often imagined to be located at the $r$-th layer of a multilayer graph with $R + 1$ layers. 
Note that the multiset of bond feature vectors 
$$g(B)_{\text{mul}} \coloneqq \{ g(\{a,\tilde{a}\}) \ \vert \ \{a, \tilde{a}\} \in B	\}_{\text{mul}}$$ usually does not get updated, although this rule was successfully broken by~\citet{kearnes2016molecular}. \newline

\noindent \textbf{Global-pooling step:}
The message-passing phase is followed by a \textit{global-pooling} step at which the multisets of computed atom feature vectors $(f_r(A)_{\text{mul}})_{r = 0}^{R}$ are summarised to a single $l$-dimensional vectorial representation of $\mathcal{G}$. In the most general case, this pooling operation is accomplished via the application of a sequence of vector-valued multiset-functions $(\tilde{\bm{\oplus}}_r)_{r=0}^{R}$ along the sequence of graph layers. The layerwise outputs can then be combined via another vector-valued function $q$ to a final graph-level fingerprint $\mathcal{F}$ of length~$l$. In mathematical terms the global pooling operation thus takes the following general form:
\begin{equation} \label{eq: mpnn_pooling}
q(\tilde{\bm{\oplus}}_0 f_0(A)_{\text{mul}},...,\tilde{\bm{\oplus}}_R f_R(A)_{\text{mul}}) \eqqcolon \mathcal{F} \in \mathbb{R}^{l}.
\end{equation}

Similar to before, the pooling maps $(\tilde{\bm{\oplus}}_r)_{r=0}^{R}$ are chosen to be permutation-invariant multiset functions as to guarantee their invariance under any (arbitrary) atom ordering imposed on the input graph $\mathcal{G}$ and to guarantee that multiple instances of identical atom feature vectors are treated as separate elements. Often each $\tilde{\bm{\oplus}}_r$ simply represents a summation or averaging operator, or the componentwise computation of maxima, although more powerful and expressive pooling maps exist. An overview of the total molecular-featurisation process via message-passing GNNs is depicted in Figure~\ref{fig:mpgnn_overview}.
\end{mdframed} \vspace{19pt}
\begin{figure}[h]
	\centering
	\includegraphics[width = 0.98 \linewidth]{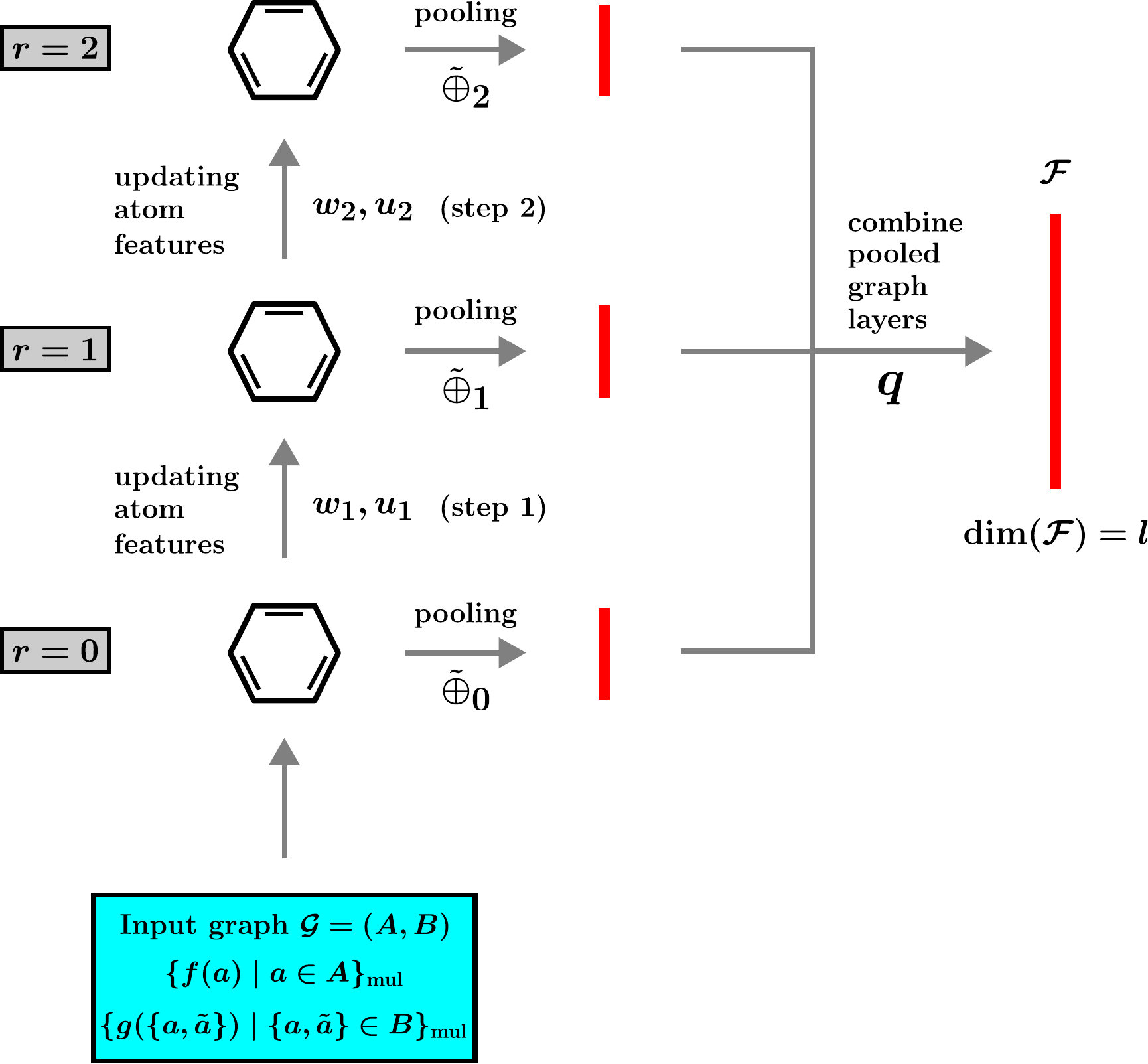}
	\caption[Molecular featurisation via message-passing GNN.]{Schematic overview of the molecular-featurisation mechanism of a message-passing graph neural network~(GNN) with radius $R = 2$. All depicted functions may contain trainable deep-learning components.}
	\label{fig:mpgnn_overview}
\end{figure}

We can see that message-passing GNNs exhibit some striking similarities with ECFPs. The fingerprint radius $R$ of a GNN defines the maximum diameter of its circular receptive field and plays an analogous role to the hyperparameter $R$ of the same name for ECFPs. In both featurisation methods a local neighbourhood-aggregation scheme is applied to iteratively update the atom feature vectors of the molecular graph. Unlike ECFPs, however, message-passing GNNs can learn from graph-shaped input data in a differentiable manner: the functions $(w_r)_{r = 1}^{R}, (u_r)_{r = 1}^{R}$ normally contain a trainable deep-learning component. The graph-level feature vector $\mathcal{F}$ can be fed into a neural prediction head (such as a multilayer perceptron) which enables the resulting end-to-end architecture to train its weights on a supervised task via backpropagation and standard gradient-based optimisation algorithms.

\subsection{Graph Convolutional Networks} \label{subsec: gcns_description}

We now describe an early GNN model that has been used frequently in the literature due to its relative simplicity and computational efficiency: the graph convolutional network~(GCN) introduced in $2016$ by~\citet{kipf2016semi}. Let

\begin{itemize}

\item $S \in \{0,1\}^{n \times n}$ be the adjacency matrix of a molecular graph $\mathcal{G} = (A,B)$ with atoms $A = \{a_1, ..., a_n\}$,

\item $\bar{S} \coloneqq S + I$ be a modified version of the adjacency matrix with $1$s on the diagonal (here $I$ is the $n$-dimensional identity-matrix which is added to guarantee self-loops in the graph so that during the node-feature updating-process each node does not only take into account the features of its neighbouring nodes but also its own features),

\item $\bar{D}$ be the diagonal degree-matrix of $\bar{S}$ defined via $\bar{D}_{i,i} \coloneqq \sum_{j = 1}^n \bar{S}_{i,j}$ and $\bar{D}_{i,j} = 0$ for $i \neq j$,

\item $F_{r} \in \mathbb{R}^{n \times k}$ be a matrix whose rows contain the transposed atom feature vectors $(f_r(a_i)^{T})_{i = 1}^{n}$ at step $r$,

\item $W_{r} \in \mathbb{R}^{k \times l}$ be a matrix of trainable weights associated with the $r$-th graph layer, and

\item $\text{ReLU}(x) \coloneqq \max\{0,x\}$ be the well-known rectified linear unit~(ReLU) activation-function.

\end{itemize}
Then the atom feature vector updates in a GCN are governed by the following dynamics:
\begin{equation} \label{eq: gcn_matrix_recursion}
F_{r} \coloneqq \text{ReLU}(\bar{D}^{-\frac{1}{2}} \bar{S} \bar{D}^{-\frac{1}{2}} F_{r-1} W_{r-1}).
\end{equation}
This updating rule can be motivated by a first-order approximation of localised spectral filters on graphs. For details on this derivation, we refer the reader to the original article~\citep{kipf2016semi}. We now show that GCNs are indeed message-passing GNNs. 

\begin{proposition}[Message-Passing for GCNs] \label{prop: gcns_are_mpgnns}
The GCN model falls into the class of message-passing GNNs, i.e.~the atom feature vector updating process of GCNs can be mathematically expressed via the message-passing scheme described in Equations~\ref{eq: mpnn_equations}.
\end{proposition}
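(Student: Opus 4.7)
The plan is to unfold the matrix recursion for $F_r$ row by row, exhibit each atom's updated feature vector as a sum of a self-contribution and an aggregation over its graph neighbours, and then read off an explicit choice of message function $w_r$, atom-update function $u_r$, and aggregation operator $\bigoplus$ that reproduces this expression within the scheme of Equations~(\ref{eq: mpnn_equations}).

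First, I would expand the $(i,j)$-entry of the normalised propagation matrix as
$$(\bar{D}^{-1/2}\bar{S}\bar{D}^{-1/2})_{ij} \;=\; \frac{\bar{S}_{ij}}{\sqrt{\bar{D}_{ii}\bar{D}_{jj}}},$$
noting that $\bar{S}_{ij} = 1$ exactly when $a_j \in N(a_i) \cup \{a_i\}$ (because of the added self-loops in $\bar{S}$) and that $\bar{D}_{ii} = 1 + |N(a_i)|$. Taking the $i$-th row of Eq.~(\ref{eq: gcn_matrix_recursion}) then splits naturally into a pure self-term and a sum over the strict neighbours of $a_i$, both premultiplied by $W_{r-1}^{\top}$, all wrapped in the outer ReLU. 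Having isolated this decomposition, I would select $\bigoplus \coloneqq \sum$ (as in Eq.~(\ref{eq: sum_op_pool})) and set
$$w_r\bigl(f_{r-1}(a),\, f_{r-1}(\tilde{a}),\, g(\{a,\tilde{a}\})\bigr) \;\coloneqq\; \frac{1}{\sqrt{(1+|N(a)|)(1+|N(\tilde{a})|)}}\, W_{r-1}^{\top} f_{r-1}(\tilde{a}),$$
so that $m_r(a)$ captures precisely the aggregated neighbour contribution, together with
$$u_r\bigl(f_{r-1}(a),\, m_r(a)\bigr) \;\coloneqq\; \text{ReLU}\!\left(\frac{1}{1+|N(a)|}\, W_{r-1}^{\top} f_{r-1}(a) \;+\; m_r(a)\right),$$
which absorbs both the self-loop term and the outer nonlinearity. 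A direct substitution would then show that the $f_r(a_i)$ produced by Equations~(\ref{eq: mpnn_equations}) coincides with the $i$-th row of the matrix update in Eq.~(\ref{eq: gcn_matrix_recursion}) for every $i \in \{1,\dots,n\}$.

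The hard part will be reconciling the explicit appearance of the degrees $|N(a)|$ and $|N(\tilde{a})|$ inside $w_r$ and $u_r$ with the restricted signature imposed by the message-passing framework, which only permits $w_r$ to depend on $f_{r-1}(a)$, $f_{r-1}(\tilde{a})$, and $g(\{a,\tilde{a}\})$. I would resolve this by appealing to the atom-featurisation conventions adopted throughout the excerpt (see Table~\ref{tab: atom_bond_features}), which already include the number of heavy neighbours as a component of the initial atom feature vector $f_0(a)$. By carrying this degree entry through each layer unchanged, which is structurally determined by the graph and can be built into $u_r$ at no cost, the quantities $|N(a)|$ and $|N(\tilde{a})|$ become deterministic functions of $f_{r-1}(a)$ and $f_{r-1}(\tilde{a})$ respectively, so the normalisation factor in $w_r$ is a legitimate function of its permitted arguments. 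This completes the identification of the GCN update as a special case of the message-passing scheme and thereby establishes the proposition.
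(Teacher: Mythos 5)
Your proposal is correct and follows essentially the same route as the paper's own proof: unfold the row-wise form of the normalised propagation matrix, split off the self-loop term, take $\bigoplus = \sum$, and justify the degree-dependence of $w_r$ and $u_r$ by carrying the node degree inside the atom feature vectors. The only (immaterial) difference is that you apply $W_{r-1}^{\top}$ inside the message function while the paper defers it to the update function; by linearity the two choices are equivalent.
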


\begin{proof}
Note that for each $a_i \in A$ the associated entry of the diagonal matrix $\bar{D}$ takes the form
$$\bar{D}_{i,i} = \sum_{j = 1}^n \bar{S}_{i,j} = \deg(a_i) + 1 $$
and we can thus write $\bar{D}_{i,i} \coloneqq \bar{D}(a_i)$. Now let $a_i, a_j \in A$ be two neighbouring atoms in $\mathcal{G}$, i.e.~$a_j \in N(a_i)$. We choose
$$w_r(f_{r-1}(a_i), f_{r-1}(a_j), g(\{a_i, a_j\})) \coloneqq \bar{D}(a_i)^{-1/2} \bar{D}(a_j)^{-1/2} f_{r-1}(a_j) $$
as our message-passing functions and
$$u_r(f_{r-1}(a_i), m_{r}(a_i)) \coloneqq \text{ReLU}(W^{T}_{r-1}(\bar{D}(a_i)^{-1} f_{r-1}(a_i) + m_r(a_i))) $$
as our atom-updating functions. To guarantee that the functions $w_r$ and $u_r$ are indeed well-defined, we assume without loss of generality that the node degree information $\bar{D}(a_i)$ is implicitly included in each initial atom feature vector $f_0(a_i)$ and then simply gets copied from $f_{r-1}(a_i)$ to $f_r(a_i)$ to assure its availability at each iterative feature update. The aggregated message for $a_i$ is now given by
\begin{align*}
m_{r}(a_i) &= \sum \{ \bar{D}(a_i)^{-1/2} \bar{D}(a_j)^{-1/2} f_{r-1}(a_j) \ \vert \ a_j \in N(a_i) \}_{\text{mul}} \\ &= \sum_{j = 1}^n \bar{D}(a_i)^{-1/2} S_{i,j} \bar{D}(a_j)^{-1/2} f_{r-1}(a_j) .
\end{align*}
As explained in~\ref{eq: sum_op_pool} above, here the expression of the form $\sum \{...\}_{\text{mul}}$ simply represents the sum of all elements in the multiset $\{...\}_{\text{mul}}$, which means that the summation-operator is used as a permutation-invariant multiset-function. 

If we denote the $i$-th row-vector of the matrix $\bar{D}^{-\frac{1}{2}} \bar{S} \bar{D}^{-\frac{1}{2}}$ with $[\bar{D}^{-\frac{1}{2}} \bar{S} \bar{D}^{-\frac{1}{2}}]_{[i,:]}$ then it follows that
\begin{align*}
f_r(a_i) &= \text{ReLU}\big(W^{T}_{r-1}(\bar{D}(a_i)^{-1} f_{r-1}(a_i) + m_r(a_i))\big) \\
&= \text{ReLU}\bigg(W^{T}_{r-1} \sum_{j = 1}^n \bar{D}(a_i)^{-1/2} \bar{S}_{i,j} \bar{D}(a_j)^{-1/2} f_{r-1}(a_j) \bigg) \\
&= \text{ReLU}\bigg(W^{T}_{r-1} \big( [\bar{D}^{-\frac{1}{2}} \bar{S} \bar{D}^{-\frac{1}{2}}]_{[i,:]} F_{r-1} \big)^{T} \bigg) \\
&= \text{ReLU}\bigg(W^{T}_{r-1} F_{r-1}^{T} [\bar{D}^{-\frac{1}{2}} \bar{S} \bar{D}^{-\frac{1}{2}}]_{[i,:]}^{T} \bigg)
\end{align*}
We can now finally conclude that
$$ f_r(a_i)^{T} = \text{ReLU}\bigg([\bar{D}^{-\frac{1}{2}} \bar{S} \bar{D}^{-\frac{1}{2}}]_{[i,:]} F_{r-1} W_{r-1} \bigg) $$
and therefore
$$ F_{r} = \text{ReLU}(\bar{D}^{-\frac{1}{2}} \bar{S} \bar{D}^{-\frac{1}{2}} F_{r-1} W_{r-1}). $$
\end{proof}
We constructed our proof of Proposition~\ref{prop: gcns_are_mpgnns} as a slightly adapted and more detailed version of the original proof we found in the article of~\citet{gilmer2017neural}. Note that while GCNs are sensitive to the connectivity structure of their input graph, like many GNNs they do not take into account bond feature vectors. GCNs have been demonstrated to work reasonably well for a variety of graph-based prediction tasks~\citep{wang2021molclr, dwivedi2020benchmarking,tavakoli2020continuous,liu2020towards,hu2020open,kipf2016semi}.

\subsection{Graph Isomorphism Networks} \label{subsec: gins_description}

In 2018, \citet{xu2018powerful} published an influential article in which they introduced the graph isomorphism network (GIN) model. The GIN was developed to overcome some of the theoretical shortcomings of GCNs and other popular GNN architectures available at the time. We will discuss the highly relevant motivation of Xu et al.~and their theoretical insights in the next section, but will first give a brief description of their proposed model. The message-passing mechanism of GINs expressed via Equations~\ref{eq: mpnn_equations} is defined in a straightforward manner via
$$w_r(f_{r-1}(a), f_{r-1}(\tilde{a}), g(\{a, \tilde{a}\})) = f_{r-1}(\tilde{a}) $$
and
$$u_r(f_{r-1}(a), m_{r}(a)) = \phi_{r}((1 + \epsilon_r) f_{r-1}(a) + m_r(a)). $$
Here $\epsilon_r$ is a small (optionally trainable) parameter and $\phi_{r}$ is a trainable multilayer perceptron with at least one hidden layer. Note that $\phi_r$ is specifically required to be more powerful than the shallow neural network $x^{T} \mapsto \text{ReLU}(x^{T}W)$ used in the definition of GCNs in Equation~\ref{eq: gcn_matrix_recursion}, which has no hidden layer and therefore does not fulfill the requirements of the universal approximation theorem for feedforward neural networks~\citep{hornik1989multilayer}. One can write the atom feature updating process for GINs in the compact recursive form
$$f_r(a) = \phi_{r}\Big((1 + \epsilon_r) f_{r-1}(a) + \sum \ f_{r-1}(N(a))_{\text{mul}} \Big).  $$
Like GCNs, standard GINs also ignore bond feature vectors, although modifications of the GIN architecture that do consider bond features have been used successfully~\citep{hu2019strategies}. GINs are easy to implement, work well in practice and are currently reaching state-of-the-art performance in a variety of applications~\citep{kim2020understanding, wang2021molclr,dwivedi2020benchmarking, xu2018powerful}. 

\subsection{Theoretical Expressivity of Graph Neural Networks} \label{subsec: gnn_expressivity}

The relatively old GCNs are still popular GNN models. However, they come with a significant caveat. This caveat was pointed out by \citet{xu2018powerful} in their seminal $2018$ paper and used as a motivation for the design of the GIN. They proved that GCNs (as well as a plethora of other commonly-used GNN architectures) sometimes map non-isomorphic graphs to identical vectorial representations and thus suffer from a lack of theoretical expressivity. In other words, they discovered that GCNs and many other popular GNN models cannot learn to distinguish certain simple graph structures, to the point where in some instances they severely underfit the training set. The GIN model was specifically developed to overcome this problematic lack of expressivity. And indeed, \citet{xu2018powerful} managed to prove that GINs are strictly more expressive than GCNs and a number of other popular GNN models. They specifically demonstrated that GINs can distinguish certain non-isomorphic graphs that GCNs cannot. In addition they showed that GINs are as expressive as standard message-passing GNNs can ever be; GINs are in the subclass of \textit{maximally} expressive message-passing GNNs and are exactly as powerful at distinguishing non-isomorphic graphs as the canonical $1$-Weisfeiler-Lehman~($1$-WL) graph isomorhism test~\citep{weisfeiler1968reduction}.

\begin{descrip}[$1$-Weisfeiler-Lehman Test] \label{descrip: 1-WL test}
	Let $\mathcal{G} = (A,B)$ be a finite graph with a node featurisation function $f : A \to \mathbb{R}^k$. We now iteratively construct a sequence of functions $(c_r)_{r \in \mathbb{N}_{0}}$ on the set of graph nodes $A$. Each value $c_r(a)$ is imagined to be a momentary colouring of the node $a \in A$ at step $r$. The colouring functions are constructed via the following iterative scheme:
	\begin{gather*}
	c_0(a) \coloneqq f(a), \\
	c_{r}(a) = h(c_{r-1}(a), c_{r-1}(N(a))_{\text{mul}}).
	\end{gather*}
	Here $a \in A$ is a node of $\mathcal{G}$ and $h$ is an injective hash function that maps each vertex colour along with its multiset of neighbouring colours $c_{r-1}(N(a))_{\text{mul}}$ to a new unique colour. This procedure terminates after a finite number of $R$ steps when a stable colouring $c_R$ is reached that cannot be changed by subsequent iterations. Let
	$$c_R(A)_{\text{mul}}  \coloneqq  \{ c_R(a) \ \vert \ a \in A \}_{\text{mul}} $$
	be called the multiset of final vertex colours. If the multisets of final vertex colours of two input graphs $\mathcal{G}_1, \mathcal{G}_2$ are distinct, then the graphs are guaranteed not to be isomorphic. However, if the multisets of final vertex colours are identical, then the graphs are potentially but not necessarily isomorphic. If two non-isomorphic graphs reach identical multisets of final vertex colours, we say that the $1$-WL test cannot distinguish these graphs.
\end{descrip}

A pair of non-isomorphic graphs that nevertheless cannot be distinguished by the $1$-WL test is depicted in Figure~\ref{fig:wlcounterexample}. 
\begin{figure}[h]
	\centering
	\includegraphics[width=0.9\linewidth]{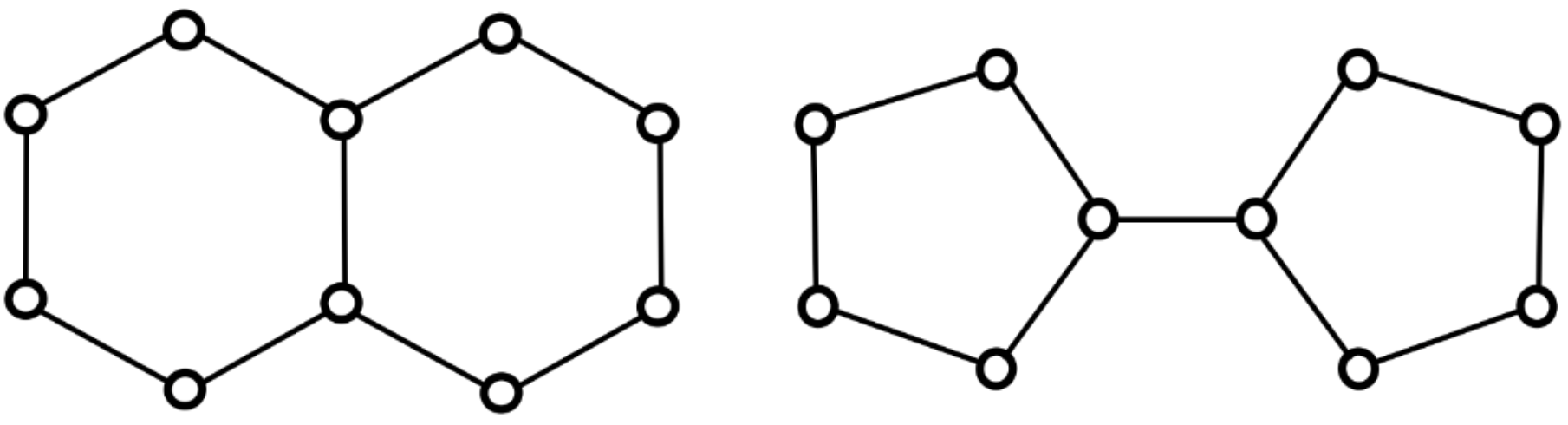}
	\caption[Non-isomorphic graphs that cannot be distinguished by the $1$-WL test.]{Example of two non-isomorphic graphs that cannot be distinguished by the $1$-WL test if all nodes are assumed to have identical initial colourings. Image source:~\citep{bouritsas2022improving}.}
	\label{fig:wlcounterexample}
\end{figure}
However, in spite of the existence of such counterexamples, the $1$-WL test is still an elegant and classical method to tackle the graph isomorphism problem that can efficiently tell apart a large number of non-isomorphic graph structures. Moreover, the $1$-WL test provides a natural benchmark against which the theoretical expressivity of GNNs can be compared.

\begin{theorem}[GNN-Conditions for $1$-WL Power] \label{theorem: gnn_expressivity_1-WL}
A message-passing GNN can be shown to be maximally expressive and (equivalently) as powerful as the $1$-WL test at distinguishing non-isomorphic graphs if the following injectivity-conditions hold: 

\begin{itemize}

\item \textbf{Injective aggregation and update}. For each graph layer $r \in \{1,...,R\}$ there must exist an injective multiset-function $\bm{\oplus}_r$ and an injective function $\psi_r$ such that the updating-procedure for atom features defined in Equations~\ref{eq: mpnn_equations} can be written in the form
$$f_{r}(a) = \psi_r(f_{r-1}(a), \bm{\oplus}_r f_{r-1}(N(a))_{\text{mul}}).$$
This condition guarantees that distinct atom-neighbourhoods are always mapped to distinct atom feature vector updates. 

\item \textbf{Injective pooling.} The multiset function $\tilde{\bm{\oplus}}_R$ used in the pooling step~\ref{eq: mpnn_pooling} at the $R$-th (i.e.~last) graph layer must be injective and the final pooling function $q$ must be injective in the argument corresponding to the $R$-th graph layer. This condition guarantees that graphs with distinct final multisets of updated atom feature vectors are always mapped to distinct graph-level vectorial representations.

\end{itemize}

\end{theorem}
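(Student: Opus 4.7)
The plan is to prove the two directions that together give maximal 1-WL expressivity: first, a lower bound showing the GNN distinguishes any pair of graphs separated by 1-WL, and second, an upper bound (already standard in the literature) showing no message-passing GNN can distinguish any pair not separated by 1-WL. The lower bound is where the injectivity conditions do all the work, and it is proved by induction on the message-passing depth $r$, establishing at each layer a refinement relationship between the GNN feature multiset and the 1-WL colouring multiset.

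More precisely, I would fix two graphs $\mathcal{G}_1, \mathcal{G}_2$ with the same initial atom featurisation function $f$ and prove the invariant: for every $r \in \{0, \ldots, R\}$ there is a well-defined injection $\Phi_r$ from equivalence classes of 1-WL colours $c_r(a)$ to equivalence classes of GNN features $f_r(a)$, in the sense that $c_r(a) = c_r(a') \Rightarrow f_r(a) = f_r(a')$ and $c_r(a) \neq c_r(a') \Rightarrow f_r(a) \neq f_r(a')$. The base case $r = 0$ is immediate because both $c_0$ and $f_0$ coincide with $f$. For the inductive step, the 1-WL recursion $c_r(a) = h(c_{r-1}(a), c_{r-1}(N(a))_{\mathrm{mul}})$ with $h$ injective means that $c_r(a)$ is determined by, and determines, the pair consisting of the centre colour and the neighbour multiset. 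Under the hypothesis, the induction tells us the same pair is recoverable from $(f_{r-1}(a), f_{r-1}(N(a))_{\mathrm{mul}})$, and the injectivity of $\bm\oplus_r$ and of $\psi_r$ then guarantees that $f_r(a)$ also determines and is determined by this pair. Hence the partition induced by $f_r$ coincides with the partition induced by $c_r$.

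Once the invariant reaches $r = R$, the proof concludes by lifting the node-level statement to the graph level. If $\mathcal{G}_1$ and $\mathcal{G}_2$ are distinguished by 1-WL, then by definition $c_R(A_1)_{\mathrm{mul}} \neq c_R(A_2)_{\mathrm{mul}}$; the layer-$R$ invariant then yields $f_R(A_1)_{\mathrm{mul}} \neq f_R(A_2)_{\mathrm{mul}}$. Injectivity of $\tilde{\bm\oplus}_R$ applied to these distinct multisets gives distinct pooled vectors at layer $R$, and injectivity of $q$ in its $R$-th argument then forces $\mathcal{F}(\mathcal{G}_1) \neq \mathcal{F}(\mathcal{G}_2)$, so the GNN distinguishes the two graphs as required.

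For the matching upper bound I would cite, or sketch, the standard argument of Xu et al.: by a parallel induction one shows $c_r(a) = c_r(a') \Rightarrow f_r(a) = f_r(a')$ for \emph{any} message-passing GNN, because each GNN update is a function of a node's previous feature and its neighbour feature multiset, both of which are constant on 1-WL colour classes. Consequently, no message-passing GNN ever separates a pair of graphs collapsed by 1-WL, so the sufficient conditions above are in fact tight. The main obstacle I anticipate is technical rather than conceptual: handling the injectivity of multiset functions on countable rather than finite domains, since the colour and feature spaces grow with $r$. I would sidestep this by restricting attention to the finite set of colours and features actually realised on the finite union of graphs under consideration, where injectivity on a finite domain is unproblematic, and noting that this suffices for every concrete pair of graphs the test is applied to.
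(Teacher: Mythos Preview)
Your proposal is correct and follows essentially the same route as the paper's proof. The paper phrases the inductive invariant as the existence of an injective map $\alpha_r$ with $f_r(a) = \alpha_r(c_r(a))$, which is logically equivalent to your statement that the partitions induced by $c_r$ and $f_r$ coincide; both proofs then lift to the graph level via the injective pooling conditions and defer the converse upper bound to Xu et al.
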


\begin{proof}[Proof]
	
	We will show that the GNN can distinguish every pair of graphs that the $1$-WL test can. Let $\mathcal{G} = (A,B)$ be a (finite) graph with a node featurisation function $f : A \to \mathbb{R}^k$. We will prove that for each graph layer $r$ there exists an injective map $\alpha_r$ such that
	$$f_r(a) = \alpha_{r}(c_r(a)) \quad \forall a \in A. $$
	This means that the colour $c_r(a)$ of a node $a \in A$ at step $r$ (as computed by the WL algorithm) uniquely determines the node feature vector $f_r(a)$ at step $r$ (as computed by the GNN). Furthermore, the injectivity condition on $\alpha_r$ guarantees that nodes with distinct colours have distinct feature vectors.
	
	We show the existence of $\alpha_r$ via induction over $r$. For $r = 0$, it holds for all $a \in A$ that $f_0(a) = \alpha_0(a)$ so we can simply choose the identity map for $\alpha_0$. If we now assume for the induction step that $\alpha_{r-1}$ exists, then we can write
	$$f_r(a) =  \psi_r(f_{r-1}(a), \bm{\oplus}_r f_{r-1}(N(a))_{\text{mul}}) = \psi_r(\alpha_{r-1}(c_{r-1}(a)), \bm{\oplus}_r \alpha_{r-1}(c_{r-1}(N(a))_{\text{mul}})_{\text{mul}}).$$
	Since $\psi_r$, $\bm{\oplus}_r$ and $\alpha_{r-1}$ are injective and the composition of injective functions is again injective, it follows that there must be an injective function $\beta_r$ such that
	$$f_r(a) = \beta_r(c_{r-1}(a), c_{r-1}(N(a))_{\text{mul}}).$$
	We now set $\alpha_r \coloneqq \beta_r \circ h^{-1}$ whereby $h^{-1}$ is the inverse of the hash function used in the $1$-WL algorithm. Once again, $\alpha_r$ is injective since $\beta_r$ and $h^{-1}$ are injective. Moreover, it now holds for all $a \in A$ that
	$$\alpha_r(c_r(a)) =  \beta_r (h^{-1}( c_r(a))) = \beta_r(c_{r-1}(a), c_{r-1}(N(a))_{\text{mul}}) = f_r(a) $$
	which concludes the proof that $\alpha_r$ exists and has the desired properties.
	
	Now let $\mathcal{G} = (A,B)$ and $\tilde{\mathcal{G}} = (\tilde{A}, \tilde{B})$ be two graphs that can be distinguished with the $1$-WL test. We want to show that $\mathcal{G}$ and $\tilde{\mathcal{G}}$ are then mapped to different graph-level feature vectors by the message-passing GNN. The fact that $\mathcal{G}$ and $\tilde{\mathcal{G}}$ can be distinguished by the $1$-WL test means that at some graph layer $R$ their respective multisets of node colours must be different:
	$$c_R(A)_{\text{mul}}  \neq \tilde{c}_R(\tilde{A})_{\text{mul}} . $$
	Due to the injectivity of $\alpha_R$ this means that the multisets of updated atom feature vectors at the $R$-th layer must also be different:
	$$f_R(A)_{\text{mul}} = \alpha_R(c_R(A)_{\text{mul}} )_{\text{mul}} \neq \alpha_R(\tilde{c}_R(A)_{\text{mul}})_{\text{mul}} = f_R(\tilde{A}_{\text{mul}} ). $$
	Since the multiset-function $\tilde{\bm{\oplus}}_R$ that is used in the pooling step is assumed to be injective and since the final pooling function $q$ is assumed to be injective in its $R$-th argument as well, we can finally conclude that
	$$ \mathbb{R}^{l} \ni q(\tilde{\bm{\oplus}}_0 f_0(A)_{\text{mul}},...,\tilde{\bm{\oplus}}_R f_R(A)_{\text{mul}}) \neq q(\tilde{\bm{\oplus}}_0 f_0(\tilde{A})_{\text{mul}},...,\tilde{\bm{\oplus}}_R f_R(\tilde{A})_{\text{mul}}) \in \mathbb{R}^{l} $$
	which proves that the final vectorial GNN embeddings for $\mathcal{G}$ and $\tilde{\mathcal{G}}$ must be different.

	We have shown that all graphs that can be distinguished by the $1$-WL test can also be distinguished by a message-passing GNN under suitable injectivity conditions which means that such GNNs are at least as powerful as the $1$-WL test. For the full proof, including the converse that all graphs that can be distinguished by a message-passing GNN can also be distinguished by the $1$-WL test, we refer the reader to the original article of~\citet{xu2018powerful}.
\end{proof}

For simplicity we have omitted bond feature vectors in Description~\ref{descrip: 1-WL test} and Theorem~\ref{theorem: gnn_expressivity_1-WL}, but it is easy to formulate and prove equivalent versions of both statements that take these into account. Unlike GCNs, GINs fulfill the injectivity conditions from Theorem~\ref{theorem: gnn_expressivity_1-WL} and are thus in the class of maximally expressive message-passing GNNs that are as powerful as the $1$-WL test. \citet{xu2018powerful} suggested that the technical reasons why GCNs fail to meet the injectivity criteria of Theorem~\ref{theorem: gnn_expressivity_1-WL} is partly connected with the fact that their atom-updating scheme described in~\ref{eq: gcn_matrix_recursion} only involves a shallow neural network of the form $x^{T} \mapsto \text{ReLU}(x^{T}W)$ that lacks hidden layers. We hypothesise that the increased expressivitiy of GINs compared to theoretically weaker models such as GCNs most strongly translates into superior performance when training data is abundant. This idea is greatly supported by recent experiments in the realm of self-supervised learning where GINs consistently beat GCNs by a large margin when fine-tuned on supervised tasks after pre-training on millions of unlabelled molecular graphs~\citep{hu2019strategies, wang2021molclr}.

Arguably the most important contribution of \citet{xu2018powerful} was not the invention of the GIN, but the formulation of a mathematical framework to analyse the expressivity of GNNs that was based on well-known concepts from classical graph theory such as the graph isomorphism problem and the $1$-WL test. Notably, this breakthrough was achieved in parallel with \citet{morris2019weisfeiler} who published similar insights and proposed $k$-GNNs as a generalisation of traditional message-passing GNNs. The $k$-GNN architecture was designed to break through the expressivity barrier posed by the $1$-WL test and move up the \textit{WL hierarchy}. 
\begin{descrip}[WL Hierarchy]
The WL hierarchy is constituted by the $k$-WL tests for $k \in \mathbb{N}$. The $k$-WL tests represent a family of polynomial-time graph isomorphism tests of strictly increasing expressivity (with the exception of the $2$-WL test that is as powerful as the $1$-WL test). Each $k$-WL test represents an extension of the $1$-WL test from Description~\ref{descrip: 1-WL test} that operates on $k$-tuples of nodes. 
\end{descrip}
One can prove that the $k$-GNN model is as expressive as the $k$-WL test; however, the $k$-GNN architecture does not fall under the umbrella of local message-passing GNNs since it involves higher-order operations on sets of graph nodes. This makes $k$-GNNs prohibitively computationally expensive and normally impossible to use in practice.

A potential way forward was recently proposed by~\citet{bouritsas2022improving} in the form of graph substructure networks (GSNs). GSNs are associated with the emerging field of geometric deep learning~\citep{bronstein2017geometric,bronstein2021geometric} whose aim it is to effectively generalise deep learning architectures to non-Euclidean domains such as graph structures. GSNs were designed with the goal of being simultaneously practically applicable \textit{and} more expressive than the $1$-WL test. The key idea of GSNs is to compute messages in a manner that depends on the structural relationships between the considered nodes. This can be achieved by adding precomputed atomic structural descriptors for both nodes to the message-passing function $(w_r)_{r = 1}^{R}$ in the procedure outlined in Equations~\ref{eq: mpnn_equations}. Each node descriptor counts how often the node appears in a particular topological role in all subgraphs of the input graph that are isomorphic to a particular predefined (small) graph. The node descriptors do not get updated and remain static across all graph layers. The \textit{a priori} choice of basic subgraphs to look out for in the input graphs allows for the establishment of a useful inductive bias tailored to the prediction task at hand: for example, triangles might be relevant in social networks while ring-structures might play an important role in molecular graphs. Since GSNs only represent a slight generalisation of the standard local message-passing framework, their application is computationally feasible; and since they still allow for the inclusion of higher-order structural information in each message-passing step, they can be made strictly more powerful than the $1$-WL test (which then for example enables them to distinguish the graphs from Figure~\ref{fig:wlcounterexample}). The expressivity of GSNs cannot be easily formulated though within the larger WL hierarchy since suitably designed GSNs can distinguish (for example) \textit{some} graph pairs in the $3$-WL or $4$-WL class but not all. GSNs have already shown encouraging performance on chemical prediction tasks~\citep{bouritsas2022improving}.

\subsection{Critical View} \label{subsec: gnns_pro_con}

Below we give a list of advantages $\bm{(+)}$ and disadvantages $\bm{(-)}$ of message-passing GNNs as a molecular featurisation method.

\begin{itemize}
	
	\item[$\bm{(+)}$] \textbf{Differentiability.} Message-passing GNNs are able to learn features directly from highly explicit graph representations of molecules in a differentiable and task-specific manner. In this particular sense we refer to GNNs as trainable. From a mathematical perspective, the training process for GNNs takes the form of a gradient-based continuous optimisation problem. The differentiability of GNNs could potentially allow them to extract substantially larger amounts of valuable chemical information from molecular graphs than non-differentiable featurisation methods such as ECFPs and PDVs.
	
	\item[$\bm{(+)}$] \textbf{Low dimensionality.} Many message-passing GNNs still produce useful features for downstream prediction tasks even if the fingerprint length is limited to $l \leq 100$. This can be seen in contrast to ECFPs where the minimum fingerprint length to reach an acceptable level of performance is usually considered to be $l \geq 1024$. The relatively low dimensionality of GNN-based feature vectors can decrease the risk of costly downstream-computations and overfitting.
	
	\item[$\bm{(+)}$] \textbf{Ability to capture intricate patterns due to model complexity.} GNN architectures are often complex and involve large numbers of trainable parameters. This may enable them to detect nuanced and intricate patterns, especially when the training set is large.
	
	\item[$\bm{(+)}$] \textbf{High expressivity of some models.} The recent development of the GSN~\citep{bouritsas2022improving} as discussed in Section~\ref{subsec: gnn_expressivity} has shown that GNN models can be made both practical and substantially more powerful than the $1$-WL test (and thus also more powerful than ECFPs) at distinguishing graph structures.

	\item[$\bm{(-)}$] \textbf{Low expressivity of some models.} As discussed in Section~\ref{subsec: gnn_expressivity}, many common GNN models such as GCNs are substantially less powerful than the $1$-WL test at distinguishing graph structures and in some cases severely underfit the training set.
	
	\item[$\bm{(-)}$]\textbf{Risk of overfitting due to model complexity.} The complexity of GNN architectures may lead them to require relatively large amounts of data and/or careful regularisation in order to avoid overfitting and generalise effectively.
	
	\item[$\bm{(-)}$] \textbf{Limited depth.} The success of modern deep-learning architectures lies to a large part in their ability to automatically extract abstract high-level features directly from raw input data. This ability is crucially reliant on the sufficient \textit{depth} of the neural model since learned features tend to become more abstract and task-specific as they are moving through consecutive network layers. GNNs cannot yet leverage the power of deep architectures as the predictive utility of learned node embeddings tends to decrease sharply after a few GNN layers. The exact reasons and possible solutions for this pathology are still under research~\citep{liu2020towards,jin2022feature,zhang2021evaluating,godwin2021simple, chen2020measuring}. A likely cause might lie in the phenomenon of \textit{oversmoothing} which refers to a tendency of successively updated node features to eventually become indistinguishable.
	
	\item[$\bm{(-)}$] \textbf{Difficulty of interpreting learned features.} The graph-level featurisation generated by the global pooling step of a GNN usually consists of a vector of obscure real numbers that cannot be directly interpreted by human experts in any straightforward manner.
	
	\item[$\bm{(-)}$] \textbf{No chemically reasonable initial embedding of chemical space.} 
	The molecular featurisations extracted by untrained GNNs essentially represent extremely noisy random projections of the information contained in the initial molecular graph features. GNNs therefore need to relearn a reasonable embedding from chemical space from scratch every time they are trained on a new problem (unless they have been combined with a suitable self-supervised or transfer learning approach). In particular, this might force GNNs to continuously relearn basic chemical features that are useful across a wide set of tasks. This can be seen in contrast to classical methods like ECFPs and PDVs which automatically generate embeddings of chemical space that have a basic utility for many applications.
	
	\item[$\bm{(-)}$] \textbf{Difficulty of implementation.} PDVs and ECFPs can be generated easily via ready-to-use algorithms implemented in widely-used cheminformatics libraries such as the \texttt{Python}-package \texttt{RDKit}~\citep{landrum2006rdkit}. The implementation of a tailored message-passing-GNN model, on the other hand, usually requires familiarity with technically advanced graph-based deep-learning libraries such as \texttt{PyTorch Geometric}~\citep{fey2019fast}.
		
	\item[$\bm{(-)}$] \textbf{High computational cost.} Like most deep-learning models, GNNs involve large numbers of trainable parameters and costly computations related to the multiplication of large matrices. This can make GNN models much slower to train than PDV or ECFP-based models, although this difference in speed can be significantly mitigated if a suitable GPU can be leveraged during GNN training.
	
	\item[$\bm{(-)}$] \textbf{Locality of receptive field.} Just like ECFPs, message-passing GNNs are based on a local neighbourhood-aggregation scheme for the nodes in a graph. By design, they are thus only capable of learning node embeddings that contain features from local circular subgraphs; this prevents information flow between distant nodes during the atom feature updating process.
	
	\item[$\bm{(-)}$] \textbf{Potential information loss during graph pooling.} The global pooling step in a GNN that combines the multisets of updated node embeddings to a final graph-level feature vector can easily turn into a (non-injective) information bottleneck if not designed carefully. Imagine for example two multisets of identical node feature vectors $\{f, f \}_{\text{mul}}$ and $\{f, f, f\}_{\text{mul}}$; if the popular averaging-operator is used to pool these two multisets, they lead to the same output representation $f$ even though both multisets are different and should thus be mapped to distinct representations. A potential way to avoid such pitfalls may be to employ differentiable graph pooling techniques that (at least in theory) can provably approximate any sufficiently regular pooling function~\citep{navarin2019universal}.
		
\end{itemize}

\newpage
\section{Molecular Featurisations: Critical Overview} \label{sec: mol_feats_crit_overview}

We have summarised the critical analyses of PDVs~(see Section~\ref{subsec: pdvs_pro_con}), ECFPs~(see Section~\ref{subsec: ecfps_pro_con}) and message-passing GNNs~(see Section~\ref{subsec: gnns_pro_con}) in Table~\ref{tab: mol_feats_pro_cons} to provide a final overview before moving on to the systematic computational experiments in Chapter~\ref{chap: qsar_ac_study}.
\begin{table}[H] \small
	\centering
	{\renewcommand{\arraystretch}{1.5}
		\begin{tabular}{ V{3} m{1.3cm}|m{6.1cm}|m{6.1cm} V{3}}
			
			\hlineB{3}
			\multicolumn{3}{V{3} c V{3}}{\textbf{Molecular Featurisation Methods: Critical Overview}} \\
			
			\hline
			\multicolumn{1}{V{3} c|}{\textbf{Method}} & \multicolumn{1}{c|}{\textbf{Advantages} $\bm{(+)}$} & \multicolumn{1}{c V{3}}{\textbf{Disadvantages} $\bm{(-)}$} \\
			\hline
			
			\multicolumn{1}{V{3} c|}{PDV}
			&\begin{itemize}[leftmargin=0.47cm] \setlength\itemsep{-0.3em}
				\item Low computational cost
				\item Interpretability (in some cases)
				\item Simplicity of implementation
				\item Chemically reasonable initial embedding of chemical space 
				\item Low dimensionality 
				\item Global receptive field \vspace{-0.535pc}
			\end{itemize}
			&
			\begin{itemize}[leftmargin=0.47cm] \setlength\itemsep{-0.3em}
				\item Non-differentiability
				\item Necessity for feature selection
				\item Finite number of descriptors \vspace{-0.535pc}
			\end{itemize}
			
			\\ \hline
			
			\multicolumn{1}{V{3} c|}{ECFP}   
			&
			\begin{itemize}[leftmargin=0.47cm] \setlength\itemsep{-0.3em}
				\item Low computational cost
				\item Interpretability 
				\item Simplicity of implementation
				\item Chemically reasonable initial embedding of chemical space 
				\item Arbitrary circular subgraphs \vspace{-0.535pc}
			\end{itemize}
			&
			\begin{itemize}[leftmargin=0.47cm] \setlength\itemsep{-0.3em}
				\item Non-differentiablity
				\item Trade-off between dimensionality and hash collisions
				\item Locality of receptive field \vspace{-0.535pc}
			\end{itemize}
			
			\\ \hline
			
			\multicolumn{1}{V{3} c|}{GNN}
			&
			\begin{itemize}[leftmargin=0.47cm] \setlength\itemsep{-0.3em}
				\item Differentiability
				\item Low dimensionality
				\item Ability to capture intricate patterns due to model complexity
				\item High expressivity of some models \vspace{-0.535pc}
			\end{itemize}
			&
			\begin{itemize}[leftmargin=0.47cm] \setlength\itemsep{-0.3em}
				\item Low expressivity of some models
				\item Risk of overfitting due to model complexity
				\item Limited depth
				\item Difficulty of interpreting learned features
				\item No chemically reasonable initial embedding of chemical space
				\item Difficulty of implementation
				\item High computational cost
				\item Locality of receptive field
				\item Potential information loss during graph pooling \vspace{-0.535pc}
			\end{itemize}
			\\
			\hlineB{3}
			
	\end{tabular}}
	
	\caption[Strengths and weaknesses of PDVs, ECFPs and message-passing GNNs.]{Advantages and disadvantages of physicochemical-descriptor vectors~(PDVs), extended-connectivity fingerprints~(ECFPs) and message-passing graph neural networks~(GNNs) for molecular featurisation.}
	
	\label{tab: mol_feats_pro_cons}
\end{table}

\newpage $\text{}$ 
\newpage
\chapter[Exploring Molecular Featurisations for QSAR and Activity-Cliff Prediction: A~Computational Study]{Exploring Molecular Featurisations for QSAR and Activity-Cliff Prediction: A~Computational Study} \label{chap: qsar_ac_study}

\noindent \textit{We have published our findings from this chapter as a \href{https://doi.org/10.1186/s13321-023-00708-w}{peer-reviewed research paper}~\citep{dablander2023exploring} in the Journal of Cheminformatics. Most of the figures and tables, as well as major parts of the text contained in this chapter are thus either identical or similar to the content of our published article. We have also presented results from this chapter as a \href{http://dx.doi.org/10.13140/RG.2.2.35914.34241}{scientific poster}~\citep{dablander2023exploringqsaracpredposter} at the 10th International Congress on Industrial and Applied Mathematics~(ICIAM 2023, Tokyo). }
	
\section{Overview} \label{sec: qsar_ac_study_overview}

In Chapter~\ref{chap: mol_reps} we have given a detailed technical description of state-of-the-art featurisation methods in molecular machine learning along with a critical discussion of their theoretical strengths and weaknesses. However, perhaps the most pressing question has not yet been addressed: which featurisation method actually leads to the strongest performance at chemical prediction tasks? There is still disagreement in the computational-chemistry community whether modern trainable message-passing GNNs do in fact outperform classical non-trainable featurisation methods such as ECFPs and PDVs at important molecular machine-learning tasks. A multitude of studies have found that GNNs do indeed clearly outcompete ECFPs and PDVs~\citep{duvenaud2015convolutional, gilmer2017neural,yang2019analyzing, rao2022quantitative, hop2018geometric, shang2018edge, li2017learning, xiong2019pushing}. However, a considerable number of other studies have found evidence pointing towards the exact opposite~\citep{stepivsnik2021comprehensive, mayr2018large, jiang2021could, wang2021molclr, menke2021using, chithrananda2020chemberta, sabando2021using, winter2019learning,van2022exposing}.

In this chapter, we present a series of carefully designed computational experiments to systematically investigate the predictive powers of PDVs, ECFPs and GINs for two important tasks in computational drug discovery: the well-researched problem of quantitative structure-activity relationship~(QSAR) prediction and the largely unexplored and difficult challenge of activity-cliff (AC) prediction. 
QSAR-prediction refers to the problem of using experimental data to learn a mapping from a computational representation $\mathcal{R}$ of a chemical compound to its biological activity value against a given pharmacological target such as an enzyme or a receptor. A QSAR model usually takes the form of a machine-learning model that can be decomposed into a molecular featurisation method followed by a regression technique:
$$\mathcal{R} \quad \overset{\longmapsto}{\footnotesize \text{featurisation}} \quad \mathcal{F} = (f_1, ..., f_l) \in \mathbb{R}^l \quad \overset{\longmapsto}{\footnotesize \text{regression}} \quad \text{activity (pK\textsubscript{i}, pIC\textsubscript{50}, ...)} \in \mathbb{R}. $$
ACs are pairs of very similar compounds whose molecular structure only differs by a small change at a specific site but which exhibit a very large difference in their activity against a given pharmacological target. ACs explicitly encode small structural changes that abruptly change a biological effect and are thus rich in pharmacological information. AC-prediction primarily refers to the task of classifying whether a given pair of structurally similar compounds forms an AC or not, but usually also implicitly encompasses the classification of the potency direction~(PD) of the pair (i.e.~which of both compounds is more active). Every QSAR model can be repurposed as an AC-prediction model by using it to individually predict the activities of two structurally similar compounds (which gives the PD-classification) and then thresholding the absolute difference of the two predicted activities (which gives the AC-classification). Accurate QSAR-prediction and AC-prediction models would represent valuable tools in the computer-aided search for novel pharmacological compounds with desired properties.

In this chapter, we conduct a rigorous computational study to evaluate the QSAR and AC-prediction performance of nine modern QSAR models on three curated pharmacological data sets~(dopamine receptor D2, factor Xa and SARS-CoV-2 main protease). Each QSAR model is generated by merging one of three molecular featurisation methods (PDVs, ECFPs, or GINs) with one of three canonically used regression techniques (random forests~(RFs), k-nearest neighbours~(kNNs), or multilayer perceptrons~(MLPs)). Our experimental setup thus allows for a systematic comparison of the three studied featurisations across three regression techniques, three pharmacological targets, and three distinct chemical prediction tasks (QSAR-prediction, AC-classification, and PD-classification). Our experiments are thus organised according to a robust combinatorial methodology of the form
$$\vert \{\text{featurisers}\} \vert \times \vert \{\text{regressors}\} \vert \times \vert \{\text{data sets}\} \vert \times \vert \{\text{tasks}\} \vert = 3 \times 3 \times 3 \times 3 $$
that to the best of our knowledge has not been described before in the literature. The QSAR-prediction, AC-classification and PD-classification performance of each featuriser-regressor combination on each data set is measured using a strict data splitting and evaluation strategy involving a full algorithmic hyperparameter optimisation. To evaluate AC and PD-classification performance, we develop a novel pair-based data-splitting method that operates on top of data splits for individual molecules in a natural and interpretable manner. Our proposed data split for sets of molecular pairs is conceptually simple, yet allows one to make important distinctions between several types of compound pairs with respect to their molecular overlap with an underlying training set of individual compounds.

It has been hypothesised that ACs form one of the major sources of prediction error in QSAR modelling~\citep{cruz-monteagudo_activity_2014, maggiora_outliers_2006} but so far only a few studies have attempted to generate empirical evidence for this claim~\citep{golbraikh2014data, sheridan_experimental_2020, van2022exposing}. However, these studies follow an indirect approach by measuring QSAR performance on \textit{individual} compounds involved in ACs instead of \textit{pairs} of similar compounds. Our published work~\citep{dablander2023exploring} closes a gap in the current QSAR and AC literature by providing the first computational study to investigate the capabilities of state-of-the-art QSAR models to classify whether a given pair of similar compounds forms an AC or not. The main aim of the work in this chapter is to answer the following question:

\begin{itemize}
	
	\item Which molecular featurisation method performs best for QSAR or AC-prediction across different regression techniques and data sets? In particular, when (if at all) do trainable GINs outperform non-trainable PDVs and ECFPs?

\end{itemize}
Besides this, we are also interested in the following questions:
\begin{itemize}

	\item When (if at all) are common QSAR models capable of predicting the existence of ACs?
	
	\item When (if at all) are common QSAR models capable of predicting which of two similar compounds is more active?
	
	\item Which QSAR model shows the strongest AC-prediction performance, and should thus be used as a baseline against which to compare tailored AC-prediction models?
	
	\item What is the quantitative relationship between QSAR and AC-prediction performance for QSAR models?
	
\end{itemize}

\section[Introduction to Activity Cliffs and Activity-Cliff Prediction]{Introduction to Activity Cliffs and \\ Activity-Cliff Prediction} \label{sec: qsar_ac_study_intro_acs}

As mentioned above, activity cliffs~(ACs) are pairs of small molecules that exhibit high structural similarity but at the same time show an unexpectedly large difference in their binding affinity against a given pharmacological target~\citep{silipo1991qsar, maggiora_outliers_2006, sheridan_experimental_2020, cruz-monteagudo_activity_2014, stumpfe_recent_2014, stumpfe_evolving_2019, stumpfe_advances_2020}. The existence of ACs directly defies the intuitive idea that chemical compounds with similar structures should have similar activities, often referred to as the \textit{molecular similarity principle}. An example of an AC between two inhibitors of blood coagulation factor Xa~\citep{leadley2001coagulation} is depicted in Figure~\ref{fig:ac_example_factor_Xa_CHEMBL658338}; a small chemical modification involving the addition of a hydroxyl group leads to an increase in binding affinity of almost three orders of magnitude.
\begin{figure}[h]
	\centering
	\includegraphics[width=1\linewidth]{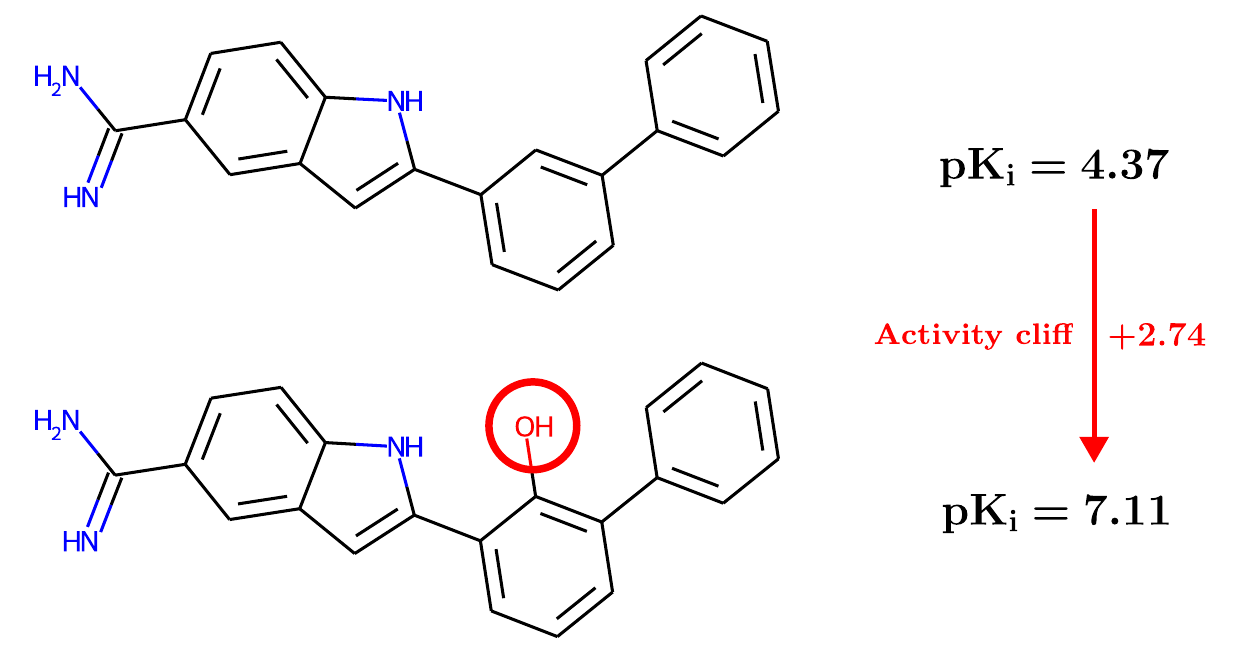}
	\caption[Example of an AC for factor Xa.]{Example of an activity cliff~(AC) for blood coagulation factor Xa. A small structural change in the upper compound leads to an increase in binding affinity of almost three orders of magnitude. Here binding affinity is quantified via the commonly-used pK\textsubscript{i}-value, which represents the negative decadic logarithm of the dissociation constant K\textsubscript{i} of the drug-target complex. Both compounds can be found in the same ChEMBL assay with ID 658338.}
	\label{fig:ac_example_factor_Xa_CHEMBL658338}
\end{figure} 

For medicinal chemists, ACs can be puzzling and confound their understanding of structure-activity relationships~(SARs)~\citep{stumpfe_recent_2014, vogt2011activity, dimova_activity_2015}. ACs reveal small compound-modifications with large biological impact and thus represent rich sources of pharmacological information. Mechanisms by which a small structural change can give rise to an AC include a drastic change in 3D-conformation and/or the switching to a different binding mode or even binding site. Another mechanism that could potentially induce ACs is the stabilisation of a single molecular conformation as a result of a small structural change; such an effect could possibly increase the equilibrium concentration of the binding conformation, thus leading to an AC without creating any specific change in binding interaction. One can speculate that this might in fact be the mechanism underlying the AC shown in Figure~\ref{fig:ac_example_factor_Xa_CHEMBL658338}; it is imaginable that the depicted addition of a hydroxyl group leads to a stabilising intramolecular N-H to O interaction. 

ACs form discontinuities in the SAR-landscape and can therefore have a crucial impact on the success of lead-optimisation programmes. While knowledge of ACs can be powerful when trying to escape from flat regions of the SAR-landscape, their presence can be detrimental in later stages of the drug development process, when multiple molecular properties beyond mere activity need to be balanced carefully to arrive at a safe and effective compound~\citep{cruz-monteagudo_activity_2014, stumpfe_recent_2014}. In the field of cheminformatics, ACs are suspected to form one of the major roadblocks for successful QSAR modelling~\citep{golbraikh2014data, cruz-monteagudo_activity_2014, maggiora_outliers_2006, sheridan_experimental_2020}. In practice, abrupt changes in activity are expected to negatively influence the abilities of QSAR methods to learn general SAR-trends. On the other hand, in theory ACs can be seen as an opportunity for such methods to extract precious SAR-knowledge. During the development of QSAR models, ACs are sometimes dismissed as measurement errors~\citep{medina2013activity}, but simply removing ACs from a training data set can result in a loss of large amounts of pharmacological information~\citep{cruz2016probing}. 

\citet{golbraikh2014data} developed the simple MODI metric which quantifies the smoothness of the SAR-landscape of a given binary molecular classification data set. They subsequently showed that SAR-landscape smoothness is a strong determinant for downstream QSAR-modelling performance across a large number of data sets. In a conceptually related work,~\citet{sheridan_experimental_2020} found that the density of ACs in a given molecular data set is strongly predictive for its overall modelability by classical descriptor and fingerprint-based QSAR methods. Furthermore, they demonstrated that such methods incur a significant drop in performance when the molecular test set is restricted to only include ``cliffy" compounds which form a large number of ACs. In a recent and more extensive study,~\citet{van2022exposing} observed a similar drop in performance when testing a large number of classical and graph-based QSAR techniques on sets of compounds involved in ACs. Notably, in both studies this performance drop was also observed for highly nonlinear and adaptive deep learning models. Moreover, van Tilborg reports that descriptor-based QSAR methods do in fact outperform more complex deep learning models on ``cliffy" compounds associated with ACs. This runs counter to earlier hopes expressed in the literature that the approximation powers of highly parametric deep networks might ameliorate the problem of ACs~\citep{winkler2017performance}.

While these works provide valuable insights into the detrimental effects of SAR discontinuity on QSAR models, they consider ACs mainly indirectly by focussing on \textit{individual} compounds involved in ACs. Arguably, a distinct and more natural approach would be to investigate ACs directly at the level of compound \textit{pairs}. This approach has been followed in the AC-prediction field which is concerned with the development of techniques to classify the existence and direction of potential ACs. An effective AC-prediction method would be of great value for drug development with important applications in rational compound optimisation and automatic SAR-knowledge acquisition. 

The AC-prediction literature is still very thin compared to the QSAR-prediction literature. An attempt to conduct an exhaustive literature review on AC-prediction techniques revealed a total of $15$ methods~\citep{heikamp_prediction_2012, tamura_ligand-based_2020, de_la_vega_de_leon_prediction_2014, beck2014quantitative, namasivayam_searching_2012, namasivayam_prediction_2013, husby_structure-based_2015, horvath_prediction_2016, perez-benito_predicting_2019, asawa_prediction_2020, keyvanpour2021pcac, iqbal_prediction_2021, park2022acgcn, chen2022deepac, dablander2021siamese}, all of which have been published since 2012. Current AC-prediction methods are often based on creative ways to extract features from pairs of molecular compounds in a manner suitable for standard machine learning pipelines. For example,~\citet{horvath_prediction_2016} used condensed graphs of reactions~\citep{hoonakker2011condensed,jauffret1990machine}, a representation technique originally introduced for modelling of chemical reactions, to encode pairs of similar compounds and subsequently predict ACs. Another method was recently described by~\citet{iqbal_prediction_2021} who investigated the abilities of convolutional neural networks operating on 2D images of compound pairs to distinguish between ACs and non-ACs. Interestingly, none of the AC-prediction methods we identified employ feature extraction techniques built on GNNs with the exception of~\citet{park2022acgcn} who recently applied graph convolutional methods to compound-pairs to predict ACs.

In spite of the existence of various technically complex AC-prediction models there are significant gaps left in the current AC-prediction literature. Note that any given QSAR model can immediately be repurposed as an AC-prediction model by using it to individually predict the activities of two structurally similar compounds and then thresholding the difference of both predicted activities. Nevertheless, at the moment there is no study that uses this straightforward technique to rigorously investigate the potential of modern QSAR models to classify whether a given pair of compounds forms an AC or not. Importantly, this also entails that the most salient AC-prediction models~\citep{heikamp_prediction_2012, de_la_vega_de_leon_prediction_2014, horvath_prediction_2016, tamura_ligand-based_2020, iqbal_prediction_2021} have not been compared to a simple QSAR-modelling baseline. It is thus an open question to what extent (if at all) these tailored AC-prediction techniques outcompete repurposed state-of-the-art QSAR methods at the detection of ACs. This question is especially relevant in light of the fact that several published AC-prediction models~\citep{heikamp_prediction_2012, de_la_vega_de_leon_prediction_2014, iqbal_prediction_2021} are evaluated via compound-pair-based data splits which incur a significant overlap between the training set and test set at the level of individual molecules. This type of data split should strongly favour standard QSAR models for AC-prediction, yet a comparison to such baseline methods is lacking. We address these problems by providing the first computational study that explores the capabilities of modern QSAR models to predict ACs. The results of our study establish a natural baseline against which more advanced AC-prediction models can be compared. Moreover, we disentangle the prevalent data-splitting issues in AC-prediction settings by introducing a novel splitting technique. This method, which we recommend as the standard for future publications, allows one to make important distinctions between several types of compound-pair test-sets with respect to their molecular overlaps with the underlying training set.

\section{Experimental Methodology} \label{sec: qsar_ac_study_exp_methods}

\subsection{Molecular Data Sets} \label{subsec: qsar_ac_study_data_sets}

We built three binding affinity data sets of small-molecule inhibitors of dopamine receptor D2, factor Xa, and SARS-CoV-2 main protease. Dopamine receptor D2 is the main site of action for classic antipsychotic drugs which act as antagonists of the D2 receptor~\citep{seeman1987dopamine}. Factor Xa is an enzyme in the coagulation cascade and a canonical target for blood-thinning drugs~\citep{leadley2001coagulation}. SARS-CoV-2 main protease is one of the key enzymes in the viral replication cycle of the SARS coronavirus~2, that recently caused the unprecedented COVID-19 pandemic; it is one of the most promising targets for antiviral drugs against this coronavirus~\citep{ullrich2020sars}. The protein structures of dopamine receptor D2, factor Xa and SARS-CoV-2 main protease are visualised in~\Cref{fig:d2,fig:fxa,fig:mpro}, respectively.
\begin{figure}
	\centering
	\vspace*{-30mm}
	\includegraphics[width=0.97\linewidth]{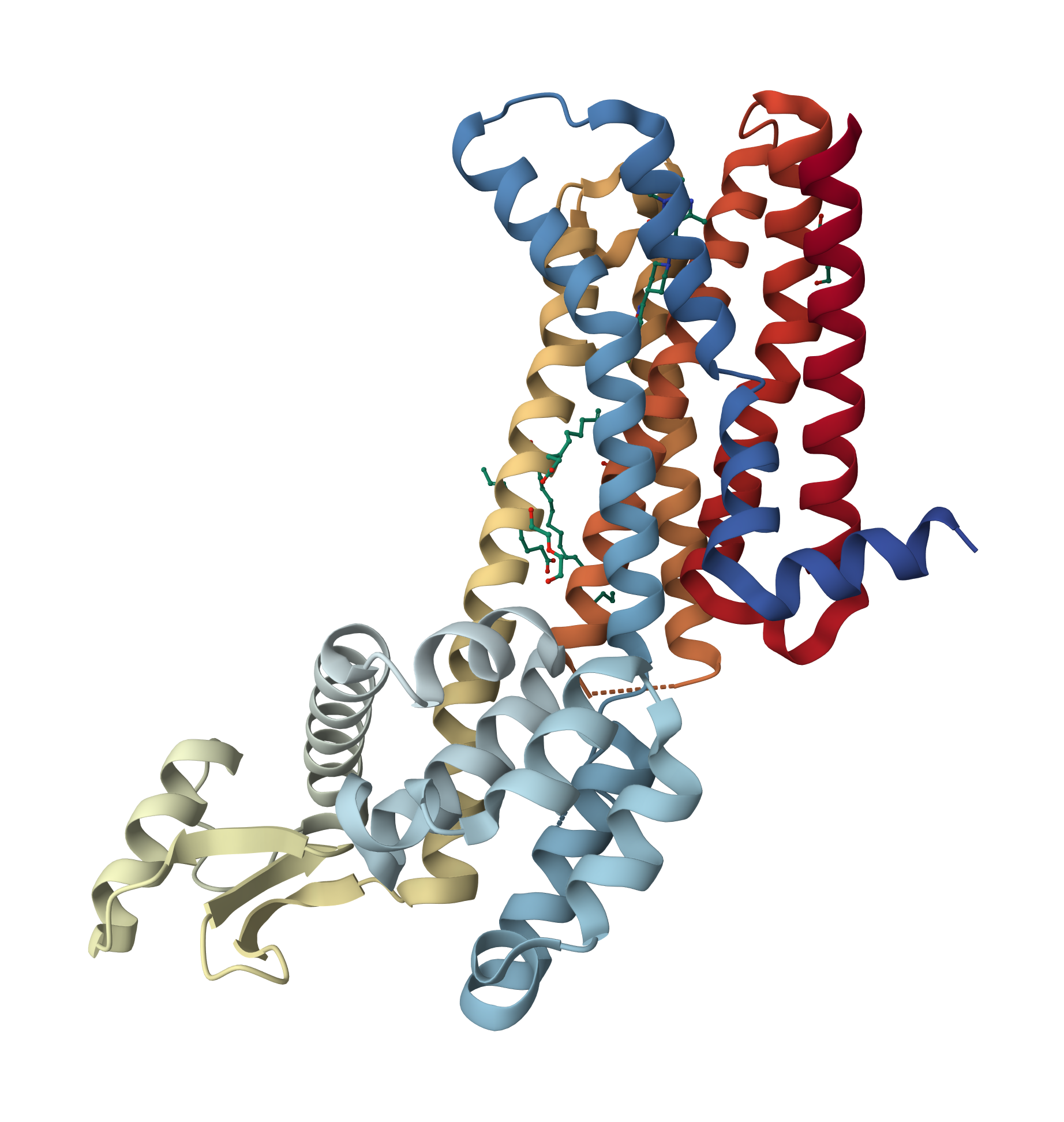}
	\caption[Protein structure of dopamine receptor D2.]{Protein structure of \textbf{dopamine receptor D2}. Extracted from the Research Collaboratory for Structural Bioinformatics Protein Data Bank~(RCSB PDB)~\citep{berman2000protein}. PDB ID: 6CM4.}
	\label{fig:d2}
\end{figure}
\begin{figure}
	\centering
	\vspace*{-30mm}
	\includegraphics[width=0.97\linewidth]{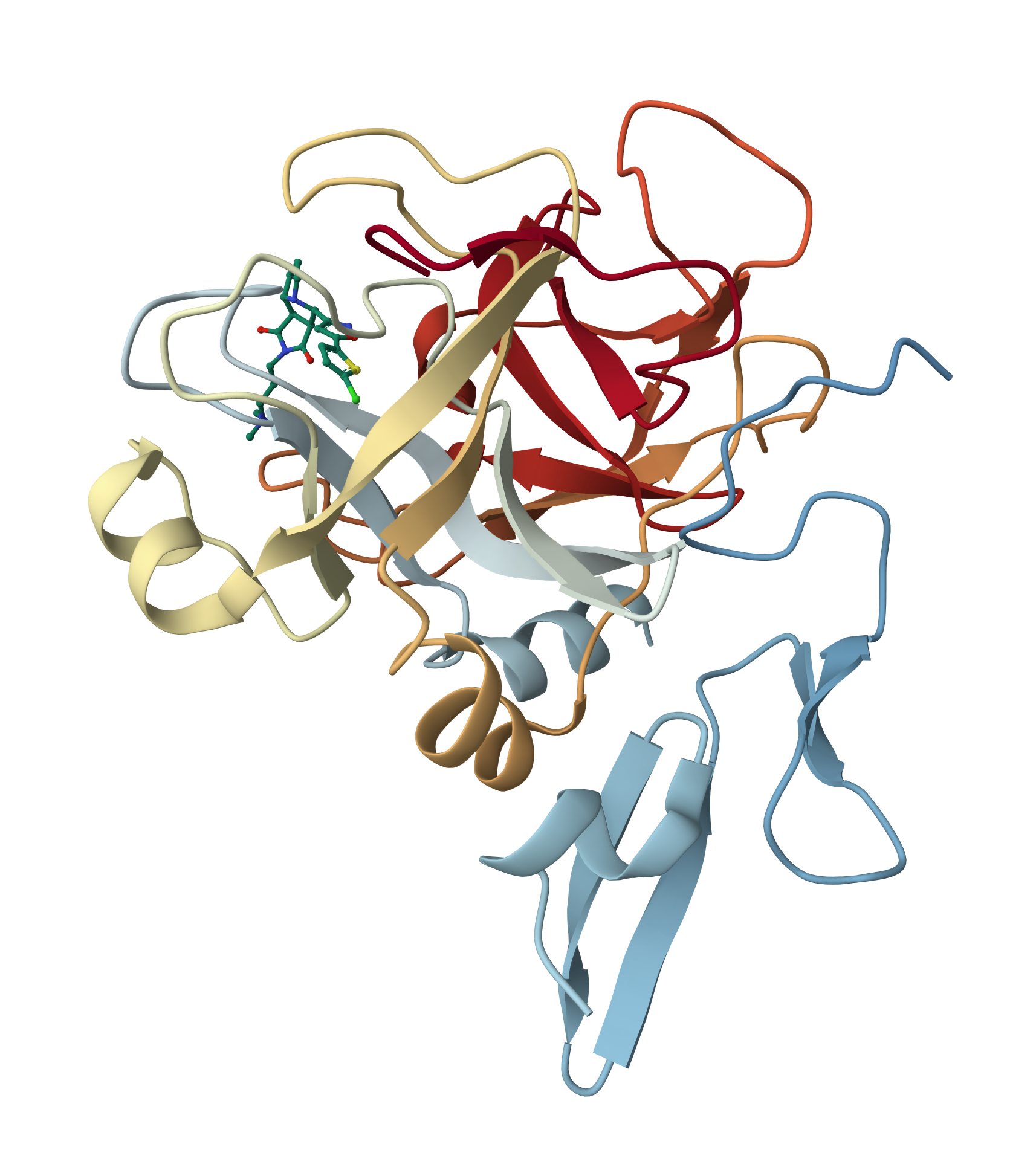}
	\caption[Protein structure of factor Xa.]{Protein structure of \textbf{factor Xa}. Extracted from the Research Collaboratory for Structural Bioinformatics Protein Data Bank~(RCSB PDB)~\citep{berman2000protein}. PDB ID: 2JKH.}
	\label{fig:fxa}
\end{figure}
\begin{figure}
	\centering
	\vspace*{-30mm}
	\includegraphics[width=0.97\linewidth]{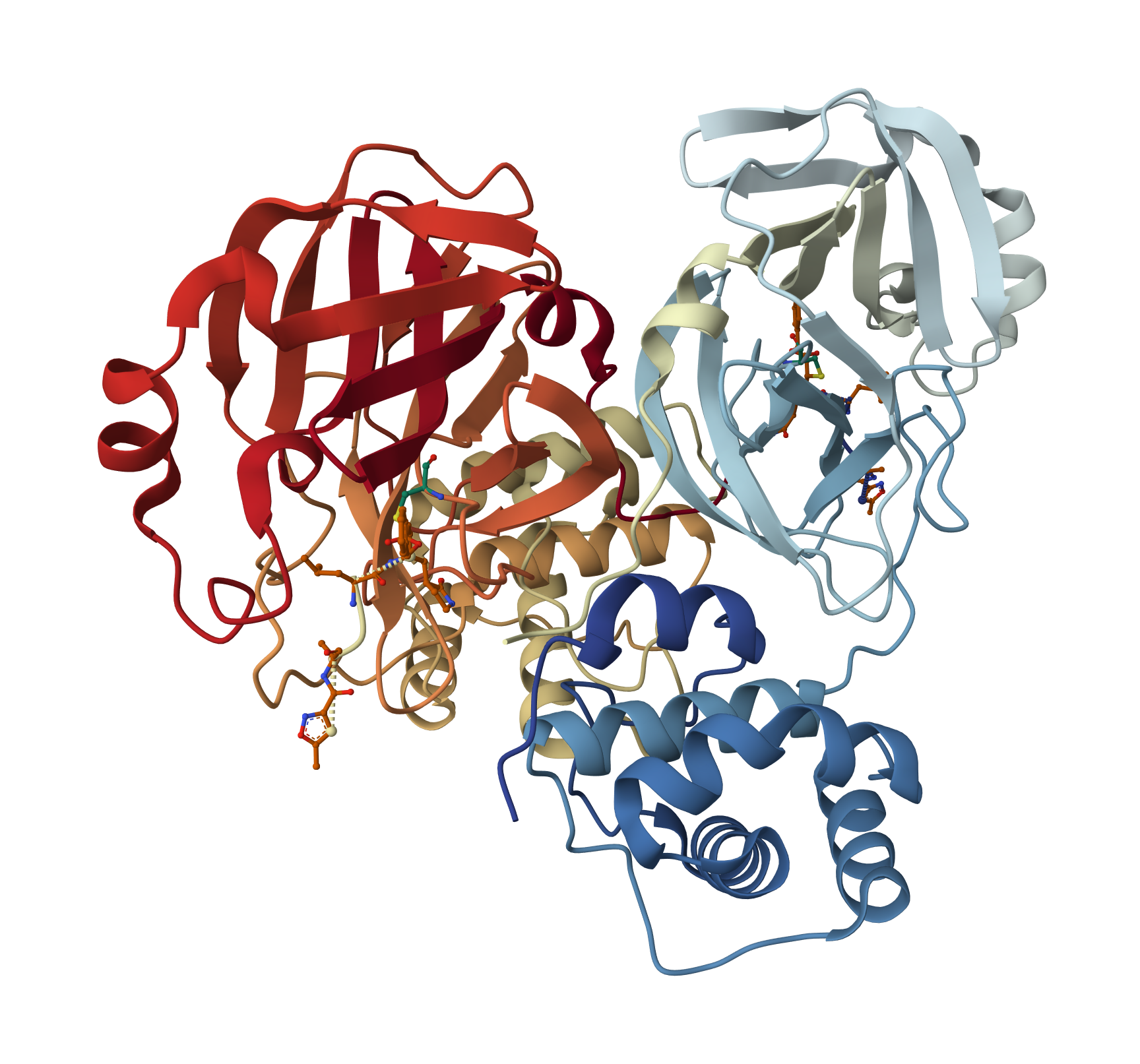}
	\caption[Protein structure of SARS-CoV-2 main protease.]{Protein structure of \textbf{SARS-CoV-2 main protease}. Extracted from the Research Collaboratory for Structural Bioinformatics Protein Data Bank~(RCSB PDB)~\citep{berman2000protein}. PDB ID: 6LU7.}
	\label{fig:mpro}
\end{figure}

For dopamine receptor D2 and factor Xa, data was extracted from the ChEMBL database~\citep{liu2007bindingdb} in the form of SMILES strings with associated K\textsubscript{i}~[nM] values. All activity values extracted from ChEMBL were associated with the equality standard relation ``$=$"; qualified activity values associated with other relations such as ``$<$" or ``$>$" were not used. In the case of SARS-CoV-2 main protease, data was obtained from the COVID moonshot project~\citep{achdout2020covid} in the form of SMILES strings with associated IC\textsubscript{50}~[µM] values. SMILES strings were standardised and desalted via the ChEMBL structure pipeline~\citep{bento2020open}. This step also removed solvents and isotopic information. Following this, SMILES strings that produced error messages when turned into an \texttt{RDKit} mol object were deleted. Finally, a scan for duplicate molecules was performed: If the activities in a set of duplicate molecules were within the same order of magnitude then the set was unified via geometric averaging. Otherwise, the measurements were considered unreliable and the corresponding set of duplicate molecules was removed. This procedure reduced the data set for dopamine receptor D2 / factor Xa / SARS-CoV-2 main protease from $8883$ / $4116$ / $1926$ compounds to $6333$ / $3605$ / $1924$ unique compounds whereby $174$ / $21$ / $0$ sets of duplicate SMILES were removed and the rest was unified.

\subsection{Definition of Binary Activity-Cliff Classification-Tasks} \label{subsec: qsar_ac_study_ac_bin_class_prob}

The exact definition of an AC hinges on two concepts: structural similarity and large activity difference. An elegant technique to measure structural similarity in the context of AC analysis is given by the matched molecular pair~(MMP) formalism~\citep{kenny2005structure, hu_extending_2012}. An MMP is a pair of compounds that share a common structural core but differ by a small structural change at a specific site. Figure~\ref{fig:ac_example_factor_Xa_CHEMBL658338} depicts an example of an MMP whose variable parts are formed by a hydrogen atom and a hydroxyl group. To detect MMPs algorithmically, we used the \texttt{mmpdb} \texttt{Python}-package provided by~\citet{dalke2018mmpdb}. We restricted ourselves to the commonly-used definition of MMPs~\citep{heikamp_prediction_2012, horvath_prediction_2016, tamura_ligand-based_2020} which employs relatively generous size constraints:
the MMP core was required to contain at least twice as many heavy atoms as either of the two variable parts; each variable part was required to contain no more than $13$ heavy atoms; the maximal size difference between both variable parts was set to eight heavy atoms; and bond cutting was restricted to single exocyclic bonds. Note that the decomposition of an MMP into a core part and two variable parts is not necessarily unique as the size of the chosen core part can vary. To guarantee a well-defined mapping from each MMP to a unique structural core, we canonically chose the core that contained the largest number of heavy atoms whenever there was ambiguity.

Based on the ratio of the activity values of both MMP compounds, each MMP was assigned to one of three classes: ``AC", ``non-AC" or ``half-AC". In accordance with the literature~\citep{heikamp_prediction_2012, namasivayam_prediction_2013, horvath_prediction_2016, vogt2011activity, bajorath_exploring_2014} we assigned an MMP to the ``AC"-class if both activity values differed by at least a factor of $100$. If both activity values differed by no more than a factor of $10$, then the MMP was assigned to the ``non-AC"-class. 
In the residual case the MMP was assigned to the ``half-AC"-class. To arrive at a well-separated binary classification task, we labelled all ACs as positives and all non-ACs as negatives. The half-ACs were removed and not considered further in our experiments. It is also relevant to know the direction of a potential activity cliff, i.e.~which of the compounds in the pair is more active. We thus assigned a binary label to each MMP indicating its potency direction~(PD). PD-classification is a balanced binary classification task. Table~\ref{tab: qsar_ac_study_datsets_overview} gives an overview of all our curated data sets.
\begin{table*}[h]
	\centering
	\normalsize
	{\renewcommand{\arraystretch}{1.3}
		\begin{tabular}{p{3.4cm} V{4} p{2.93cm}V{4}p{2.93cm}V{4}p{2.98cm}}

			\textbf{Data Set} & 
			\textbf{Dopamine Receptor D2} & 
			\textbf{Factor Xa} & 
			\textbf{SARS-CoV-2 \newline Main Protease} 
			\\ \hlineB{4}
			
			\textbf{Compounds} & 
			\multicolumn{1}{rV{4}}{$6333$} & 
			\multicolumn{1}{rV{4}}{$3605$} & 
			\multicolumn{1}{r}{$1924$}
			\\ \hline
			
			\textbf{MMPs} & 
			\multicolumn{1}{rV{4}}{$35484$} & 
			\multicolumn{1}{rV{4}}{$21292$} & 
			\multicolumn{1}{r}{$12594$}
			\\ \hline
			
			\textbf{ACs} & 
			\multicolumn{1}{rV{4}}{$461$} & 
			\multicolumn{1}{rV{4}}{$1896$} & 
			\multicolumn{1}{r}{$521$}
			\\ \hline
			
			\textbf{Half-ACs} & 
			\multicolumn{1}{rV{4}}{$3804$} & 
			\multicolumn{1}{rV{4}}{$4693$} & 
			\multicolumn{1}{r}{$1762$}
			\\ \hline
			
			\textbf{Non-ACs} & 
			\multicolumn{1}{rV{4}}{$31219$} & 
			\multicolumn{1}{rV{4}}{$14703$} & 
			\multicolumn{1}{r}{$10311$}
			\\ \hline
			
			\textbf{ACs : Non-ACs} & 
			\multicolumn{1}{rV{4}}{$\approx 1 : 68$}& 
			\multicolumn{1}{rV{4}}{$\approx 1 : 8$} & 
			\multicolumn{1}{r}{$\approx 1: 20$} \\ \hline
			
			\textbf{Measurement} & 
			\multicolumn{1}{rV{4}}{K\textsubscript{i}~[nM]} & 
			\multicolumn{1}{rV{4}}{K\textsubscript{i}~[nM]} & 
			\multicolumn{1}{r}{IC\textsubscript{50}~[µM]}
			
	\end{tabular}}
	
	\vspace*{6mm}
	
	\caption[Data sets for AC-prediction study.]{Sizes of our curated data sets and their respective numbers of matched molecular pairs~(MMPs), activity cliffs~(ACs), half-activity-cliffs~(half-ACs) and non-activity-cliffs~(non-ACs).}
	
	\label{tab: qsar_ac_study_datsets_overview}
\end{table*}

It is worth emphasizing again that the definition of an AC hinges on the employed measure of structural similarity. While the MMP formalism currently represents the most widespread similarity criterion in the field of AC research, other techniques have regularly been employed in the past. In particular, the Tanimoto similarity of binary structural fingerprints (such as ECFPs or MACCS fingerprints) has originally frequently been used to define ACs~\citep{hu_mmp-cliffs_2012,stumpfe_evolving_2019}. One obvious advantage of this approach is its computational simplicity. Another advantage may be that ACs can be defined directly within the input feature space of a potential machine learning model. As a result, ACs based on this definition might intuitively map very well to compound pairs whose activity difference is indeed challenging to predict for fingerprint-based QSAR models. However, using Tanimoto similarity in feature space to define ACs also has a variety of serious drawbacks that have been effectively summarised by \citet{stumpfe_evolving_2019}: Tanimoto similarity varies continuously in the interval $[0,1]$ and thus requires the choice of a subjective threshold value for the classification of compound pairs as similar. Furthermore, different fingerprints generally lead to different Tanimoto similarities; this makes the classification of compound pairs as similar dependent on the employed fingerprint. Finally, Tanimoto similarity values can sometimes be difficult to interpret from a chemical perspective and might not always accurately reflect the intuitions of a chemical expert. In contrast, note that MMPs do not require the choice of a similarity threshold, are not dependent on a particular fingerprint featurisation, and have a clear chemical interpretation in terms of structural cores and variable parts. These reasons might explain the shift away from Tanimoto similarity towards the MMP formalism that could be observed in recent years in the field of AC research.

\subsection{Developed Pair-Based Data Splitting Technique} \label{subsec: qsar_ac_study_data_splitting}

ACs consist of molecular pairs rather than single molecules; it is thus not obvious how best to split up a chemical data set into non-overlapping training and test sets for the fair evaluation of an AC-prediction method. There seems to be no consensus about which data splitting strategy should be canonically used. Several authors~\citep{heikamp_prediction_2012, de_la_vega_de_leon_prediction_2014, iqbal_prediction_2021} have employed a random split at the level of compound pairs. While this technique is conceptually straightforward, it must be expected to incur a significant overlap between training and test set at the level of individual molecules. For example, randomly splitting up a set of three MMPs $\{(\mathcal{R}_1, \mathcal{R}_2), (\mathcal{R}_2, \mathcal{R}_3), (\mathcal{R}_1, \mathcal{R}_3)\}$ into a training and a test set may lead to $(\mathcal{R}_1, \mathcal{R}_2)$ and $(\mathcal{R}_1, \mathcal{R}_3)$ getting assigned to the training set and $(\mathcal{R}_2, \mathcal{R}_3)$ getting assigned to the test set. This corresponds to a full inclusion of the test set in the training set at the level of individual molecules. A molecular overlap of this kind is problematic for at least three reasons: Firstly, it likely leads to overly optimistic performance estimates of AC-prediction methods since they will have already encountered some of the test compounds during training. Secondly, it does not model the natural situation encountered by medicinal chemists who it is assumed do not know the activity value of at least one compound in a test-set MMP. Thirdly, the mentioned molecular overlap should lead to strong AC-prediction results for standard QSAR models, but to the best of our knowledge, no such control experiments have been conducted in the literature.

\citet{horvath_prediction_2016} and~\citet{tamura_ligand-based_2020} have made efforts to address the shortcomings of a compound-pair-based random split. They proposed advanced data splitting algorithms designed to mitigate the described molecular-overlap problem by either managing distinct types of test sets according to compound membership in the training set or by designing splitting techniques based on the structural cores of MMPs. However, their data splitting schemes exhibit a relatively high degree of technical complexity which can make their implementation and interpretation non-straightforward.

For our study, we propose a novel data splitting method which may represent a favourable trade-off between rigour, interpretability and simplicity. Our technique shares some of its concepts with the methods proposed by~\citet{horvath_prediction_2016} and~\citet{tamura_ligand-based_2020} but might be simpler to implement and interpret. We first split the data into a training and test set at the level of individual molecules and then use this basic split to distinguish several types of test sets at the level of compound pairs. Let
$$\mathfrak{D} = \{\mathcal{R}_1, \mathcal{R}_2,...\}$$
be the given data set of individual compounds. One can for instance think of $\mathfrak{D}$ as a set of SMILES strings or molecular graphs. Furthermore, let 
$$\mathfrak{M} \subseteq \{ (\mathcal{R}, \tilde{\mathcal{R}}) \ \vert \ \mathcal{R}, \tilde{\mathcal{R}} \in \mathfrak{D}	\} $$
be the set of MMPs in $\mathfrak{D}$ that are eligible for the AC-classification task. This means that $\mathfrak{M}$ represents the set of MMPs in $\mathfrak{D}$ that have either been labelled as ACs or as non-ACs. Then each MMP $(\mathcal{R}, \tilde{\mathcal{R}}) \in \mathfrak{M}$ consists of two structurally similar compounds $\mathcal{R}$ and $\tilde{\mathcal{R}}$ that share a common structural core which we denote as $\text{core}(\mathcal{R}, \tilde{\mathcal{R}})$. To avoid redundancy, we associate each MMP with an (arbitrary) ordering of its two involved compounds and only contain one of both orderings in $\mathfrak{M}$, i.e.~if $(\mathcal{R}, \tilde{\mathcal{R}}) \in \mathfrak{M}$ then $(\tilde{\mathcal{R}}, \mathcal{R}) \notin \mathfrak{M}$.

We now use a uniform random split to partition $\mathfrak{D}$ into a training set $\mathfrak{D}_{\text{train}}$ and a test set $\mathfrak{D}_{\text{test}}$ such that $\mathfrak{D}_{\text{train}} \cap \mathfrak{D}_{\text{test}} = \emptyset$ and $\mathfrak{D}_{\text{train}} \cup \mathfrak{D}_{\text{test}} = \mathfrak{D}$. On the basis of this split, we define the following MMP sets:
\begin{align*}
\mathfrak{M}_{\text{train}} &= \{ (\mathcal{R}, \tilde{\mathcal{R}}) \in \mathfrak{M} \ \vert \ \mathcal{R}, \tilde{\mathcal{R}} \in \mathfrak{D}_{\text{train}}	\},	\\
\mathfrak{M}_{\text{test}} &= \{ (\mathcal{R}, \tilde{\mathcal{R}}) \in \mathfrak{M} \ \vert \ \mathcal{R}, \tilde{\mathcal{R}} \in \mathfrak{D}_{\text{test}}	\},\\
\mathfrak{M}^{}_{\text{inter}} &=  \mathfrak{M} \setminus (\mathfrak{M}_{\text{train}} \cup \mathfrak{M}_{\text{test}} ),\\
\mathfrak{M}_{\text{cores}} &= \{ (\mathcal{R}, \tilde{\mathcal{R}}) \in \mathfrak{M}_{\text{test}} \ \vert \ \text{core}(\mathcal{R}, \tilde{\mathcal{R}}) \notin \mathfrak{C}_{\text{train}} 	\}.
\end{align*}
Here $$\mathfrak{C}_{\text{train}} = \{ \text{core}(\mathcal{R}, \tilde{\mathcal{R}}) \ \vert \ (\mathcal{R}, \tilde{\mathcal{R}}) \in \mathfrak{M}_{\text{train}} \cup \mathfrak{M}_{\text{inter}}	\} $$
describes the set of structural MMP cores that appear in $\mathfrak{D}_{\text{train}}$.

Note that $\mathfrak{M}_{\text{train}} \cup \mathfrak{M}_{\text{inter}} \cup \mathfrak{M}_{\text{test}} = \mathfrak{M}$. The pair $(\mathfrak{D}_{\text{train}}, \mathfrak{M}_{\text{train}})$ describes the training space at the level of individual molecules and MMPs, and can be used to train a QSAR or AC-prediction method. MMPs in $\mathfrak{M}_{\text{test}}$, $\mathfrak{M}_{\text{inter}}$ and $\mathfrak{M}_{\text{cores}}$ can then be classified via a trained AC-predictor. $\mathfrak{M}_{\text{test}}$ models an AC-prediction setting where the activities of both MMP compounds are unknown. $\mathfrak{M}_{\text{cores}}$ represents the subset of MMPs in $\mathfrak{M}_{\text{test}}$ whose structural cores do not appear in $\mathfrak{M}_{\text{train}} \cup \mathfrak{M}_{\text{inter}}$; $\mathfrak{M}_{\text{cores}}$ thus models the difficult task of predicting ACs within MMPs that do not contain near analogs to MMP compounds in the training set. Finally, $\mathfrak{M}_{\text{inter}}$ represents an AC-prediction scenario where the activity of one MMP compound is given \textit{a priori}; this can be interpreted as a compound-optimisation task where one strives to predict small AC-inducing modifications of a query compound with known activity. Arguably the scenario modelled by $\mathfrak{M}_{\text{inter}}$ is the one that is most representative of real-world applications. An illustration of our data splitting strategy is given in~\ref{fig:ac_pred_data_splitting_strategy}.

\begin{figure}[!t]
	\centering
	\includegraphics[width=0.98\linewidth]{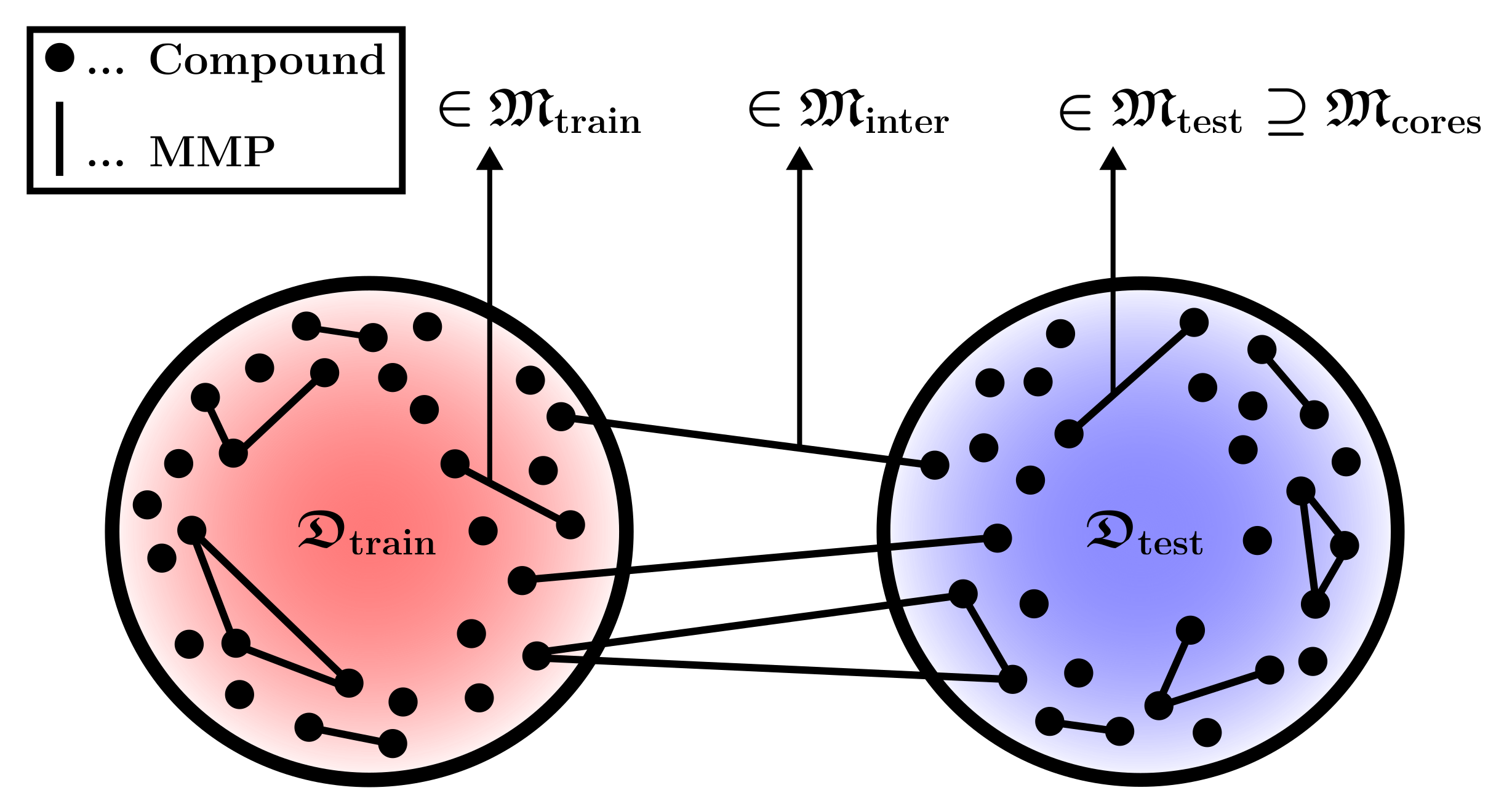}
	\caption[Data splitting strategy for AC-prediction study.]{Illustration of our data splitting strategy for activity-cliff~(AC) and potency-direction~(PD) classification. We distinguish between three sets of matched molecular pairs~(MMPs), $\mathfrak{M}_{\text{train}}, \mathfrak{M}_{\text{inter}}$ and $\mathfrak{M}_{\text{test}}$, depending on whether both MMP compounds are in $\mathfrak{D}_{\text{train}}$, one MMP compound is in $\mathfrak{D}_{\text{train}}$ and the other one is in $\mathfrak{D}_{\text{test}}$, or both MMP compounds are in $\mathfrak{D}_{\text{test}}$. We additionally consider a fourth MMP set, $\mathfrak{M}_{\text{cores}}$, consisting of the MMPs in $\mathfrak{M}_{\text{test}}$ whose structural cores do not appear in $\mathfrak{M}_{\text{train}} \cup \mathfrak{M}_{\text{inter}}$. }
	\label{fig:ac_pred_data_splitting_strategy}
\end{figure}

We implemented our data splitting strategy within a $k$-fold cross validation scheme repeated with $m$ random seeds. This generated data splits of the form 
$$\mathfrak{S}^{i,j} = (\mathfrak{D}^{i,j}_{\text{train}}, \mathfrak{D}^{i,j}_{\text{test}}, \mathfrak{M}^{i,j}_{\text{train}},\mathfrak{M}^{i,j}_{\text{test}}, \mathfrak{M}^{i,j}_{\text{inter}}, \mathfrak{M}^{i,j}_{\text{cores}}) $$
for $i \in \{1,...,m\}$ and $j \in \{1,...,k\}$ whereby the pair $(\mathfrak{D}^{i,j}_{\text{train}}, \mathfrak{D}^{i,j}_{\text{test}})$
represents the $j$-th split with the $i$-th random seed of the underlying data set $\mathfrak{D}$ in a $k$-fold cross validation scheme repeated with $m$ random seeds. The overall performance of each model for all prediction tasks was recorded as the average over the $mk$ training and test runs based on all data splits $\mathfrak{S}^{1,1}, ..., \mathfrak{S}^{m,k}$. We chose the configuration $(m,k) = (3,2)$ which gave a good trade-off between computational cost and accuracy and reasonable numbers of MMPs in the compound-pair-sets. In particular, random cross-validation with $k = 2$ gave expected relative sizes of:
$$\vert \mathfrak{M}_{\text{train}} \vert : \vert \mathfrak{M}_{\text{inter}} \vert : \vert \mathfrak{M}_{\text{test}} \vert = 1 : 2 : 1 .$$ 
On average, $12.7\%$, $11.91\%$, and $6.84\%$ of MMPs in $\mathfrak{M}_{\text{test}}$ were also in $\mathfrak{M}_{\text{cores}}$ for dopamine receptor D2, factor Xa, and SARS-CoV-2 main protease, respectively.

\subsection{Prediction Strategies} \label{subsec: qsar_ac_study_pred_strategies}

In a data split of the form
$$\mathfrak{S} = (\mathfrak{D}_{\text{train}}, \mathfrak{D}_{\text{test}}, \mathfrak{M}_{\text{train}},\mathfrak{M}_{\text{test}}, \mathfrak{M}_{\text{inter}}, \mathfrak{M}_{\text{cores}})$$
each MMP 
$$(\mathcal{R}, \tilde{\mathcal{R}}) \in \mathfrak{M}_{\text{train}} \cup \mathfrak{M}_{\text{inter}} \cup \mathfrak{M}_{\text{test}} = \mathfrak{M}$$
comes with a binary label, $\text{AC}(\mathcal{R}, \tilde{\mathcal{R}}) \in \{\text{Non-AC}, \text{AC}\}$, indicating whether $(\mathcal{R}, \tilde{\mathcal{R}})$ is an AC or not and another binary label, $\text{PD}(\mathcal{R}, \tilde{\mathcal{R}}) \in \{\text{Right}, \text{Left}\}$, indicating which of both compounds is more active. Furthermore, each individual compound
$$\mathcal{R} \in \mathfrak{D}_{\text{train}} \cup \mathfrak{D}_{\text{test}} = \mathfrak{D}$$ 
can be associated with a numerical activity label $\text{act}(\mathcal{R}) \in \mathbb{R}$ which we define as the negative decadic logarithm of the experimentally measured activity of $\mathcal{R}$.
We stuck with the original units used in the ChEMBL database and the COVID moonshot project before applying the logarithm ([nM] for K\textsubscript{i} and [µM] for IC\textsubscript{50}); each activity label $\text{act}(\mathcal{R})$ thus represents a standard pK\textsubscript{i} or pIC\textsubscript{50} value with a slight additive shift towards $0$ caused by the use of [nM] or [µM]-units instead of the canonical [M]-units; this shift towards~$0$ might slightly benefit prediction techniques initialised around the origin.

We are interested in QSAR-prediction functions
$$ Q : \mathfrak{D} \to \mathbb{R}$$
that map a given molecular representation $\mathcal{R} \in \mathfrak{D}$ to an estimate of its binding affinity: 
$$Q(\mathcal{R}) \approx \text{act}(\mathcal{R}).$$ In our study, the molecular representation $\mathcal{R}$ is either a SMILES string or a molecular graph. The mapping $Q$ is found via an algorithmic training process on the labelled data set 
$$\{ (\mathcal{R}, \text{act}(\mathcal{R}))	\ \vert \ \mathcal{R} \in \mathfrak{D}_{\text{train}} \}	$$
and can then either be used to predict the activity labels of compounds in $\mathfrak{D}_{\text{test}}$, or it can be repurposed to classify whether an MMP forms an activity cliff~(AC-classification) and what the potency direction of an MMP is~(PD-classification). 

If $(\mathcal{R}, \tilde{\mathcal{R}}) \in \mathfrak{M}_{\text{inter}}$, then one can assume that the activity label of one of the compounds, say $\text{act}(\mathcal{R})$, is known. $Q$ is then used to generate an AC-classification for $(\mathcal{R}, \tilde{\mathcal{R}})$ via
$$ (\mathcal{R}, \tilde{\mathcal{R}}) \mapsto
\begin{cases}
\text{Non-AC} \quad &\text{if} \ \vert \text{act}(\mathcal{R}) - Q(\tilde{\mathcal{R}}) \vert \leq d_{\text{crit}}, \\
\text{AC} \quad &\text{else}.
\end{cases} $$
Here $d_{\text{crit}} \in \mathbb{R}_{> 0}$ is a critical threshold above which an MMP is classified as an AC. Throughout this work we use $d_{\text{crit}} = 1.5$ (in pK\textsubscript{i} or pIC\textsubscript{50} units) since this value represents the middle point between the intervals $[0, 1]$ and $[2, \infty)$ which correspond to absolute logarithmic activity differences associated with non-ACs and ACs respectively. If $(\mathcal{R}, \tilde{\mathcal{R}}) \in \mathfrak{M}_{\text{test}} \cup \mathfrak{M}_{\text{cores}}$ then $\mathcal{R}, \tilde{\mathcal{R}} \in \mathfrak{D}_{\text{test}}$ and therefore the activities of both compounds are unknown. We hence perform the AC-classification for $(\mathcal{R}, \tilde{\mathcal{R}})$ via
$$ (\mathcal{R}, \tilde{\mathcal{R}}) \mapsto
\begin{cases}
\text{Non-AC} \quad &\text{if} \ \vert Q(\mathcal{R}) - Q(\tilde{\mathcal{R}}) \vert \leq d_{\text{crit}}, \\
\text{AC} 	\quad &\text{else}.
\end{cases} $$
The PD-classification for an MMP $(\mathcal{R}, \tilde{\mathcal{R}}) \in \mathfrak{M}_{\text{inter}}$ with $Q$ is performed in a straightforward manner by simply comparing the binding affinity prediction of the test compound $\tilde{\mathcal{R}}$ with the experimentally measured binding affinity of the training compound~$\mathcal{R}$:
$$ (\mathcal{R}, \tilde{\mathcal{R}}) \mapsto
\begin{cases}
\text{Right} \quad &\text{if} \ \text{act}(\mathcal{R}) \leq Q(\tilde{\mathcal{R}}), \\
\text{Left} \quad &\text{else}.
\end{cases} $$
Similarly, the PD-classification for an MMP $(\mathcal{R}, \tilde{\mathcal{R}}) \in \mathfrak{M}_{\text{test}} \cup \mathfrak{M}_{\text{cores}}$ with $Q$ is performed via:
$$ (\mathcal{R}, \tilde{\mathcal{R}}) \mapsto
\begin{cases}
\text{Right} \quad &\text{if} \ Q(\mathcal{R}) \leq Q(\tilde{\mathcal{R}}), \\
\text{Left} \quad &\text{else}.
\end{cases} $$

\subsection{Performance Metrics} \label{subsec: qsar_ac_study_ac_perform_measures}

The performance of $Q$ when used as a standard QSAR method for the prediction of the activity labels of individual molecules in $\mathfrak{D}_{\text{test}}$ was measured via the mean absolute error~(MAE):
$$ \frac{1}{\vert \mathfrak{D}_{\text{test}} \vert} \sum_{\mathcal{R} \in \mathfrak{D}_{\text{test}}} \vert Q(\mathcal{R}) - \text{act}(\mathcal{R}) \vert. $$
For the balanced PD-classification problem we could rely on simple accuracy as a suitable performance metric:
$$ \frac{\text{number of correct predictions}}{\text{number of predictions}}.$$
However, when using $Q$ for the naturally highly imbalanced AC-classification task, a more nuanced set of performance metrics had to be chosen. Denote with TP, TN, FP, and FN the numbers respectively representing true positives, true negatives, false positives and false negatives for the AC-classification task. We then used the Matthews correlation coefficient~(MCC) as a suitable overall performance metric:
$$\frac{TP * TN - FP*FN}{\sqrt{(TP + FP)(TP + FN)(TN + FP)(TN + FN)}}. $$
In addition, we tracked AC-sensitivity
$$\frac{TP}{TP + FN} $$
and AC-precision
$$ \frac{TP}{TP + FP}.$$
The MCC represents a summary statistic for the confusion matrix of a binary classification problem that is reasonably robust against imbalanced class labels. Sensitivity~(also known as recall) can be interpreted as an approximation for the probability of the classifier to classify an actually positive instance as a positive one. Finally, precision~(also known as positive predictive value) approximates the probability that a positively classified instance is indeed positive. 

For the relatively small SARS-CoV-2 main protease data set we sometimes encountered the edge case where $\text{TP} + \text{FP} = 0$, i.e.~where there were no positive predictions. In this situation we set $\text{MCC} = 0$ and ignored the ill-defined precision measurements when averaging the performance metrics.

\subsection*{Molecular Featurisation Methods and Regression Techniques} \label{subsec: qsar_ac_study_mol_reps_regr_techn}

We constructed nine QSAR models (i.e.~nine versions of $Q$) via a robust combinatorial methodology that systematically combines three molecular featurisation methods with three regression techniques. This setup allows one to systematically compare the performance of molecular featurisations across regression techniques, data sets and predictions tasks. For molecular featurisation, we used PDVs~(\ref{sec: pdvs}) and ECFPs (\ref{sec: ecfps}), both generated from SMILES strings~(\ref{sec: smiles}), as well as GINs~(\ref{subsec: gins_description}) on top of molecular graphs~(\ref{sec: mol_graphs}). Both the ECFPs and the PDVs were computed via \texttt{RDKit}~\citep{landrum2006rdkit}. The ECFPs used a radius of two, a length of $2048$ bits, and standard atom features with active tetrahedral R-S chirality flags. The PDVs had a dimensionality of $200$, were constructed using the list of descriptors specified in Table~\ref{tab: physchem_descriptors_rdkit}, and were normalised to lie in the hypercube~$[0,1]^{200}$ via their componentwise cumulative distribution functions derived from the training set as explained in Section~\ref{sec: pdvs}. The PDV descriptor-list encompassed properties related to druglikeness, logP, molecular refractivity, electrotopological state, molecular graph structure, fragment-profile, molecular charge, and molecular surface. The GIN was implemented via \texttt{PyTorch Geometric}~\citep{fey2019fast}. The atom features of the underlying molecular graph objects can be found in Table~\ref{tab: atom_bond_features}. For global graph pooling we chose the componentwise maximum over all atom feature vectors in the final graph-layer.

Each molecular featurisation was used as an input featurisation for three regression techniques: random forests (RFs), k-nearest neigbours (kNNs) and multilayer perceptrons (MLPs). The RF and kNN models were implemented via scikit-learn~\citep{pedregosa2011scikit} and the MLP models via \texttt{PyTorch}~\citep{paszke2019pytorch}. The MLPs used ReLU activations and batch normalisation at each hidden layer. The GIN was combined with the regression techniques as follows: For MLP regression, the GIN was trained with the MLP as a projection head after the pooling step in the usual end-to-end manner. For RF or kNN regression, the GIN was first trained with a single linear layer added after the global pooling step that directly mapped the graph-level representation to an activity prediction. After this training phase the weights of the GIN were frozen and it was used as a static feature extractor. The RF or kNN regressor was then trained on top of the features extracted by the frozen GIN. Figure~\ref{fig:qsar_ac_study_methods_overview} depicts an overview of all investigated QSAR models.
\begin{figure}[h]
	\centering
	\includegraphics[width=1\linewidth]{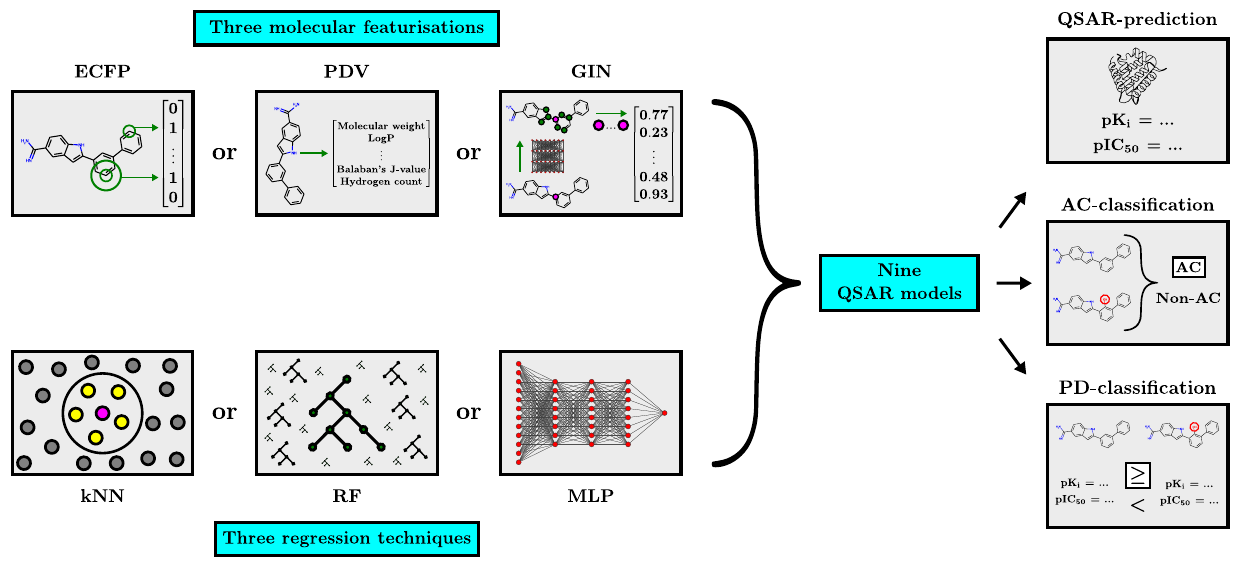}
	\caption[Overview of investigated QSAR models for AC-prediction study.]{Schematic showing the combinatorial experimental methodology used for the study. Each molecular featurisation method is systematically combined with each regression technique, giving a total of nine quantitative structure-activity relationship~(QSAR) models. Each QSAR model is trained and evaluated for QSAR-prediction, activity-cliff~(AC) classification and potency-direction~(PD) classification via $2$-fold cross validation repeated with $3$ random seeds. For each of the $2*3 = 6$ trials, an extensive inner hyperparameter-optimisation loop on the training set is performed for each QSAR model.}
	\label{fig:qsar_ac_study_methods_overview}
\end{figure}

\subsection{Model Training and Hyperparameter Optimisation} \label{subsec: qsar_ac_study_training_and_hyperparams}

The three investigated regression techniques as well as the GIN feature-extractor come with a multitude of training and model hyperparameters which need to be tuned properly as to allow each QSAR model to unfold its maximum performance. Since our presented study is comparative in nature, it was essential to include a systematic hyperparameter-optimisation procedure within the training routine of each model. While such a hyperparameter optimisation greatly increased the computational cost and the difficulty of implementation for our numerical experiments, it formed an indispensable part of a fair and objective model comparison.

As described previously, each QSAR model was evaluated within a $k$-fold cross validation scheme repeated with $m$ random seeds. This implies that an independent version of each QSAR model was trained on each molecular training set $\mathfrak{D}^{i,j}_{\text{train}}$ for $i \in \{1,...,m\}$ and $j \in \{1,...,k\}$ and the results were then averaged for each model over all $mk$ trials. At each of the $mk$ training rounds, each model was first trained via an inner hyperparameter-optimisation loop on $\mathfrak{D}^{i,j}_{\text{train}}$. The determined set of hyperparameters was then used to train a model with optimised architecture on $\mathfrak{D}^{i,j}_{\text{train}}$ and this optimised model was used for evaluation. In the implementation of the inner hyperparameter-optimisation loop we distinguished between the four classical machine learning models (ECFPs or PDVs combined with either RFs or kNNs) and the five models that contained a deep learning component (a GIN or an MLP) in order to save computational resources. 

In the classical case we used a five-fold inner cross validation split on $\mathfrak{D}^{i,j}_{\text{train}}$; $10$ models with distinct hyperparameter settings were then trained within this cross validation scheme. The $10$ hyperparameter configurations were sampled uniformly at random from a predefined grid using the RandomizedSearchCV routine implemented in scikit-learn~\citep{pedregosa2011scikit}. The hyperparameters that minimised the MAE over the inner cross validation loop were subsequently chosen to train the final model. For RF regression, we chose a forest size of $500$ trees and optimised the maximum tree depth, the minimum number of samples required to split an internal node, the minimum number of samples required to be at a leaf node, the number of features to consider for the best split, and whether bootstrap samples should be used or not when building trees. For the kNN algorithm, we optimised the number of considered neighbours, the power parameter for the underlying Minkowski distance measure, and whether to give uniform or inverse-distance weights to neighbours.

In the deep learning case, we employed a $4$:$1$ split of $\mathfrak{D}^{i,j}_{\text{train}}$ into an inner training set and an inner validation set. We then trained $20$ models with distinct hyperparameter configurations on the inner training set. The $20$ hyperparameter sets were sampled from a predefined grid using the tree-structured Parzen estimator~(TPE) algorithm implemented in the Optuna hyperparameter-optimisation package~\citep{akiba2019optuna}. The hyperparameters that minimised the MAE on the inner validation set were subsequently chosen to train the final model. For the MLP architecture, we optimised the number of hidden layers and the number of neurons per hidden layer. Additionally, we chose ReLU as the hidden activation function and used batch normalisation~\citep{ioffe2015batch} throughout the neural network. 

For the GINs~(\ref{subsec: gins_description}) we optimised the total number of graph layers $R$ and the dimensional length of the updated atom feature vectors $f_r(a)$ at the $r$-th graph layer. For each layerwise multilayer perceptron $\phi_r$ that formed part of the GIN we used two internal hidden layers whereby the number of neurons in each hidden layer was equivalent to the dimensionality of the atom feature vectors in the $r$-th layer. For the two hidden layers of each $\phi_r$ we employed batch normalisation and again chose ReLU as the hidden activation function. Each $\epsilon_r$ was set to $0$. To reduce the molecular graph to a single feature vector after the message-passing phase, we employed max-pooling at the $R$-th graph layer which computes the componentwise maximum over all final atom feature vectors. Since we optimised the atom feature dimensionality in the final graph layer we also implicitly optimised the dimensionality $l$ of the final graph-level fingerprint.

All deep learning models were trained for $500$ epochs on a single NVIDIA GeForce RTX 3060 GPU using the mean squared error loss function and AdamW optimisation~\citep{loshchilov2017decoupled}. During training we employed weight decay, learning rate decay and dropout~\citep{srivastava2014dropout} at all hidden layers for regularisation. Batch size, learning rate, learning rate decay rate, weight decay rate, and dropout rate were treated as hyperparameters and subsequently optimised. Note that the training length (the number of gradient updates) was implicitly optimised via tuning the batch size for the fixed number of $500$ training epochs.

\section{Results and Discussion} \label{sec: qsar_ac_study_results_discussion}

The QSAR-prediction, AC-classification and PD-classification results for all three investigated data sets are depicted in~\Cref{fig:ac_results_dopamined2,fig:ac_results_factorxa,fig:ac_results_sarscov2mpro,fig:pd_results_dopamined2,fig:pd_results_factorxa,fig:pd_results_sarscov2mpro} below.

\begin{figure}
	\centering
	\includegraphics[width=1\linewidth]{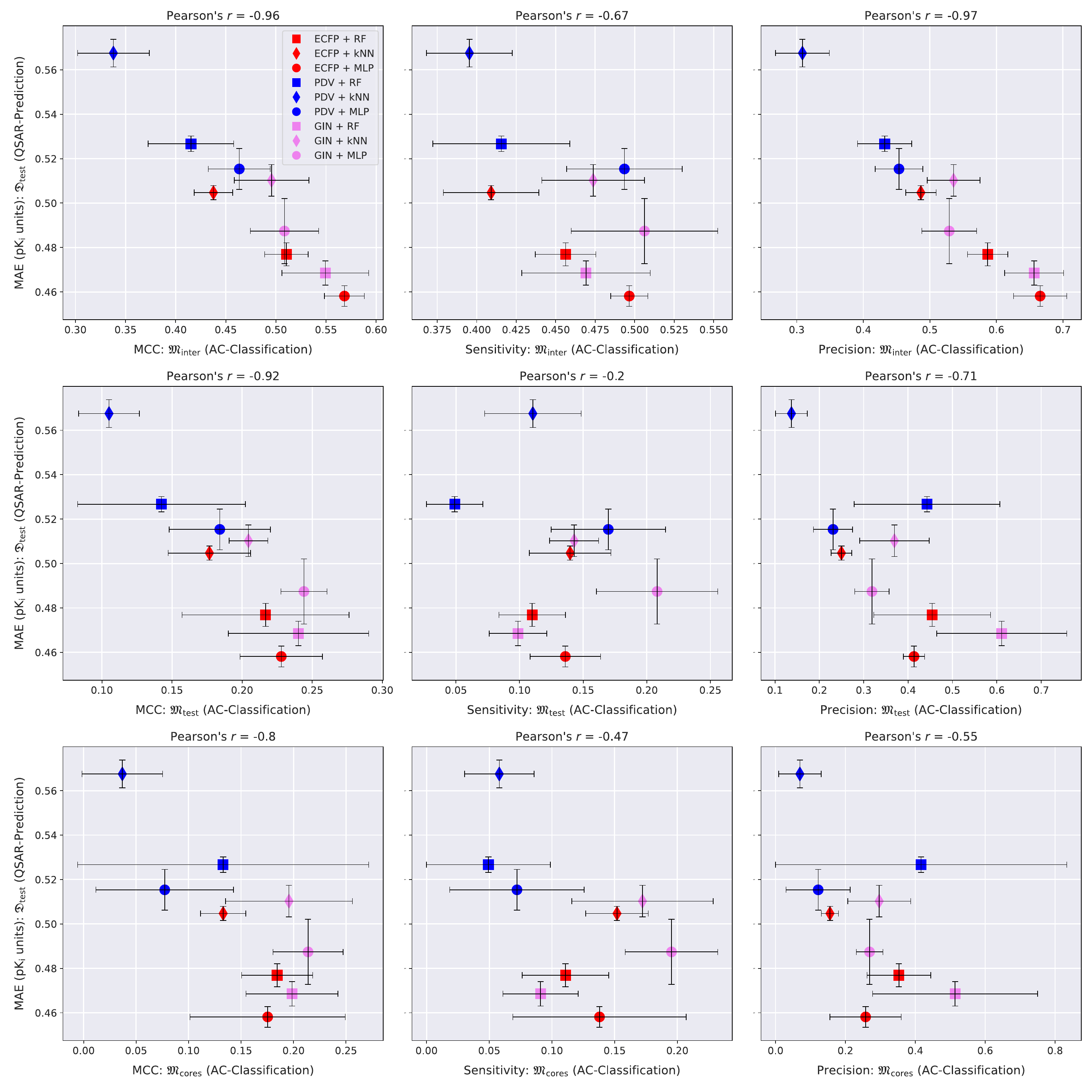}
	\caption[QSAR and AC-prediction results for dopamine receptor D2.]{QSAR-prediction and activity-cliff~(AC) classification results for \textbf{dopamine~receptor~D2}. For each plot, the $x$-axis corresponds to a combination of MMP set and AC-classification performance metric and the $y$-axis shows the QSAR-prediction performance on the molecular test set $\mathfrak{D}_{\text{test}}$. The total length of each error bar equals twice the standard deviation of the performance metric measured over all $mk = 3*2 = 6$ hyperparameter-optimised models. For each plot, the lower right corner corresponds to strong performance at both prediction tasks.}
	\label{fig:ac_results_dopamined2}
\end{figure}
\begin{figure}
	\centering
	\includegraphics[width=1\linewidth]{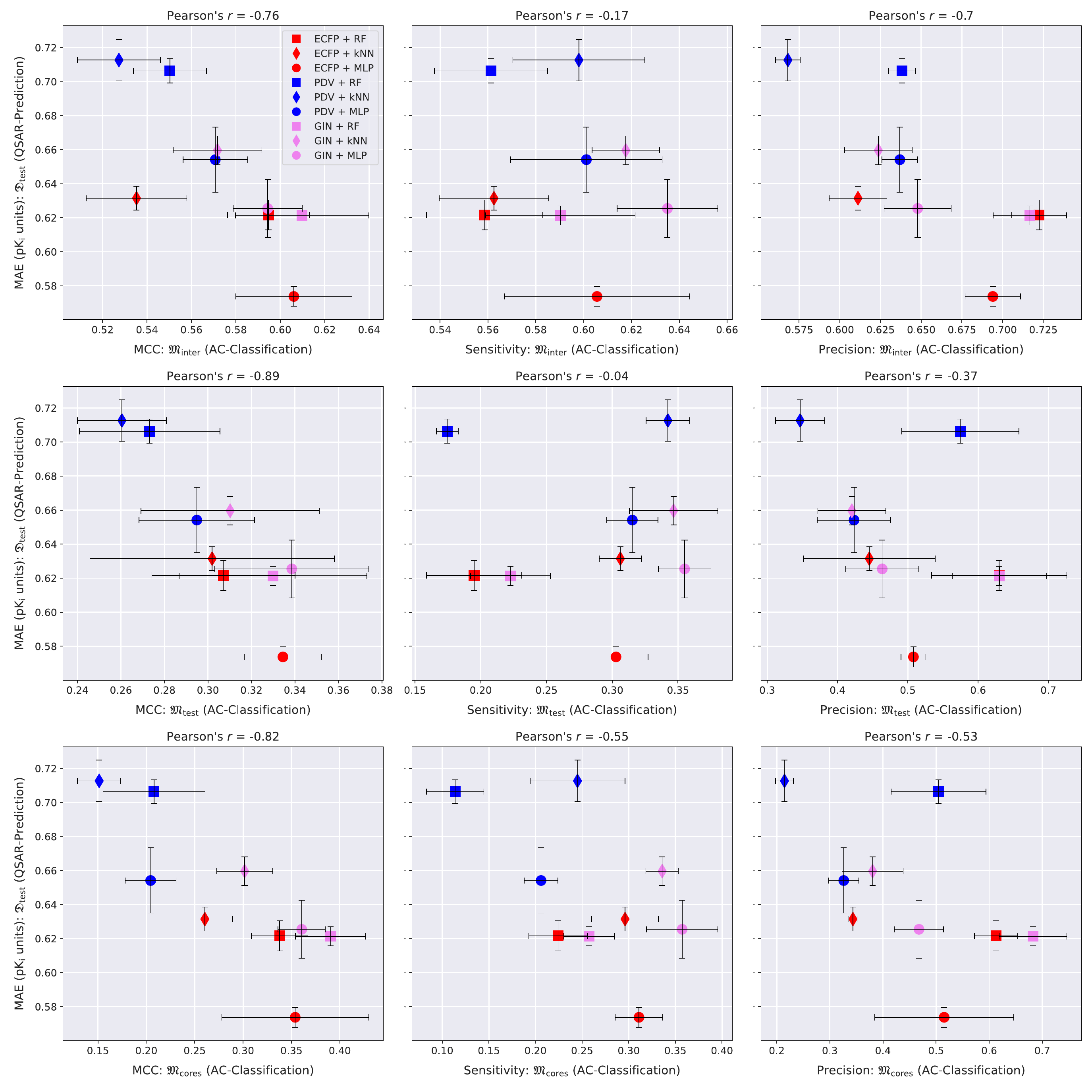}
	\caption[QSAR and AC-prediction results for factor Xa.]{QSAR-prediction and activity-cliff~(AC) classification results for \textbf{factor Xa}. For each plot, the $x$-axis corresponds to a combination of MMP set and AC-classification performance metric and the $y$-axis shows the QSAR-prediction performance on the molecular test set $\mathfrak{D}_{\text{test}}$. The total length of each error bar equals twice the standard deviation of the performance metric measured over all $mk = 3*2 = 6$ hyperparameter-optimised models. For each plot, the lower right corner corresponds to strong performance at both prediction tasks.}
	\label{fig:ac_results_factorxa}
\end{figure}
\begin{figure}
	\centering
	\includegraphics[width=1\linewidth]{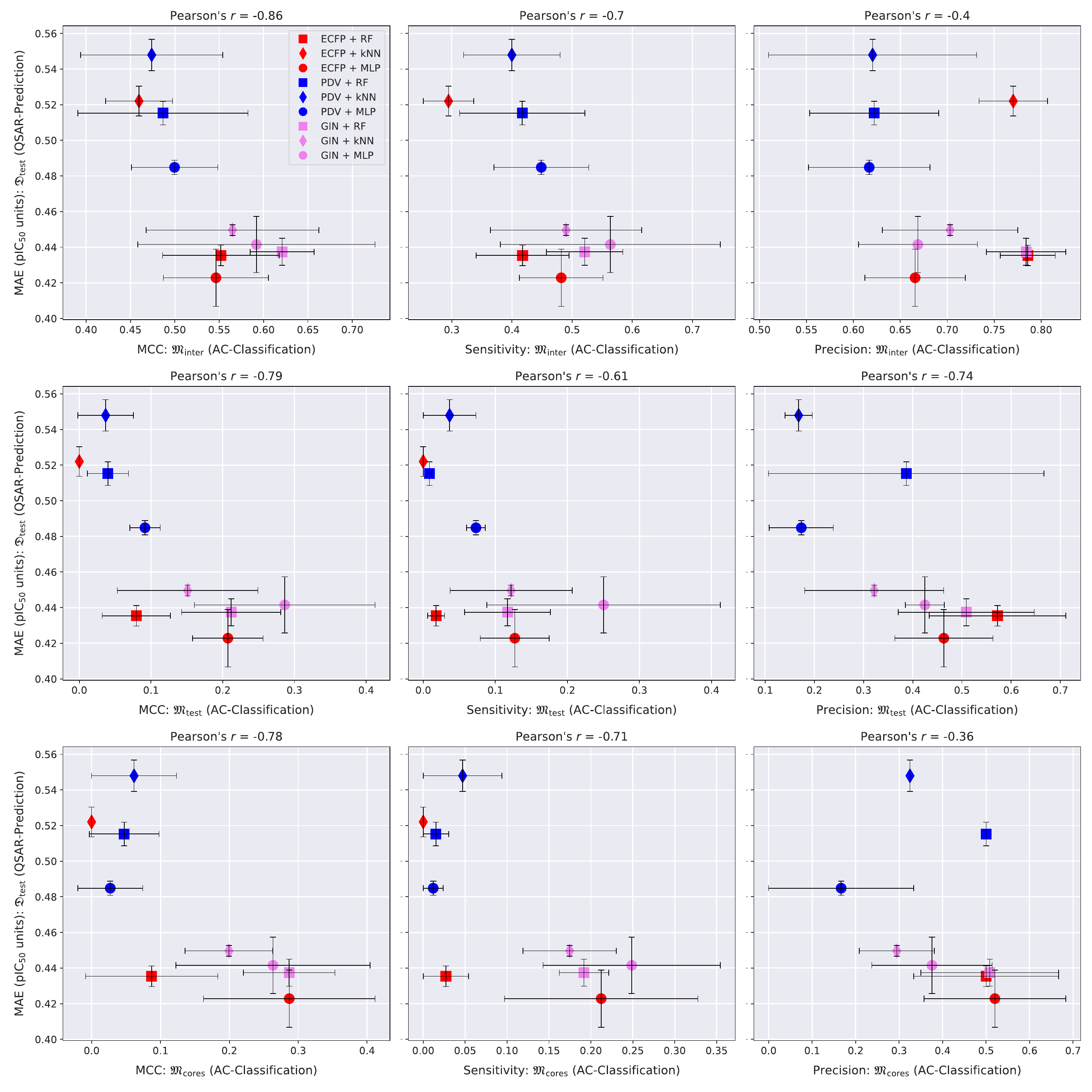}
	\caption[QSAR and AC-prediction results for SARS-CoV-2 main protease.]{QSAR-prediction and activity-cliff~(AC) classification results for \textbf{SARS CoV-2 main protease}. For each plot, the $x$-axis corresponds to a combination of MMP set and AC-classification performance metric and the $y$-axis shows the QSAR-prediction performance on the molecular test set~$\mathfrak{D}_{\text{test}}$. The total length of each error bar equals twice the standard deviation of the performance metric measured over all $mk = 3*2 = 6$ hyperparameter-optimised models. The precision of the AC-classification task is not shown for the ECFP + kNN technique on $\mathfrak{M}_{\text{test}}$ and $\mathfrak{M}_{\text{cores}}$ since this method produced only negative AC-classifications for all trials on this data set. For each plot, the lower right corner corresponds to strong performance at both prediction tasks.}
	\label{fig:ac_results_sarscov2mpro}
\end{figure}
\begin{figure}
	\centering
	\includegraphics[width=1\linewidth]{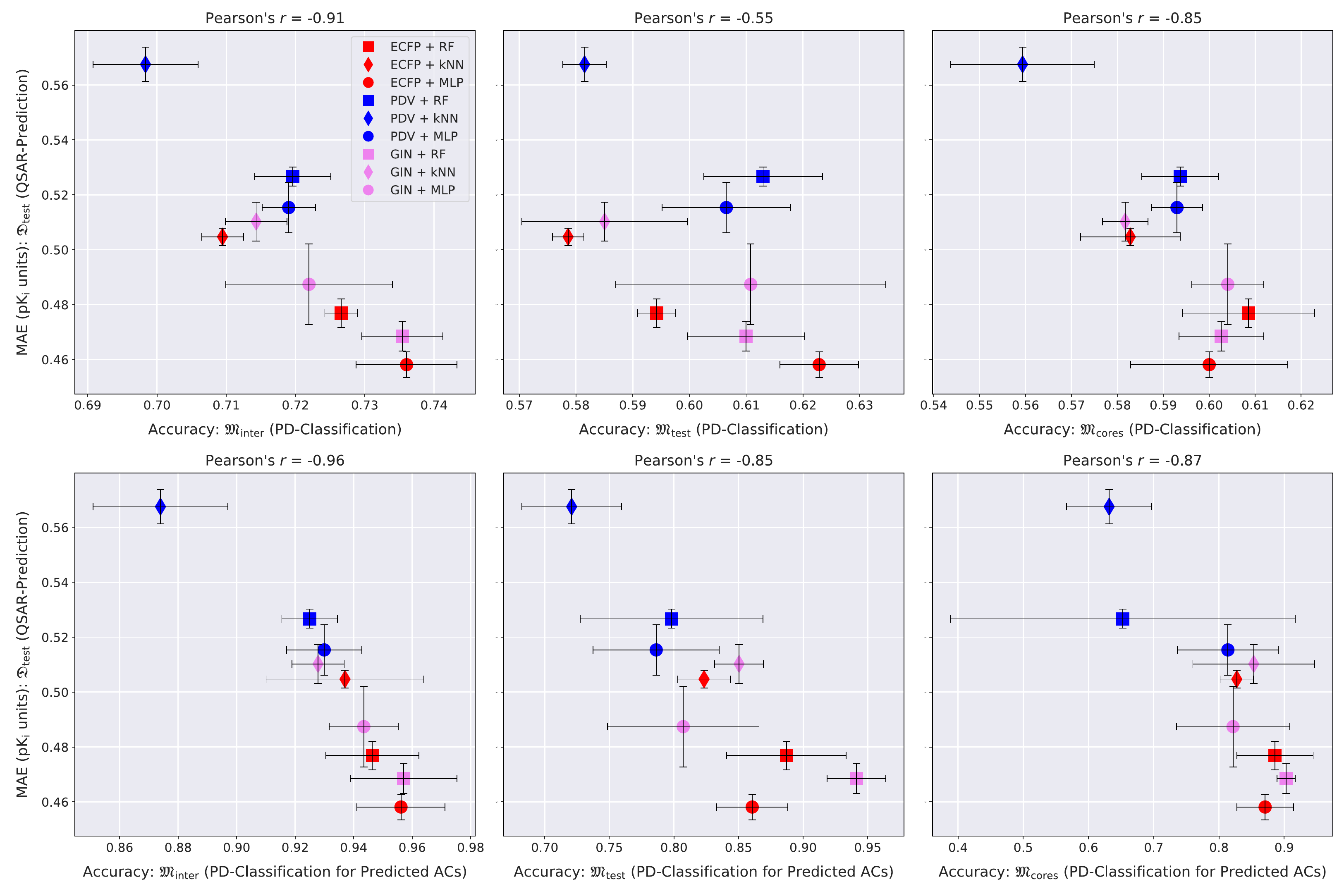}
	\caption[QSAR and PD-prediction results for for dopamine receptor D2.]{QSAR-prediction and potency-direction~(PD) classification results for \textbf{dopamine receptor D2}. Each column corresponds to an upper plot and a lower plot for one of the MMP sets $\mathfrak{M}_{\text{inter}}$, $\mathfrak{M}_{\text{test}}$ or $\mathfrak{M}_{\text{cores}}$. The x-axis of each upper plot indicates the PD-classification accuracy on the full MMP set; the x-axis of each lower plot indicates the PD-classification accuracy on a restricted MMP set only consisting of MMP predicted to be ACs by the respective method. The $y$-axis of each plot shows the QSAR-prediction performance on the molecular test set $\mathfrak{D}_{\text{test}}$. The total length of each error bar equals twice the standard deviation of the performance metrics measured over all $mk = 3*2 = 6$ hyperparameter-optimised models. For each plot, the lower right corner corresponds to strong performance at both prediction tasks.}
	\label{fig:pd_results_dopamined2}
\end{figure}
\begin{figure}
	\centering
	\includegraphics[width=1\linewidth]{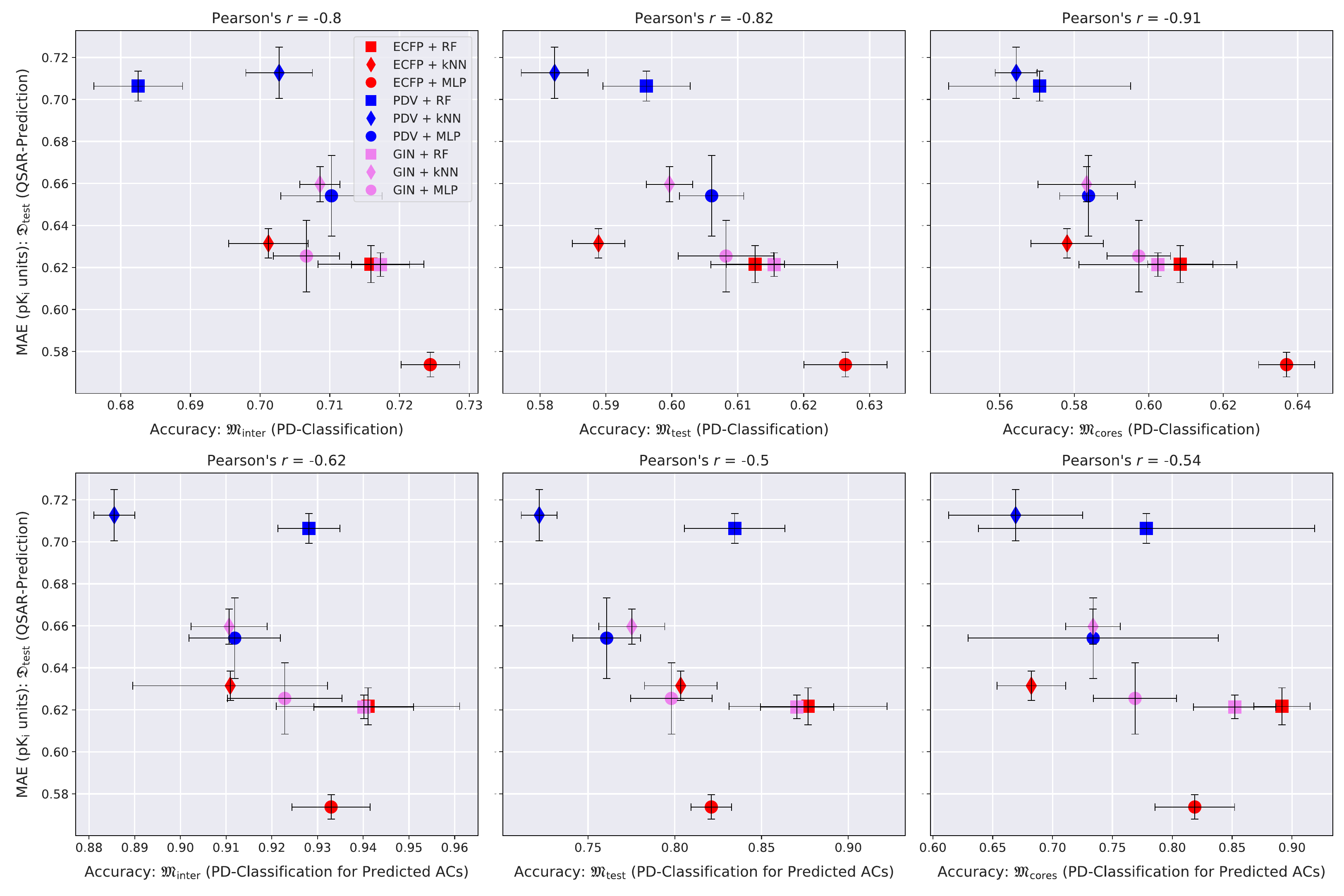}
	\caption[QSAR and PD-prediction results for factor Xa.]{QSAR-prediction and potency-direction~(PD) classification results for \textbf{factor Xa}. Each column corresponds to an upper plot and a lower plot for one of the MMP sets $\mathfrak{M}_{\text{inter}}$, $\mathfrak{M}_{\text{test}}$ or $\mathfrak{M}_{\text{cores}}$. The x-axis of each upper plot indicates the PD-classification accuracy on the full MMP set; the x-axis of each lower plot indicates the PD-classification accuracy on a restricted MMP set only consisting of MMP predicted to be ACs by the respective method. The $y$-axis of each plot shows the QSAR-prediction performance on the molecular test set $\mathfrak{D}_{\text{test}}$. The total length of each error bar equals twice the standard deviation of the performance metrics measured over all $mk = 3*2 = 6$ hyperparameter-optimised models. For each plot, the lower right corner corresponds to strong performance at both prediction tasks.}
	\label{fig:pd_results_factorxa}
\end{figure}
\begin{figure}
	\centering
	\includegraphics[width=1\linewidth]{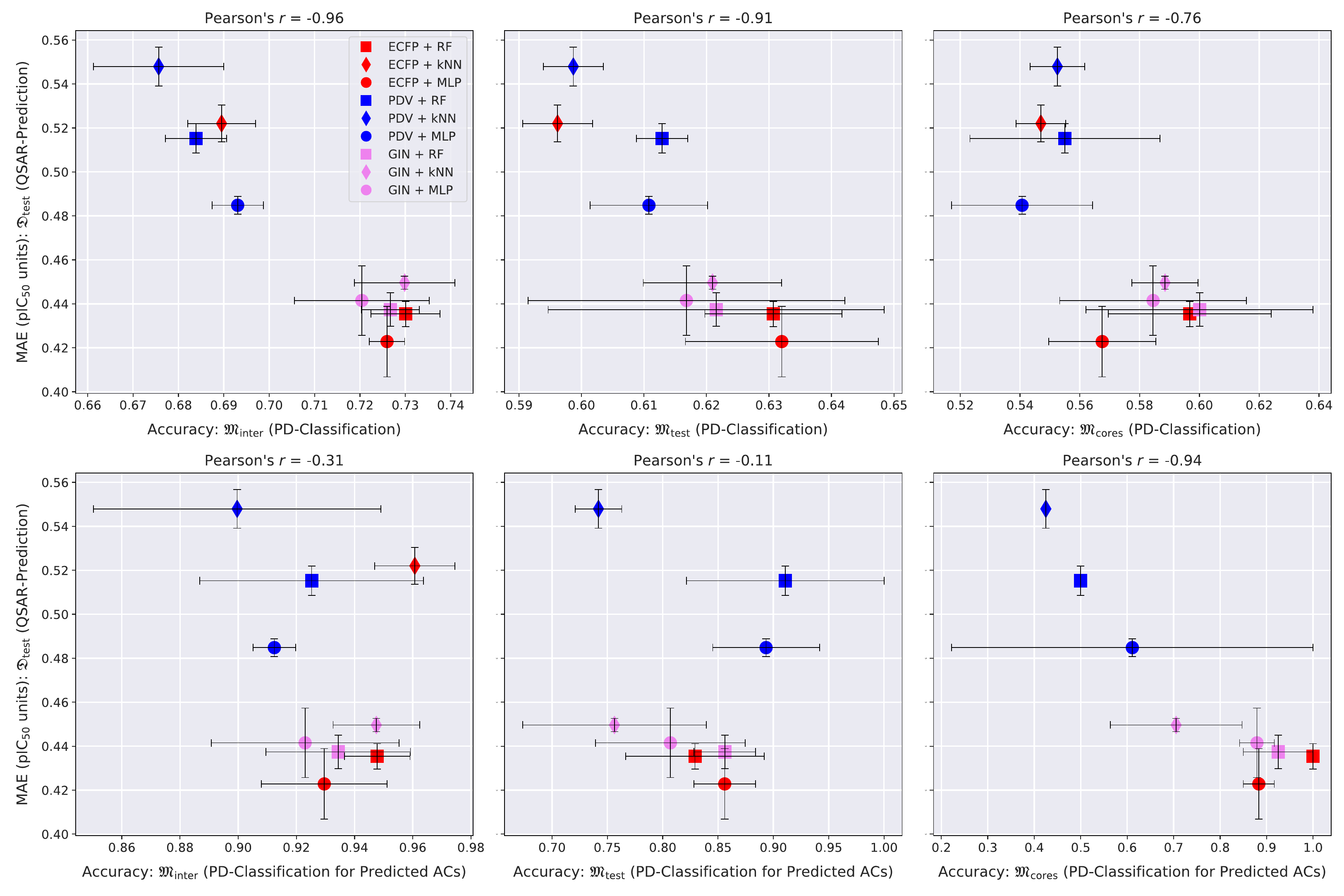}
	\caption[QSAR and PD-prediction results for SARS-CoV-2 main protease.]{QSAR-prediction and potency-direction~(PD) classification results for \textbf{SARS-CoV-2 main protease}. Each column corresponds to an upper plot and a lower plot for one of the MMP sets $\mathfrak{M}_{\text{inter}}$, $\mathfrak{M}_{\text{test}}$ or $\mathfrak{M}_{\text{cores}}$. The x-axis of each upper plot indicates the PD-classification accuracy on the full MMP set; the x-axis of each lower plot indicates the PD-classification accuracy on a restricted MMP set only consisting of MMP predicted to be ACs by the respective method. The $y$-axis of each plot shows the QSAR-prediction performance on the molecular test set $\mathfrak{D}_{\text{test}}$. The total length of each error bar equals twice the standard deviation of the performance metrics measured over all $mk = 3*2 = 6$ hyperparameter-optimised models. The accuracy of the PD-classification task for predicted ACs is not shown for the ECFP + kNN technique on $\mathfrak{M}_{\text{test}}$ and $\mathfrak{M}_{\text{cores}}$ since this method produced only negative AC-classifications for all trials on this data set. For each plot, the lower right corner corresponds to strong performance at both prediction tasks.}
	\label{fig:pd_results_sarscov2mpro}
\end{figure}

\subsection{QSAR-Prediction Performance} \label{subsec: qsar_ac_study_qsar_pred_perform}

When considering the results depicted in~\Cref{fig:ac_results_dopamined2,fig:ac_results_factorxa,fig:ac_results_sarscov2mpro,fig:pd_results_dopamined2,fig:pd_results_factorxa,fig:pd_results_sarscov2mpro} with respect to QSAR-prediction performance, one can see that ECFPs tend to lead to better performance (i.e.~a lower QSAR-MAE) compared to GINs, which in turn tend to lead to better performance compared to PDVs. In particular, the combination ECFP-MLP consistently produced the lowest QSAR-MAE across all three targets. These observations reinforce a growing corpus of literature that suggests that trainable GNNs have not yet reached a level of technical maturity by which they consistently and definitively outperform the much simpler task-agnostic ECFPs at important molecular property prediction problems~\citep{stepivsnik2021comprehensive, mayr2018large, jiang2021could, menke2021using, chithrananda2020chemberta, sabando2021using, winter2019learning}. 

\subsection{AC-Classification Performance} \label{subsec: qsar_ac_study_ac_pred_perform}

The AC-MCC plots in~\Cref{fig:ac_results_dopamined2,fig:ac_results_factorxa,fig:ac_results_sarscov2mpro} reveal surprisingly strong overall AC-classification results on $\mathfrak{M}_{\text{inter}}$. This type of MMP set models a compound-optimisation scenario where a researcher strives to identify small structural modifications with a large impact on the activity of query compounds with known activities. For this task, a substantial number of our QSAR models exhibit an AC-MCC value greater than $0.5$ across targets, which appears impressive considering the simplicity of the approach. Exchanging $\mathfrak{M}_{\text{inter}}$ with either $\mathfrak{M}_{\text{test}}$ or $\mathfrak{M}_{\text{cores}}$ leads to a substantial drop in the AC-MCC to approximately $0.3$ that appears to be mediated by a large drop in AC-sensitivity.

In most cases, GINs perform better than the other molecular featurisation methods with respect to the AC-MCC. Notably, the combination GIN-kNN consistently performs considerably better for AC-classification than the combinations ECFP-kNN and PDV-kNN. This supports the idea that GINs might have a heightened ability to resolve ACs by learning an embedding of chemical space in which the distance between two compounds is reflective of activity difference rather than structural difference. The combinations GIN-MLP, GIN-RF and ECFP-MLP exhibit particularly high AC-MCC values relative to the other methods. We recommend using at least one of these three models as a baseline against which to compare tailored AC-classification models; the practical utility of any AC-classification technique that cannot outperform these three common QSAR methods is questionable.

Across all three targets, AC-sensitivity is moderately high on $\mathfrak{M}_{\text{inter}}$ but universally low on $\mathfrak{M}_{\text{test}}$ and $\mathfrak{M}_{\text{cores}}$. This is consistent with the hypothesis that ACs form one of the major sources of prediction error for QSAR models. The weak AC-sensitivity on $\mathfrak{M}_{\text{test}}$ and $\mathfrak{M}_{\text{cores}}$ indicates that modern QSAR methods are largely blind to ACs formed by two MMP compounds outside the training set and thus lack essential chemical knowledge. GINs clearly outperform the other two more classical molecular featurisations across regression techniques with respect to AC-sensitivity. In particular, the GIN-MLP combination leads to the highest AC-sensitivity in all examined cases and thus discovers the most ACs. The highly parametric nature of GINs that makes them prone to overfitting could at the same time enable them to better model jagged regions of the SAR-landscape that contain ACs than classical task-agnostic representations.

There is a wide gap between distinct prediction techniques with respect to AC-precision: some models achieve a considerable level of AC-precision such that over $50\%$ of positively predicted MMPs in $\mathfrak{M}_{\text{test}}$ and $\mathfrak{M}_{\text{cores}}$ are indeed actual ACs. Other QSAR models, however, seem to fail almost entirely with respect to this metric on $\mathfrak{M}_{\text{test}}$ and $\mathfrak{M}_{\text{cores}}$ and only deliver modest performance on $\mathfrak{M}_{\text{inter}}$. RFs tend to exhibit the strongest AC-precision and the weakest AC-sensitivity. This might be as a result of their ensemble nature which should intuitively lead to conservative but trustworthy predictions of extreme effects such as ACs.

\subsection{PD-Classification Performance} \label{subsec: qsar_ac_study_pd_pred_perform}

The abilities of the evaluated QSAR models to identify which compound in an MMP is more active is universally weak, with PD-accuracies clustering around $0.7$ on $\mathfrak{M}_{\text{inter}}$ and around $0.6$ on $\mathfrak{M}_{\text{test}}$ and $\mathfrak{M}_{\text{cores}}$, as can be seen in the top rows of~\Cref{fig:pd_results_dopamined2,fig:pd_results_factorxa,fig:pd_results_sarscov2mpro}. Predicting the potency direction for two compounds with similar structures and thus usually similar levels of activity must be considered a challenging task. The combination ECFP-MLP reaches the strongest PD-accuracy in the majority of cases and we recommend starting with this model as a baseline for more advanced PD-classification methods.

One can argue that the activity direction of two similar compounds is of little interest if the true activity difference is small, as is often the case. We therefore also restricted PD-classification to predicted ACs. The three plots in the bottom rows of~\Cref{fig:pd_results_dopamined2,fig:pd_results_factorxa,fig:pd_results_sarscov2mpro} depict the PD-accuracy of each QSAR model on the subset of MMPs that were also predicted to be ACs by the same model. In this practically more relevant scenario, PD-classification accuracy tends to exceed $0.9$ on $\mathfrak{M}_{\text{inter}}$ and $0.8$ on $\mathfrak{M}_{\text{test}}$ and $\mathfrak{M}_{\text{cores}}$. The QSAR models investigated here are thus able to identify the correct activity direction of MMPs if they also predict them to be ACs. The relatively rare instances in which the PD of a predicted AC is misclassified, however, reflect severe QSAR-prediction errors.

\subsection[Linear Relationship between QSAR-MAE and AC-MCC]{Linear Relationship between QSAR-MAE and \\ AC-MCC} \label{subsec: qsar_ac_study_linear_relationship}

Our experiments reveal a consistent linear functional relationship between the QSAR-MAE and the AC-MCC as can be seen in the left columns of~\Cref{fig:ac_results_dopamined2,fig:ac_results_factorxa,fig:ac_results_sarscov2mpro}. A potential mechanism driving this effect could be as follows: As the overall QSAR-MAE of a model improves, its accuracy at predicting activity differences between similar molecules could be expected to improve as well. Previously misclassified MMPs whose predicted absolute activity differences were already close to the critical value $d_{\text{crit}} = 1.5$ might then gradually move to the correct side of the decision boundary and increase the AC-MCC. These results suggest that for real-world QSAR models the AC-MCC and the QSAR-MAE are strongly predictive of each other, i.e.~there appears to be a strong positive association between the general ability of a QSAR model to predict activities of individual compounds and its capability to correctly distinguish between ACs and non-ACs. While this observation only rests on nine models, it is highly consistent across MMP sets and pharmacological targets. Since AC-precision also appears to reliably increase as the QSAR-MAE decreases, one can speculate that as the QSAR-prediction performance of a model gets better, it gradually removes ``false spikes" from its generated SAR-landscape that would otherwise result in the prediction of false AC-positives.

\section{Conclusions} \label{sec: qsar_ac_study_conclusions}

To the best of our knowledge this is the first study to investigate the capabilities of QSAR models to classify the existence and direction of ACs within pairs of similar compounds. It is also the first work to explore the quantitative relationship between QSAR-prediction at the level of individual molecules and AC-prediction at the level of compound-pairs. As part of our methodology we have additionally introduced a simple, interpretable, and rigorous data-splitting technique for pair-based prediction problems. 

When the activities of both MMP compounds are unknown (i.e.~absent from the training set) then common QSAR models exhibit low AC-sensitivity which limits their utility for AC-classification. This strongly supports the hypothesis that QSAR methods do indeed regularly fail to predict ACs which might thus form a major source of prediction errors in QSAR modelling~\citep{golbraikh2014data,cruz-monteagudo_activity_2014, maggiora_outliers_2006, sheridan_experimental_2020}. However, in the practically significantly more relevant scenario where the activity of one MMP compound is known (i.e.~present in the training set) AC-sensitivity increases substantially; for query compounds with known activities, QSAR methods can therefore be used as simple AC-classification, compound-optimisation and SAR-knowledge-acquisition tools. Furthermore, based on the observed PD-classification results, we can expect the predicted direction of predicted ACs to have a high degree of accuracy.

With respect to molecular featurisation, we have found PDVs to be consistently inferior to ECFPs and GINs at both QSAR-prediction and AC-classification. It might be the case that simply too much of the explicit structural information that is relevant for both tasks is lost during the task-agnostic PDV transformation. Moreover, we have found robust evidence that precomputed ECFPs do not only outcompete PDVs but also differentiable GINs at general QSAR-prediction. This adds to a growing awareness that standard message-passing GNNs might need to be improved further to definitively beat classical molecular featurisations based on structural fingerprints such as ECFPs~\citep{stepivsnik2021comprehensive, mayr2018large, jiang2021could, menke2021using, chithrananda2020chemberta, sabando2021using, winter2019learning}. One potential angle to achieve this could be self-supervised GNN pre-training, which has recently shown promising results in the molecular domain~\citep{hu2019strategies, wang2021molclr}. However, while GINs appear to be inferior to ECFPs at QSAR-prediction, they tend to be advantageous for AC-classification; their highly parametric nature might simultaneously lead to increased overfitting but to a better modelling of the more jagged regions of the SAR-landscape. We thus recommend using GINs as an AC-classification baseline since such an agreed-upon benchmark is currently lacking.

Finally, the low AC-sensitivity of the tested QSAR models when the activities of both MMP compounds are unknown suggests that such methods are still lacking essential SAR knowledge. On the flip side, one can speculate that it might be possible to considerably boost the performance of common QSAR models in the future by focussing on the development of techniques to specifically increase their AC-sensitivity.

\newpage $\text{}$ 
\newpage
\chapter[A Twin Neural Network Model for Activity-Cliff Prediction]{A Twin Neural Network Model for Activity-Cliff Prediction} \label{chap: twin_net_ac_pred}

\noindent \textit{We have presented an early version of our twin neural network model for activity-cliff prediction described in this chapter at the 4th RSC-BMCS~/ RSC-CICAG Artificial Intelligence in Chemistry Symposium (2021, virtual) where we were subsequently awarded the prize for the best \href{http://dx.doi.org/10.13140/RG.2.2.18137.60000}{scientific poster}~\citep{dablander2021siamese}. The material in this chapter is built on the ideas outlined in our poster.}

\section{Overview} \label{sec: twin_net_ac_pred_overview}

In Chapter~\ref{chap: qsar_ac_study} we have seen that the utility of standard QSAR models for the detection of ACs is limited if the activites of both compounds are unknown and that ACs do in fact form a major source of prediction error in such cases. However, a method to predict ACs accurately \textit{in silico} would still be of great value for computational drug discovery due to its potential utility for tasks such as compound optimisation and SAR-knowledge acquisition. A natural research question to ask is thus how to design an AC-prediction model that provably outperforms the QSAR-modelling baselines established in the previous chapter.

As mentioned in Section~\ref{sec: qsar_ac_study_intro_acs}, the AC-prediction literature is still thin compared to the QSAR-prediction literature. A generous and thorough literature search for computational AC-prediction models revealed a total of $15$ methods~\citep{heikamp_prediction_2012, tamura_ligand-based_2020, de_la_vega_de_leon_prediction_2014, beck2014quantitative, namasivayam_searching_2012, namasivayam_prediction_2013, husby_structure-based_2015, horvath_prediction_2016, perez-benito_predicting_2019, asawa_prediction_2020, keyvanpour2021pcac, iqbal_prediction_2021, park2022acgcn, chen2022deepac}, all of which were published since $2012$. The core difference between these tailored AC-prediction models is their respective method to extract features from pairs of molecular compounds in a manner suitable for standard machine learning pipelines. Pair-based feature extraction techniques that have been employed for AC-prediction include condensed graphs of reactions~\citep{horvath_prediction_2016}, convolutional neural networks operating on 2D images of compound pairs~\citep{iqbal_prediction_2021}, and newly designed kernel functions for support-vector-machine classification of compound pairs~\citep{heikamp_prediction_2012}. One of the major flaws of the published work on AC-prediction is that none of the most salient AC-prediction models~\citep{heikamp_prediction_2012, de_la_vega_de_leon_prediction_2014, horvath_prediction_2016, tamura_ligand-based_2020, iqbal_prediction_2021} have been compared to a simple QSAR model. It is thus unclear whether these technically complex methods do in fact outcompete the simple QSAR-based AC-classification baselines established in Chapter~\ref{chap: qsar_ac_study}. This casts doubt on their practical utility. Running control experiments that compare the AC-prediction model to a straightforward QSAR-modelling baseline is especially relevant in light of the fact that a variety of published AC-prediction methods~\citep{heikamp_prediction_2012, de_la_vega_de_leon_prediction_2014, iqbal_prediction_2021} are tested on compound-pair-based data splits which incur a large overlap between training and test set at the level of individual compounds. This less-than-rigorous data splitting technique should naturally favour QSAR models for AC-prediction. 

In this chapter, we introduce a novel deep learning model for the classification of the existence and direction ACs in chemical space. Our key idea is to employ a twin architecture~\citep{chicco2021siamese, bromley1993signature, koch2015siamese, taigman2014deepface} that is specifically designed to process dual inputs in a natural way. Twin networks are artificial neural network architectures that contain two (or more) indistinguishable copies of the same subnetwork. All subnetworks are required to share the same architecture and trainable parameters, and these parameters are updated jointly for all subnetworks during training. Twin architectures have been shown to be well-suited to learn similarity and distance metrics for pairs of complex data structures~\citep{chicco2021siamese}; such metric-learning approaches can be used for data-scarce prediction tasks in a process called single-shot learning. Successful areas of application for twin neural networks can for example be found in facial recognition~\citep{taigman2014deepface}, signature verification~\citep{bromley1993signature} and single-shot image recognition~\citep{koch2015siamese}. Comparatively little work has been done, however, to study twin neural networks in the context of computational drug discovery. A small number of studies have investigated the potential of twin architectures for drug-drug interaction prediction~\citep{dhami2019predicting, zhong2019graph, schwarz2020attentionddi}, one-shot drug-discovery~\citep{torres2020exploring, baskin2006neural, alvarez2010qt}, protein-protein interaction prediction~\citep{chen2019multifaceted}, bioactivity prediction~\citep{fernandez2021siamese}, drug-response similarity prediction~\citep{jeon2019resimnet}, compound-structure determination~\citep{roberts2018using}, protein-representation learning~\citep{nourani2021tripletprot}, and the identification of drug-target interactions~\citep{gao2018interpretable}. To the best of our knowledge, our work represents the first application of twin neural networks to the problem of activity-cliff prediction. 

Our proposed twin network is jointly trained on two distinct prediction tasks: a ternary AC-classification task to predict whether an input MMP represents an AC, a half-AC or a non-AC, and a binary PD-classification task to predict which compound in the MMP is the more potent one. Important symmetry properties can be hard-coded into the neural architecture of the twin network as a useful inductive bias for pair-based prediction problems, and we give straightforward mathematical proofs that illustrate these properties. The developed twin network can be seamlessly combined with either modern GNNs or classical molecular featurisations such as ECFPs or PDVs, or indeed with any representation of individual molecules. This removes the need to develop complex feature-engineering procedures for compound-pairs, which appears to have been the main technical hurdle in the development of previous AC-prediction models~\citep{heikamp_prediction_2012, horvath_prediction_2016, tamura_ligand-based_2020, iqbal_prediction_2021}.

We experimentally evaluate an ECFP-based and a GIN-based version of the proposed twin model using the SARS-CoV-2 main protease data set described in Section~\ref{subsec: qsar_ac_study_data_sets} and our novel rigorous data splitting technique for pair-based data developed in Section~\ref{subsec: qsar_ac_study_data_splitting}. We additionally experiment with a transfer learning approach to enrich the input features of each of the two model versions in an attempt to further boost performance. We also run strict control experiments to compare the twin network models to two QSAR models that have shown strong performance in Chapter~\ref{chap: qsar_ac_study} when repurposed for AC-classification, namely ECFP-MLP and GIN-MLP. However, to guarantee comparability with the twin network, this time we will also repurpose the QSAR models for \textit{ternary} rather than binary AC-classification. As stated before, such indispensable control experiments involving QSAR models are lacking in other studies~\citep{heikamp_prediction_2012, horvath_prediction_2016, tamura_ligand-based_2020, iqbal_prediction_2021}. 

We start off this chapter by giving a mathematical description and visual illustration of the proposed twin neural network model for AC and PD-classification, along with the transfer learning technique to improve the information content of its input features. We then present our computational experiments and discuss our empirical observations. Finally, we summarise our findings and draw conclusions for the AC-prediction field.

\section[Twin Neural Network: Mathematical Description]{Twin Neural Network: Mathematical \\ Description} \label{sec: twin_net_ac_pred_model_description}

\subsection{Neural Architecture and Symmetry Properties} \label{subsec: twin_net_ac_pred_model_description_neural_arch}

Let $(\mathcal{R}, \tilde{\mathcal{R}})$ be an ordered pair of two molecular representations forming an MMP. An example for an MMP is depicted in the previous chapter in Figure~\ref{fig:ac_example_factor_Xa_CHEMBL658338}. Just like in Chapter~\ref{chap: qsar_ac_study}, we assume that both compounds are associated with activity labels $\text{act}(\mathcal{R}), \text{act}(\tilde{\mathcal{R}}) \in \mathbb{R}$ which quantify their biological activity with respect to the same predefined pharmacological target. More specifically, we consider the expression $\text{act}(\mathcal{R})$ to be the negative decadic logarithm of the experimentally measured activity value of~$\mathcal{R}$. This is equivalent to its pK\textsubscript{i} or pIC\textsubscript{50}-value (up to a minor additive shift if one chooses to use units other than the canonical [M]-units). From $\text{act}(\mathcal{R})$ and $\text{act}(\tilde{\mathcal{R}})$ we can derive a ternary label
$$\text{AC}(\mathcal{R}, \tilde{\mathcal{R}}) \in \{(1,0,0),(0,1,0), (0,0,1)\} \eqqcolon \{\text{AC}, \text{ half-AC},\text{ non-AC}\} $$
indicating whether $(\mathcal{R}, \tilde{\mathcal{R}})$ is an AC, half-AC or non-AC, as well as a binary label 
$$\text{PD}(\mathcal{R}, \tilde{\mathcal{R}}) \in \{0, 1\} \eqqcolon \{\text{Right}, \text{Left} \} $$
indicating which of both compounds is more active. Note that 
$$\text{AC}(\mathcal{R}, \tilde{\mathcal{R}}) = \text{AC}(\tilde{\mathcal{R}}, \mathcal{R})$$ and 
$$\text{PD}(\mathcal{R}, \tilde{\mathcal{R}}) = 1 - \text{PD}(\tilde{\mathcal{R}}, \mathcal{R}).$$
Our goal is to predict both $\text{AC}(\mathcal{R}, \tilde{\mathcal{R}})$ and $\text{PD}(\mathcal{R}, \tilde{\mathcal{R}})$ from the input MMP~$(\mathcal{R}, \tilde{\mathcal{R}})$ using a twin neural network model. Note that unlike in Chapter~\ref{chap: qsar_ac_study} we now also explicitly consider ``half-ACs" in our AC-classification task. As stated in Section~\ref{subsec: qsar_ac_study_ac_bin_class_prob}, half-ACs are defined as MMPs which exhibit a potency difference between one and two orders of magnitude. Until now, half-ACs have been essentially ignored in the AC-prediction literature~\citep{heikamp_prediction_2012, horvath_prediction_2016, tamura_ligand-based_2020}; however, extending our classification task to also include half-ACs increases the amount of available training data and leads to a more complete and practically relevant (albeit more challenging) \textit{ternary} formulation of the AC-classification problem.

A visual introduction to our developed twin neural network architecture is depicted in Figure~\ref{fig:twin_network_architecture}. This illustration might serve as a useful point of reference to facilitate the theoretical discussions in the rest of this section.
\begin{figure}[h]
	\centering
	\includegraphics[width=1\linewidth]{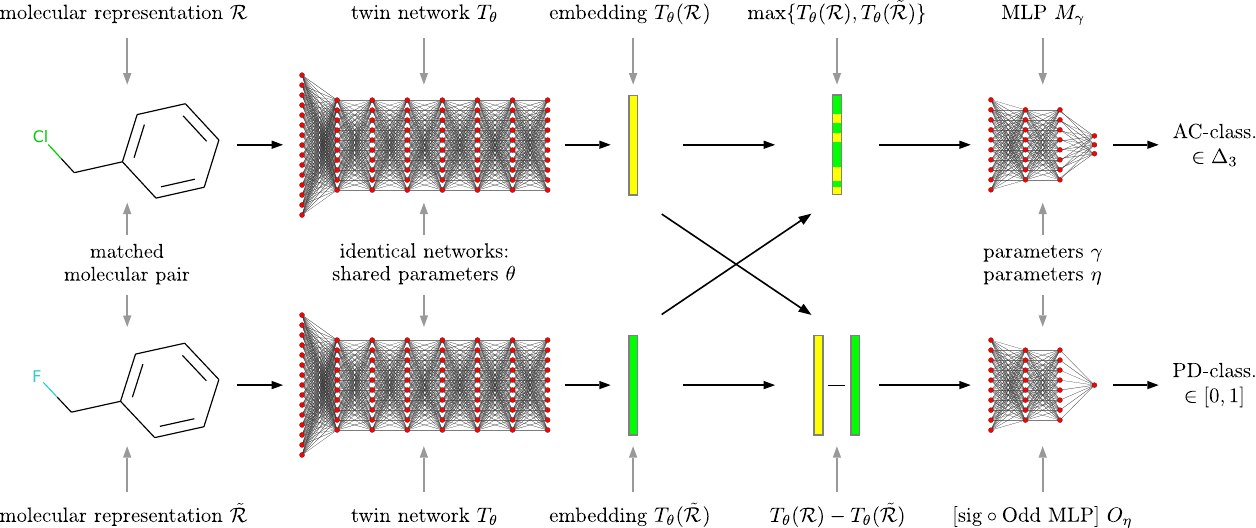}
	\caption[Twin neural network architecture for AC and PD-classification.]{Twin neural network model for activity-cliff~(AC) and potency-direction~(PD) classification. Changing the order of the input compounds leaves the predicted AC-classification label invariant but reverses the predicted PD-classification label.}
	\label{fig:twin_network_architecture}
\end{figure}
From a mathematical point of view, our twin network model can be expressed as the composition of two distinct mappings. The first mapping corresponds to a featurisation step that creates a vectorial embedding for each MMP compound:
\begin{equation} \label{eq: twin_network_feat}
(\mathcal{R}, \tilde{\mathcal{R}}) \quad \mapsto \quad \begin{bmatrix} T_{\theta}(\mathcal{R})\\[0.1cm] T_{\theta}(\tilde{\mathcal{R}}) \end{bmatrix} \in \mathbb{R}^l \times \mathbb{R}^l.\end{equation}
The second mapping corresponds to a classification step that uses both vectorial embeddings to create probabilistic estimates for $\text{AC}(\mathcal{R}, \tilde{\mathcal{R}})$ and $\text{PD}(\mathcal{R}, \tilde{\mathcal{R}})$:
\vspace{7pt}
\begin{equation} \vspace{7pt} \label{eq: twin_network_class}
\begin{bmatrix} T_{\theta}(\mathcal{R})\\[0.1cm] T_{\theta}(\tilde{\mathcal{R}}) \end{bmatrix} \quad \mapsto \quad \begin{bmatrix} M_{\gamma}(\max\{T_{\theta}(\mathcal{R}) , T_{\theta}(\tilde{\mathcal{R}})\}) \\[0.1cm] O_{\eta}(T_{\theta}(\mathcal{R}) - T_{\theta}(\tilde{\mathcal{R}})) \end{bmatrix} \in \Delta^{+}_{3} \times (0, 1).
\end{equation}
The upper component on the right-hand side of Expression~\ref{eq: twin_network_class} corresponds to the predicted AC-classification label,
$$ \text{AC}(\mathcal{R}, \tilde{\mathcal{R}}) \approx M_{\gamma}(\max\{T_{\theta}(\mathcal{R}) , T_{\theta}(\tilde{\mathcal{R}})\}) \eqqcolon \hat{\text{AC}}_{\theta, \gamma}(\mathcal{R}, \tilde{\mathcal{R}}) \in \Delta^{+}_3, $$
and the lower component corresponds to the predicted PD-classification label,
$$ \text{PD}(\mathcal{R}, \tilde{\mathcal{R}}) \approx  O_{\eta}(T_{\theta}(\mathcal{R}) - T_{\theta}(\tilde{\mathcal{R}})) \eqqcolon \hat{\text{PD}}_{\theta, \eta}(\mathcal{R}, \tilde{\mathcal{R}}) \in (0,1). $$

\begin{itemize}
	\item Here $T_{\theta}$ is a trainable deep-learning-based molecular featurisation method and $\theta$ is its associated vector of trainable parameters. $T_{\theta}$ is used to map a given molecular representation $\mathcal{R}$ to a feature vector $T_{\theta}(\mathcal{R}) \in \mathbb{R}^l$ in a differentiable manner. For example, if $\mathcal{R}$ is a molecular graph, then $T_{\theta}$ could take the form of a GIN, and if $\mathcal{R}$ is an ECFP, then $T_{\theta}$ could take the form of a simple MLP. Note that $T_{\theta}$ is applied twice in our model, once to each input compound. $T_{\theta}$ can thus be seen as representing both branches of a twin neural network, as visualised in Figure~\ref{fig:twin_network_architecture} below.
	
	\item The vector $\max\{v_1, v_2\} \in \mathbb{R}^l$ describes their componentwise maximum of two arbitrary vectors $v_1, v_2 \in \mathbb{R}^l$. Note that $\max\{\cdot, \cdot\}$ is thus a permutation-invariant set function, i.e.~$\max\{v_1, v_2\} = \max\{v_2, v_1\}$. In Remark~\ref{remark: choice_of_max} below, we will give some reasons why we specifically chose the max-operator out of all possible permutation-invariant set functions for our model.
	
	\item The symbol $\Delta^{+}_{3} \coloneqq \{(p_1, p_2, p_3) \in \mathbb{R}^3 \ \vert \ p_1, p_2, p_3 > 0 \ \land \ p_1 + p_2 + p_3 = 1 \}$ denotes the set of all positive three-dimensional probability vectors. $\Delta^{+}_{3}$ forms a flat triangular surface in $\mathbb{R}^3$ and is sometimes also referred to as the unit $2$-simplex.
	
	\item $M_{\gamma} : \mathbb{R}^{l} \to \Delta^{+}_{3}$ is an MLP and $\gamma$ is its associated vector of trainable parameters. $M_{\gamma}$ is of the form
	$$M_{\gamma} = \text{softmax} \circ \bar{M}_{\gamma} $$
	whereby $\bar{M}_{\gamma}$ is an MLP with three output neurons in its final layer and
	$$\text{softmax}(x_1, x_2, x_3) \coloneqq \Big( \frac{e^{x_1}}{e^{x_1} + e^{x_2} + e^{x_3}}, \frac{e^{x_2}}{e^{x_1} + e^{x_2} + e^{x_3}}, \frac{e^{x_3}}{e^{x_1} + e^{x_2} + e^{x_3}} \Big) \in \Delta^{+}_3.$$
	$M_{\gamma}$ thus outputs a three-dimensional vector whose components represent the predicted probabilites that the MMP $(\mathcal{R}, \tilde{\mathcal{R}})$ is an AC, half-AC or non-AC respectively.
		
	\item $O_{\eta} : \mathbb{R}^{l} \to (0,1)$ is an MLP and $\eta$ is its associated vector of trainable parameters. 	$O_{\eta}$ outputs the predicted probability that $\text{act}(\mathcal{R}) \geq \text{act}(\tilde{\mathcal{R}})$. We restrict $O_{\eta}$ to be of the functional form
	$$O_{\eta}(v) =  \text{sig} \circ \bar{O}_{\eta}$$
	with
	\begin{equation}  \label{eq: def_odd_mlp}
	\bar{O}_{\eta} \coloneqq W_k \circ \arctan \circ W_{k-1} \circ \ ... \circ \arctan \circ W_1
	\end{equation} 
	Here $W_1,...,W_k$ are trainable weight matrices with no added bias vectors, with $W_1$ containing $l$ columns and $W_k$ containing $1$ row. The expression $\arctan(\cdot)$ is the componentwise applied arctangent map used as a nonlinear activation function. Finally,
	$$\text{sig}(x) \coloneqq \frac{e^x}{e^x +1} \in [0, 1]$$ is the sigmoidal activation function applied to the single output neuron in the final MLP layer of $\bar{O}_{\eta}$. We show below that the above choices make $\bar{O}_{\eta}$ an \textit{odd function}, i.e.~a function with the property that $\bar{O}_{\eta}(-v) = -\bar{O}_{\eta}(v)$. We therefore call $\bar{O}_{\eta}(v)$ an \textit{odd} MLP.

\end{itemize}
	
Note that the twin model consists of three neural network components: $T_{\theta}$, $M_{\gamma}$ and $O_{\eta}$. The feature-extracting network component $T_{\theta}$ is separately applied to each of the two input compounds, leading to a twin architecture which can be thought of as containing two copies of $T_{\theta}$. Both network copies are required to share the same weight parameter vector $\theta$ which means that an update of $\theta$ during training changes both copies of $T_{\theta}$ in the same manner. We refer to neural networks of this type that contain two or more indistinguishable copies of the same subnetwork as \textit{twin neural networks}. Twin neural networks are natural tools in situations where two or more distinct inputs must be processed by a machine learning system simultaneously~\citep{chicco2021siamese, bromley1993signature, koch2015siamese, taigman2014deepface} since they allow for important pair-based symmetry properties to be hard-coded into the model architecture as an inductive bias. We first have a look at a symmetry property relevant for AC-classification.

\begin{proposition}[Order-Invariance of AC-classification] \label{prop: order_inv_ac_twin_net}
	Changing the order of the input compounds from $(\mathcal{R}, \tilde{\mathcal{R}})$ to $(\tilde{\mathcal{R}}, \mathcal{R})$ in the twin network model from Figure~\ref{fig:twin_network_architecture} leaves the predicted AC-classification label unchanged.
\end{proposition}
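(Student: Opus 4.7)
The plan is to reduce the claim to the symmetry of the componentwise maximum operator together with the weight-sharing property of the twin architecture. Concretely, inspection of Expression~\ref{eq: twin_network_class} shows that the predicted AC-label depends on the input pair $(\mathcal{R}, \tilde{\mathcal{R}})$ only through the single vector $\max\{T_{\theta}(\mathcal{R}), T_{\theta}(\tilde{\mathcal{R}})\} \in \mathbb{R}^{l}$, which is then passed through the MLP $M_{\gamma}$. So it suffices to show that this vector is invariant under swapping the two compounds.

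First I would observe that because the featuriser $T_{\theta}$ is a deterministic function of the input compound (not of its position in the pair) and because both branches of the twin network share the same parameter vector $\theta$, the unordered set $\{T_{\theta}(\mathcal{R}), T_{\theta}(\tilde{\mathcal{R}})\}$ is unchanged when $(\mathcal{R}, \tilde{\mathcal{R}})$ is replaced by $(\tilde{\mathcal{R}}, \mathcal{R})$. Then I would invoke the fact that the componentwise maximum of two vectors is a symmetric function, i.e.\ for any $v_1, v_2 \in \mathbb{R}^{l}$ and any index $i \in \{1,\dots,l\}$, $\max\{v_1, v_2\}_i = \max\{v_{1,i}, v_{2,i}\} = \max\{v_{2,i}, v_{1,i}\} = \max\{v_2, v_1\}_i$, so $\max\{v_1, v_2\} = \max\{v_2, v_1\}$. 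Applying this with $v_1 = T_{\theta}(\mathcal{R})$ and $v_2 = T_{\theta}(\tilde{\mathcal{R}})$ yields $\max\{T_{\theta}(\mathcal{R}), T_{\theta}(\tilde{\mathcal{R}})\} = \max\{T_{\theta}(\tilde{\mathcal{R}}), T_{\theta}(\mathcal{R})\}$.

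To finish, I would apply $M_{\gamma}$ to both sides; since $M_{\gamma}$ is a well-defined function it produces the same output on identical inputs, so
$$\hat{\text{AC}}_{\theta, \gamma}(\mathcal{R}, \tilde{\mathcal{R}}) = M_{\gamma}(\max\{T_{\theta}(\mathcal{R}), T_{\theta}(\tilde{\mathcal{R}})\}) = M_{\gamma}(\max\{T_{\theta}(\tilde{\mathcal{R}}), T_{\theta}(\mathcal{R})\}) = \hat{\text{AC}}_{\theta, \gamma}(\tilde{\mathcal{R}}, \mathcal{R}),$$
which is exactly the claim.

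There is essentially no hard step in this proof: the statement is a direct structural consequence of two design choices made when defining the model, namely (i) the sharing of the parameters $\theta$ between the two branches of the twin network and (ii) the use of a permutation-invariant aggregation (here the componentwise max) before the AC-classification head $M_{\gamma}$. The only point that deserves a brief explicit comment is that without the weight sharing in (i), swapping the inputs would produce a different pair of feature vectors and the argument would collapse; highlighting this will make clear why the inductive bias is genuinely baked into the twin architecture rather than merely into the max-pool step.
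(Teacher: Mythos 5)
Your proof is correct and follows essentially the same route as the paper's: both reduce the claim to the symmetry of the componentwise maximum, $\max\{T_{\theta}(\mathcal{R}), T_{\theta}(\tilde{\mathcal{R}})\} = \max\{T_{\theta}(\tilde{\mathcal{R}}), T_{\theta}(\mathcal{R})\}$, and then apply $M_{\gamma}$. Your additional remarks on weight sharing and the componentwise verification of symmetry are just a more explicit spelling-out of the same one-line argument.
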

\begin{proof}
	
	The mathematical definition of the twin network model in Expression~\ref{eq: twin_network_class} specifies that the predicted AC-classification label for a compound pair $(\mathcal{R}, \tilde{\mathcal{R}})$ is given by 
	$$M_{\gamma}(\max\{T_{\theta}(\mathcal{R}) , T_{\theta}(\tilde{\mathcal{R}})\}) \in \Delta^{+}_3.$$
	Since $\max\{\cdot , \cdot\}$ is a permutation-invariant (i.e.~order-independent) set function, one can immediately see that
	$$M_{\gamma}(\max\{T_{\theta}(\mathcal{R}) , T_{\theta}(\tilde{\mathcal{R}})\}) = M_{\gamma}(\max\{T_{\theta}(\tilde{\mathcal{R}}) , T_{\theta}(\mathcal{R})\}) $$
	which proves the claim.
\end{proof}

\begin{remark}[Choice of Max-Operator] \label{remark: choice_of_max}
	The proof of Proposition~\ref{prop: order_inv_ac_twin_net} is based on the fact that $\max\{\cdot , \cdot\}$ is a permutation-invariant set function. Such functions are also commonly used for global graph pooling in modern GNN architectures where an unordered set of node features must eventually be reduced to a single feature vector. Exchanging $\max\{\cdot , \cdot\}$ with another permutation-invariant set function such as a summation, averaging, sorting, absolute-difference or minimum operator would preserve the order-invariance of the AC-classification. When designing our model we thus experimented with a variety of such operators and ultimately converged on the $\max\{\cdot , \cdot\}$-function as it showed the strongest performance in a set of preliminary experiments. While this is an empirical finding, an intuitive (albeit speculative) explanation for this might be that ACs are inherently rare outliers; featurisation methods for MMPs that put an emphasis on extreme values such as the $\max\{\cdot , \cdot\}$-function might therefore be more efficient than simple averaging or summation strategies at catching extreme features relevant for outlier-detection.
\end{remark}
We now turn our attention to a symmetry property built into our model for PD-classification.

\begin{definition}[Odd Function]
	A real function $f : \mathbb{R}^{k} \to \mathbb{R}^{n}$ is called odd if
	$$\forall x \in \mathbb{R}^{k}: \quad f(-x) = -f(x).  $$
\end{definition}
Examples of odd functions include the trigonometric function $\sin(x)$ and the family of monomial functions $x^{2k+1}$ for $k \in \mathbb{N}_{0}$. 

\begin{lemma}[Composition of Odd Functions is Odd] \label{lemma: comp_odd_func}
	Let $f: \mathbb{R}^{k} \to \mathbb{R}^{n}$ and $g : \mathbb{R}^{n} \to \mathbb{R}^{s}$ be two odd functions. Then the composition $h \coloneqq g \circ f : \mathbb{R}^{k} \to \mathbb{R}^{s}$ is also an odd function.
\end{lemma}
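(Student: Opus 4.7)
The plan is a direct unfolding of definitions. Given odd $f : \mathbb{R}^k \to \mathbb{R}^n$ and odd $g : \mathbb{R}^n \to \mathbb{R}^s$, I would pick an arbitrary $x \in \mathbb{R}^k$ and then compute $h(-x) = g(f(-x))$, apply the oddness of $f$ to pull the minus sign inside to get $g(-f(x))$, and then apply the oddness of $g$ to obtain $-g(f(x)) = -h(x)$. Since $x$ was arbitrary, the identity $h(-x) = -h(x)$ holds on all of $\mathbb{R}^k$, establishing that $h$ is odd.

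There is no real obstacle here: the argument is a two-step chain of substitutions, and the only thing to be careful about is the order in which the two oddness properties are applied (first $f$, then $g$, matching the order of composition). No regularity or continuity assumptions are needed, and the statement extends immediately to compositions of any finite number of odd functions by induction on the length of the chain, which I will mention briefly at the end to motivate why this lemma suffices for the forthcoming analysis of the PD-classification branch $\bar{O}_{\eta}$ defined in Equation~\ref{eq: def_odd_mlp}.
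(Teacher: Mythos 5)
Your proposal is correct and follows exactly the same argument as the paper's proof: take arbitrary $x$, unfold $h(-x) = g(f(-x))$, apply the oddness of $f$ and then of $g$ to arrive at $-h(x)$. Nothing is missing.
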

\begin{proof}
	Let $x \in \mathbb{R}^k$. Then the oddity of $f$ and $g$ implies that
	$$h(-x) = (g \circ f)(-x) = g(f(-x)) = g(-f(x)) = -g(f(x)) = -(g \circ f)(x) = -h(x) $$
	which shows that $h$ is odd as well.
\end{proof}

\begin{lemma}[Oddity of Odd MLP.] \label{lemma: odd_mlp}
	The multilayer perceptron $\bar{O}_{\eta} : \mathbb{R}^{l} \to \mathbb{R}^{\bar{l}}$ defined in Equation~\ref{eq: def_odd_mlp} via
	$$	\bar{O}_{\eta} \coloneqq W_k \circ \arctan \circ W_{k-1} \circ \ ... \circ \arctan \circ W_1 $$
	is an odd function.
\end{lemma}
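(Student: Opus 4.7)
The plan is to reduce the claim directly to Lemma~\ref{lemma: comp_odd_func} by showing that each of the atomic building blocks appearing in the definition of $\bar{O}_{\eta}$ is itself an odd function, and then invoking a trivial inductive extension of Lemma~\ref{lemma: comp_odd_func} from two factors to $2k-1$ factors.

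First, I would check that each trainable linear map $W_i$, viewed as a function $x \mapsto W_i x$ from one Euclidean space to another, is odd. This is immediate from the linearity of matrix multiplication: $W_i(-x) = -W_i x$. Crucially, this relies on the hypothesis stated in the definition of $\bar{O}_{\eta}$ that \emph{no bias vectors are added} at any layer, since an affine map $x \mapsto W_i x + b_i$ with $b_i \neq 0$ would fail to satisfy the oddness condition. Second, I would check that the componentwise-applied arctangent $\arctan : \mathbb{R}^{n} \to \mathbb{R}^{n}$ is odd. This follows from the well-known scalar identity $\arctan(-t) = -\arctan(t)$ for all $t \in \mathbb{R}$, applied coordinatewise.

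Next I would promote Lemma~\ref{lemma: comp_odd_func} from a statement about compositions of two odd functions to a statement about compositions of finitely many odd functions by a one-line induction on the number of factors. Applying this extended statement to the $2k-1$ odd factors $W_1, \arctan, W_2, \arctan, \ldots, W_{k-1}, \arctan, W_k$ yields that $\bar{O}_{\eta}$ is odd, which is the desired conclusion.

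There is no real obstacle here; the argument is essentially a bookkeeping exercise once one identifies the correct atomic pieces and notices that the absence of bias terms is precisely what makes the linear layers odd rather than merely affine. The only subtle point worth flagging explicitly in the write-up is this role of the missing bias vectors, since it is exactly the design choice that distinguishes this ``odd MLP'' from a generic MLP and is what ultimately enables the PD-classification branch of the twin network to inherit the antisymmetry property $\hat{\text{PD}}_{\theta,\eta}(\mathcal{R}, \tilde{\mathcal{R}}) = 1 - \hat{\text{PD}}_{\theta,\eta}(\tilde{\mathcal{R}}, \mathcal{R})$ in the analogue of Proposition~\ref{prop: order_inv_ac_twin_net} for PD.
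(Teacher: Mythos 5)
Your proposal is correct and follows essentially the same route as the paper's own proof: decompose $\bar{O}_{\eta}$ into the bias-free linear maps $W_i$ and the componentwise $\arctan$, verify each is odd, and conclude via the closure of odd functions under composition (the paper, like you, relies on the finite-composition extension of Lemma~\ref{lemma: comp_odd_func}, though only you flag the induction explicitly). The sole cosmetic difference is that the paper derives the oddness of $\arctan$ from that of $\tan$ via bijectivity, whereas you cite it as a standard identity; both are fine.
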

\begin{proof}
The neural network 
$\bar{O}_{\eta}$
is a composition of matrix-multiplication functions
$$W_i: \mathbb{R}^{l_{i-1}} \to \mathbb{R}^{l_i}$$ without added bias vectors and the componentwise-applied trigonometric nonlinear activation function
$$\arctan: \mathbb{R} \to \mathbb{R}.$$ Since Lemma~\ref{lemma: comp_odd_func} establishes that the composition of odd functions is again odd, it is sufficient to show that both $W_i$ and $\arctan$ are odd functions.

Since matrix-multiplication is a linear function, it holds in particular that 
$$\forall v \in \mathbb{R}^{l_{i-1}} \quad \forall \lambda \in \mathbb{R}: \quad W_i(\lambda v) = \lambda W_i (v). $$
Setting $\lambda \coloneqq -1$ in the above equation immediately shows that $W_i$ is odd. The oddity of $W_i$ is thus a trivial consequence of its linearity.

To show that $\arctan$ is odd, we first observe that its inverse function $$\tan :  (-\pi/2, \pi/2) \to \mathbb{R}$$ 
is odd since
$$\forall y \in (-\pi/2, \pi/2): \quad \tan(-y) = \frac{\sin(-y)}{\cos(-y)} = \frac{-\sin(y)}{\cos(y)} = - \tan(y). $$
This proof relies on the well-known facts that $\sin(-y) = -\sin(y)$ and $\cos(-y) = \cos(y)$. Now let $x \in \mathbb{R}$ be an arbitrary real number. Since $\tan$ is a bijection, there exists exactly one real number $y_x \in (-\pi/2, \pi/2)$ such that $ x = \tan(y_x)$ and thus $\arctan(x) = y_x$. We can now exploit the oddity of $\tan$ to show that
$$\arctan(-x) = \arctan(- \tan(y_x)) = \arctan(\tan(-y_x)) = -y_x = - \arctan(x) $$
which proves the oddity of $\arctan$.

\end{proof}

\begin{proposition}[Order-Equivariance of PD-classification] \label{prop: order_eq_pd_twin_net}
	Changing the order of the input compounds from $(\mathcal{R}, \tilde{\mathcal{R}})$ to $(\tilde{\mathcal{R}}, \mathcal{R})$ in the twin network model from Figure~\ref{fig:twin_network_architecture} flips the predicted PD-classification label, i.e.~transforms it according to the map $$(0,1) \ni p \mapsto 1- p \in (0,1).$$
\end{proposition}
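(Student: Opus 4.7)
The plan is to unfold the definition of $\hat{\text{PD}}_{\theta, \eta}$, use the oddity of $\bar{O}_{\eta}$ (established in Lemma~\ref{lemma: odd_mlp}) to move a sign out of its argument, and then exploit the elementary identity $\text{sig}(-x) = 1 - \text{sig}(x)$ to convert that sign flip into the desired reflection $p \mapsto 1 - p$.

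Concretely, I would first write out
\[
\hat{\text{PD}}_{\theta, \eta}(\tilde{\mathcal{R}}, \mathcal{R}) = O_{\eta}(T_{\theta}(\tilde{\mathcal{R}}) - T_{\theta}(\mathcal{R})) = \text{sig}\bigl(\bar{O}_{\eta}(T_{\theta}(\tilde{\mathcal{R}}) - T_{\theta}(\mathcal{R}))\bigr),
\]
observing that the input difference simply picks up a minus sign when the two compounds are swapped. Applying Lemma~\ref{lemma: odd_mlp} then yields
\[
\bar{O}_{\eta}(T_{\theta}(\tilde{\mathcal{R}}) - T_{\theta}(\mathcal{R})) = \bar{O}_{\eta}\bigl(-(T_{\theta}(\mathcal{R}) - T_{\theta}(\tilde{\mathcal{R}}))\bigr) = -\bar{O}_{\eta}(T_{\theta}(\mathcal{R}) - T_{\theta}(\tilde{\mathcal{R}})).
\]

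The remaining step is to handle the sigmoidal layer. I would include a short one-line verification that $\text{sig}(-x) = e^{-x}/(e^{-x}+1) = 1/(1+e^x) = 1 - \text{sig}(x)$, and then chain the two identities together to conclude
\[
\hat{\text{PD}}_{\theta, \eta}(\tilde{\mathcal{R}}, \mathcal{R}) = 1 - \text{sig}\bigl(\bar{O}_{\eta}(T_{\theta}(\mathcal{R}) - T_{\theta}(\tilde{\mathcal{R}}))\bigr) = 1 - \hat{\text{PD}}_{\theta, \eta}(\mathcal{R}, \tilde{\mathcal{R}}),
\]
which is exactly the claimed transformation $p \mapsto 1 - p$.

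There is no real obstacle here: the entire argument is a short two-step chase through the definitions, with all of the heavy lifting already done by Lemma~\ref{lemma: odd_mlp}. The only subtlety worth flagging explicitly in the write-up is \emph{why} the architectural restrictions on $\bar{O}_{\eta}$ (no bias vectors in the linear layers, and the use of $\arctan$ rather than, say, $\text{ReLU}$ or $\text{sig}$ as a hidden activation) are essential: they are precisely what is needed to make $\bar{O}_{\eta}$ odd, and hence what promotes the model's natural antisymmetry in the subtraction $T_{\theta}(\mathcal{R}) - T_{\theta}(\tilde{\mathcal{R}})$ into a genuine order-equivariance property of the final probabilistic output.
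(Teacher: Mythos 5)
Your proposal is correct and follows essentially the same route as the paper's own proof: both unfold the definition, invoke the oddity of $\bar{O}_{\eta}$ from Lemma~\ref{lemma: odd_mlp} to extract the sign from the swapped difference $T_{\theta}(\tilde{\mathcal{R}}) - T_{\theta}(\mathcal{R})$, and then apply the identity $\text{sig}(-x) = 1 - \text{sig}(x)$ to obtain $p \mapsto 1-p$. The only cosmetic difference is the direction of the computation (you start from the reversed pair, the paper starts from the original one), which is immaterial.
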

\begin{proof}
The mathematical definition of the twin network model in Expression~\ref{eq: twin_network_class} specifies that the predicted PD-classification label for a compound pair $(\mathcal{R}, \tilde{\mathcal{R}})$ is given by 
$$O_{\eta}(T_{\theta}(\mathcal{R}) - T_{\theta}(\tilde{\mathcal{R}})) = \text{sig}(\bar{O}_{\eta}(T_{\theta}(\mathcal{R}) - T_{\theta}(\tilde{\mathcal{R}})))  \in (0,1).$$
We start by observing that the Sigmoid activation function fulfills the following functional equation:
$$\text{sig}(x) =  
\frac{e^x}{e^x +1} = 
1- \Big(1-\frac{e^x}{e^x +1}\Big) = 
1- \Big(\frac{1}{e^x +1}\Big) = 
1- \Big(\frac{e^{-x}}{1 + e^{-x}}\Big) = 
1- \text{sig}(-x).$$ 
Based on this relationship and the oddity of $\bar{O}_{\eta}$ which we showed in Lemma~\ref{lemma: odd_mlp}, we can write

\begin{align*} \text{sig}(\bar{O}_{\eta}(T_{\theta}(\mathcal{R}) - T_{\theta}(\tilde{\mathcal{R}}))) &= 1 - \text{sig}(-\bar{O}_{\eta}(T_{\theta}(\mathcal{R}) - T_{\theta}(\tilde{\mathcal{R}}))) \\
&= 1 - \text{sig}(\bar{O}_{\eta}(-(T_{\theta}(\mathcal{R}) - T_{\theta}(\tilde{\mathcal{R}})))) \\
&= 1 - \text{sig}(\bar{O}_{\eta}(T_{\theta}(\tilde{\mathcal{R}}) - T_{\theta}(\mathcal{R})))
\end{align*}
which proves the claim.
\end{proof}

Propositions~\ref{prop: order_inv_ac_twin_net}~and~\ref{prop: order_eq_pd_twin_net} assure that the twin neural network respects the natural symmetry properties of AC and PD-classification: if we change the order of the input compounds, then the predicted AC-classification label does not change while the predicted PD-classification label flips in the expected manner. Since these properties are hard-coded into the neural architecture of our model, they do not need to be inferred statistically during training.

\subsection{Loss Function and Model Training} \label{subsec: twin_net_ac_pred_model_description_training}

Let
$$\Delta_{n} \coloneqq \{(p_1, ..., p_n) \in \mathbb{R}^n \ \vert \ p_1, ..., p_n \geq 0 \ \land \ p_1 + ... + p_n = 1 \} $$
denote the set of $n$-dimensional probability vectors and let 
$$\Delta^{+}_{n} \coloneqq \{(p_1, ..., p_n) \in \mathbb{R}^n \ \vert \ p_1, ..., p_n > 0 \ \land \ p_1 + ... + p_n = 1 \}.$$ denote the set of \textit{positive} $n$-dimensional probability vectors. Then the cross-entropy between two probability vectors is defined as
$$H :  \Delta_{n} \times \Delta^{+}_{n} \to \mathbb{R}, \quad H(p, q) = -\sum_{k=1}^{n} p_i \log(q_i).$$
For a fixed $p_0 \in \Delta_n$, the map
$$q \mapsto H(p_0, q)$$ is minimised if $q \to p_0$; this is one of the reasons why $H$ is canonically used as a loss function for machine-learning-based classification problems. In the binary case with $n = 2$, the definition of $H$ can be simplified to
$$H_{\text{bin}} : [0,1] \times (0,1) \to \mathbb{R}, \quad H_{\text{bin}}(p,q) = -p\log(q) - (1-p)\log(1-q). $$
Our twin neural network from Figure~\ref{fig:twin_network_architecture} is trained via the following loss function:
\begin{gather*}
\mathcal{L}_{(\mathcal{R}, \tilde{\mathcal{R}})}(\theta, \gamma, \eta) \coloneqq \\
w_{\text{AC}}(\text{AC}(\mathcal{R}, \tilde{\mathcal{R}})) H( \text{AC}(\mathcal{R}, \tilde{\mathcal{R}}), \hat{\text{AC}}_{\theta, \gamma}(\mathcal{R}, \tilde{\mathcal{R}})) \ + \\ w_{\text{PD}} H_{\text{bin}}(\text{PD}(\mathcal{R}, \tilde{\mathcal{R}}), \hat{\text{PD}}_{\theta, \eta}(\mathcal{R}, \tilde{\mathcal{R}})).
\end{gather*}
Here we used a variety of abbreviations:
\begin{itemize}
	\item $\text{AC}(\mathcal{R}, \tilde{\mathcal{R}}) \in \{(1,0,0),(0,1,0), (0,0,1)\}$ is the AC-classification label.
	
	\item $\text{PD}(\mathcal{R}, \tilde{\mathcal{R}}) \in \{0,1\}$ is the PD-classification label.
	
	\item $\hat{\text{AC}}_{\theta, \gamma}(\mathcal{R}, \tilde{\mathcal{R}}) \coloneqq M_{\gamma}(\max\{T_{\theta}(\mathcal{R}) , T_{\theta}(\tilde{\mathcal{R}})\})  \in \Delta^{+}_{3}$ is the predicted AC-classification label.
	
	\item $\hat{\text{PD}}_{\theta, \eta}(\mathcal{R}, \tilde{\mathcal{R}}) \coloneqq O_{\eta}(T_{\theta}(\mathcal{R}) - T_{\theta}(\tilde{\mathcal{R}})) \in (0,1)$ is the predicted PD-classification label.
	
\end{itemize}
The function
$$w_{\text{AC}}: \{(1,0,0),(0,1,0), (0,0,1)\} \to \left[0, \infty\right)$$
is used to place distinct weights on ACs, half-ACs and non-ACs during training. It plays an important role in counteracting the class imbalance in the naturally highly imbalanced task of AC-classification and is an essential part of our loss function. We choose the values of $w_{\text{AC}}$ in proportion to the relative frequencies of ACs, half-ACs and non-ACs in the training set to in total give equal weight to each class. This means that if $$n_{\text{AC}},n_{\text{half-AC}},n_{\text{non-AC}} \in \mathbb{N}$$ are the respective numbers of MMPs in the training set that are ACs, half-ACs and non-ACs, then we choose
$$w_{\text{AC}}(1,0,0) = \frac{n_{\text{non-AC}}}{n_{\text{AC}}}, \quad w_{\text{AC}}(0,1,0) = \frac{n_{\text{non-AC}}}{n_{\text{half-AC}}}, \quad w_{\text{AC}}(0,0,1) = 1. $$
The constant $w_{\text{PD}} \in \left[0, \infty\right)$ is chosen in relation to the function values of $w_{\text{AC}}$ to guarantee that the AC-classification and the (well-balanced) PD-classification task receive on average an equal amount of weight during training. We thus choose $w_{\text{PD}}$ to be the expected weight of a training-MMP with respect to AC-classification:
$$w_{\text{PD}} \coloneqq \frac{n_{\text{AC}}w_{\text{AC}}(1,0,0) + n_{\text{half-AC}}w_{\text{AC}}(0,1,0) + n_{\text{non-AC}}w_{\text{AC}}(0,0,1)}{n_{\text{MMP}}} .$$
Here $n_{\text{MMP}} = n_{\text{AC}} + n_{\text{half-AC}} + n_{\text{non-AC}} \in \mathbb{N}$ denotes the total number of MMPs in the training set. Our choice of $w_{\text{PD}}$ guarantees that a randomly chosen training-MMP will on average (i.e.~in expectation) receive equal weights for AC and PD-classification.

Finally, we show how the symmetry properties of the twin neural network translate into a symmetry property of the loss function $\mathcal{L}_{(\mathcal{R}, \tilde{\mathcal{R}})}$ that is highly useful during model training.
\begin{proposition}[Order-Invariance of Loss Function] \label{prop: loss_function_order_inv}
	Let $(\mathcal{R}, \tilde{\mathcal{R}})$ be an MMP and $(\tilde{\mathcal{R}}, \mathcal{R})$ be the same MMP in reversed order. Then it holds for all neural network training parameter configurations $\theta, \gamma, \eta$ that \normalfont
	$$\mathcal{L}_{(\mathcal{R}, \tilde{\mathcal{R}})}(\theta, \gamma, \eta) = \mathcal{L}_{(\tilde{\mathcal{R}}, \mathcal{R})}(\theta, \gamma, \eta).$$
\end{proposition}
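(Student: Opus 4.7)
The plan is to decompose the loss $\mathcal{L}_{(\mathcal{R}, \tilde{\mathcal{R}})}$ into its two additive components (the AC cross-entropy term and the PD binary cross-entropy term) and show that each one is individually invariant under swapping the order of the MMP compounds. The weights $w_{\text{AC}}$ and $w_{\text{PD}}$ are fixed constants derived from training-set statistics that do not depend on the ordering of the pair, so they can simply be carried through, which means the real content of the proof is confined to the symmetry of the two cross-entropy expressions.

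For the first (AC) term, I would use two facts already available: the ground-truth label satisfies $\text{AC}(\mathcal{R}, \tilde{\mathcal{R}}) = \text{AC}(\tilde{\mathcal{R}}, \mathcal{R})$ directly by definition, and Proposition~\ref{prop: order_inv_ac_twin_net} guarantees $\hat{\text{AC}}_{\theta, \gamma}(\mathcal{R}, \tilde{\mathcal{R}}) = \hat{\text{AC}}_{\theta, \gamma}(\tilde{\mathcal{R}}, \mathcal{R})$. Both arguments of $H$ are therefore unchanged when the inputs are swapped, so the entire weighted term matches across the two orderings. This step is essentially a one-line substitution.

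For the second (PD) term, the situation is slightly more delicate because both arguments of $H_{\text{bin}}$ actually change: the true label satisfies $\text{PD}(\tilde{\mathcal{R}}, \mathcal{R}) = 1 - \text{PD}(\mathcal{R}, \tilde{\mathcal{R}})$, and by Proposition~\ref{prop: order_eq_pd_twin_net} the predicted probability transforms as $\hat{\text{PD}}_{\theta, \eta}(\tilde{\mathcal{R}}, \mathcal{R}) = 1 - \hat{\text{PD}}_{\theta, \eta}(\mathcal{R}, \tilde{\mathcal{R}})$. The key observation I would then establish is a short algebraic lemma that $H_{\text{bin}}$ is invariant under the simultaneous complement $(p, q) \mapsto (1-p, 1-q)$; this is immediate from the definition $H_{\text{bin}}(p,q) = -p \log(q) - (1-p)\log(1-q)$, since substituting $1-p$ for $p$ and $1-q$ for $q$ swaps the two summands. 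Applying this identity with $p = \text{PD}(\mathcal{R}, \tilde{\mathcal{R}})$ and $q = \hat{\text{PD}}_{\theta, \eta}(\mathcal{R}, \tilde{\mathcal{R}})$ completes the argument.

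There is no significant obstacle here: the proof is an almost mechanical combination of the two previously proved symmetry propositions with one elementary algebraic identity for $H_{\text{bin}}$. The only mild subtlety worth flagging explicitly is to verify that the weighting coefficients $w_{\text{AC}}(\cdot)$ and $w_{\text{PD}}$ do not introduce any asymmetry, which is immediate once one notes that $w_{\text{AC}}$ is a function of the (order-invariant) label and $w_{\text{PD}}$ is a single scalar defined entirely from training-set class counts.
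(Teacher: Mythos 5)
Your proposal is correct and follows essentially the same route as the paper's own proof: both split the loss into its AC and PD cross-entropy terms, apply Propositions~\ref{prop: order_inv_ac_twin_net} and~\ref{prop: order_eq_pd_twin_net} to the predicted labels together with the label symmetries $\text{AC}(\mathcal{R}, \tilde{\mathcal{R}}) = \text{AC}(\tilde{\mathcal{R}}, \mathcal{R})$ and $\text{PD}(\tilde{\mathcal{R}}, \mathcal{R}) = 1 - \text{PD}(\mathcal{R}, \tilde{\mathcal{R}})$, and use the identity $H_{\text{bin}}(p,q) = H_{\text{bin}}(1-p,1-q)$ to handle the PD term. Your explicit remark that the weights $w_{\text{AC}}$ and $w_{\text{PD}}$ introduce no asymmetry is a small point the paper leaves implicit, but otherwise the arguments coincide.
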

\begin{proof}
Using the symmetry properties of the twin network with respect to AC and PD-classification shown in Propositions~\ref{prop: order_inv_ac_twin_net} and~\ref{prop: order_eq_pd_twin_net} along with the identity
\begin{align*}
H_{\text{bin}}(p,q) &= -p\log(q) - (1-p)\log(1-q) \\ &= - (1-p)\log(1-q) - (1-(1-p))\log(1-(1-q)) = H_{\text{bin}}(1 - p,1 - q),
\end{align*}
we can calculate
\begin{align*}
&\mathcal{L}_{(\mathcal{R}, \tilde{\mathcal{R}})}(\theta, \gamma, \eta) = \\
& \text{} \\
&w_{\text{AC}}(\text{AC}(\mathcal{R}, \tilde{\mathcal{R}})) H( \text{AC}(\mathcal{R}, \tilde{\mathcal{R}}), \hat{\text{AC}}_{\theta, \gamma}(\mathcal{R}, \tilde{\mathcal{R}})) \ + \\ 
&w_{\text{PD}} H_{\text{bin}}(\text{PD}(\mathcal{R}, \tilde{\mathcal{R}}), \hat{\text{PD}}_{\theta, \eta}(\mathcal{R}, \tilde{\mathcal{R}})) = \\
& \text{} \\
&w_{\text{AC}}(\text{AC}(\tilde{\mathcal{R}}, \mathcal{R})) H( \text{AC}(\tilde{\mathcal{R}}, \mathcal{R}), \hat{\text{AC}}_{\theta, \gamma}(\tilde{\mathcal{R}}, \mathcal{R})) \ + \\ 
&w_{\text{PD}} H_{\text{bin}}(1 - \text{PD}(\tilde{\mathcal{R}}, \mathcal{R}), 1- \hat{\text{PD}}_{\theta, \eta}(\tilde{\mathcal{R}}, \mathcal{R})) = \\
& \text{} \\
&w_{\text{AC}}(\text{AC}(\tilde{\mathcal{R}}, \mathcal{R})) H( \text{AC}(\tilde{\mathcal{R}}, \mathcal{R}), \hat{\text{AC}}_{\theta, \gamma}(\tilde{\mathcal{R}}, \mathcal{R})) \ + \\ 
&w_{\text{PD}} H_{\text{bin}}(\text{PD}(\tilde{\mathcal{R}}, \mathcal{R}), \hat{\text{PD}}_{\theta, \eta}(\tilde{\mathcal{R}}, \mathcal{R})) = \\
& \text{} \\
&\mathcal{L}_{(\tilde{\mathcal{R}}, \mathcal{R})}(\theta, \gamma, \eta)
\end{align*}
which completes the proof.
\end{proof}
Proposition~\ref{prop: loss_function_order_inv} guarantees that $\mathcal{L}_{(\mathcal{R}, \tilde{\mathcal{R}})}$ is symmetric with respect to the order of the compounds in the input MMP $(\mathcal{R}, \tilde{\mathcal{R}})$. Since the gradients of $\mathcal{L}_{(\mathcal{R}, \tilde{\mathcal{R}})}$ with respect to $\theta, \gamma$ and $\eta$ automatically inherit this symmetry, they too remain unchanged if we flip the order of the input compounds. This property has an important practical consequence: it allows one to only train on one randomly chosen ordering of an input MMP instead of both possible orderings without loss of information.

\subsection{Molecular Featurisations: Four Model Versions} \label{subsec: twin_net_ac_pred_model_description_input_features}

The exact architecture of $T_{\theta}$ in the twin network depicted in Figure~\ref{fig:twin_network_architecture} hinges upon the molecular representation technique used for the input compounds $(\mathcal{R}, \tilde{\mathcal{R}})$ and the subsequent molecular featurisation method applied to the input representations. We imagine that initially we are given a training space of the form
$$(\mathfrak{D}_{\text{train}}, \mathfrak{M}_{\text{train}})$$
where
$$\mathfrak{D}_{\text{train}} = \{\mathcal{R}_1, \mathcal{R}_2,...\}$$
represents a data set of individual molecules represented via SMILES strings 
$$\{\mathcal{S}_1, \mathcal{S}_2,...\}$$
and
$$\mathfrak{M}^{\text{double}}_{\text{train}} = \{ (\mathcal{R}, \tilde{\mathcal{R}}) \ \vert \ (\mathcal{R}, \tilde{\mathcal{R}}) \text{ is MMP and} \ \mathcal{R}, \tilde{\mathcal{R}} \in \mathfrak{D}_{\text{train}} \}$$
is the set of ordered MMPs that are fully contained in $\mathfrak{D}_{\text{train}}$. Note that by construction $\mathfrak{M}^{\text{double}}_{\text{train}}$ contains both orientations of each MMP. However, Proposition~\ref{prop: loss_function_order_inv} guarantees that for our training purposes it is sufficient to only contain one arbitrarily chosen ordering of each MMP. We thus define $\mathfrak{M}_{\text{train}}$ as a proper subset of $\mathfrak{M}^{\text{double}}_{\text{train}}$ of exactly half the size that only contains one arbitrarily chosen ordering of each MMP, i.e.~we demand that if $(\mathcal{R}, \tilde{\mathcal{R}}) \in \mathfrak{M}_{\text{train}}$ then $(\tilde{\mathcal{R}}, \mathcal{R}) \notin \mathfrak{M}_{\text{train}}$. 

The twin network is always trained only on $\mathfrak{M}_{\text{train}}$, ignoring compunds in $\mathfrak{D}_{\text{train}}$ that are not involved in MMPs, but we developed a transfer learning approach that enables the twin network to nevertheless implicitly exploit extra information encapsulated in~$\mathfrak{D}_{\text{train}}$. More specifically, we experimented with four different MMP representations which subsequently led to four distinct version of our twin model.

\begin{itemize}
	\item \textbf{MMP representation 1: ECFPs.} Each individual SMILES string in an MMP $(\mathcal{R}, \tilde{\mathcal{R}}) \in \mathfrak{M}_{\text{train}}$ is transformed into a $1024$-bit ECFP$4$~\ref{sec: ecfps} with active tetrahedral R-S chirality flags using \texttt{RDKit}~\citep{landrum2006rdkit}. The featuriser $T_{\theta}$ takes the form of a deep MLP with input dimension $1024$.
	
	\item \textbf{MMP representation 2: GINs.} Each individual SMILES string in an MMP $(\mathcal{R}, \tilde{\mathcal{R}}) \in \mathfrak{M}_{\text{train}}$ is transformed into a molecular graph using the atom and bond features specified in Table~\ref{tab: atom_bond_features}. The featuriser $T_{\theta}$ takes the form of a GIN-MLP model~\ref{subsec: gins_description} with GIN-radius $R = 2$ and GIN-fingerprint-length $l = 128$. The GIN part uses global max pooling in its final graph layer to produce neural fingerprints that feed into the MLP part.
	
	\item \textbf{MMP representation 3~(supervised): ECFP-NFPs.} Each SMILES string in $\mathfrak{D}_{\text{train}}$ is transformed into a $1024$-bit ECFP$4$~\ref{sec: ecfps} with active tetrahedral R-S chirality flags using \texttt{RDKit}~\citep{landrum2006rdkit}. Then an ECFP-MLP model $Q$ with hidden width $1024$ is trained on $\mathfrak{D}_{\text{train}}$ as a supervised QSAR model to predict the activities of individual compounds. After training, the final layer of $Q$ that maps vectors from a $1024$-dimensional learned feature space onto scalar activity predictions is removed to obtain a feature extractor $Q_{\text{feat}}$. 	We refer to the $1024$-dimensional feature vectors generated by $Q_{\text{feat}}$ for individual compounds as ECFP-neural-fingerprints~(ECFP-NFPs). $Q_{\text{feat}}$ is finally used to map MMPs $(\mathcal{R}, \tilde{\mathcal{R}}) \in \mathfrak{M}_{\text{train}}$ to pairs of ECFP-NFPs on which the twin network is subsequently trained. The featuriser $T_\theta$ then takes the form of a deep MLP with input dimension $1024$.
	
	\item \textbf{MMP representation 4~(supervised): GIN-NFPs.} Each SMILES string in $\mathfrak{D}_{\text{train}}$ is transformed into a molecular graph using the atom and bond features specified in Table~\ref{tab: atom_bond_features}. Then a GIN-MLP model $Q$ with GIN-radius $R = 2$, GIN-fingerprint length $l = 128$, and hidden MLP width $256$ is trained on $\mathfrak{D}_{\text{train}}$ as a supervised QSAR model to predict the activities of individual compounds. After training, the final layer of $Q$ that maps vectors from a $256$-dimensional learned feature space onto scalar activity predictions is removed to obtain a feature extractor $Q_{\text{feat}}$. 	We refer to the $256$-dimensional feature vectors generated by $Q_{\text{feat}}$ for individual compounds as GIN-neural-fingerprints~(GIN-NFPs). $Q_{\text{feat}}$ is finally used to map MMPs $(\mathcal{R}, \tilde{\mathcal{R}}) \in \mathfrak{M}_{\text{train}}$ to pairs of GIN-NFPs on which the twin network is subsequently trained. The featuriser $T_\theta$ then takes the form of a deep MLP with input dimension $256$.
	
\end{itemize}
Consider the set of individual training compounds involved in MMPs:
$$\mathcal{D}^{\text{MMP}}_{\text{train}} \coloneqq \{ \mathcal{R} \in \mathfrak{D}_{\text{train}} \ \vert \ \exists \ \tilde{\mathcal{R}} \in  \mathfrak{D}_{\text{train}} : (\tilde{\mathcal{R}}, \mathcal{R}) \in \mathfrak{M}_{\text{train}} \ \text{or} \ (\mathcal{R}, \tilde{\mathcal{R}}) \in \mathfrak{M}_{\text{train}}	\}. $$
All four introduced MMP representations use the training signal encapsulated in in~$\mathcal{D}^{\text{MMP}}_{\text{train}}$. However, the transfer-learning-based representations ECFP-NFP and GIN-NFP go further: they also allow us to implicitly leverage the information contained in the set of isolated compounds $\mathfrak{D}_{\text{train}} \setminus \mathcal{D}^{\text{MMP}}_{\text{train}}$
since the feature extractor $Q_{\text{feat}}$ must have encountered these compounds during its own preliminary training process on the full training space $\mathfrak{D}_{\text{train}}$. Knowledge about the isolated compounds in $\mathfrak{D}_{\text{train}} \setminus \mathcal{D}^{\text{MMP}}_{\text{train}}$ and their experimentally measured activity labels is thus implicitly encoded in the trained parameters of $Q_{\text{feat}}$ and transferred to the features it extracts.

\section{Computational Experiments} \label{sec: twin_net_ac_pred_num_exp}

In this section, we present a series of computational experiments to investigate the AC and PD-classification capabilities of the four versions of our twin neural network model discussed in Section~\ref{subsec: twin_net_ac_pred_model_description_input_features}. We further compare the twin models to the two strongest QSAR-modelling baselines for AC-classification found in our previous study in Chapter~\ref{chap: qsar_ac_study}: the combinations ECFP-MLP and GIN-MLP.

\subsection{Experimental Methodology} \label{subsec: twin_net_ac_pred_num_exp_meth}

\subsubsection{Molecular Data Set}

For our experiments we employed the SARS-CoV-2 main protease data set introduced in Section~\ref{subsec: qsar_ac_study_data_sets} since it is composed of a single high-quality assay and has a high density of MMPs. Note that this is the exact same data set that we used for our computational study on QSAR models for AC-prediction in Chapter~\ref{chap: qsar_ac_study}. The protein structure of SARS-CoV-2 main protease is visualised in Figure~\ref{fig:mpro}. 

The data was obtained and cleaned in the manner described in Section~\ref{subsec: qsar_ac_study_data_sets} and takes the form of SMILES strings with associated IC\textsubscript{50}~[µM] values. An overview of the numbers of compounds, MMPs, ACs, half-ACs and non-ACs in the data set is given in Table~\ref{tab: qsar_ac_study_datsets_overview}. SARS-CoV-2 main protease is one of the key enzymes in the viral replication cycle of the SARS coronavirus 2 which recently led to the global COVID-19 pandemic. It is a promising target for antiviral drugs against this coronavirus~\citep{ullrich2020sars}.

\subsubsection{Data Splitting Technique and Prediction Tasks}

For data splitting into training and test sets, we employed the novel technique for pair-based data that we developed for our previous computational study in Section~\ref{subsec: qsar_ac_study_data_splitting}. It is visualised in Figure~\ref{fig:ac_pred_data_splitting_strategy} and delivers data splits of the form
$$\mathfrak{S}^{i,j} = (\mathfrak{D}^{i,j}_{\text{train}}, \mathfrak{D}^{i,j}_{\text{test}}, \mathfrak{M}^{i,j}_{\text{train}},\mathfrak{M}^{i,j}_{\text{test}}, \mathfrak{M}^{i,j}_{\text{inter}}, \mathfrak{M}^{\text{i,j}}_{\text{cores}}) $$
for $i \in \{1,...,m\}$ and $j \in \{1,...,k\}$. Here the pair $(\mathfrak{D}^{i,j}_{\text{train}}, \mathfrak{D}^{i,j}_{\text{test}})$ represents the $j$-th random split with the $i$-th random seed of the underlying SARS-CoV-2 main protease data set $\mathfrak{D}$ in a $k$-fold cross validation scheme repeated with $m$ random seeds. The MMP sets $\mathfrak{M}^{i,j}_{\text{train}}, \mathfrak{M}^{i,j}_{\text{test}}, \mathfrak{M}^{i,j}_{\text{inter}}, \mathfrak{M}^{\text{i,j}}_{\text{cores}}$ differ via the relationship of their associated MMPs with the individual compounds in $\mathfrak{D}^{i,j}_{\text{train}}$ and $\mathfrak{D}^{i,j}_{\text{test}}$. These sets are rigorously defined in Section~\ref{subsec: qsar_ac_study_data_splitting} and visualised in Figure~\ref{fig:ac_pred_data_splitting_strategy}. We will shortly repeat their definitions here in intuitive terms: 

\begin{itemize}
	\item $\mathfrak{M}^{i,j}_{\text{train}}$ contains MMPs that are fully included in $\mathfrak{D}^{i,j}_{\text{train}}$.
	
	\item $\mathfrak{M}^{i,j}_{\text{inter}}$ contains MMPs with exactly one compound in $\mathfrak{D}^{i,j}_{\text{train}}$ and the other compound in $\mathfrak{D}^{i,j}_{\text{test}}$. It simulates a compound-optimisation scenario where one is searching for small modifications of known compounds that would give rise to ACs or half-ACs.
	
	\item $\mathfrak{M}^{i,j}_{\text{test}}$ contains MMPs that are fully included in $\mathfrak{D}^{i,j}_{\text{test}}$. It models a setting where one tries to discover novel ACs and half-ACs in the same area of chemical space that $\mathfrak{D}^{i,j}_{\text{train}}$ was sampled from.
	
	\item Finally, $\mathfrak{M}^{i,j}_{\text{cores}}$ is a subset of $\mathfrak{M}^{i,j}_{\text{test}}$ consisting of MMPs in $\mathfrak{D}^{i,j}_{\text{test}}$ that do not share structural cores with MMPs in $\mathfrak{M}^{i,j}_{\text{inter}}$ or $\mathfrak{M}^{i,j}_{\text{train}}$. It corresponds to the difficult task of predicting ACs and half-ACs within structurally novel MMPs that do not contain near analogs to MMP compounds involved in the training set.
\end{itemize}
As mentioned above, Proposition~\ref{prop: loss_function_order_inv} assures us that it is possible without loss of generality to always only consider one randomly chosen compound-ordering for each MMP in all MMP sets. The overall AC and PD-classification performance of each model is recorded via the average over $mk$ training and test runs for all data splits $\mathfrak{S}^{1,1}, ..., \mathfrak{S}^{m,k}$. For our experiments we set $(m,k) = (50,2)$. The number of $50*2 = 100$ repetitions is substantial and considerably large for deep-learning experiments due to their associated computational cost. This choice led to a runtime in the order of approximately $10$-$20$ hours per model; however, it significantly reduced the effects of stochastic fluctations on our results and led to increased experimental quality and reliability.

\subsubsection{Prediction Tasks and Prediction Strategies} \label{subsubsec: twin_nets_pred_strat}

As specified in Section~\ref{sec: twin_net_ac_pred_model_description}, each MMP $$(\mathcal{R}, \tilde{\mathcal{R}}) \in \mathfrak{M}^{i,j}_{\text{train}} \cup \mathfrak{M}^{i,j}_{\text{test}} \cup \mathfrak{M}^{i,j}_{\text{inter}} \cup \mathfrak{M}^{i,j}_{\text{cores}} $$
comes with a ternary label
$$\text{AC}(\mathcal{R}, \tilde{\mathcal{R}}) \in \{(1,0,0),(0,1,0), (0,0,1)\} \eqqcolon \{\text{AC}, \text{ half-AC},\text{ non-AC}\} $$
indicating whether $(\mathcal{R}, \tilde{\mathcal{R}})$ is an AC, half-AC or non-AC; and a binary label 
$$\text{PD}(\mathcal{R}, \tilde{\mathcal{R}}) \in \{0, 1\} \eqqcolon \{ \text{Right}, \text{Left} \} $$
indicating which of both compounds is more active. The goal of each model is to predict both $\text{AC}(\mathcal{R}, \tilde{\mathcal{R}})$ and $\text{PD}(\mathcal{R}, \tilde{\mathcal{R}})$ from the input MMP representation $(\mathcal{R}, \tilde{\mathcal{R}})$. The four twin neural network models are designed to output explicit probabilistic estimates of $\text{AC}(\mathcal{R}, \tilde{\mathcal{R}})$ in $\Delta_3$ and of $\text{PD}(\mathcal{R}, \tilde{\mathcal{R}})$ in $(0,1)$, and can therefore be directly used for AC and PD-classification. The two QSAR-modelling baselines ECFP-MLP and GIN-MLP, however, can only directly output estimates of the activity labels of individual molecules. Thus, when dealing with a QSAR model $Q$, we thresholded the predicted MMP activity-difference,
$$Q(\mathcal{R}) - Q(\tilde{\mathcal{R}})  \approx  \text{act}(\mathcal{R}) - \text{act}(\tilde{\mathcal{R}}),$$
to construct discrete predictions for $\text{AC}(\mathcal{R}, \tilde{\mathcal{R}})$ and $\text{PD}(\mathcal{R}, \tilde{\mathcal{R}})$. This was done in the same straightforward manner as described in Section~\ref{subsec: qsar_ac_study_pred_strategies} for our previous computational study, with the only exception that we extended our AC-classification stategy to now also include half-ACs. If $(\mathcal{R}, \tilde{\mathcal{R}}) \in \mathfrak{M}^{i,j}_{\text{train}} \cup \mathfrak{M}^{i,j}_{\text{test}} \cup \mathfrak{M}^{i,j}_{\text{cores}} $ then we performed AC-classification via
$$ (\mathcal{R}, \tilde{\mathcal{R}}) \mapsto
\begin{cases}
(0,0,1) \quad \text{if} \ \vert Q(\mathcal{R}) - Q(\tilde{\mathcal{R}}) \vert \leq 1, \\
(0,1,0) \quad \text{if} \ \vert Q(\mathcal{R}) - Q(\tilde{\mathcal{R}}) \vert \in (1,2), \\
(1,0,0) \quad \text{if} \ \vert Q(\mathcal{R}) - Q(\tilde{\mathcal{R}}) \vert \geq 2 \\
\end{cases} $$
and PD-classification via
$$ (\mathcal{R}, \tilde{\mathcal{R}}) \mapsto
\begin{cases}
1 \quad \text{if} \ Q(\mathcal{R}) \geq Q(\tilde{\mathcal{R}}), \\
0 \quad \text{else}.
\end{cases} $$
Similarly, if $(\mathcal{R}, \tilde{\mathcal{R}}) \in \mathfrak{M}^{i,j}_{\text{inter}}$ and we were \textit{a priori} given (say) the experimental activity $\text{act}(\tilde{\mathcal{R}})$, then we performed AC-classification via
$$ (\mathcal{R}, \tilde{\mathcal{R}}) \mapsto
\begin{cases}
(0,0,1) \quad \text{if} \ \vert Q(\mathcal{R}) - \text{act}(\tilde{\mathcal{R}}) \vert \leq 1, \\
(0,1,0) \quad \text{if} \ \vert Q(\mathcal{R}) - \text{act}(\tilde{\mathcal{R}}) \vert \in (1,2), \\
(1,0,0) \quad \text{if} \ \vert Q(\mathcal{R}) - \text{act}(\tilde{\mathcal{R}}) \vert \geq 2 \\
\end{cases} $$
and PD-classification via
$$ (\mathcal{R}, \tilde{\mathcal{R}}) \mapsto
\begin{cases}
1 \quad \text{if} \ Q(\mathcal{R}) \geq \text{act}(\tilde{\mathcal{R}}), \\
0 \quad \text{else}.
\end{cases}.$$

\subsubsection{Performance Measures} \label{subsubsec: twin_networks_performance_measures}

For the balanced PD-classification problem we employ the standard accuracy as a suitable performance measure:
$$ \frac{\text{number of correct predictions}}{\text{number of predictions}} \in [0,1] .$$
For each MMP set and each model, we measured PD-classification accuracy (1) on the whole MMP set, (2) on the subset of MMPs predicted by the model to be half-ACs, and (3) on the subset of MMPs predicted by the model to be ACs.

The ternary AC-classification task is naturally highly imbalanced and it is therefore necessary to choose a more nuanced set of performance measures in order to paint an adequate and detailed picture of model performance. For each class 
$$C \in \{\text{AC}, \text{ half-AC},\text{ non-AC}\},$$
let $n_C$ be the number of MMPs in class $C$, $p_C$ be the number of MMPs predicted to be in class $C$, and $p^{\text{true}}_C$ be the number of MMPs correctly predicted to be in class $C$. In our experiments, we recorded the \textit{sensitivity}
$$\frac{p^{\text{true}}_C}{n_C} \in [0,1] $$
and the \textit{precision}
$$\frac{p^{\text{true}}_C}{p_C} \in [0,1]  $$
of each model for each class $C$. There is an implicit trade-off between sensitivity and precision; improvement in one of both metrics usually leads to deterioration of the other. A model that shows both higher precision and sensitivity than another model for a class $C$ can be seen as a better classifier for this particular class. 

Finally, as a simple overall performance measure for ternary AC-classification we employed the multi-class version of the Matthews correlation coefficient~(MCC). Let $n_{\text{MMP}} \coloneqq n_{\text{AC}} + n_{\text{half-AC}} + n_{\text{non-AC}}$ be the total number of MMPs and $p^{\text{true}} = p^{\text{true}}_{\text{AC}} +  p^{\text{true}}_{\text{half-AC}} + p^{\text{true}}_{\text{non-AC}}$ be the total number of correct predictions. Then the multi-class MCC is given by
$$\frac{n_{\text{MMP}}p^{\text{true}} - \sum\limits_{C \in \{\text{AC}, \text{ half-AC},\text{ non-AC}\}}  n_C p_C   }{\sqrt{\Big(n_{\text{MMP}}^{2} - \sum\limits_{C \in \{\text{AC}, \text{ half-AC},\text{ non-AC}\}} p_C^2 \Big) \Big(n_{\text{MMP}}^{2} - \sum\limits_{C \in \{\text{AC}, \text{ half-AC},\text{ non-AC}\}} n_C^2 \Big)  }} \in [-1,1].$$
While the MCC can give a quick and rough assessement of AC-classification performance, it should be used with caution: A lot of subtle differences are lost entirely when boiling down the performance of a highly imbalanced multi-class prediction model into a single scalar performance metric such as the MCC. For an AC-classifier, one must therefore also take a close look at its individual sensitivity and precision values for each class in order to get accurate insights into its true performance and utility.

\subsubsection{Evaluated Models}

We included six distinct models in our computational experiments:

\begin{itemize}
	\item \textbf{ECFP + MLP Baseline:} A standard ECFP-MLP model trained on individual molecules for QSAR-prediction, as was used in the computational study from Chapter~\ref{chap: qsar_ac_study}.
	
	\item \textbf{GIN + MLP Baseline:} A standard GIN-MLP model trained on individual molecules for QSAR-prediction, as was used in the computational study from Chapter~\ref{chap: qsar_ac_study}.
	
	\item \textbf{ECFPs + Twin Network:} A twin neural network as visualised in Figure~\ref{fig:twin_network_architecture} that uses ECFPs for the featurisation of individual MMP compounds.
	
	\item \textbf{GINs + Twin Network:} A twin neural network as visualised in Figure~\ref{fig:twin_network_architecture} that uses GINs for the featurisation of individual MMP compounds.
		
	\item \textbf{ECFP-NFPs + Twin Network:} A twin neural network as visualised in Figure~\ref{fig:twin_network_architecture} that uses pre-trained ECFP-NFPs for the featurisation of individual MMP compounds.
			
	\item \textbf{GIN-NFPs + Twin Network:} A twin neural network as visualised in Figure~\ref{fig:twin_network_architecture} that uses pre-trained GIN-NFPs for the featurisation of individual MMP compounds.
\end{itemize}
Each of the four twin neural network models above corresponds to one of the four MMP featurisations described in Section~\ref{subsec: twin_net_ac_pred_model_description_input_features}. ECFPs, MLPs and GINs were implemented in \texttt{RDKit}~\citep{landrum2006rdkit}, \texttt{PyTorch}~\citep{paszke2019pytorch}, and \texttt{PyTorch Geometric}~\citep{fey2019fast}, respectively.

\subsubsection{Model Training and Hyperparameter Settings}

As mentioned, each model was evaluated within a $k$-fold cross validation scheme repeated with $m$ random seeds for $(m,k) = (50,2)$. This means that an independent version of each model was trained on each training space $(\mathfrak{D}^{i,j}_{\text{train}}, \mathfrak{M}^{i,j}_{\text{train}})$ for $i~\in~\{1,...,50\}$ and $j \in \{1,2\}$ and the results were then averaged over all $mk = 50*2 = 100$ trials. All models were trained on a single NVIDIA GeForce RTX 3060 GPU using AdamW optimisation~\citep{loshchilov2017decoupled}. QSAR models were trained via the mean squared error loss function and twin neural networks via the custom cross-entropy-based loss function $\mathcal{L}_{(\mathcal{R}, \tilde{\mathcal{R}})}$ introduced in Section~\ref{subsec: twin_net_ac_pred_model_description_training}. 

A detailed specification of the hyperparameter choices for the evaluated models and their training loops can be found in Table~\ref{tab: twin_models_hyperparams}. 
\begin{table}[!t]
	\caption[Hyperparameters of twin neural networks and baseline QSAR models.]{Training and model hyperparameters of twin neural networks and baseline QSAR methods.}
	\centering
	\small
	\begin{tabular}{ |p{14.5cm}|  }
		\hline \\
		\multicolumn{1}{|c|}{\textbf{ECFP + MLP}} \\ \\
		
		\textbf{Architecture:} $\text{arch}(\text{MLP}) = (1024, 1, 1024, 5) $
		\newline
		\textbf{Training:} batch size = $64$, learning rate = $10^{-3}$, learning rate decay = $\max \{0.98^{\text{epoch}}, 10^{-1}\}$, weight decay = $0.1$, dropout rate = $0.25$, epochs = $500$					\\
		\hline \\

		\multicolumn{1}{|c|}{\textbf{ECFPs + Twin Network}} \\ \\
		
		\textbf{Architecture:}  $\text{arch}(T_\theta) = (1024, 1024, 1024, 3)$, $\text{arch}(M_\gamma) = (1024, 3, 1024, 1)$, $\text{arch}(O_\eta) = (1024, 1, 1024, 1)$ 
		\newline
		\textbf{Training:} batch size = $2048$, learning rate = $10^{-4}$, learning rate decay = $\max \{0.98^{\text{epoch}}, 10^{-2}\}$, weight decay = $0.1$, dropout rate = $0.25$, epochs = $500$							\\
		\hline \\
		
		\multicolumn{1}{|c|}{\textbf{ECFP-NFPs + Twin Network}} \\ \\
		
		\textbf{Architecture:}  $\text{arch}(T_\theta) = (1024, 1024, 1024, 3)$, $\text{arch}(M_\gamma) = (1024, 3, 1024, 1)$, $\text{arch}(O_\eta) = (1024, 1, 1024, 1)$ 
		\newline
		\textbf{Training:} batch size = $2048$, learning rate = $10^{-4}$, learning rate decay = $\max \{0.98^{\text{epoch}}, 10^{-2}\}$, weight decay = $0.1$, dropout rate = $0.25$, epochs = $500$							\\
		\hline \\
		
		\multicolumn{1}{|c|}{\textbf{GIN + MLP}} \\ \\
		
		\textbf{Architecture:}  $\text{arch}(\text{GIN}) = \{R = 2, \ l = 128, \ \text{arch}(\phi_1) = (78, 128, 128, 2), \ \text{arch}(\phi_2) = (128, 128, 128, 2)\}$, $\text{arch}(\text{MLP}) = (128, 1, 256, 3)$
		\newline
		\textbf{Training:} batch size = $256$, learning rate = $10^{-2}$, learning rate decay = $\max \{0.98^{\text{epoch}}, 10^{-1}\}$, weight decay = $0.1$, dropout rate = $0.25$, epochs = $1000$							\\
		\hline \\
		
		\multicolumn{1}{|c|}{\textbf{GINs + Twin Network}} \\ \\
		
		\textbf{Architecture:} $\text{arch}(T^{\text{GIN-part}}_\theta) = \{R = 2, \ l = 128, \ \text{arch}(\phi_1) = (78, 128, 128, 2), \ \text{arch}(\phi_2) = (128, 128, 128, 2)\}$, $\text{arch}(T^{\text{MLP-part}}_\theta) = (128, 256, 256, 3)$, $\text{arch}(M_\gamma) = (256, 3, 256, 1)$, $\text{arch}(O_\eta) = (256, 1, 256, 1)$ 
		\newline
		\textbf{Training:} batch size = $2048$, learning rate = $10^{-3}$, learning rate decay = none, weight decay = $0.1$, dropout rate = $0.25$, epochs = $1500$							\\
		\hline \\
		
		\multicolumn{1}{|c|}{\textbf{GIN-NFPs + Twin Network}} \\ \\
		
		\textbf{Architecture:}  $\text{arch}(T_\theta) = (256, 256, 256, 3)$, $\text{arch}(M_\gamma) = (256, 3, 256, 1)$, $\text{arch}(O_\eta) = (256, 1, 256, 1)$ 
		\newline
		\textbf{Training:} batch size = $2048$, learning rate = $10^{-4}$, learning rate decay = none, weight decay = $0.1$, dropout rate = $0.25$, epochs = $500$							\\
		\hline
		
	\end{tabular}
	\label{tab: twin_models_hyperparams}
\end{table}
The architecture of MLPs is specified via quadruples of integers in $\mathbb{N}^{4}$ which specify input dimension, output dimension, hidden dimension and number of hidden layers; for example, the quadruple $(100, 1, 200, 5)$ describes an MLP with $100$ neurons in its input layer, followed by $5$ hidden layers with $200$ neurons each, followed by an output layer with $1$ neuron. The architecture of GINs is specified by their radius $R$, their fingerprint length $l$ and the architectures of the MLPs $\phi_r$ at each graph layer $r$ (see Section~\ref{subsec: gins_description}). 

All neural networks consistently used $\text{ReLU}(x) \coloneqq \max\{0,x\}$ as their hidden activation function with the exception of $O_\eta$ from Figure~\ref{fig:twin_network_architecture} which was equipped with $\arctan$-activations to preserve the desired symmetry-properties of the twin model~(see Proposition~\ref{prop: order_eq_pd_twin_net}). Furthermore, all neural networks employed trainable additive bias vectors after each weight-matrix multiplication. Batch normalisation~\citep{ioffe2015batch} was used in all models. The training and model hyperparameter settings of a QSAR method $Q$ for NFP generation (see Section~\ref{subsec: twin_net_ac_pred_model_description_input_features}) were chosen to be almost identical to the hyperparameter settings of its corresponding baseline QSAR method specified in Table~\ref{tab: twin_models_hyperparams}, with one difference being that batch-normalisation was dropped between the last hidden layer and the scalar output layer when using $Q$ to learn NFPs via QSAR-prediction. These architectural choices led to a dimensionality of $1024$ for ECFP-NFPs and of $256$ for GIN-NFPs.

Due to time constraints and the complexity of the twin neural network architecture, a full hyperparameter optimisation of all models was not feasible in this project. However, our hyperparameter choices for the baseline models ECFP-MLP and GIN-MLP generated strong QSAR-prediction results that were on par with the results achieved by the corresponding fully hyperparameter-optimised models from Chapter~\ref{chap: qsar_ac_study}. The ECFP-MLP model in this section reached a mean QSAR-MAE on $\mathfrak{D}_{\text{test}}$ of $0.426$ (in pIC\textsubscript{50} units) and the GIN-MLP model reached a mean QSAR-MAE of $0.442$. These results are as strong as the ones depicted in Figure~\ref{fig:ac_results_sarscov2mpro} which are also based on a $2$-fold cross validation scheme on the same SARS-CoV-2 data set, but included extensive hyperparameter-optimisation routines for the ECFP-MLP and the GIN-MLP model. We can thus conclude with a high degree of confidence that whenever a twin network beats a baseline QSAR model in the experiments in this chapter, then the same twin network would beat the same QSAR model in a related experiment involving full hyperparamter-optimisation of all models. This is because the results in Figure~\ref{fig:ac_results_sarscov2mpro} imply that hyperparameter optimisation would not improve the performance of the QSAR-modelling baselines in the experiments in this section; it could only possibly improve the performance of a twin network. Even without hyperparameter optimisation, our experiments are therefore still suitable to rigorously answer the question whether a twin neural network can beat a (hyperparameter-optimised) ECFP-MLP or GIN-MLP baseline at AC or PD-classification.

\subsection{Results and Discussion} \label{subsec: twin_net_ac_pred_model_description}

The results of our computational experiments are depicted in~\Cref{fig:twin_nets_ac_bars_all,fig:twin_nets_pd_bars_all}. Note that the empirical observations in this chapter are based exclusively on the SARS-CoV-2 main protease binding affinity data set. This data set was suitable for our experiments since it is composed of a single high-quality assay and has a high density of MMPs. However, we acknowledge the necessity to repeat our analysis with other data sets to obtain further evidence for the generalisability of our results.
\begin{figure}
	\centering
	\includegraphics[width=0.922\linewidth]{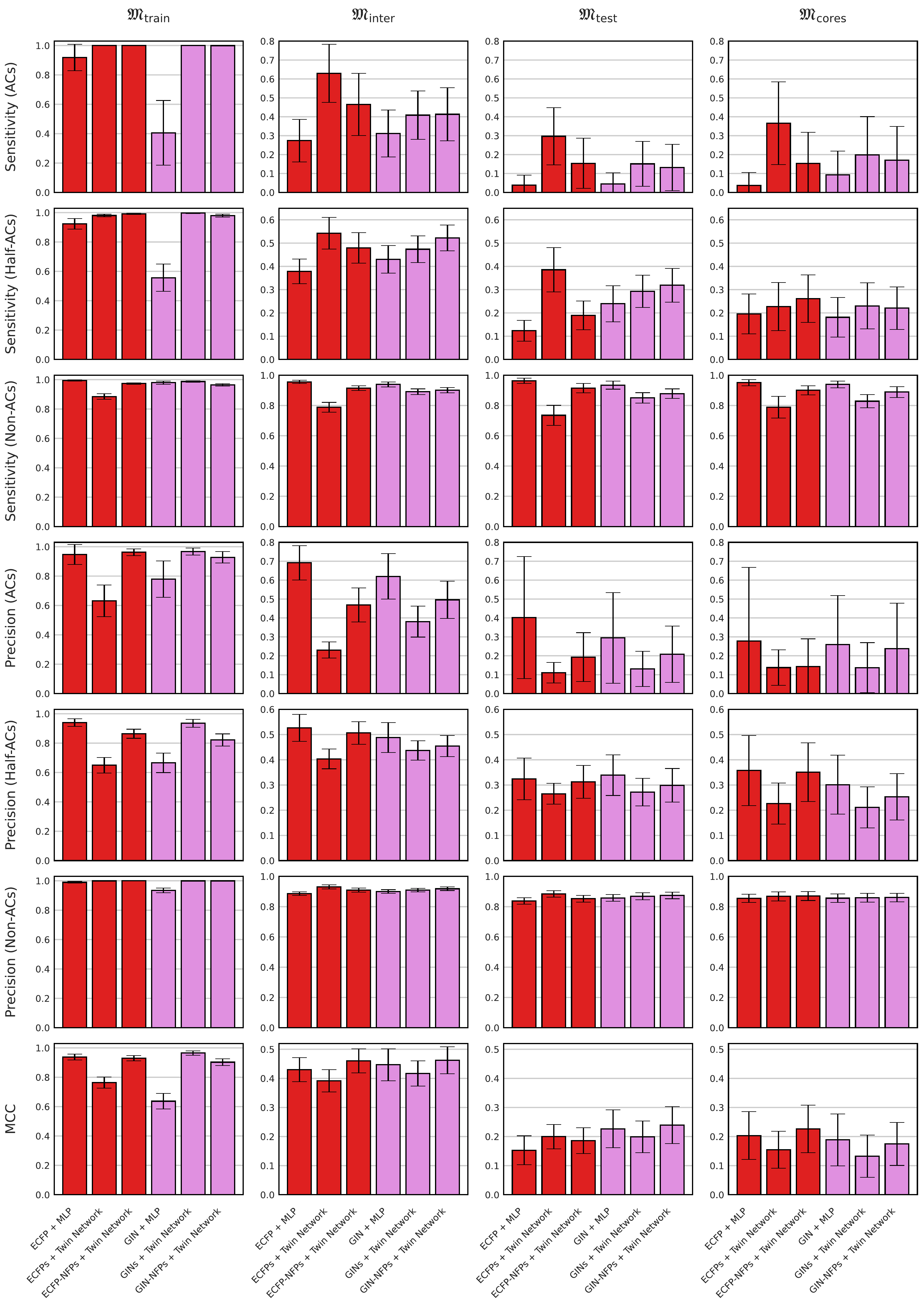}
	\caption[AC-classification results for twin neural network models.]{Activity-cliff~(AC) classification results on the SARS-CoV-2 main protease data set for two baseline QSAR models and four newly developed twin neural network models with distinct input featurisations. The red and violet bars correspond to ECFP-based and GIN-based models, respectively. The total length of each error bar is set to be equal to twice the standard deviation of the performance measured over all $mk = 50*2 = 100$ models.}
	\label{fig:twin_nets_ac_bars_all}
\end{figure}
\begin{figure}[!t]
	\centering
	\includegraphics[width=0.922\linewidth]{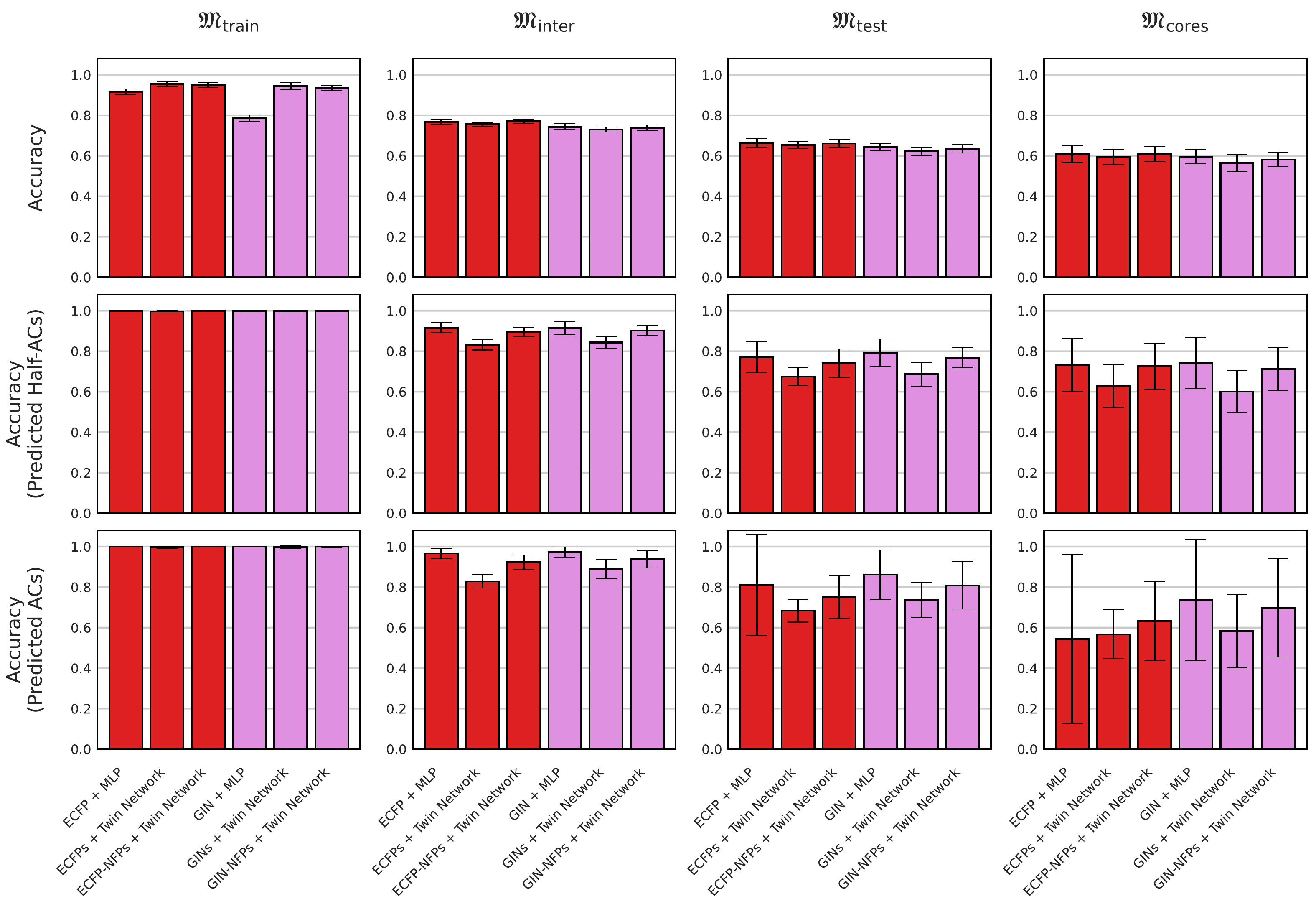}
	\caption[PD-classification results for twin neural network models.]{Potency-direction~(PD) classification results on the SARS-CoV-2 main protease data set for two baseline QSAR models and four newly developed twin neural network models with distinct input featurisations. The red and violet bars correspond to ECFP-based and GIN-based models, respectively. The total length of each error bar is set to be equal to twice the standard deviation of the performance measured over all $mk = 50*2 = 100$ models.}
	\label{fig:twin_nets_pd_bars_all}
\end{figure}

\begin{remark}[Error Bars]
When considering the plots in~\Cref{fig:twin_nets_ac_bars_all,fig:twin_nets_pd_bars_all}, it is easy to overinterpret overlapping error bars when comparing models. The purpose of an error bar in Figure~\ref{fig:twin_nets_ac_bars_all} or Figure~\ref{fig:twin_nets_pd_bars_all} is to communicate the standard deviation associated with the performance of a particular model across multiple random data splits to the reader. In this study, the length of each error bar was set to two standard deviations. While this choice is not uncommon, it is essentially arbitrary. For example, setting the error bar length to one standard deviation instead and pointing this out to the reader would remove many overlaps between error bars while conveying the same information about the stochastic fluctuations in model performance across random data splits. 
\end{remark}

\subsubsection{AC-Classification Performance} \label{subsubsec: ac_class_discussion}

Looking at overall AC-classification performance as quantified by the MCC in the last row of Figure~\ref{fig:twin_nets_ac_bars_all} shows that twin networks that use standard ECFPs or GINs as featurisers appear to exhibit slightly weaker overall MCC performance than their related baseline QSAR models, with the exception of ECFP-based twin networks on $\mathfrak{M}_{\text{test}}$ which clearly outperform ECFP-MLPs in this scenario. The reason might be that these twin networks are exclusively trained on MMPs and ignore compounds not associated with any MMP; unlike QSAR models, such twin networks can thus not leverage the additional SAR-information in isolated compounds in $\mathfrak{D}_{\text{test}}$. This substantially decreases the size of their training set relative to QSAR models that can train on all available compounds. A simple way to remove this data advantage for QSAR models would be to also training them exclusively on compounds involved in MMPs; however, such experiments would not truly simulate a realistic scenario where one uses all available data to produce the strongest possible model.

The overall picture changes when the input featurisations for the twin networks are enriched via transfer learning. The strongest results on $\mathfrak{M}_{\text{inter}}$ are achieved by ECFP-NFP-based and GIN-NFP-based twin networks. Similarly, the strongest results on $\mathfrak{M}_{\text{test}}$ are achieved by GIN-NFP-based twin networks and the strongest results on $\mathfrak{M}_{\text{cores}}$ are achieved by ECFP-NFP-based twin networks. Our computational study in Chapter~\ref{chap: qsar_ac_study} and the results in Figure~\ref{fig:ac_results_sarscov2mpro} indicate that ECFP-MLPs and GIN-MLPs are already amongst the strongest QSAR models for AC-classification. Our observations nevertheless suggest that twin networks when combined with a suitable transfer-learning approach for MMP featurisation (i.e.~NFPs) outcompete even such strong baseline QSAR models for the detection of AC across multiple distinct prediction scenarios. 

It is notable though that if the only examined metric is the overall MCC, then the advantage of the NFP-based twin networks over the QSAR-modelling baselines can in some cases appear modest. However, taking a closer look at the AC-sensitivity and AC-precision of the QSAR-modelling baselines reveals a more nuanced picture. ECFP-MLPs and GIN-MLPs exhibit a reasonably high MCC across MMP sets which should in theory reflect good overall performance for AC-classification. However, the AC-sensitivity of both QSAR models is exceedingly low, reaching around $0.04$ on $\mathfrak{M}_{\text{test}}$; in other words, around $96\%$ of ACs are not correctly classified by these two techniques. This weakness is to some extent compensated by comparatively high AC-precision values: for example, if an ECFP-MLP classifies an MMP in $\mathfrak{M}_{\text{test}}$ as an AC then the probability that this MMP truly is an AC is after all approximately $40\%$. However, the low AC-sensitivity of the tested QSAR models still casts doubt on their practical utility as AC-classifiers, in spite of their moderately high MCCs. Tuning the thresholds for AC-classification in the QSAR-modelling-based prediction strategies outlined in Section~\ref{subsubsec: twin_nets_pred_strat} could mitigate this problem by increasing AC-sensitivity at the cost of AC-precision, thus potentially leading to a more well-balanced and useful classifier. However, the originally set classification thresholds in Section~\ref{subsubsec: twin_nets_pred_strat} precisely reflect the conceptual definitions of non-ACs, half-ACs and ACs used in the literature and throughout this project. Changing these thresholds would thus invite paradoxical situations where for example MMPs with a predicted absolute activity difference of less than two orders of magnitude could be classified as ACs which by definition are MMPs that exhibit an absolute activity difference of more than two orders of magnitude. Therefore, tuning the classification thresholds for QSAR models in an attempt to balance out their AC-sensitivity and AC-precision, while potentially feasible in practice, would arguably be inconsistent and undesirable from a conceptual point of view.

The problem of imbalanced AC-sensitivity and AC-precision can be circumvented in an elegant manner by the twin neural network methodology. The function $w_{\text{AC}}$ that forms part of the twin network loss function $\mathcal{L}_{(\mathcal{R}, \tilde{\mathcal{R}})}$ introduced in Section~\ref{subsec: twin_net_ac_pred_model_description_training} allows one to directly control the class weights given to non-ACs, half-ACs and ACs respectively during twin network training. Assigning a higher relative weight to one class leads to a higher sensitivity (and usually lower precision) for this class in the final trained model. In the twin network model it is thus possible to tune the trade-off between sensitivity and precision for each class in a straightforward and conceptually consistent manner by simply modifying the weight function $w_{\text{AC}}$. In our experiments, we automatically chose class weights that reflected the relative frequencies of non-ACs, half-ACs and ACs in the SARS-CoV-2 data set (see Table~\ref{tab: qsar_ac_study_datsets_overview}). This generally leads to a much less skewed trade-off between AC-sensitivity and AC-precision than can be observed for the two baseline QSAR models. For instance, ECFP-NFP-based twin networks show the strongest overall MCC performance of $0.461$ out of all models on $\mathfrak{M}_{\text{inter}}$ while exhibiting an AC-sensitivity of $0.465$ and an AC-precision of $0.468$. In comparison, ECFP-MLPs on $\mathfrak{M}_{\text{inter}}$ reach a lower MCC of $0.430$, a much lower AC-sensitivity of $0.274$ and a much higher AC-precision of $0.692$. In summary, the twin networks tend to achieve much higher AC-sensitivity and somewhat higher half-AC-sensitivity compared to ECFP-MLPs or GIN-MLPs, at the cost of lower precision in both classes, leading to more balanced classifiers. This important trend cannot be seen from the MCC performance alone; it makes twin neural network models substantially more well-rounded AC-classifiers than ECFP-MLPs or GIN-MLPs which skew heavily in the direction of high AC-precision and very low AC-sensitivity.

Perhaps unsurprisingly, almost all of the evaluated models exhibit very high AC-classification performance on $\mathfrak{M}_{\text{train}}$ as they were trained on this set of MMPs. There seems to be one salient exception to this though: GIN-MLPs exhibit comparatively very low overall AC-classification performance on the set of training-MMPs, reaching an MCC of only $0.637$ and an AC-sensitivity of only $0.406$. This effect might simply be caused by insufficient overall QSAR-prediction performance and/or a lack of training time on the QSAR task. However, there are two reasons that speak against this explanation:
Firstly, each GIN-MLP was trained for no less than $1000$ epochs on $\mathfrak{D}_{\text{train}}$ which was sufficiently long for the training loss to converge to a stable plateau for the last several hundred epochs in all instances. Secondly, GIN-MLPs reach a similar QSAR-MAE for the prediction of individual molecular activities on $\mathfrak{D}_{\text{test}}$ as ECFP-MLPs ($0.442$ for GIN-MLPs vs.~$0.426$ for ECFP-MLPs), yet ECFP-MLPs show much higher AC-classification performance on $\mathfrak{M}_{\text{train}}$. In a future research project, it might be interesting to experiment with techniques to increase the AC-classification performance of GIN-MLP models on $\mathfrak{M}_{\text{train}}$ and study the effects of this on the performance on $\mathfrak{M}_{\text{inter}}$, $\mathfrak{M}_{\text{test}}$ and $\mathfrak{M}_{\text{cores}}$. A straightforward avenue to explore would be to increase the size of GIN-MLPs and train them for an unusually long time, potentially on the order of $10^{4}$ to $10^{5}$ epochs; while this would almost certainly lead to overfitting on the QSAR task and thus lower QSAR-prediction performance, one can hypothesise that it might nevertheless increase AC-sensitivity on $\mathfrak{M}_{\text{train}}$.

When analysing the AC-classification results with respect to molecular featurisation, we see that GIN-based methods consistently match or outperform ECFP-based methods according to MCCs on $\mathfrak{M}_{\text{inter}}$ and $\mathfrak{M}_{\text{test}}$. These findings agree with the trends observed in Figure~\ref{fig:ac_results_sarscov2mpro} from our previous computational study and extend them to the realm of twin neural networks. At first glance, our observations thus once again suggest that graph-based GIN features tend to be equal or superior to ECFPs for the detection of ACs, and this seems to be true both when the underlying predictor is an MLP or when it is a twin neural network. There are three caveats to this conclusion though.

Firstly, looking closely at sensitivity and precision values reveals that the MCC might be a misleading performance measure in some cases. For example, GIN-based twin networks reach an MCC of $0.199$ on $\mathfrak{M}_{\text{test}}$ while ECFP-based twin networks reach an essentially equivalent MCC of $0.200$ on the same MMP set. This suggests that both methods are equally good at ternary AC-classification. And indeed, the precision values of both methods on $\mathfrak{M}_{\text{test}}$ closely resemble each other for all three classes. However, the sensitivities of GIN-based twin networks for non-ACs/half-ACs/ACs are $0.850$ / $0.293$ / $0.151$ while the corresponding sensitivity of ECFP-based twin networks are $0.735$ / $0.385$ / $0.297$. Thus, since ACs and half-ACs are naturally of greater interest than non-ACs in almost all cases, ECFP-based twin networks seem to be strongly preferable to GIN-based twin networks, even though their respective MCCs suggest that both methods are equivalent. One way to express this advantage is that ECFP-based twin networks can generate a longer list of potential AC and half-AC-candidates from a given test data set than GIN-based twin networks while operating at an equal level of precision.

The second caveat to the idea that GIN-features are superior to ECFPs for AC-classification comes from the fact that this trend, while true on $\mathfrak{M}_{\text{inter}}$ and $\mathfrak{M}_{\text{test}}$, appears to reverse on $\mathfrak{M}_{\text{cores}}$. The MCC performance of all GIN-based methods drops in a predictable manner when moving from $\mathfrak{M}_{\text{test}}$ to the more difficult test set $\mathfrak{M}_{\text{cores}}$. However, surprisingly, the same is not true for ECFP-based methods: here the MCC performance of ECFP-MLPs and ECFP-NFP-based twin networks does in fact increase. ECFPs therefore appear to cope better than GINs with the distributional shift between $\mathfrak{M}_{\text{train}}$ and $\mathfrak{M}_{\text{cores}}$; at first sight, this suggest that ECFP-based methods might be able to extract more generalisable chemical knowledge that is not merely based on memorisation. Once again the picture becomes more refined though when looking at sensitivity and precision metrics on $\mathfrak{M}_{\text{cores}}$. The investigated models can be grouped into two sets according to their respective AC-precision values: ECFP-MLPs, GIN-MLPs and GIN-NFP-based twin networks share a similar level of AC-precision; and the same is true for ECFP-based twin networks, ECFP-NFP-based twin networks and GIN-based twin networks. The highest AC-sensitivity by far in the first group is exhibited by GIN-NFP-based twin networks and the same is true in the second group for ECFP-based twin networks. Thus, if the goal is to discover as many AC-candidates as possible in a new data set while maintaining a certain level of precision, then the best model is either a GIN-NFP-based or an ECFP-based twin network, even though neither of these two methods exhibits a comparatively high MCC.

\subsubsection{PD-classification Performance}

The overall accuracy-results for the balanced binary PD-classification task are depicted in the first row of Figure~\ref{fig:twin_nets_pd_bars_all}. We can see that the differences in performance between distinct models on $\mathfrak{M}_{\text{inter}}$, $\mathfrak{M}_{\text{test}}$ and $\mathfrak{M}_{\text{cores}}$ are minor. All models correctly classify the potency direction of approximately $75\%$ of MMPs in $\mathfrak{M}_{\text{inter}}$, $65\%$ of MMPs in $\mathfrak{M}_{\text{test}}$ and $60\%$ of MMPs in $\mathfrak{M}_{\text{cores}}$. Interestingly, GIN-MLPs perform noticeably worse than the other methods on $\mathfrak{M}_{\text{train}}$ but this does not seem to harm their relative predictive abilities on $\mathfrak{M}_{\text{inter}}$, $\mathfrak{M}_{\text{test}}$ or $\mathfrak{M}_{\text{cores}}$. This resembles the observation already discussed in Section~\ref{subsubsec: ac_class_discussion} that GIN-MLPs show much lower AC-sensitivity on $\mathfrak{M}_{\text{train}}$ than the other models while still exhibiting competitive AC-classification performance on the other MMP sets. Our previous discussion on this effect thus also applies to the analogous situation for PD-classification.

The accuracy-results for all models on the full MMP sets $\mathfrak{M}_{\text{inter}}$, $\mathfrak{M}_{\text{test}}$ and $\mathfrak{M}_{\text{cores}}$ might appear modest at first; note however that MMPs are by definition pairs of \textit{structurally similar} molecules and that this similarity regularly goes hand-in-hand with similar activities for a given pharmacological target. ACs and half-ACs are relatively rare exceptions to this rule which is known by medicinal chemists as the \textit{similarity principle}. Classifying which of two compounds is more active thus becomes a challenging prediction task in a setting where both compounds have almost identical chemical structures and therefore also frequently exhibit almost identical activities. In particular, the similarity principle along with the inherent noisiness of experimentally measured binding affinity values suggests that the discrete PD-classification learning signal derived from $\mathfrak{M}_{\text{train}}$ for the four twin networks likely contains a considerable amount of noise in the form of false binary labels. In light of these obstacles, the capabilities of the investigated methods to detect the potency direction of MMPs is nontrivial. The fact that all six distinct techniques reach almost equivalent levels of accuracy on $\mathfrak{M}_{\text{inter}}$, $\mathfrak{M}_{\text{test}}$ and $\mathfrak{M}_{\text{cores}}$ might potentially be caused by a performance ceiling that is rooted in the underlying data set itself and that is approached by all models.

The second and third row of Figure~\ref{fig:twin_nets_pd_bars_all} contain PD-classification accuracies in a scenario where the sets of test-MMPs are restricted to only include MMPs that were classified as half-ACs/ACs by a respective model. Arguably, these results are of higher practical relevance than the overall PD-classification results on the full MMP sets, since the predicted potency direction of an MMP is often of stronger interest if the predicted activity difference is substantial. This is for instance true in the case of compound optimisation. We overall see stronger PD-classification results on the restricted MMP sets of predicted half-ACs/ACs than on the full MMPs-sets; perhaps unsurprisingly, this suggests that it is easier to predict the potency direction of MMPs whose underlying activity difference is large. When MMPs are restricted to predicted half-ACs/ACs, we observe essentially perfect results for all models on $\mathfrak{M}_{\text{train}}$. We further see a tendency for the two baseline QSAR models to slightly outperform the four twin neural networks on the restricted versions of $\mathfrak{M}_{\text{inter}}$, $\mathfrak{M}_{\text{test}}$ and $\mathfrak{M}_{\text{cores}}$. For example, on $\mathfrak{M}_{\text{inter}}$ ECFP-MLPs predict close to $100\%$ of potency directions of predicted ACs correctly, while ECFP-based twin networks accurately predict only slightly more than $80\%$. 

Caution needs to be exercised though when directly comparing accuracy-results like these, since the exact sets of predicted half-ACs/ACs differ from model to model. Moreover, the lower the half-AC/AC-precision of a model, the more non-ACs infiltrate its set of predicted half-ACs/ACs and the harder PD-classification subsequently becomes on this set. And indeed, comparing~\Cref{fig:twin_nets_ac_bars_all,fig:twin_nets_pd_bars_all} reveals that models that exhibit a comparatively high half-AC/AC-precision (such as the two baseline QSAR models) also tend to exhibit a comparatively high PD-classification accuracy on predicted half-ACs/ACs. As mentioned before, a likely underlying reason for this is that the potency direction of half-ACs and ACs might be easier to predict than the potency direction of non-ACs.

Some models also tend to have a substantially larger set of prediced half-ACs/ACs than others. The number of (not necessarily correctly) predicted members of a class $C$ for a given model $M_1$ can be estimated from the true number of occurrences of $C$ in the test set and the sensitivity and precision values of the model for~$C$:
$$p_C(M_1) = n_{C} \frac{\text{sens}_C(M_1)}{\text{prec}_C(M_1)}. $$
Here we use the notation introduced in Section~\ref{subsubsec: twin_networks_performance_measures}. If $M_2$ is a second model, then it follows that
$$ \frac{p_C(M_1)}{p_C(M_2)} = \frac{\text{sens}_C(M_1)\text{prec}_C(M_2)}{\text{prec}_C(M_1)\text{sens}_C(M_2)}.$$
With this formula, we can for instance conclude that the (average) ratio of the numbers of predicted ACs for $M_1 \coloneqq $ ECFP-based twin networks and $M_2 \coloneqq$ GIN-based twin networks on $\mathfrak{M}_{\text{test}}$ is given by
$$\frac{p_{\text{AC}}(M_1)}{p_{\text{AC}}(M_2)} = \frac{\text{sens}_{\text{AC}}(M_1)\text{prec}_{\text{AC}}(M_2)}{\text{prec}_{\text{AC}}(M_1)\text{sens}_{\text{AC}}(M_2)} = \frac{0.297*0.131}{0.110*0.151} \approx 2.342. $$
This implies that the list of predicted AC-candidates generated by ECFP-based twin networks is more than twice as long as the equivalent list generated by GIN-based twin networks, even though both models operate at a similar level of AC-precision~($0.110$ vs.~$0.131$). However, the PD-classification accuracies of ECFP-based and GIN-based twin networks show a modest but noticeable difference on $\mathfrak{M}_{\text{test}}$ ($0.683$ vs.~$0.737$). ECFP-based twin networks can thus detect a much larger volume of potential AC-candidates in $\mathfrak{M}_{\text{test}}$ than GIN-based twin networks at an almost equivalent level of precision, but this advantage comes at the apparent cost of a slightly reduced PD-classification accuracy on predicted ACs.

\section{Conclusions} \label{sec: twin_net_ac_pred_conc}

In this chapter, we have introduced a novel twin neural network architecture for activity-cliff~(AC) and potency-direction~(PD) classification. The general design of this twin network enables it to seamlessly integrate with essentially any featurisation method for individual molecules. 
We first mathematically investigated the built-in symmetry properties of the twin architecture which were designed to serve as a useful inductive bias for our application. We then conducted a series of computational experiments on a data set of SARS-CoV-2 main protease inhibitors to compare the AC and PD-classification performance of two QSAR models and four versions of our twin model, based on either ECFPs or GINs, with or without pre-trained neural fingerprints~(NFPs) generated via supervised transfer learning. To the best of our knowledge, this work represents the first application of twin neural networks to the problems of AC and PD-classification and the first AC-prediction study that includes appropriate QSAR-modelling baselines for comparison. 

At PD-classification, the baseline QSAR models tend to be slightly more accurate than the twin networks if the MMP sets are restricted to predicted half-ACs/ACs. However, on the full MMP sets the differences in overall PD-classification accuracy amongst the six investigated models are negligible. This stands in contrast to the heterogeneous AC-classification results, where we have seen that the developed twin architecture is able to outperform both baseline QSAR models in a variety of important ways. NFP-based twin networks tend to reach the strongest overall MCC performance across distinct prediction scenarios. This demonstrates the positive effects of the transfer learning technique used to generate NFPs which enables the exploitation of additional chemical knowledge contained in compounds not involved in MMPs. Our observations suggest that NFP-based twin networks are a well-balanced choice for AC-classification that can outperform even strong baseline QSAR models. Moreover, twin networks generally appear to strike a substantially better balance between sensitivity and precision than the overly conservative QSAR methods. This may make them more attractive than standard QSAR models for applications such as compound optimisation and automatic SAR-knowledge acquisition. The improved AC-classification balance for the twin network is rooted in the fact that class weights can be assigned to non-ACs, half-ACs and ACs in its loss function.

Our observations have further demonstrate that a detailed look at sensitivity and precision metrics in addition to overall performance measures such as the MCC can lead to important insights in certain use cases. In particular, if the goal is to generate a long list of potential half-AC/AC-candidates at a certain level of precision, then sometimes one model can be preferable over another one even if both have essentially the same MCC. For instance, ECFP-based twin networks and GIN-based twin networks have very similar MCCs and AC-precisions on $\mathfrak{M}_{\text{test}}$ in our experiments; however, the AC-sensitivity of ECFP-based twin networks on this MMP set is much higher and they can thus generate a much longer list of AC-candidates at the same level of precision.

A promising pathway for future research might be to explore ways to further optimise the input-MMP featurisation for AC-classification, potentially by including three-dimensional information that can smooth out two-dimensional MMP cliffs, or by pre-training a neural feature extractor via a contrastive loss that explicitly incentivises the resolution of ACs in the embedding space.

\newpage $\text{}$ 
\newpage
\chapter[Beyond Hashing: Substructure-Pooling Techniques to Robustly Improve Extended-Connectivity Fingerprints]{Beyond Hashing: Substructure-Pooling \\ Techniques to Robustly Improve Extended-Connectivity Fingerprints} \label{chap: ecfps_sort_and_slice}

\section{Introduction}

The use of extended-connectivity fingerprints~(ECFPs) is omnipresent in current cheminformatics. As described in Section~\ref{sec: ecfps}, the ECFP algorithm transforms a molecule (usually given in the form of a SMILES string) into a high-dimensional binary vector whose components indicate the presence or absence of particular circular chemical substructures within the input compound.\footnote{\textit{ECFPs with counts} also exist, which do not simply take the form of binary vectors but integer vectors that indicate the exact number of occurrences of each substructure. In this work, however, unless specifically stated otherwise, we always focus on binary ECFPs without counts due to their more widespread use and conceptual simplicity.} A modern and widely recognised technical description of ECFPs was given in $2010$ by~\citet{rogers2010extended}, although the key ideas underlying ECFP generation were already introduced by~\citet{morgan1965generation} in $1965$. To name only a few applications, ECFPs have been used successfully for ligand-based virtual screening~\citep{riniker2013open}, the prediction of the aqueous solubility of molecular compounds~\citep{duvenaud2015convolutional}, the computational detection of cytotoxic substructures of molecules~\citep{webel2020revealing}, the prediction of the inhibitory activity of molecules against \textit{E.\ coli} enzymes~\citep{rogers2005using}, the identification of unknown binding targets of chemical compounds via similarity searching~\citep{alvarsson2014ligand}, and the prediction of quantum-chemical properties of small molecules~\citep{gilmer2017neural}. 

Perhaps the most typical use case of ECFPs is as a featurisation method for supervised molecular machine learning, i.e.~as a method to transform molecules into binary feature vectors for a given downstream molecular property prediction task. For this purpose, ECFPs are popular tools due to their conceptual simplicity, high interpretability, and low computational cost. Moreover, a growing corpus of literature suggests that ECFPs still regularly match or even surpass the predictive performance of trainable feature-extraction methods based on state-of-the-art message-passing graph neural networks (GNNs)~\citep{stepivsnik2021comprehensive,mayr2018large,menke2021using,chithrananda2020chemberta,winter2019learning,dablander2023exploring}.
These findings agree with our own results presented in Chapter~\ref{chap: qsar_ac_study} where we demonstrated via a series of rigorous computational experiments that ECFPs consistently beat modern GINs~\citep{xu2018powerful} at binding affinity prediction for a given protein (i.e.~QSAR-prediction).

From a bird's eye view, the ECFP algorithm outlined in Section~\ref{subsec: ecfp_and_fcfp_atom_features} can be decomposed into two steps:
a first step 
$$\mathcal{S} \mapsto \mathcal{I} \subseteq \{1,...,2^{32}\}  $$
in which the SMILES string $\mathcal{S}$ of a compound is mapped to a set of integer identifiers $\mathcal{I}$ which correspond to circular chemical subgraphs of varying radii $r~\in~\{0,..., R\}$ with atomic centers in $\mathcal{S}$ (see Figure~\ref{fig:circular_subgraphs_example}); and a second step
$$ \mathcal{I} \mapsto \mathcal{F} \in \{0,1\}^{l}$$
in which the set of integer identifiers (i.e.~the set if circular subgraphs) is transformed into a binary vector of predefined length $l$. We refer to the first step as \textit{substructure enumeration} and to the second step as \textit{substructure pooling}. In this chapter, we focus on substructure pooling and we will give a precise mathematical definition of it in the context of supervised molecular machine learning.

Note that the technical parallels between ECFPs and message-passing GNNs (Section~\ref{sec: gnns}) are striking: In both cases, a compound is first transformed into an unordered set representation whereby different compounds can be transformed to sets of different cardinalities. For ECFPs, this set representation is given by the set of initial and updated integer atom identifiers in $\mathcal{I}$ (which correspond to circular substructures) while for GNNs this set representation is given by the set of initial and updated atom feature vectors.\footnote{To be precise, one uses multisets (i.e.~sets with counts) instead of standard sets in the case of GNN architectures. This is to be able to distinguish identical atom feature vectors belonging to distinct atoms. Similarly, one would use multisets instead of sets when dealing with ECFPs with counts instead of binary ECFPs. However, as mentioned above, in this work we focus on binary ECFPs without counts.} For both methods, the pooling operation then plays the crucial role of reducing the given set representation to a single feature vector that describes the entire compound and that can readily be fed into a standard machine learning model.

While considerable work has been done to develop and investigate a variety of pooling methods for modern GNN architectures~\citep{navarin2019universal,cangea2018towards,lee2019self,ranjan2020asap,ma2020path}, almost no analogous research exists that describes and explores alternative substructure pooling methods for ECFPs. The canonical way for substructure pooling~\citep{rogers2010extended} for ECFPs is based on the use of a deterministic hash function as was already described in Section~\ref{sec: ecfps} of this thesis. The hash function is used to map a set of integer identifiers $\mathcal{I}$ into a set $\{1, ..., l\}$ of much smaller range that can then be transformed into a binary vectorial fingerprint $\mathcal{F} \in \{0,1\}^l$ of desired length $l$. This straightforward type of hashing is the current default substructure-pooling technique for ECFPs and is used almost universally in the molecular property prediction literature, although alternative hashing procedures for ECFPs and closely related circular fingerprints have been explored by~\citet{probst2018probabilistic} for analog searches in big data settings. In spite of its widespread use, the default hashing technique comes with a considerable downside: It is well-known that standard hash-based substructure pooling for ECFPs suffers from the technical problem of \textit{bit collisions}, which occur when distinct integer identifiers (i.e.~distinct substructures) are hashed to the same component of the output vector $\mathcal{F}$. Bit collisions necessarily occur when the fingerprint dimension $l$ is smaller than the number of detected circular substructures which is almost always the case in standard settings; moreover, the smaller the fingerprint dimension $l$ relative to the number of detected substructures, the more bit collisions emerge. The ambiguities introduced by bit collisions into the fingerprint not only compromise its interpretability but also its predictive performance in machine-learning applications.

\citet{gutlein2016filtered} published a high-quality study which represents one of the few existing works that systematically explores alternative substructure pooling strategies for ECFPs to circumvent the problem of bit collisions. They explore an advanced supervised substructure selection scheme (i.e.~a feature selection scheme) as a pooling method to construct what they referred to as \textit{filtered} fingerprints. We will discuss filtered fingerprints as introduced by~\citet{gutlein2016filtered} in more detail below. 

In this chapter, we describe an extremely simple and surprisingly effective alternative to hashing for substructure-pooling of ECFP substructures which we call \textit{Sort \& Slice}. In a nutshell, Sort \& Slice is based on first sorting all unique circular substructures in the training set according to their frequency of occurrence within training compounds and then slicing away the least frequent substructures to arrive at a fingerprint of desired length. From a formal point of view, Sort \& Slice can be seen as a very simple unsupervised feature selection scheme.
In spite of this simplicity, we are able to mathematically show a strong overlap between Sort \& Slice and a more complex unsupervised feature selection method based on the maximisation of information entropy~\citep{shannon1948mathematical,cover1991entropy}. 

Sort \& Slice fully removes bit collisions at the level of integer identifiers; each vectorial component in the final fingerprint corresponds to the presence or absence of one and only one integer identifier in $\mathcal{I}$. This implies that each fingerprint component is associated with a unique circular substructure (if one ignores the slim chance of hash collisions during the ECFP substructure enumeration that could lead to two substructures being assigned the same integer identifier). As a result, vectorial ECFP representations generated via Sort \& Slice are more straightforward to interpret than hashed ECFPs, which regularly contain components that correspond to multiple integer identifiers due to colliding bits. Furthermore, note that the slicing procedure, while massively reducing the dimension of the fingerprint, tends to preserve the vast majority of the chemical information contained in the training set. This is because in common real-world molecular data sets, the least frequent substructures usually only appear in a few compounds (very often only in a single compound); removing such substructures thus corresponds to the removal of almost-constant features with little-to-no information that could be leveraged by a prediction algorithm.

We show via a series of rigorous ECFP-based computational experiments that Sort \& Slice regularly and sometimes substantially outperforms (i) standard hash-based substructure pooling~\citep{rogers2010extended}, (ii) filtered fingerprints as developed by~\citet{gutlein2016filtered}, and (iii) a popular supervised feature selection scheme based on mutual-information maximisation~(MIM)~\citep{shannon1948mathematical,cover1991entropy}. In particular, we show that Sort \& Slice consistently leads to higher predictive performance than hashing, filtering, or MIM
\begin{itemize}
	\item across a diverse set of supervised molecular property prediction tasks involving regression as well as balanced and imbalanced classification,
	
	\item across distributional shifts caused by distinct data splitting strategies (random, scaffold),
	
	\item across machine learning models (random forest, multilayer perceptron),
	
	\item across fingerprint radii ($R \in \{1,2,3\}$),
	
	\item across fingerprint lengths ($l \in \{512, 1024, 2048, 4096\}$), and
	
	\item across initial atomic invariants (standard, pharmacophoric).
	
\end{itemize}
We are thus able to demonstrate that the predictive advantage provided by Sort \& Slice over both hashing and two advanced supervised feature selection techniques is highly robust and generalises across a large number of settings.

Note that we do not dare to claim that the Sort \& Slice technique we investigate in this chapter was necessarily first discovered and implemented by us; in fact, the simplicity of the method makes it possible that versions of it have already been applied by other researchers in the past in a variety of contexts. In particular, we acknowledge the recent work of~\citet{macdougall2022reduced} which we discovered during our literature search: he proposed a procedure that is almost identical to Sort \& Slice, with the only technical difference to our method appearing to be associated with the slicing procedure. While the slicing technique from~\citet{macdougall2022reduced} only allows limited control over the length of the fingerprint, our slicing scheme can generate fingerprints of any predefined arbitrary length. We will discuss this difference in more detail below when mathematically describing Sort \& Slice. Our goal is to provide the following novel contributions in this chapter:

\begin{enumerate}
	\item We give a precise and very general mathematical definition of substructure pooling and suggest it as a potential research avenue to boost the performance of structural fingerprints in molecular machine learning.
	
	\item We mathematically describe our version of Sort \& Slice as a straightforward alternative to hashing for substructure pooling that is very easy to implement, allows full control over the fingerprint length and exhibits markedly higher interpretability due to an absence of bit collisions.
	
	\item We show via a series of strict computational experiments that for ECFPs Sort \& Slice consistently leads to higher predictive performance than hashing and two relevant supervised feature selection schemes for molecular property prediction across a large number of scenarios; and that frequently the performance gains associated with Sort \& Slice are surprisingly large.
	
	\item We recommend that due to its technical simplicity, dimensional customisability, improved interpretability and superior predictive performance, Sort \& Slice should canonically replace hashing as the default substructure pooling method to vectorise ECFPs for supervised molecular machine learning.
	
\end{enumerate}

\

\section{Methods and Experimental Methodology}

\subsection{Substructure Pooling: Mathematical Description}
\label{subsec: subtructure_pooling}

\begin{definition}[Substructure Pooling] \label{def: substructure_pooling}
	Let
	$$\mathfrak{C} = \{\mathcal{C}_{1}, ..., \mathcal{C}_{m}\} $$
	be a (potentially very large) set of chemical substructures. The substructures $\mathcal{C}_1, ..., \mathcal{C}_m$ could take the form of SMILES strings, molecular graphs, or another computational representation such as hashed integer identifiers.
	Now let the power set of $\mathfrak{C}$, i.e.~the set of all possible subsets of $\mathfrak{C}$, be denoted by
	$$ P(\mathfrak{C}) \coloneqq \{\mathfrak{A} \ \vert \ \mathfrak{A} \subseteq  \mathfrak{C} \}. $$
	A substructure-pooling method of dimension $l \in \mathbb{N}$ for the chemical substructures in $\mathfrak{C}$ is an operator
	$$\Psi :  P(\mathfrak{C}) \to \mathbb{R}^l$$
	that maps subsets of $\mathfrak{C}$ to $l$-dimensional real-valued vectors.
\end{definition}

Substructure pooling naturally appears in the context of supervised molecular machine learning for the vectorisation of structural fingerprints for molecular featurisation. To see this, consider a supervised molecular property prediction task specified by a training set of $n$ compounds
$$\mathfrak{D} = \{\mathcal{R}_1, ..., \mathcal{R}_n\} \subset \mathfrak{R} $$
and an associated function
$$c : \mathfrak{D} \to \mathbb{R}$$
that assigns regression or classification labels to the training set.
The training compounds $\mathcal{R}_1, ..., \mathcal{R}_n$ form part of a larger chemical space $\mathfrak{R}$ whose elements could for example be represented via SMILES strings or molecular graphs. Analogous to Definition~\ref{def: substructure_pooling}, let
$$\mathfrak{C} = \{\mathcal{C}_{1}, ..., \mathcal{C}_{m}\} $$
be a set of $m$ chemical substructures of interest and let
$$ P(\mathfrak{C}) = \{\mathfrak{A} \ \vert \ \mathfrak{A} \subseteq  \mathfrak{C} \}$$
be its power set. Now let 
$$\varphi : \mathfrak{R} \to P(\mathfrak{C})$$ 
be a structural-fingerprinting algorithm that maps an input compound in $\mathfrak{R}$ to the set of substructures in $\mathfrak{C}$ that appear in the input compound. Via $\varphi$ one can transform each training compound $\mathcal{R}_i$ into a set representation consisting of $r_i$ substructures:
$$\varphi(\mathcal{R}_i) = \{\mathcal{C}_{i,1}, ..., \mathcal{C}_{i, r_i} \} \subseteq \mathfrak{C}. $$
The elements $\mathcal{C}_{i,1}, ..., \mathcal{C}_{i, r_i} \in \mathfrak{C}$ corresponds to chemical substructures that belong to the larger set of considered substructures $\mathfrak{C}$ and that are present in $\mathcal{R}_i$. 

A straightforward example for $\varphi$ is of course the ECFP algorithm described in Section~\ref{sec: ecfps}. In this case, $\mathfrak{C}$ is the set of all circular chemical fragments up to a predefined radius $R$ represented via their respective integer identifiers in the hash space $\{1, ..., 2^{32}\}$. Another option for $\varphi$ that has been frequently used in the past is the $166$-bit MACCS fingerprint~\citep{durant2002reoptimization} for which $\mathfrak{C}$ becomes a fixed set of $166$ chemical substructures represented via their respective SMARTS strings~\citep{smartstheorymanual}.

By composing a substructure-fingerprinting algorithm
$$\varphi : \mathfrak{R} \to P(\mathfrak{C})$$ 
with a substructure-pooling operator
$$\Psi :  P(\mathfrak{C}) \to \mathbb{R}^l$$
one can finally transform each molecular set representation $\varphi(\mathcal{R}_i)$ into a real-valued vector $\Psi(\varphi(\mathcal{R}_i)) \in \mathbb{R}^l$. The vectorised training data set $$\Psi(\varphi(\mathfrak{D})) \subset \mathbb{R}^l$$ can then be fed into a standard machine learning model such as a random forest or a multilayer perceptron that can be trained to predict the labels specified by~$c$. Note that the substructure-pooling operator $\Psi$ can be constructed leveraging knowledge from the training data; in other words, $\Psi$ is allowed to depend on $\mathfrak{D}$ and the labelling function~$c$. Furthermore, $\Psi$ does not necessarily need to be a fixed function; it could also be a trainable deep network. For instance, later in Section~\ref{sec: diff_substruc_pool} we will introduce a trainable substructure-pooling technique based on a differentiable self-attention mechanism.

The problem of substructure pooling can be directly translated into a mathematical problem that closely resembles GNN pooling as described in Section~\ref{sec: gnns}. This can be achieved if one employs a substructure-embedding function of some dimension $w$:
$$\gamma : \mathfrak{C} \to \mathbb{R}^{w}. $$
Using this embedding, one can straightforwardly transform sets of substructures into sets of vectors via
$$\Gamma_{\gamma} : P(\mathfrak{C}) \to \{ A \subset \mathbb{R}^w \ \vert \ A \ \text{is finite} \}, \quad \Gamma_{\gamma}(\{\mathcal{C}_{1}, ..., \mathcal{C}_{r}\}) = \{\gamma(\mathcal{C}_{1}), ..., \gamma(\mathcal{C}_{r})\}. $$ 
Given a substructure-fingerprinting algorithm $\varphi$, the composite function 
$$\Gamma_{\gamma} \circ \varphi : \mathfrak{R} \to \{ A \subset \mathbb{R}^w \ \vert \ A \ \text{is finite} \}$$ then maps a chemical compound to a set of real-valued vectors. This vectorial set representation can be seen analogously to the outputs of an iteratively applied GNN layer which computes a representation of the molecule in the form of layerwise sets $f_i(A)$ of initial and updated atom feature vectors 
$$f_0(A),...,f_R(A) \subset \{ A \subset \mathbb{R}^w \ \vert \ A \ \text{is finite} \}.$$
Whether the vectors within a set representation generated by $\Gamma_{\gamma} \circ \varphi$ can also be associated with specific radii and central atoms within the input compound like the feature vectors in $f_0(A),...,f_R(A)$ depends on the specifics of the fingerprinting-algorithm~$\varphi$. In the case of ECFPs, for example, such a layerwise and atomwise interpretation that is equivalent to its GNN counterpart is possible and highly natural since each ECFP-generated integer identifier is associated with a substructure of radius $r \in \{0,...,R\}$ around a central atom. Note though that even if such an interpretation is not possible, both techniques still lead to a representation of the input compound in terms of sets of vectors. We can thus see that all GNN pooling methods that correspond to graph-topology-independent permutation-invariant set-functions
$$\bigoplus : \{ A \subset \mathbb{R}^w \ \vert \ A \ \text{is finite} \} \to \mathbb{R}^l $$
can immediately be repurposed to vectorise the sets produced by $\Gamma_{\gamma} \circ \varphi$ for downstream machine learning. Given a suitable embedding $\gamma$ one can therefore reuse techniques from the literature on GNN pooling for substructure pooling. As an example, consider the sum operator that is frequently used for GNN pooling:
$$\sum :  \{ A \subset \mathbb{R}^w \ \vert \ A \ \text{is finite} \} \to \mathbb{R}^w, \quad \sum \left( \{v_1,...,v_r\} \right) = \sum_{i = 1}^r v_i.$$
By composing the sum operator $\Sigma$ with a substructure embedding $\gamma$ we immediately gain a substructure-pooling operator whose output dimension $l$ is equal to the embedding-dimension $w$:
$$\Psi :  P(\mathfrak{C}) \to \mathbb{R}^w, \quad \Psi(\{\mathcal{C}_{1}, ..., \mathcal{C}_{r}\}) = \Big(\sum \ \circ \ \Gamma_{\gamma} \Big) (\{\mathcal{C}_{1}, ..., \mathcal{C}_{r}\}) = \sum_{i = 1}^r \gamma(\mathcal{C}_{i}). $$
From this example it becomes clear that a pair $(\oplus, \gamma)$ consisting of a permutation-invariant set-function $\oplus$ and a substructure-embedding function $\gamma$ is sufficient to fully determine an associated substructure-pooling method $\Psi = \oplus \circ \Gamma_{\gamma}$. Note that just like in the case of GNN pooling, the operator $\oplus$ could correspond to simple summation or averaging, or could be modelled by a more complex trainable deep network as is the case for the differentiable graph pooling method proposed by~\citet{navarin2019universal}.

We now take a look at two natural example techniques for the embedding of substructures.

\begin{example}[One-Hot Embedding] \label{ex: one_hot_emb}
Perhaps the simplest possible substructure embedding is given via \textit{one-hot encoding}. Denote with $u_{m,i} \in \mathbb{R}^m$ the $m$-dimensional unit vector which is equal to $1$ only in its $i$-th component and equal to $0$ everywhere else. Furthermore, let 
$$s: \mathfrak{C} \to \{1,...,m\} $$
be a bijective function. Note that $s$ imposes a linear order on $\mathfrak{C}$ by assigning a unique rank $s(\mathcal{C}) \in \{1,...,m\}$ to each substructure $\mathcal{C} \in \mathfrak{C}$. Then the one-hot encoded substructure embedding associated with $s$ is simply given by
$$\gamma_s : \mathfrak{C} \to \mathbb{R}^{m}, \quad \gamma_s(\mathcal{C}) = u_{m, s(\mathcal{C})}. $$
We see that in the case of one-hot encoding, the embedding dimension $w$ is equal to the number of substructures $\vert \mathfrak{C} \vert = m$ and can thus be extremely large.
\end{example}

\begin{example}[Physicochemical Embedding]

Another natural way to embed substructures is given via physicochemical-descriptor vectors (PDVs) as described in Section~\ref{sec: pdvs}. For example, the $200$ descriptors specified in Table~\ref{tab: physchem_descriptors_rdkit} can be computed for each substructure, leading to a meaningful embedding function
$$\gamma_{\text{PDV}} : \mathfrak{C} \to \mathbb{R}^{200}. $$
Some advantages of this embedding over one-hot encoding are its high \textit{a priori} content of interpretable and potentially useful chemical information and the fact that similar substructures are likely to end up close in the embedding space. A potential disadvantage of physicochemical embeddings over one-hot embeddings might be that certain frequently-used permutation-invariant set functions such as the summation or averaging operator potentially incur a considerable loss of information when applied to sets of PDVs. In contrast, for sets of one-hot encoded vectors, summation and averaging are invertible operations that do not incur any loss of information. To see this, note that every positive entry in a vector that represents the sum or average of a set of one-hot encoded vectors corresponds to exactly one particular one-hot encoded vector in the original set.\footnote{In the case of ECFPs with counts, detected substructures in a compound would be encoded as multisets of one-hot encoded vectors instead of sets. Summing a multiset of one-hot encoded vectors is still invertible, but averaging is not. For instance, the multiset $\{(1,0),(1,0),(0,1),(0,1)\}_{\text{mul}}$ can easily be uniquely reconstructed from its sum $(2,2)$, but not from its average $(1/2,1/2)$, since the multiset $\{(1,0),(0,1)\}_{\text{mul}}$ corresponds to the same average.}
\end{example}

\subsection{Investigated Substructure-Pooling Techniques}

Note that substructure pooling as given in Definition~\ref{def: substructure_pooling} is a highly general operation that encompasses hashing, feature selection and more complex methods based on combining substructure-embeddings $\gamma$ with differentiable permutation-invariant set functions $\oplus$. In spite of this, substructure-pooling beyond hashing remains largely unexplored. Below we go on to describe four substructure-pooling methods for ECFPs that we chose to experimentally investigate: the canonically used hashing procedure, the Sort \& Slice method which is the main focus of this study, and two technically mature supervised feature selection schemes. A high-level overview of the four evaluated substructure-pooling techniques can be found in Figure~\ref{fig:substructurepoolingmethodsforecfps}.

The substructure-pooling methods we describe in this section can in principle be used with any structural fingerprint; however, in this study we focus on ECFPs. We therefore assume from now on that the set of chemical substructures under consideration $\mathfrak{C}$ consists of \textit{circular} substructures up to a predefined radius $R$ represented via a set of integer identifiers in a large hash space:
$$\mathfrak{C} = \{\mathcal{J}_1,...,\mathcal{J}_m \} \subseteq \{1,...,2^{32}\}. $$

\begin{remark}[Potential Hash Collisions during ECFP Substructure Enumeration]
Note that strictly speaking there is not always a perfect one-to-one correspondence between circular chemical substructures and integer ECFP identifiers in the hash space $\{1,...,2^{32}\}$; theoretically hash collisions could occur during the ECFP substructure enumeration process (within one compound or across distinct compounds) that could lead to two circular fragments being assigned the same integer identifier $\mathcal{J}_i$. However, hash collisions of this type are very rare~\citep{gutlein2016filtered, rogers2010extended}. For simplicity we therefore assume that each integer identifier $\mathcal{J}_i \in \mathfrak{C}$ can indeed always be mapped to a unique circular chemical fragment.
\end{remark}

\begin{figure}[H]
	\centering
	\includegraphics[width=0.97\linewidth]{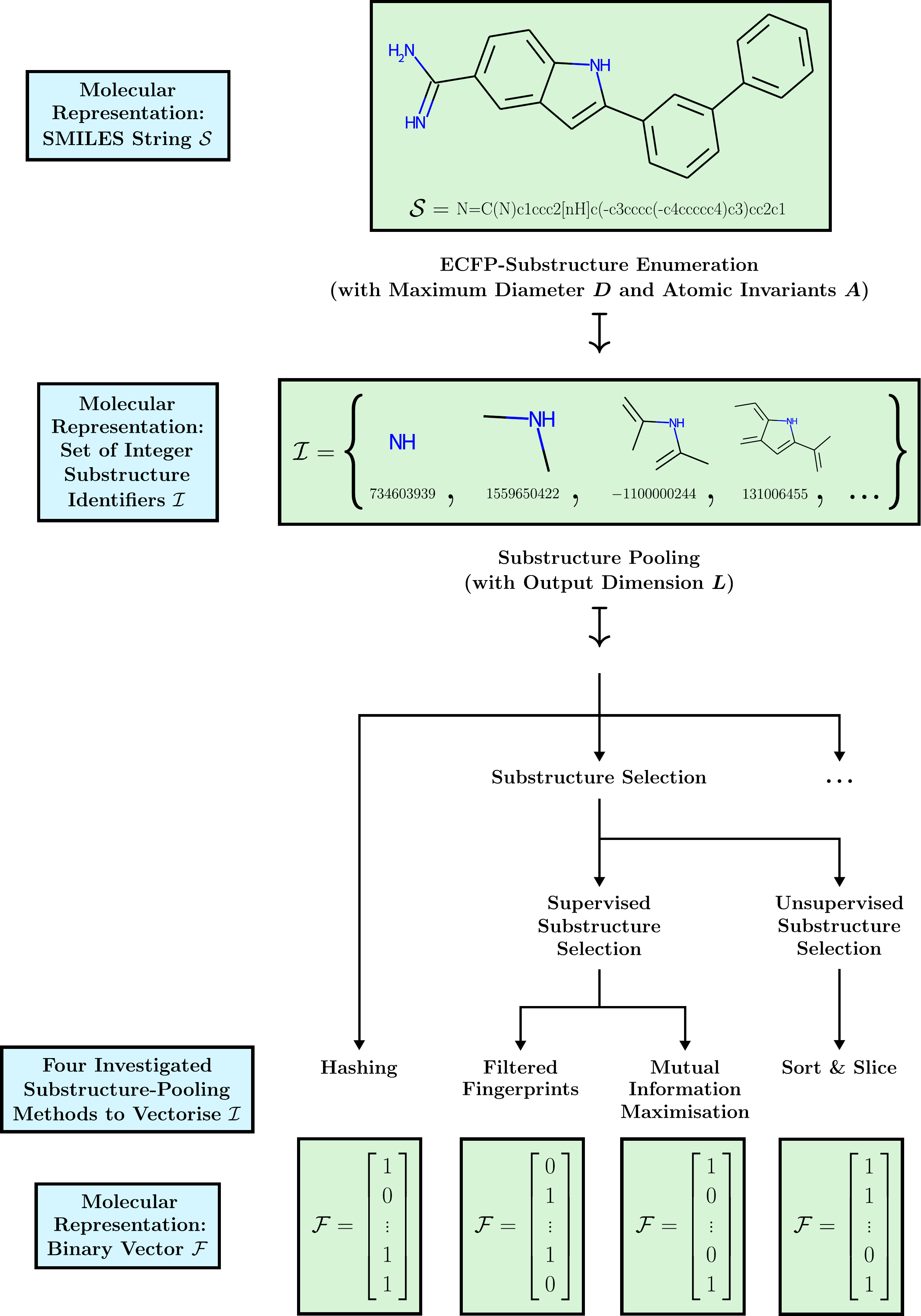}
	\caption[Overview of investigated substructure-pooling methods.]{Schematic overview of the four investigated substructure-pooling methods for the vectorisation of ECFPs.}
	\label{fig:substructurepoolingmethodsforecfps}
\end{figure}

We imagine that the described substructure-pooling methods are used in the context of a supervised molecular property-prediction task specified via a set of $n$ training compounds
$$\mathfrak{D} = \{\mathcal{R}_1, ..., \mathcal{R}_n\} \subset \mathfrak{R} $$
that are elements of some larger chemical space $\mathfrak{R}$ and that are given via their associated SMILES strings. We also assume that we are given a labelling function
$$c : \mathfrak{D} \to \mathbb{R}$$
that assigns regression or classification labels to the training compounds. The ECFP algorithm is denoted via
$$\varphi : \mathfrak{R} \to P(\mathfrak{C}) $$
and turns SMILES strings $\mathcal{R} \in \mathfrak{R}$ into sets of integer identifiers 
$$ \varphi(\mathcal{R}) = \{ \mathcal{J}_1, ..., \mathcal{J}_r \} \subseteq \mathfrak{C} $$ that correspond to circular chemical substructures that appear in the input compound~$\mathcal{R}$. For each substructure $\mathcal{J} \in \mathfrak{C}$, the set of all training compounds that contain $\mathcal{J}$ is called the support of $\mathcal{J}$ and is denoted via
$$ \text{supp}(\mathcal{J}) \coloneqq \{ \mathcal{R} \in \mathfrak{D} \ \vert \ \mathcal{J} \in \varphi(\mathcal{R}) \}.$$
The set
$$\mathfrak{C}_{\mathfrak{D}} \coloneqq \bigcup_{\mathcal{R} \in \mathfrak{D}} \varphi(\mathcal{R}) $$
contains all integer identifiers in the entire training set, i.e.~all circular substructures that appear in any of the $n$ training compounds.

\subsubsection{Hashing}

The canonical way for substructure pooling~\citep{rogers2010extended} is via a deterministic hash function
$$\tilde{h} : \{1, ..., 2^{32}\} \to \{1, ..., l\}$$
that compresses the set of integer identifiers in $\mathfrak{C}$ into a much smaller set of integers $\{1, ..., l\}$ whose cardinality corresponds to the desired fingerprint dimension~$l$. Formally, one can employ $h$ to define a substructure-pooling method via
$$\Psi :  P(\mathfrak{C}) \to \mathbb{R}^l, \quad \Psi(\{\mathcal{J}_1,...,\mathcal{J}_r\})_i = \left\{
\begin{array}{ll}
1 & \quad \exists \ k \in \{1,...,r\}: h(\mathcal{J}_k) = i , \\
0 & \quad \text{else}. \\
\end{array}
\right. $$
This means that the map
$$\Psi \circ \varphi : \mathfrak{R} \to \mathbb{R}^l $$
transforms a chemical compound $\mathcal{R} \in \mathfrak{R}$ into an $l$-dimensional binary vectors whose $i$-th component is $1$ if and only if (at least) one of the substructures in $\varphi(\mathcal{R}) = \{\mathcal{J}_1,...,\mathcal{J}_r\}$ gets hashed to the integer $i$. If $l$ gets small then hash collisions start to occur that can for instance lead to distinct substructures in $\mathfrak{C}_{\mathfrak{D}}$ being mapped by $h$ to the same component of the fingerprint. This degrades its interpretability and predictive performance. Note that hash-based substructure-pooling is independent of the training set $\mathfrak{D}$ and the labelling function $c$.

\subsubsection{Sort \& Slice} 

Let
$$f : \mathfrak{C} \to \{0,...,n\}, \quad f(\mathcal{J}) = \vert \text{supp}(\mathcal{J}) \vert =\vert \{ \mathcal{R} \in \mathfrak{D} \ \vert \ \mathcal{J} \in \varphi(\mathcal{R}) \} \vert, $$
be a function that maps every substructure-identifier $\mathcal{J} \in \mathfrak{C}$ to the number of training compounds in $\mathfrak{D} = \{\mathcal{R}_1, ..., \mathcal{R}_n\}$ which contain~$\mathcal{J}$ (i.e.~to its frequency in the training set). We can use $f$ to define a linear order $\prec$ on the set of all substructures~$\mathfrak{C}$ via
$$ \mathcal{J} \prec \tilde{\mathcal{J}} \iff f(\mathcal{J}) < f(\tilde{\mathcal{J}}) \ \text{or} \ [ f(\mathcal{J}) = f(\tilde{\mathcal{J}}) \ \text{and} \ \mathcal{J} < \tilde{\mathcal{J}} ].$$
We see that the order defined by $\prec$ considers a substructure larger than another substructure if it appears in more training compounds. If two substructures appear in the same number of training compounds, ties are broken using the (arbitrary) ordering defined by the integer identifiers themselves. The relation $\prec$ defines a total order on $\mathfrak{C}$ which means that each substructure $\mathcal{J} \in \mathfrak{C}$ can be assigned a unique rank with respect to $\prec$. Let
$$s : \mathfrak{C} \to \{1,...,m\} $$
be a bijective \textit{sorting function} that assigns the ranks determined by $\prec$. Here rank $1$ is assigned to the largest substructure with respect to $\prec$. If there are no ties then $s(\mathcal{J}) = 1$ implies that $\mathcal{J}$ appears in more training compounds than any other substructure in~$\mathfrak{C}$.
Now let
$$\gamma_s : \mathfrak{C} \to \mathbb{R}^{m}, \quad \gamma_s(\mathcal{J}) = u_{m, s(\mathcal{J})} $$
be the one-hot embedding associated with $s$ (note that one-hot embeddings are described in Example~\ref{ex: one_hot_emb}). Furthermore, let $m_{\mathfrak{D}} \coloneqq \vert \mathfrak{C}_{\mathfrak{D}} \vert $ be the total number of substructures that appear in the training set. Based on $m_{\mathfrak{D}}$ and the desired fingerprint length $l$ we define a \textit{slicing function}
$$\eta_{m_{\mathfrak{D}},l} : \mathbb{R}^m \to \mathbb{R}^l, \quad \eta_{m_{\mathfrak{D}},l}(v_1,...,v_m) = \left\{
\begin{array}{ll}
(v_1,...,v_l) & \quad m_{\mathfrak{D}} \geq l, \\
(v_1,...,v_{m_{\mathfrak{D}}},0,...,0) & \quad m_{\mathfrak{D}} < l. \\
\end{array}
\right.  $$
Then the \textit{Sort \& Slice} substructure-pooling operator is defined via
$$\Psi :  P(\mathfrak{C}) \to \mathbb{R}^l, \quad \Psi(\{\mathcal{J}_1,...,\mathcal{J}_r\}) = \eta_{m_{\mathfrak{D}},l} \Big( \sum_{i = 1}^r \gamma_s(\mathcal{J}_i) \Big).$$
The sum expression
$$\sum_{i = 1}^r \gamma_s(\mathcal{J}_i) \in \{0,1\}^m $$
is equal to a very long binary vector; each component of this vector indicates the presence or absence of a particular circular substructure in $\mathfrak{C}$ in the input compound. The substructures appear according to the frequency by which they occur in the training set such that substructures that occur in more training compounds appear earlier in the vector.

If $m_{\mathfrak{D}} \geq l$, which is usually the case, then the function $\eta_{m_{\mathfrak{D}},l}$ slices away the less frequent substructures from the vector to produce a final representation with the desired dimension $l$. In the unusual case where $l$ is set to be larger than the total number of substructures in the training set $m_{\mathfrak{D}}$, then all training substructures are contained in the fingerprint and it is simply padded with additional $0$s at the end to reach length $l$.

In simple terms, the map
$$\Psi \circ \varphi : \mathfrak{R} \to \mathbb{R}^l $$
outputs binary fingerprints of length $l$ that indicate the presence or absence of the $l$ substructures that appear most frequently in the training set $\mathfrak{D}$. In particular, note that these fingerprints do not exhibit hash collisions; each vectorial component can be assigned to a unique substructure. This clarity comes at the price of losing the information contained in the less frequent substructures in $\mathfrak{C}_{\mathfrak{D}}$ that are sliced away. Note however that in real-world chemical data sets a vast portion of substructures that exist in the training set only occur in a few compounds. In a machine-learning context, slicing away such highly infrequent substructures should thus lead to minimal loss of information since it corresponds to removing almost-constant features (i.e.~almost-constant columns the the training-set feature matrix). The Sort \& Slice operator $\Psi$ is dependent on the training set $\mathfrak{D}$ but not on the training labels $c$. It can be interpreted as a simple unsupervised feature selection technique that selects substructures according to the frequency with which they appear in the training set.

In our literature review we discovered that~\citet{macdougall2022reduced} proposed a version of the ECFP that closely resembles the ECFP generated via Sort \& Slice. The main difference between the substructure-pooling method proposed by MacDougall and ours appears to be in the slicing operation. MacDougall proposes to arrange substructures in discrete packages according to the level sets of $f$:
$$\mathfrak{C}_{j,f} \coloneqq \{\mathcal{J} \in \mathfrak{C} \ \vert \ f(\mathcal{J}) = j \}. $$
It is clear that the sets $\mathfrak{C}_{n,f}, ..., \mathfrak{C}_{0,f}$ form a partition of $\mathfrak{C}$. MacDougall only absorbs substructures in $\mathfrak{C}_{n,f}, ..., \mathfrak{C}_{n - k,f}$ into the final fingerprint whereby $k$ is a tunable \textit{slicing parameter}. In this setting, tuning $k$ gives limited control over the fingerprint dimension as going from $k$ to $k + 1$ leads to a discrete jump in fingerprint length of magnitude $\vert\mathfrak{C}_{n - (k+1),f} \vert $. This coarse-graining of the fingerprint length might be motivated by an effort to avoid dealing with ties when sorting substructures according to frequency. In contrast, our Sort \& Slice technique solves the problem of ties by introducing a second (arbitrary) layer of ordering based on the magnitude of the integer identifiers. This allows us to assign a unique rank to every substructure $\mathcal{J} \in \mathfrak{C}$. As a consequence, our framework easily allows full control over the fingerprint length~$l$ as the substructures in $\mathfrak{C}$ can be sorted in an unambiguous way. Subsequently, all but the top $l$ substructures can be sliced away for any desired $l$.

\begin{remark}[Information-Theoretic Interpretation of Sort \& Slice] \label{remark: sort_and_slice_info}

Let us assume that no substructure appears in more than half of the training compounds:
$$\max_{\mathcal{J} \in \mathfrak{C}_{\mathfrak{D}}} f(\mathcal{J}) \leq \frac{n}{2}.  $$
In other words, we assume that
$$\forall \mathcal{J} \in \mathfrak{C}_{\mathfrak{D}}: \quad p(\mathcal{J}) \coloneqq \frac{f(\mathcal{J})}{n} \leq \frac{1}{2}. $$
Here $p(\mathcal{J}) \in (0,1]$ is the probability to find substructure $\mathcal{J} \in \mathfrak{C}_{\mathfrak{D}}$ in a training compound that was chosen uniformly at random from $\mathfrak{D}$. It is also an empirical estimate of the probability to find substructure $\mathcal{J}$ in a compound that was sampled in the same way as the training compounds which are assumed to have been generated via independent and identically distributed draws from the larger chemical space $\mathfrak{R}$. In real-world data sets the assumption that $p(\mathcal{J}) \leq 1/2$ holds approximately true since usually only a very small fraction of all circular substructures in $\mathfrak{C}_{\mathfrak{D}}$ actually appear in the majority of compounds; in fact, in real-word data sets almost all substructures in $\mathcal{J} \in \mathfrak{C}_{\mathfrak{D}}$ tend to appear in only a few compounds. If indeed no substructure appears in the majority of training compounds, the ordering imposed on $\mathfrak{C}_{\mathfrak{D}}$ by the frequency-function $f$ is equivalent to the ordering imposed by the empirical \textit{information entropy} $H \circ p$ of a substructure with respect to $\mathfrak{D}$. The binary information entropy function $H$~\citep{shannon1948mathematical} is defined via
$$H : [0,1] \to [0,1], \quad H(p) = -p \log_2(p) - (1-p) \log_2(1-p) $$
with $0*\log_2(0) \coloneqq 0$.
We naturally define the empirical information entropy of a substructure in the training set~$\mathfrak{D}$ via
$$H \circ p: \mathfrak{C}_{\mathfrak{D}} \to [0,1], \quad (H \circ p)(\mathcal{J})  = H(p(\mathcal{J})).   $$
The quantity $H(p(\mathcal{J}))$ peaks at $p(\mathcal{J}) = 1/2$ and provides an empirical measure for how informative the substructure $\mathcal{J}$ is when used as a binary feature in a fingerprint. It represents a simple plug-in entropy estimator based on the empirical probability estimate $p(\mathcal{J})$ to find substructure $\mathcal{J}$ in a compound. $H(p)$ is strictly increasing for $p \in [0, 1/2]$ and is strictly decreasing for $p \in [1/2,1]$. To see that $f$ and $H \circ p$ induce the exact same ordering on $\mathfrak{C}_{\mathfrak{D}}$ note the following two facts:
\begin{align*}
&\forall \mathcal{J}, \tilde{\mathcal{J}} \in \mathfrak{C}_{\mathfrak{D}} : \quad f(\mathcal{J}) < f(\tilde{\mathcal{J}}) \iff p(\mathcal{J}) < p(\tilde{\mathcal{J}}) \iff (H \circ p)(\mathcal{J}) < (H \circ p)(\tilde{\mathcal{J}}), \\
&\forall \mathcal{J}, \tilde{\mathcal{J}} \in \mathfrak{C}_{\mathfrak{D}} : \quad f(\mathcal{J}) = f(\tilde{\mathcal{J}}) \iff p(\mathcal{J}) = p(\tilde{\mathcal{J}}) \iff (H \circ p)(\mathcal{J}) = (H \circ p)(\tilde{\mathcal{J}}).
\end{align*}
Here in each of both rows the first equivalence is trivial and the second equivalence holds due to our the strict monotonicity of $H$ on $[0,1/2]$ and our current assumption that $p(\mathcal{J}), p(\tilde{\mathcal{J}}) \in [0,1/2]$.

This argument shows that in realistic settings (i.e.~in settings where almost all substructures appear in no more than half of the training compounds) Sort \& Slice tends to automatically lead to a fingerprint that only contains the most informative substructures from an entropic point of view, all while being much simpler to theoretically understand, implement and interpret than an approach explicitly built on empirical information entropy.

\end{remark}

\subsubsection{Filtering}

In this section we describe the substructure-pooling technique proposed by~\citet{gutlein2016filtered} referred to as \textit{filtering}. Their original method was published in \texttt{Java}; we used the technical description from their article to create a reimplementation in \texttt{Python} that integrates with the rest of our code base. 

Let us first assume that we are given a binary molecular classification problem, i.e.~we assume that our labelling function is given by
$$c : \mathfrak{D} \to \{0,1\}. $$
If instead the initial labels specified by $c$ correspond to a continuous regression problem with labels in $\mathbb{R}$, then we set all labels below or above the label median to $0$ or $1$ respectively, to still arrive at a binary classification problem of the above form.

Now let $R$ be a random compound that was drawn from~$\mathfrak{R}$ according to some probability distribution. Moreover, we imagine that the given training set 
$$\mathfrak{D} = \{ \mathcal{R}_1,..., \mathcal{R}_n \} $$
represents a statistical sample of $n$ independent and identically distributed draws of $R$ from $\mathfrak{R}$. Then the available training labels
$$ \hat{c} \coloneqq (c(\mathcal{R}_1),..., c(\mathcal{R}_n)) \in \{0,1\}^n $$
form a statistical sample of size $n$ for the random labels of $R$. Furthermore, for each substructure $\mathcal{J} \in \mathfrak{C}$, the expression
$$g_{\mathcal{J}}(R) = \left\{
\begin{array}{ll}
1 & \quad \mathcal{J} \in \varphi(R), \\
0 & \quad \text{else}, \\
\end{array}
\right.  $$
forms a random variable that is equal to $1$ if and only if substructure $\mathcal{J}$ is contained in $R$. The binary sequence
$$ \hat{g}_{\mathcal{J}} \coloneqq  (g_{\mathcal{J}}(\mathcal{R}_1),...,g_{\mathcal{J}}(\mathcal{R}_n))\in \{0,1\}^n$$
represents a statistical sample of size $n$ for $g_{\mathcal{J}}(R)$. 

We now define a function
$$ f : \mathfrak{C} \to [0,1], \quad f(\mathcal{J}) =  p_{\chi^2}(\hat{c}, \hat{g}_{\mathcal{J}}),$$
that assigns to each substructure $\mathcal{J} \in \mathfrak{C}$ its $p$-value in a statistical $\chi^2$ independence-test~\citep{pearson1900chi2} between $\hat{c}$ and $\hat{g}_{\mathcal{J}}$. As is the case for Sort \& Slice, this function allows one to define a total order on $\mathfrak{C}$ via
$$ \mathcal{J} \prec \tilde{\mathcal{J}} \iff f(\mathcal{J}) > f(\tilde{\mathcal{J}}) \ \text{or} \ [ f(\mathcal{J}) = f(\tilde{\mathcal{J}}) \ \text{and} \ \mathcal{J} > \tilde{\mathcal{J}} ].$$
The larger the $p$-value, the smaller the substructure according to $\prec$. We now reduce the number of substructures we consider in our fingerprint to the desired dimension $l$ via the following scheme:

\begin{itemize}
	
	\item \textbf{Step 0:} The set of selected substructures is initialised via $\mathfrak{C}_l \coloneqq \mathfrak{C}.$
	
	\item \textbf{Step 1:}. A substructure $\mathcal{J} \in \mathfrak{C}_l$ that fulfills $\vert \text{supp}(\mathcal{J}) \vert \leq 1$ is randomly chosen and removed from $\mathfrak{C}_l$. This is repeated until all substructures in $\mathfrak{C}_l$ appear in at least two training compounds or until $\vert \mathfrak{C}_l \vert = l$.
	
	\item \textbf{Step 2:} A substructure $\mathcal{J} \in \mathfrak{C}_l$ that is \textit{non-closed} is randomly chosen and removed from $\mathfrak{C}_l$. This is repeated until all remaining substructures in $\mathfrak{C}_l$ are closed or until $\vert \mathfrak{C}_l \vert = l$. Note that a substructure $\mathcal{J} \in \mathfrak{C}_l$ is called \textit{non-closed} if there exists another substructure $\tilde{\mathcal{J}} \in \mathfrak{C}_l$ such that $\text{supp}(\mathcal{J}) = \text{supp}(\tilde{\mathcal{J}})$ and $\mathcal{J}$ contains a proper subgraph that is isomorphic to $\tilde{\mathcal{J}}$.
	
	\item \textbf{Step 3:} The smallest element of $\mathfrak{C}_l$ with respect to the linear order $\prec$ is chosen and removed. This is repeated until $\vert \mathfrak{C}_l \vert = l$.
\end{itemize}
Step $1$ is performed to remove almost-constant substructural features that contain little information. Step $2$ represents a graph-theoretic attempt to reduce feature redundancy via the removal of substructures that contain smaller substructures that match the exact same set of training compounds. Finally, Step $3$ is performed to select the $l$ substructures that show the strongest statistical dependence on the training label as quantified by a $\chi^2$-test. Using the selection of substructures $\mathfrak{C}_l$ one can construct a one-hot embedding (see Example~\ref{ex: one_hot_emb})
$$\gamma_s : \mathfrak{C} \to \mathbb{R}^{l}, \quad \gamma_s(\mathcal{J}) = \left\{
\begin{array}{ll}
u_{l, s(\mathcal{J})} & \quad \mathcal{J} \in \mathfrak{C}_l, \\
0 & \quad \text{else}, \\
\end{array}
\right.  $$
whereby
$$s : \mathfrak{C}_l \to \{1,...,l\} $$
is some arbitrary bijective sorting function. Substructure pooling by means of filtered fingerprints can now be described via:
$$\Psi :  P(\mathfrak{C}) \to \mathbb{R}^l, \quad \Psi(\{\mathcal{J}_1,...,\mathcal{J}_r\}) = \sum_{i = 1}^r \gamma_s(\mathcal{J}_i).$$
The map
$$\Psi \circ \varphi : \mathfrak{R} \to \mathbb{R}^l $$
transforms chemical compounds into hash collision-free binary fingerprints that only indicate the presence or absence of substructures in~$\mathfrak{C}_l$. Note again that $\mathfrak{C}_l$ contains the $l$ substructures that exhibit the lowest $p$-values in a $\chi^2$-test with respect to the training label; these substructures thus have a comparatively high statistical dependence with the target variable and might therefore be useful features for a machine-learning system. $\Psi$ depends on both the training compounds in $\mathfrak{D}$ and the training labels~$c$. Filtered fingerprints form a type of supervised feature selection scheme.

\subsubsection{Mutual-Information Maximisation}

In this section we continue to assume the same setting as in the previous section, i.e.~we assume that our labelling function is binary (or has been binarised),
$$c : \mathfrak{D} \to \{0,1\}, $$
and we consider the binary sequences $\hat{c} \in \{0,1\}^n$ and $\hat{g}_{\mathcal{J}} \in \{0,1\}^n$ for $\mathcal{J} \in \mathfrak{C}$ to be statistical samples of size $n$ of two random variables that describe whether or not the binary label of a randomly chosen compound is positive and whether or not the compound contains substructure $\mathcal{J}$. Based on these samples derived from our training set, we compute the empirical \textit{mutual information} $I$~\citep{shannon1948mathematical,cover1991entropy} between substructure $\mathcal{J}$ and the training label via
$$I(\hat{c}, \hat{g}_{\mathcal{J}}) = H(\hat{c}) + H(\hat{g}_{\mathcal{J}}) - H(\hat{c}, \hat{g}_{\mathcal{J}}).$$
Here $H$ denotes an empirical estimate of the information entropy of a random variable based on a statistical sample. $H$ could be implemented using a variety of strategies for entropy estimation; since we are dealing with the relatively easy case of discrete binary variables, we implement $H$ as the simple plug-in entropy estimator based on the relative frequencies of binary outcomes that was already used in Remark~\ref{remark: sort_and_slice_info}. $I(\hat{c}, \hat{g}_{\mathcal{J}})$ is a nonnegative, symmetric and nonlinear measure of the statistical dependence between $\hat{c}$ and $\hat{g}_{\mathcal{J}}$. The larger $I(\hat{c}, \hat{g}_{\mathcal{J}})$, the more information the presence of substructure $\mathcal{J}$ in a compound conveys about the value of its training label and \textit{vice versa}.

We now define a function
$$ f : \mathfrak{C} \to \left[0, \infty\right), \quad f(\mathcal{J}) =  I(\hat{c}, \hat{g}_{\mathcal{J}}),$$
that assigns to each substructure $\mathcal{J} \in \mathfrak{C}$ its empirical mutual information with the training label. Once again this function allows one to define a total order on $\mathfrak{C}$ via
$$ \mathcal{J} \prec \tilde{\mathcal{J}} \iff f(\mathcal{J}) < f(\tilde{\mathcal{J}}) \ \text{or} \ [ f(\mathcal{J}) = f(\tilde{\mathcal{J}}) \ \text{and} \ \mathcal{J} < \tilde{\mathcal{J}} ].$$
We go on to reduce the number of substructures we consider in our fingerprint to the desired length $l$ via the following scheme:

\begin{itemize}
	
	\item \textbf{Step 0:} The set of substructures is initialised via $\mathfrak{C}_l \coloneqq \mathfrak{C}.$
	
	\item \textbf{Step 1:} If two substructures $\mathcal{J}, \tilde{\mathcal{J}} \in \mathfrak{C}_l$ appear in the exact same set of training compounds, i.e.~if $\text{supp}(\mathcal{J}) = \text{supp}(\tilde{\mathcal{J}})$, then one of the substructures is chosen uniformly at random and removed from $\mathfrak{C}_l$. This is repeated until no two substructures have the same support or until $\vert \mathfrak{C}_l \vert = l$.
	
	\item \textbf{Step 2:} The smallest element of $\mathfrak{C}_l$ with respect to the linear order $\prec$ is chosen and removed. This is repeated until $\vert \mathfrak{C}_l \vert = l$.
\end{itemize}
Step $1$ is performed in an attempt to reduce feature redundancy via the removal of substructural features that are identical in the training set. Then, Step $2$ is performed to select only the $l$ most informative substructures with respect to the training label. Using the selection $\mathfrak{C}_l$ one can construct a one-hot embedding (see Example~\ref{ex: one_hot_emb})
$$\gamma_s : \mathfrak{C} \to \mathbb{R}^{l}, \quad \gamma_s(\mathcal{J}) = \left\{
\begin{array}{ll}
u_{l, s(\mathcal{J})} & \quad \mathcal{J} \in \mathfrak{C}_l, \\
0 & \quad \text{else}, \\
\end{array}
\right.  $$
whereby
$$s : \mathfrak{C}_l \to \{1,...,l\} $$
is some arbitrary bijective sorting function. Substructure pooling based on mutual-information maximisation (MIM) can now be described with the following operator:
$$\Psi :  P(\mathfrak{C}) \to \mathbb{R}^l, \quad \Psi(\{\mathcal{J}_1,...,\mathcal{J}_r\}) = \sum_{i = 1}^r \gamma_s(\mathcal{J}_i).$$
The map
$$\Psi \circ \varphi : \mathfrak{R} \to \mathbb{R}^l $$
transforms chemical compounds into hash-collision-free binary fingerprints that exclusively indicate the presence or absence of substructures in the tailored set~$\mathfrak{C}_l$. Remember that $\mathfrak{C}_l$ contains the $l$ substructures that exhibit the highest mutual information with the training label and should thus be highly predictive in a supervised machine-learning setting. $\Psi$ depends on the training compounds in $\mathfrak{D}$ and the training labels~$c$. It can be seen as a supervised feature selection scheme.

\subsection{Experimental Setup}

We computationally evaluated the predictive performance of the four substructure-pooling techniques introduced in the previous section (Hash, Sort \& Slice, Filter and MIM) using five molecular property prediction data sets~(see Table~\ref{tab: data_sets_substruc_pool}). The data sets were chosen to cover a diverse set of chemical regression and binary classification tasks: the prediction of lipophilicity, aqueous solubility, binding affinity, and mutagenicity. We also included a LIT-PCBA virtual screening data set~\citep{tran2020lit} that represents a highly imbalanced binary classification problem.
\begin{table}[!b]
	
	\centering
	{\renewcommand{\arraystretch}{1.5}
		\begin{tabular}{| p{5.05cm}| p{2.35cm} | p{2.58cm} | p{3.3cm} | }

			\multicolumn{1}{l}{\textbf{Prediction Task}} 
			
			& \multicolumn{1}{l}{\textbf{Task Type}} 
			
			& \multicolumn{1}{l}{\textbf{Compounds}}
			
			& \multicolumn{1}{l}{\textbf{Source}} \\
			
			\hline
			
			Lipophilicity [logD] & Regression & $4200$  & MoleculeNet~\citep{wu2018moleculenet} \\ \hline 
			
			Aqueous Solubility [logS] & Regression & $9335$  & \citet{sorkun2019aqsoldb} \\ \hline 
			
			SARS-CoV-2 Main Protease Binding Affinity [pIC\textsubscript{50}] & Regression & $1924$  & COVID Moonshot Project~\citep{achdout2020covid} \\ \hline 
			
			Ames Mutagenicity & Classification & $3496$ positives \newline $3009$ negatives & \citet{hansen2009benchmark} \\ \hline 
			
			Estrogen Receptor Alpha Antagonism & Classification & $88$ positives \newline $3833$ negatives  & LIT-PCBA~\citep{tran2020lit} \\ \hline
			
	\end{tabular}}
	
	\caption[Data sets for substructure-pooling experiments.]{Overview of the five cleaned molecular property prediction data sets used to experimentally evaluate the predictive performance of distinct substructure-pooling techniques for ECFPs.}
	
	\label{tab: data_sets_substruc_pool}
	
\end{table}
Also note that we once again experimented with the same SARS-CoV-2 main protease binding affinity data set that we already explored in Chapters~\ref{chap: qsar_ac_study}~and~\ref{chap: twin_net_ac_pred}.

All data sets were cleaned in the following manner: 
SMILES strings were algorithmically standardised and desalted using the ChEMBL structure pipeline~\citep{bento2020open}. This step also removed solvents and isotopic information. Afterwards, SMILES strings that generated error messages upon being turned into an \texttt{RDKit} mol object were deleted. Furthermore, a scan for duplicate SMILES strings was performed; if two SMILES strings were found to be identical, one of the SMILES strings was deleted uniformly at random along with its training label. Finally, we also detected rare instances where SMILES strings appeared to encode several disconnected fragments instead of one connected compound; such SMILES strings were too deleted from the data.

As a data splitting strategy, we implemented $k$-fold cross validation repeated with $m$ random seeds using $(k,m) = (2,3)$; thus each model was independently trained and tested $km = 6$-times. Performance results were recorded as the average and standard deviation over these $6$ splits, using the mean absolute error (MAE) for the three regression data sets and the area under the receiver operating characteristic curve~(AUROC) for the balanced mutagenicity classification data set. To measure performance on the LIT-PCBA estrogen receptor alpha antagonism classification data set, we used the area under the precision recall curve (AUPRC) which quantifies the tradeoff between sensitivity ($=$ recall) and precision; the AUPRC is a suitable and commonly used metric for highly imbalanced problems in which positives are of stronger natural interest than negatives. We experimented with two distinct splitting techniques within our cross-validation framework: standard uniform \textit{random} splitting and \textit{scaffold} splitting~\citep{bemis1996properties}. For the LIT-PCBA classification data set we used \textit{stratified} random splitting instead of standard random splitting in order to stabilise the small number of positives across training and test sets (we still refer to this split simply as a random split). Unlike random splitting, scaffold splitting generates a partition of a chemical data set in which the molecular scaffolds of all training-set compounds are distinct from the molecular scaffolds of all test-set compounds. This creates a distributional shift between training and test set which leads to a more challenging prediction scenario where a model is trained in one structural area of chemical space but tested in another.

As prediction algorithms we selected two standard machine-learning models: random forests (RFs) and multilayer perceptrons (MLPs). The hyperparameter choices for both models are listed in Table~\ref{tab: substruc_pool_exper_params_rf_mlp}. 
\begin{table}[!t]
	
	\centering
	{\renewcommand{\arraystretch}{1.5}
		\begin{tabular}{|p{7.15cm}| p{7.15cm} | }

			\multicolumn{2}{c}{\textbf{Machine-Learning Model Hyperparameters}}

			\\
			
			\multicolumn{1}{c}{\textbf{Random Forest}} 
			
			& \multicolumn{1}{c}{\textbf{Multilayer Perceptron}} \\
			
			\hline
			
			\footnotesize

			NumberOfTrees = $100$
			
			MaxDepth = None
			
			MinSamplesLeaf= $1$
			
			MinSamplesSplit = $2$
			
			Bootstrapping = True
			
			MaxFeatures = Sqrt
			
			Criterion (regression) = SquaredError
			
			Criterion (classification) = Gini& 
			
			\footnotesize 
			
			NumberOfHiddenLayers = $5$
			
			NeuronsPerHiddenLayer = $512$
			
			HiddenActivation = ReLU
			
			UseBiasVectors = True
			
			DropoutRateHiddenLayers = $0.25$
			
			BatchNormHiddenLayers = True
			
			BatchSize = $64$
			
			LearningRate = 1e-3
			
			LRDecayFactor = $\max\{0.98^{\text{epoch}}, \text{1e-2}\}$
			
			WeightDecayFactor = $0.1$
			
			NumberOfEpochs = $250$
			
			Optimiser = AdamW~\citep{loshchilov2017decoupled}
			
			OutputActivation (regression) = Identity
			
			Loss (regression) = MeanSquaredError
			
			OutputActivation (classification) = Sigmoid
			
			Loss (classification) = BinaryCrossEntropy \\  \hline 
			
	\end{tabular}}
	
	\caption[RF and MLP hyperparameters for substructure-pooling experiments.]{Selected hyperparameters for the two prediction models used in our substructure-pooling experiments: random forests~(RFs) and multilayer perceptrons~(MLPs).}
	
	\label{tab: substruc_pool_exper_params_rf_mlp}
	
\end{table}
All MLPs were trained on a single NVIDIA GeForce RTX 3060 GPU. In the case of RFs, the chosen hyperparameters are identical to the default ones from scikit-learn~\citep{pedregosa2011scikit} with the exception for MaxFeatures which was set to ``Sqrt" instead of $1.0$ in the case of RF regressors in order to add randomness.\footnote{The fact that the default scikit-learn setting of MaxFeatures = $1.0$ for RF regressors actually does not generate a classical random forest based on trees built via random subsets of features but rather simply a set of bagged decision trees via bootstrap aggregation was pointed out in a social media post by Greg Landrum on Twitter via @dr\_greg\_landrum on 1:57 PM, Feb 28, 2023: \textit{I assume there's a reason for it, but I really don't think it's a feature that the default parameters for a scikit-learn RandomForestRegressor don't actually build a random forest.}}

We conducted a thorough investigation of the ECFP hyperparameter space. For each data set, for each data splitting technique~(random vs.~scaffold), and for each prediction model~(RF vs.~MLP), we evaluated $24$ different ECFPs based on a complete exploration of the following grid:

\begin{itemize}
	
\item fingerprint dimension $l \in \{512, 1024, 2048, 4096\}$, \item substructure diameter $D \in \{2, 4, 6\}$,
\item atomic invariants $\in \{\text{standard (ECFP), pharmacophoric (FCFP)}\}$,
\item active tetrahedral R-S chirality flags.

\end{itemize}
Each of the $24$ ECFP versions was further combined with all four substructure-pooling methods (Hash, Sort \& Slice, Filter, MIM). This resulted in $96$ distinct vectorial ECFPs used for each combination of data set, splitting type and machine-learning model. From a bird's-eye view, the conducted experiments are organised according to a robust combinatorial methodology of the following form:

\begin{gather*}
\vert \{\text{Lipophilicity Data Set}, ..., \text{Estrogen Receptor Alpha Antagonism Data Set}\} \vert \\
 \times \\
\vert \{\text{Random Data Split},\ \text{Scaffold Data Split}\} \vert \\
\times \\
\vert \{\text{Random Forest},\ \text{Multilayer Perceptron}\} \vert \\
\times \\
\vert \{512\text{-Bit ECFP2}, ..., 4096\text{-Bit FCFP6}\} \vert \\
\times \\
\vert \{\text{Hash},\ \text{Filter},\ \text{MIM},\ \text{Sort \& Slice}\} \vert \\
= \\
5 * 2 * 2 * 24 * 4 = 1920.
\end{gather*}
Each of the $1920$ modelling scenarios resulting from this combinatorial setup was evaluated on the chosen data set via $2$-fold cross validation with $3$ random seeds as mentioned before. In total we therefore trained $1920 * 6 = 11520$ models, half of which were deep-learning models.

\section{Results and Discussion}

The detailed experimental results for each data set can be found in~\Cref{fig:moleculenetlipophilicity,fig:aqsoldbsolubility,fig:posterasarscov2mpro,fig:amesmutagenicity,fig:litpcbaesrant}. A comprehensive overview of all results is depicted in Figure~\ref{fig:boxplotswithtitles} where it becomes evident that Sort \& Slice outperforms hashing in almost all scenarios and that frequently the achieved performance gains are non-negligible. For example, a random forest trained on a random split of the AqSolDB solubility data set~\citep{sorkun2019aqsoldb} achieves a median MAE of about $1.045$ when combined with the hashed versions of the $24$ investigated ECFPs but a median MAE of about $0.998$ if the ECFPs are vectorised via Sort \& Slice instead. This corresponds to a relative improvement of the median MAE of approximately $4.5\%$.
\begin{figure}
	\centering
	\includegraphics[width=0.98\linewidth]{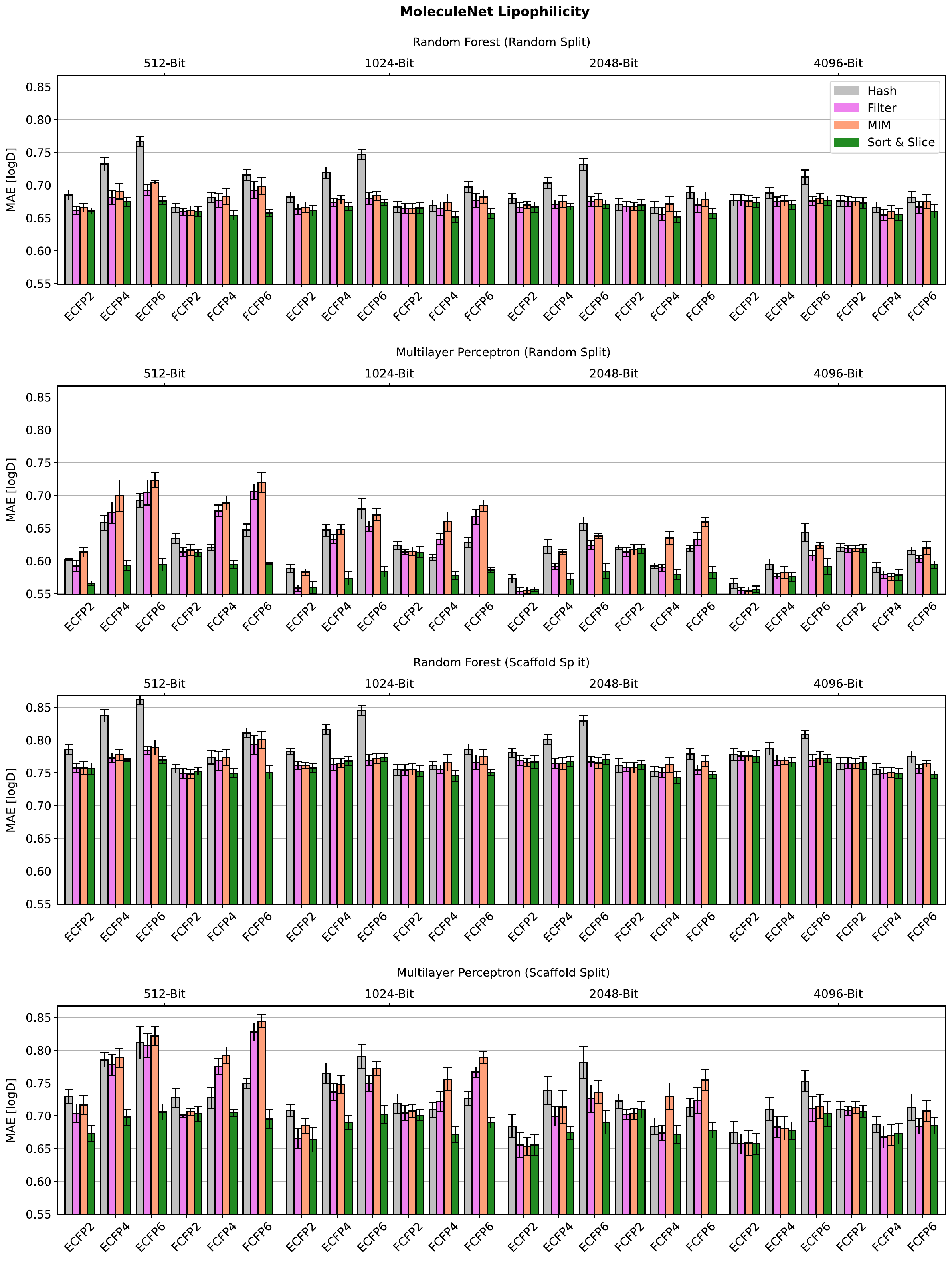}
	\caption[Substructure-pooling experiments (lipophilicity).]{Predictive performance of the four substructure-pooling methods (indicated by colours) for the lipophilicity regression data set using varying data splitting techniques, prediction models and ECFP hyperparameters. Each coloured bar shows the average mean absolute error~(MAE) of the respective model across a $k$-fold cross validation scheme repeated with $m$ random seeds for $(m,k) = (3,2)$. The length of each error bar equals twice the standard deviation of the performance measured over the $mk = 6$ trained models.}
	\label{fig:moleculenetlipophilicity}
\end{figure}
\begin{figure}
	\centering
	\includegraphics[width=0.98\linewidth]{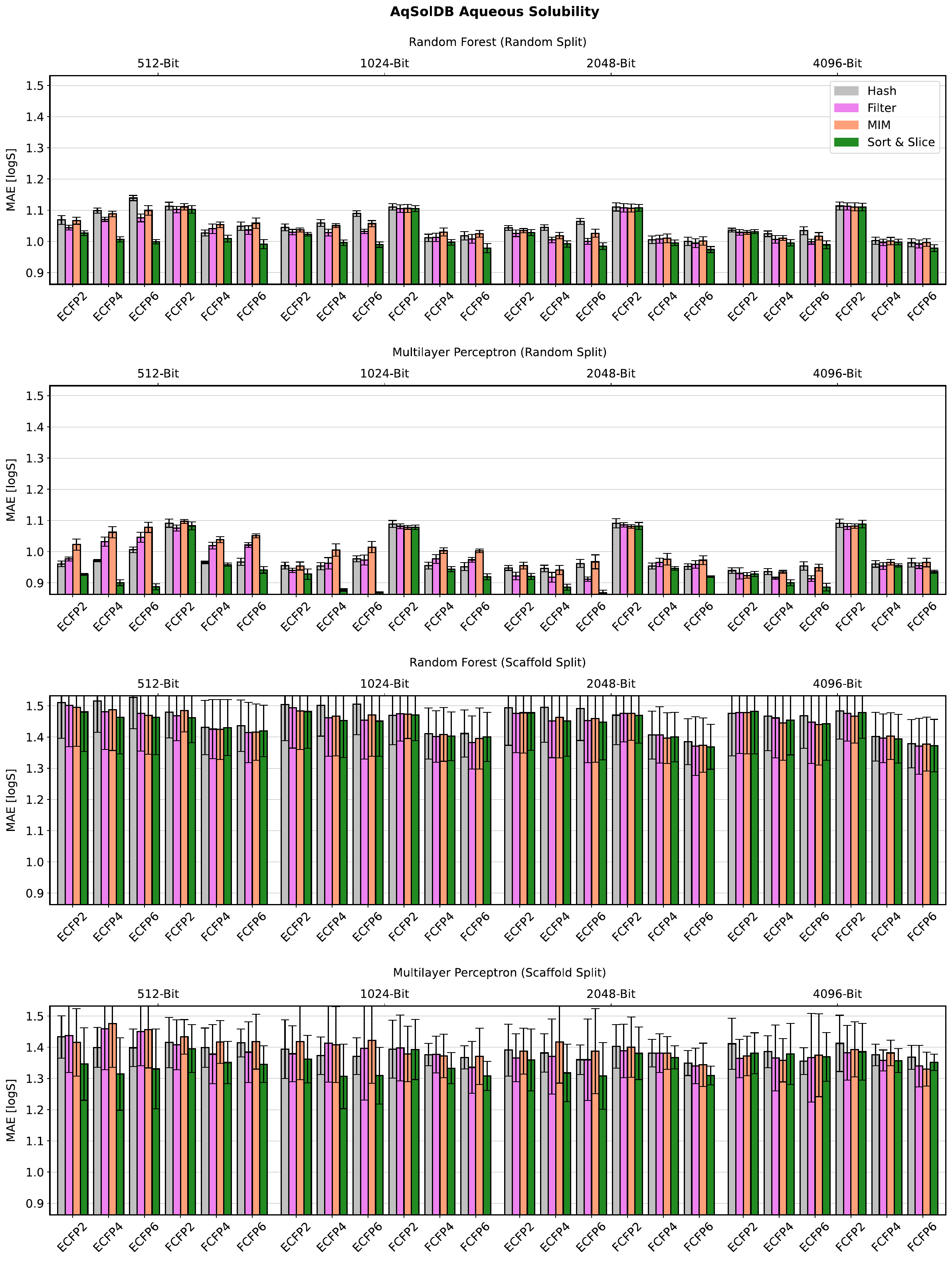}
	\caption[Substructure-pooling experiments (aqueous solubility).]{Predictive performance of the four substructure-pooling methods (indicated by colours) for the solubility regression data set using varying data splitting techniques, prediction models and ECFP hyperparameters. Each coloured bar shows the average mean absolute error~(MAE) of the respective model across a $k$-fold cross validation scheme repeated with $m$ random seeds for $(m,k) = (3,2)$. The length of each error bar equals twice the standard deviation of the performance measured over the $mk = 6$ trained models.}
	\label{fig:aqsoldbsolubility}
\end{figure}
\begin{figure}
	\centering
	\includegraphics[width=0.98\linewidth]{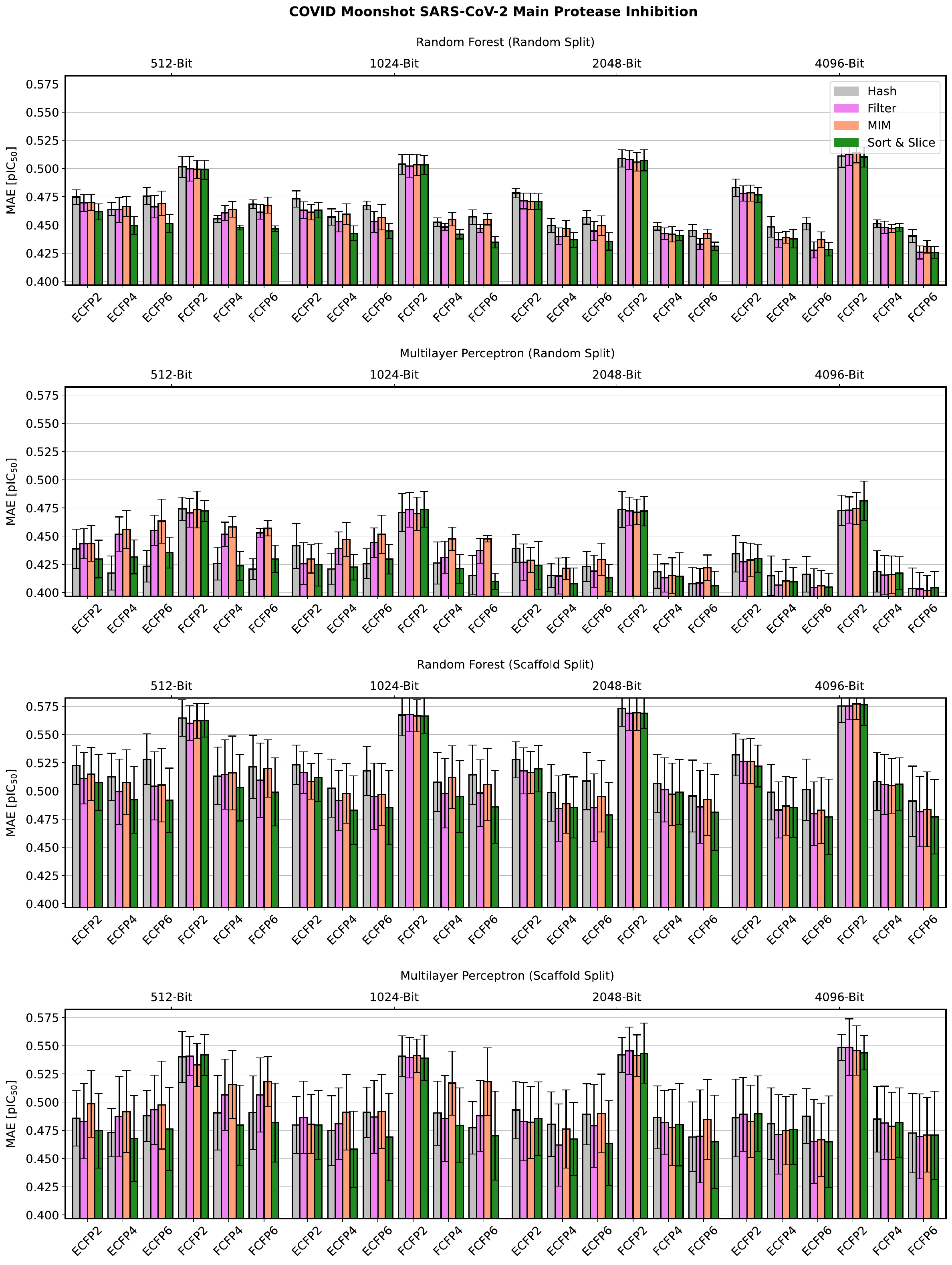}
	\caption[Substructure-pooling experiments (SARS-CoV-2 main protease).]{Predictive performance of the four substructure-pooling methods (indicated by colours) for the SARS-CoV-2 main protease binding affinity regression data set using varying data splitting techniques, prediction models and ECFP hyperparameters. Each coloured bar shows the average mean absolute error~(MAE) of the respective model across a $k$-fold cross validation scheme repeated with $m$ random seeds for $(m,k) = (3,2)$. The length of each error bar equals twice the standard deviation of the performance measured over the $mk = 6$ trained models.}
	\label{fig:posterasarscov2mpro}
\end{figure}
\begin{figure}
	\centering
	\includegraphics[width=0.98\linewidth]{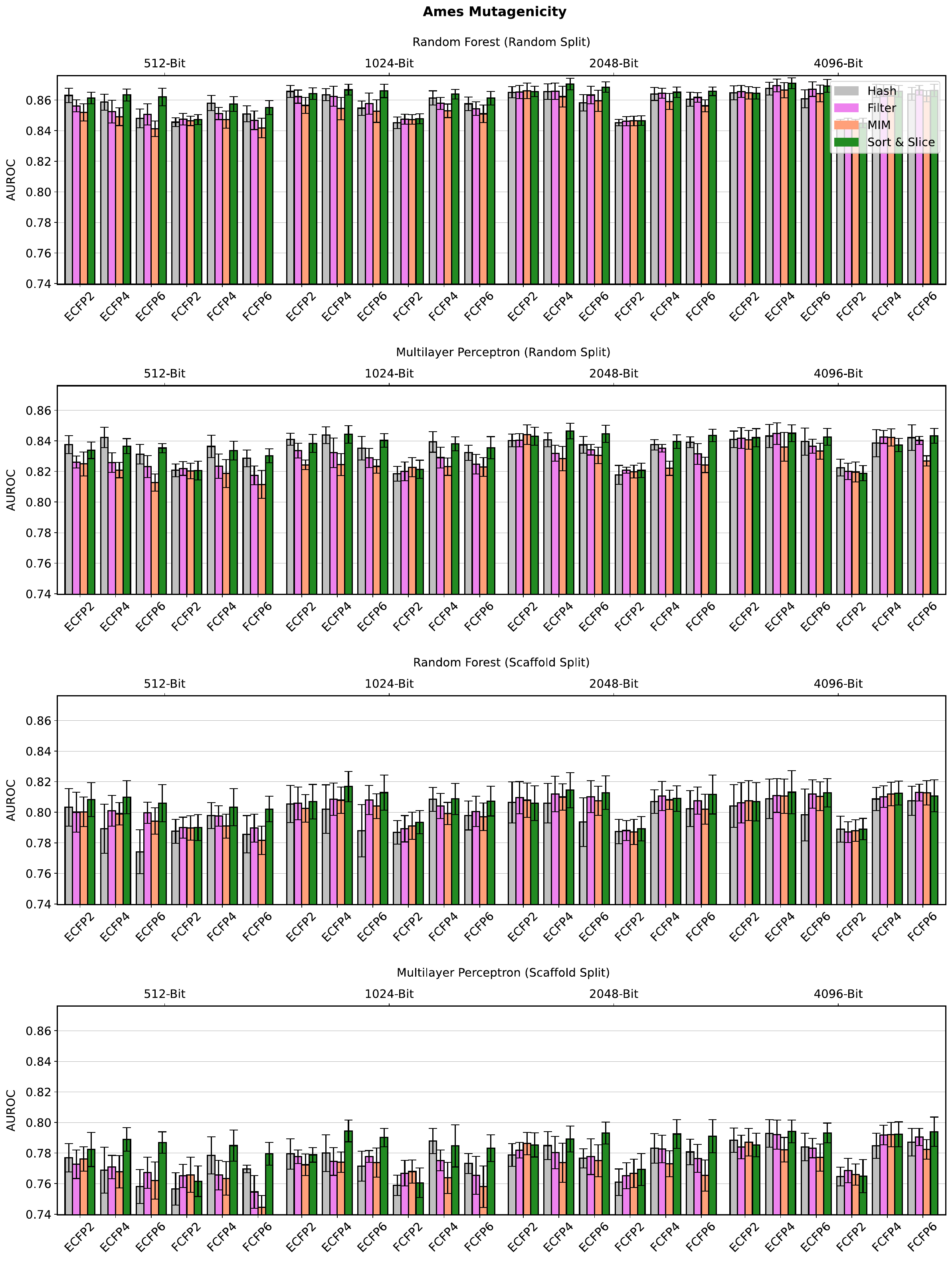}
	\caption[Substructure-pooling experiments (mutagenicity).]{Predictive performance of the four substructure-pooling methods (indicated by colours) for the balanced mutagenicity classification data set using varying data splitting techniques, prediction models and ECFP hyperparameters. Each coloured bar shows the average area under the receiver operating characteristic curve~(AUROC) of the respective model across a $k$-fold cross validation scheme repeated with $m$ random seeds for $(m,k) = (3,2)$. The length of each error bar equals twice the standard deviation of the performance measured over the $mk = 6$ trained models.}
	\label{fig:amesmutagenicity}
\end{figure}
\begin{figure}
	\centering
	\includegraphics[width=0.98\linewidth]{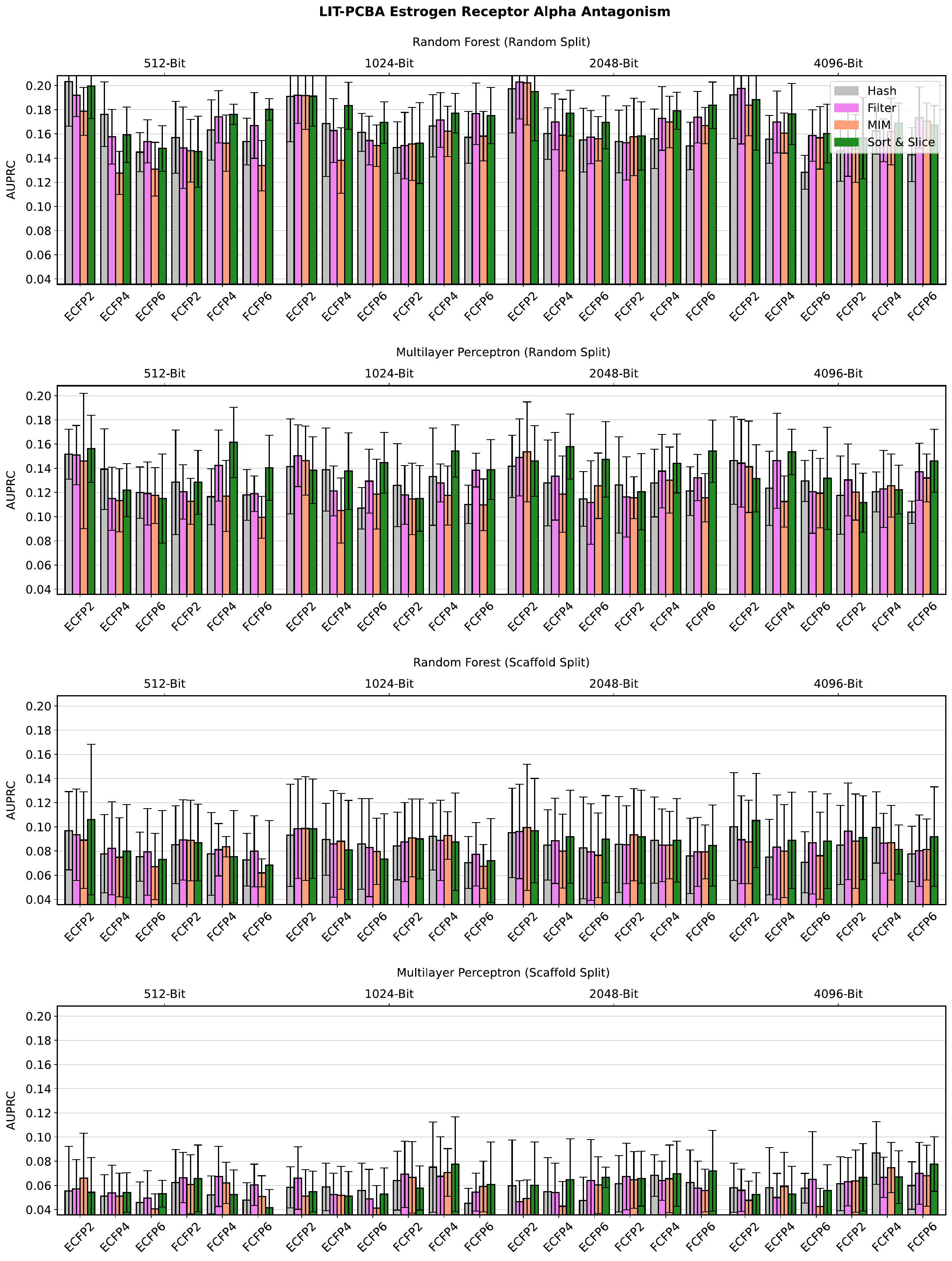}
	\caption[Substructure-pooling experiments (estrogen receptor alpha antagonism).]{Predictive performance of the four substructure-pooling methods (indicated by colours) for the imbalanced estrogen receptor alpha antagonism classification data set using varying data splitting techniques, prediction models and ECFP hyperparameters. Each coloured bar shows the average area under the precision recall curve~(AUPRC) of the respective model across a $k$-fold cross validation scheme repeated with $m$ random seeds for $(m,k) = (3,2)$. The length of each error bar equals twice the standard deviation of the performance measured over the $mk = 6$ trained models.}
	\label{fig:litpcbaesrant}
\end{figure}
\begin{figure}
	\centering
	\includegraphics[width=0.89\linewidth]{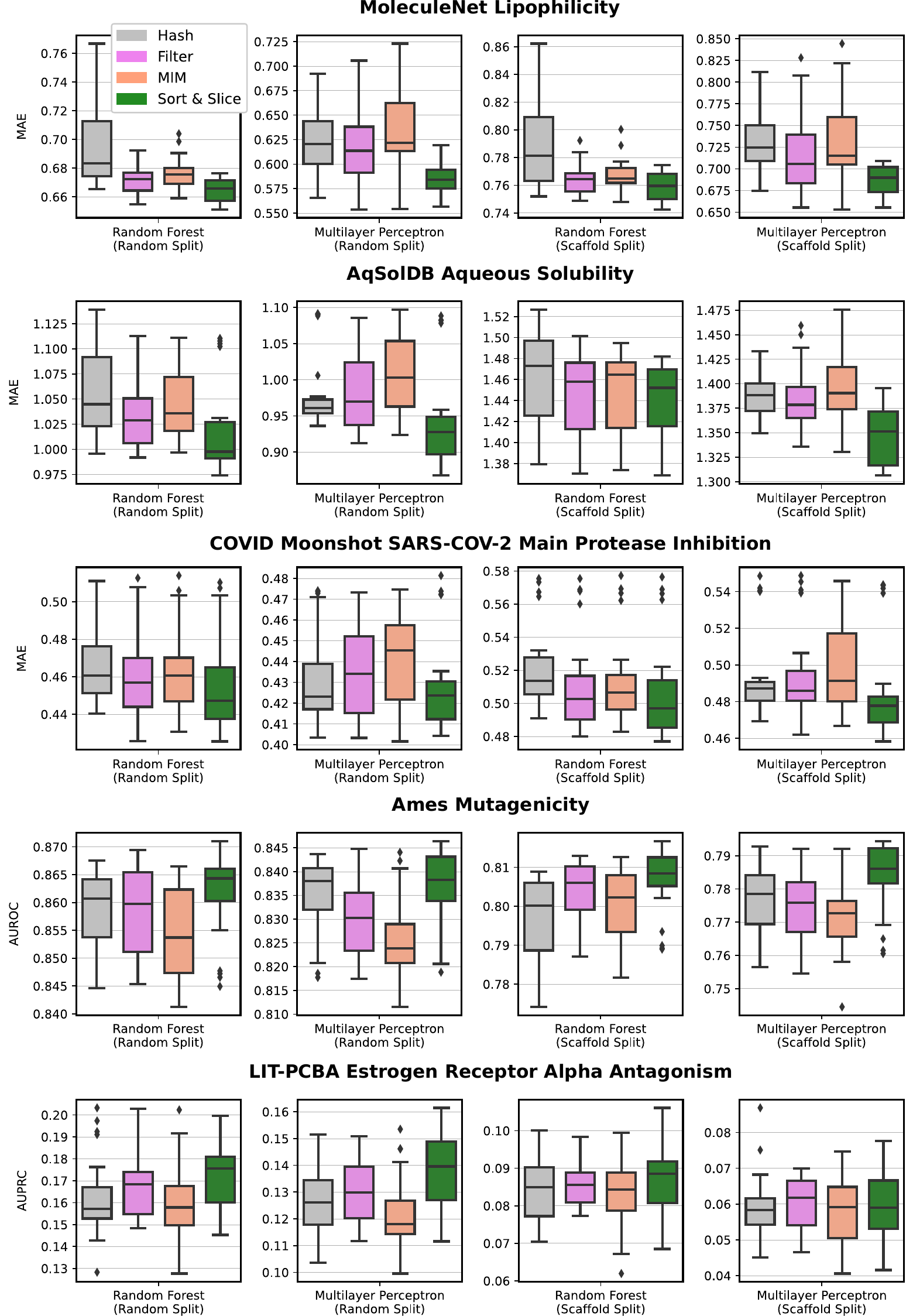}
	\caption[Overview of results of all substructure-pooling experiments.]{Overview of the predictive performance of the four investigated substructure-pooling methods (indicated by colours) across regression and classification data sets, data splitting techniques and prediction models. Each boxplot visualises the performance of a substructure-pooling method on top of $24$ distinct ECFP-types generated by combining fingerprint dimensions $l \in \{512, 1024, 2048, 4096\}$, fingerprint diameters $D \in \{2,4,6\}$ and initial atomic invariants~$\in~\{\text{standard, pharmacophoric}\}$.}
	\label{fig:boxplotswithtitles}
\end{figure}

The fine-grained results in~\Cref{fig:moleculenetlipophilicity,fig:aqsoldbsolubility,fig:posterasarscov2mpro,fig:amesmutagenicity,fig:litpcbaesrant} give more detailed insights into the relative performance of substructure-pooling techniques for various ECFP hyperparameters. In particular, this allows us to track the performance of the highly popular $1024$-bit ECFP4 which has been used as one of the common fingerprints in countless applications. We see that the Sort \& Slice version of the $1024$-bit ECFP4 surpasses the predictive performance of the hashed $1024$-bit ECFP4 in all but a few cases. For instance, Figure~\ref{fig:moleculenetlipophilicity} shows that replacing hashing with Sort \& Slice when using a $1024$-bit ECFP4 with an MLP on a random split of the lipophilicity data set leads to a rather remarkable relative MAE improvement of $11.37\%$.

The results in Figure~\ref{fig:litpcbaesrant} suggest that the advantage of Sort \& Slice over hashing remains robust even in a highly imbalanced classification scenario. The superior AUPRC of Sort \& Slice indicates a better trade-off between sensitivity and precision in a setting with very few positives where standard algorithms tend to generate predictions that are heavily biased towards the negative class (i.e.~heavily biased towards extremely high precision and extremely low sensitivity). Our observations are consistent with the hypothesis that Sort \& Slice exerts a mitigating effect on this bias relative to hashing by increasing sensitivity favourably at the cost of precision, leading to a more balanced classifier with stronger overall performance. The predictive power of the RFs and MLPs trained on the imbalanced LIT-PCBA estrogen receptor alpha antagonism data set could potentially be further increased by combining them with computational techniques explicitly tailored to counteract class imbalance (oversampling, undersampling, tree-wise undersampling, weighted loss, balanced training batches, ...). Note though that commonly used balancing techniques such as oversampling or undersampling may interact with methods like Sort \& Slice by changing the relative frequencies of substructures in the training set. Since the primary goal of this study was to evaluate the relative performance of substructure-pooling techniques in a clear and technically straightforward setting, we thus decided not to add this additional layer of complexity to our experiments. However, combining techniques to counteract class imbalance with substructure-pooling methods like Sort \& Slice might reveal unknown synergies and could form an interesting project for future research.

Parts of~\Cref{fig:moleculenetlipophilicity,fig:aqsoldbsolubility,fig:posterasarscov2mpro,fig:amesmutagenicity,fig:litpcbaesrant} seem to suggest that the improvements achieved via Sort \& Slice over hashing tend to become more pronounced as the fingerprint length decreases, the fingerprint diameter increases and as standard atomic invariants are used instead of pharmacophoric invariants. These observations strongly support the idea that the predictive advantage of Sort \& Slice over hashing stems at least partially from an absence of bit collisions: unlike ECFPs generated via Sort \& Slice, hashed ECFPs exhibit more and more bit collisions as the dimension of the fingerprint descreases relative to the number of substructures identified in the data set. The number of identified substructures in turn increases with the fingerprint diameter and when switching from pharmacophoric to standard atomic invariants. An increase in bit collisions then seems to degrade the predictive performance of hashed ECFPs relative to those generated by Sort \& Slice.

A question that remains, however, is whether the predictive advantage of Sort \& Slice is merely a product of the general avoidance of bit collisions via the selection of a subset of substructures instead of the hashing of all substructures; or if and to what extent the particular unsupervised substructure selection scheme underlying Sort \& Slice (i.e.~sorting of substructures according to their frequency in the training set and subsequent exclusion of rare substructures) independently contributes to the performance gain. Surprisingly, Figure~\ref{fig:boxplotswithtitles} shows that Sort \& Slice not only beats hashing, but it also consistently outperforms the two other investigated substructure-pooling techniques (filtering and MIM), whose respective performance measurements tend to fall between hashing and Sort \& Slice. Note that just like Sort \& Slice, both of these techniques are based on substructure selection and lead to fingerprints that are entirely free of bit collisions. 

These observations reveal two points. Firstly, the performance gains provided by Sort \& Slice, filtering and MIM over standard hashing are not purely the result of avoiding bit collisions via substructure selection; but the specific strategy by which substructures are selected for the final fingerprint does indeed make an important difference for downstream predictive performance. Secondly, and perhaps remarkably, the extremely simple frequency-based substructure selection strategy implemented by Sort \& Slice outperforms the more technically advanced substructure selection schemes underlying filtering and MIM. This is in spite of the fact that, unlike MIM and filtering, Sort \& Slice is an unsupervised technique that does not utilise any information associated with the training label. It appears surprising that Sort \& Slice would beat technically sophisticated supervised feature selection methods such as filtering or MIM that select substructures using task-specific information. While the reasons for this are not obvious, it is generally conceivable that exploiting the training label when selecting substructures could potentially harm the generalisation abilities of a machine-learning system by contributing to its risk of overfitting to the training data (just like any other aspect of supervised model training could).

A natural extension of our study for a future research project could be to include additional substructure-selection techniques. One interesting approach would be to only select substructures that maximise feature variance based on the training set.
If $p(\mathcal{J}) \in (0,1]$ represents the fraction of training compounds in which a detected substructure $\mathcal{J}$ is present, then its associated empirical feature variance in the context of a binary fingerprint is given by $p(\mathcal{J})(1-p(\mathcal{J}))$. The maximisation of feature variance would correspond to the removal of almost-constant columns of the feature matrix and would reflect a common data preparation strategy from traditional QSAR modelling. Both the empirical feature-variance $p(\mathcal{J})(1-p(\mathcal{J}))$ of a substructure $\mathcal{J}$ and its empirical information entropy $H(p(\mathcal{J}))$ as introduced in Remark~\ref{remark: sort_and_slice_info} peak when $p(\mathcal{J}) = 1/2$, i.e.~when $\mathcal{J}$ is present in exactly half of all training compounds. It is easy to prove that, in the case of binary fingerprints, ranking substructures according to $p(\mathcal{J})(1-p(\mathcal{J}))$ is equivalent to ranking them according to $H(p(\mathcal{J}))$. In this sense, feature-variance maximisation is equivalent to entropy maximisation and both methods translate to the removal of high-frequency as well as low-frequency substructures. At first glance, this approach might seem significantly different from Sort \& Slice which is based on the exclusion of only low-frequency substructures. However, note that if there are no high-frequency substructures, then naturally Sort \& Slice, feature-variance maximisation and entropy maximisation all simply slice away low-frequency substructures from the binary fingerprint and are thus all the same. In Remark~\ref{remark: sort_and_slice_info}, we have given a mathematical proof that Sort \& Slice, entropy maximisation, and therefore also feature-variance maximisation are in fact strictly equivalent under the assumption that no substructure appears in more than half of all training compounds. Since in common chemical data sets it is usually true that only very few substructures exist in more than half of all training compounds, Sort \& Slice should be expected to closely approximate feature-variance maximisation (and entropy maximisation) in realistic settings while arguably being somewhat easier to describe and implement. It would be interesting to computationally compare the performance of substructure selection via feature-variance maximisation with Sort \& Slice, to check whether in practice the exclusion of a small number of high-frequency substructures has a significant effect after all, or whether indeed both methods lead to a very similar level of performance as suggested by the theoretical arguments in Remark~\ref{remark: sort_and_slice_info}.

Another compelling feature selection technique that could be used for substructure pooling is given by \textit{conditional} MIM as described by~\citet{fleuret2004fast}. Conditional MIM can be seen as a more sophisticated version of MIM that iteratively selects features that maximise the mutual information with the training label conditional on the information contained in any feature already picked. While MIM and conditional MIM both select features that are individually informative about the training label, conditional MIM is additionally designed to reduce redundancy by selecting features that also exhibit weak pairwise dependence and thus contain distinct pieces of information about the target variable. Conditional MIM can be a stronger choice than MIM in scenarios where there is a large informational overlap between features; on the other hand, if all features are perfectly independent, then MIM and conditional MIM become mathematically equivalent. A limitation of conditional MIM in the context of substructure pooling for ECFPs is its computational cost when it comes to the selection of large numbers of features; even the fast implementation of conditional MIM provided by~\citet{fleuret2004fast} may be slow to select hundreds or even thousands of substructures out of an even larger substructure pool. This might make the generation of vectorial ECFPs with usual lengths such as $1024$ or $2048$ bits impractical or even intractable when conditional MIM is used for substructure pooling. One way to address this problem is by instead using simple MIM as we did in our study; MIM can be interpreted as a natural simplification of conditional MIM that remains computationally feasible even in very high feature dimensions at the price of potentially leading to more feature redundancy. In a future study, it might still be worthwhile to explore the predictive abilities of low-dimensional vectorial ECFPs generated via substructure-pooling operators based on conditional MIM; it is conceivable that conditional MIM could generate a short yet effective and information-dense ECFP vectorisation whose performance may match or possibly even surpass the one of much longer hashed ECFPs.

Finally, note that the results for the SARS-CoV-2 main protease data set in Figure~\ref{fig:posterasarscov2mpro} that are based on a random data split are fully comparable to the QSAR-prediction results of the nine models that we investigated in our computational study in Chapter~\ref{chap: qsar_ac_study} (see~Figure~\ref{fig:ac_results_sarscov2mpro}). In both studies, we used the same data set, the same data splitting technique (random split), and the same evaluation scheme ($2$-fold cross validation repeated with the same $3$ random seeds across both studies). One difference is that, in the previous study from Chapter~\ref{chap: qsar_ac_study}, we fully optimised the kNN, RF and MLP hyperparameters, but only experimented with a single type of ECFP (hashed $2048$-bit ECFP$4$), while in the current study we kept the RF and MLP hyperparameters constant but explored a large chunk of the ECFP hyperparameter space. We can see that the strongest QSAR-predictor in~Figure~\ref{fig:ac_results_sarscov2mpro} is given by a hashed $2048$-bit ECFP$4$ combined with a hyperparameter-optimised MLP which reaches an MAE slightly above $0.42$.
In contrast, Figure~\ref{fig:posterasarscov2mpro} shows that the same $2048$-bit ECFP$4$ combined with an MLP based on our intuitively set hyperparameters from Table~\ref{tab: substruc_pool_exper_params_rf_mlp} reaches an MAE slightly below $0.42$. This shows that in this setting our custom MLP hyperparameter choice is essentially as performant as the computationally optimised MLP hyperparameters from our previous study. We further see in Figure~\ref{fig:posterasarscov2mpro} that Sort \& Slice once again leads to slightly better performance than hashing for the $2048$-bit ECFP$4$ combined with our custom MLP on a random split. This suggests that the predictive performance of the best QSAR-predictor for SARS-CoV-2 main protease binding affinity from our previous computational study from Chapter~\ref{chap: qsar_ac_study} could still have been slightly improved by vectorising the used $2048$-bit ECFP$4$s via Sort \& Slice instead of hashing.

\section{Conclusions}

We have introduced a general mathematical framework for the vectorisation of structural fingerprints via a formal operation we refer to as \textit{substructure pooling}. For structural fingerprints, substructure pooling is the natural analogue to node feature vector pooling in modern GNN architectures. Unlike GNN pooling, substructure pooling remains largely unexplored and is almost always performed via the hashing of substructures into a vector of predefined length. Our proposed mathematical framework encompasses hash-based substructure pooling, but also pooling operations based on a diverse set of alternative techniques such as supervised and unsupervised feature selection. Trainable permutation-invariant set functions operating on sets of substructure embeddings also fit into the given framework, and the future exploration of such advanced substructure-pooling methods might form an interesting opportunity for novel research. For example, in Section~\ref{sec: diff_substruc_pool} below we introduce our idea of a novel trainable substructure-pooling technique based on a differentiable self-attention mechanism.

As part of our work, we have mathematically described and experimentally evaluated a method we refer to as \textit{Sort \& Slice} as an alternative to hashing for substructure pooling of ECFP substructures. In a nutshell, Sort \& Slice is based on first ranking all identified substructures in the training set according to their frequency of occurrence and then constructing a binary fingerprint that only indicates the presence or absence of the most frequent substructures. Sort \& Slice is easy to implement and interpret and leads to increased predictive performance for supervised molecular machine learning tasks. Formally, Sort \& Slice can be interpreted as a simple unsupervised feature selection scheme. We have given a mathematical proof that, under reasonable theoretical assumptions that are approximately valid for realistic data sets, Sort \& Slice filters out all but the most informative substructures from an information-entropic perspective.

Due to its natural simplicity, variations of Sort \& Slice might have already been used by other researchers in practical scenarios in the past. However, we are not aware of any occurrence of our version of Sort \& Slice in a formal research paper. In particular, we are not aware of any rigorous experimental comparison of Sort \& Slice and standard hash-based substructure pooling outside of this work. To the best of our knowledge, only one other version of Sort \& Slice has been systematically explored~\citep{macdougall2022reduced}; however, unlike our version of Sort \& Slice, this slightly different technique only allows limited control over the dimension of the vectorial fingerprint and was evaluated in a less general experimental setting.

In summary, our experiments show that Sort \& Slice tends to generate higher (and sometimes substantially higher) downstream predictive performance than hashing for a variety of molecular property prediction tasks. This predictive advantage seems to exist across regression and classification data sets, balanced and imbalanced tasks, data splitting techniques, machine-learning models, and ECFP hyperparameters, and appears to increase with the expected number of bit collisions in the hashed ECFP. Perhaps surprisingly, Sort \& Slice not only seems to outcompete hashing but also two more technically sophisticated supervised substructure selection schemes. 
This suggests that simply sorting substructures according to frequency of occurrence in the training set and then discarding infrequent substructures is a relatively (and maybe unexpectedly) strong feature selection strategy. Based on the predictive advantage of Sort \& Slice, its technical simplicity, and its ability to improve fingerprint interpretability by avoiding bit collisions, we recommend that it should canonically replace hashing as the standard substructure-pooling technique for supervised molecular machine learning.

\newpage $\text{}$ 
\newpage
\chapter[Future Directions]{Future Directions} \label{chap: future_research}

In this Chapter, we briefly describe two ideas we developed that could potentially form the seeds for two future research projects.

\section[A Graph-Based Self-Supervised Learning Strategy to Make Classical Molecular Featurisations Trainable]{A Graph-Based Self-Supervised Learning \\ Strategy to Make Classical Molecular \\ Featurisations Trainable}

\label{sec: gnn_pretraining_non_diff_to_diff}

In our study from Chapter~\ref{chap: qsar_ac_study}, we showed that classical ECFPs consistently outperform trainable GINs at QSAR-prediction in a rigorous evaluation setting involving a robust series of random data splits and full hyperparameter-optimisation loops. This runs counter to the hopes that message-passing GNNs might be able to beat classical featurisations at molecular property prediction via their abilities to extract chemical knowledge directly from molecular graphs in a differentiable manner.

To enable graph-based featurisation methods to reach their full predictive potential and possibly break through the performance ceiling posed by ECFPs, we have developed a novel self-supervised learning strategy for GNNs based on predicting precomputed ECFPs from a potentially giant corpus of unlabelled molecular graphs. The pre-trained GNN can then be seamlessly combined with an ECFP-MLP model trained on a supervised molecular property prediction task such as QSAR-prediction. Our suggested learning strategy is partially motivated by recent observations that self-supervised pre-training followed by task-specific supervised fine-tuning can lead to impressive results in the image domain~\citep{chen2020big}. 

Our proposed scheme is divided into three steps that are visualised in Figure~\ref{fig:gnn_ecfp_self_supervised}.
\begin{figure}[!t]
	\centering
	\includegraphics[width=0.99\linewidth]{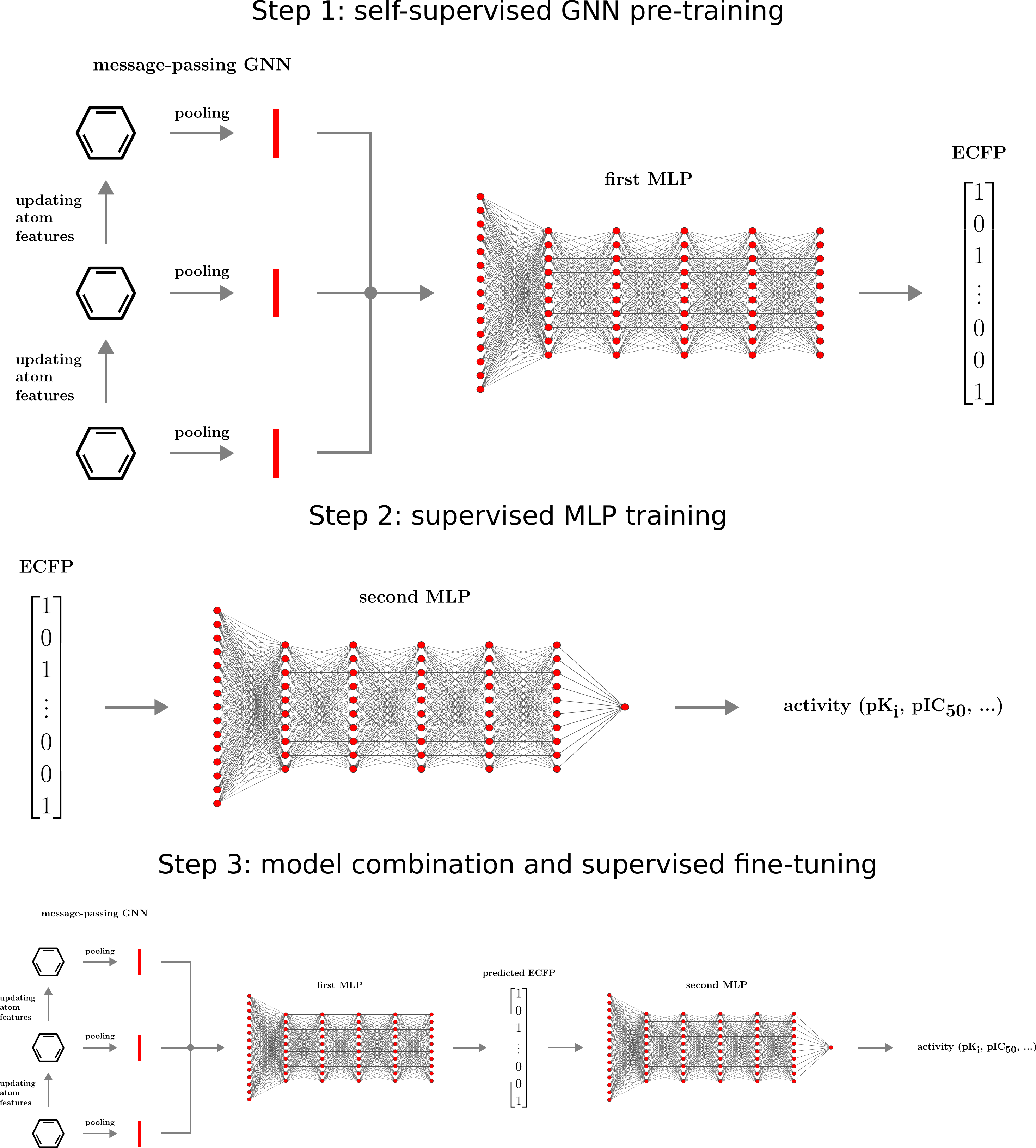}
	\caption[Self-supervised pre-training and fine-tuning strategy for GNNs.]{Step 1: Self-supervised pre-training of a graph neural network~(GNN) to predict precomputed extended-connectivity fingerprints~(ECFPs) from a large corpus of unlabelled molecular graphs. Step 2: Supervised training of a standard ECFP-based multilayer perceptron~(MLP) on a given molecular property prediction task of interest. Step 3: Combination of both pre-trained models and fine-tuning of the resulting end-to-end model on the supervised task from Step~2.}
	\label{fig:gnn_ecfp_self_supervised}
\end{figure}
Step 1 is based on pre-training a GNN to predict ECFPs from a large number of unlabelled molecular graphs, Step 2 represents training of a standard ECFP-MLP model on a supervised molecular property prediction task, and Step 3 involves plugging together both models from the two previous steps to create a graph-based predictor. The performance of this graph-based predictor must necessarily match the one of the standard ECFP-MLP model if the GNN part has indeed managed to successfully learn to generate ECFPs from molecular graphs. Notably, in Step 3 the combined end-to-end model can be further fine-tuned on the supervised task whereby the training signal then flows directly from the molecular graph to the training label. This final fine-tuning step might improve the performance of the combined model above that of´ the standard ECFP-MLP model and in this manner beat the state of the art. By using Sort \& Slice ECFPs instead of hashed ECFPs, we can build on our already improved baseline.

During the supervised fine-tuning process, the learnt ECFP representation generated by the GNN is expected to morph in a task-specific manner that benefits the predictive performance of the larger model. From this perspective, the proposed scheme can be interpreted as a way to make non-trainable classical precomputed molecular featurisations such as ECFPs differentiable and trainable. Note that the training strategy outlined in Figure~\ref{fig:gnn_ecfp_self_supervised} is not limited to ECFPs but can also be employed with PDVs, MACCS fingerprints or any other classical molecular featurisation method, as long as it can easily be generated for a large number of compounds.

The features extracted by the early layers of the pre-trained GNN that are close to the molecular graph can be seen as a novel type of neural fingerprint whose information content and predictive power could be explored. Finally, attempting to use message-passing GNNs to learn a differentiable mapping from molecular graphs to ECFPs might reveal their practical (in)abilities to correctly decipher chemical substructures; such insights could guide the way to further improvements of graph-based molecular featurisation methods in drug discovery.

\section[Trainable Substructure Pooling via Differentiable Self-Attention]{Trainable Substructure Pooling via \\ Differentiable Self-Attention} \label{sec: diff_substruc_pool}

The four substructure-pooling methods investigated in our study in Chapter~\ref{chap: ecfps_sort_and_slice} are all either based on hashing or on some type of supervised or unsupervised feature selection strategy. However, it is also possible to devise more complex differentiable substructure-pooling operators based on trainable deep networks. To the best of our knowledge, this research avenue is currently unexplored.

We propose to investigate substructure pooling via \textit{self-attention}. Self-attention is the key deep learning component in the famous transformer model that was introduced in the seminal paper from~\citet{vaswani2017attention} and is still leading to state-of-the-art results in natural language processing. Given a set of input vectors, self-attention intuitively enables the updating of the representation of each input vector in a learnable and context-sensitive manner, i.e.~in a manner that not only depends on the vector itself but also on the learnt interactions between the vector and all the other vectors in the input set. In this sense, each element in the set of input vectors metaphorically \textit{pays attention} to all other elements that are present, or from another perspective, the set of input vectors as a whole pays attention to \textit{itself} by considering the interactions between all of its elements (hence the name \textit{self}-attention).

To explore self-attention in the context of substructure pooling we once again consider the formal setting from Section~\ref{subsec: subtructure_pooling}. Let
$$\mathfrak{C} = \{\mathcal{C}_{1}, ..., \mathcal{C}_{m}\} $$
be a (potentially very large) set of $m$ chemical substructures and let
$$ P(\mathfrak{C}) = \{\mathfrak{A} \ \vert \ \mathfrak{A} \subseteq  \mathfrak{C} \}$$
be its power set. Furthermore, let
$$\{\mathcal{C}_{1}, ..., \mathcal{C}_{r} \} \in P(\mathfrak{C})$$
be a representation of some input compound $\mathcal{R}$ as a set of $r$ substructures in $\mathfrak{C}$. We imagine that $\mathcal{R}$ was transformed into $\{\mathcal{C}_{1}, ..., \mathcal{C}_{r} \}$ via some structural fingerprinting-method such as the ECFP or the MACCS-algorithm. We can now use some (injective) substructure embedding
$$\gamma : \mathfrak{C} \to \mathbb{R}^{w} $$
to generate a representation of $\mathcal{R}$ as a set of vectors:
$$\{\gamma(\mathcal{C}_{1}), ..., \gamma(\mathcal{C}_{r}) \} \subset \mathbb{R}^w.$$
The embedding $\gamma$ could for instance be based on one-hot encoding of substructures or on physicochemical substructure descriptors. Our goal is to update the representations of the vectors in $\{\gamma(\mathcal{C}_{1}), ..., \gamma(\mathcal{C}_{r}) \}$ in a trainable way using self-attention. 

In its simplest form, a classical self-attention layer~\citep{vaswani2017attention,lee2019set} is defined via three trainable weight matrices $W^{Q}, W^{K} \in \mathbb{R}^{w_q \times w}$ and $W^{V} \in \mathbb{R}^{w_v \times w}$. We now focus on a specific substructure representation $\gamma(\mathcal{C}_{i}) \in \mathbb{R}^w$ whose representation we want to update using these weight matrices. We start by generating a \textit{query} vector
$$q_i \coloneqq W^{Q}\gamma(\mathcal{C}_{i}) \in \mathbb{R}^{w_q}, $$
a \textit{key} vector
$$k_i \coloneqq W^{K}\gamma(\mathcal{C}_{i}) \in \mathbb{R}^{w_q}, $$
and a \textit{value} vector
$$v_i \coloneqq W^{V}\gamma(\mathcal{C}_{i}) \in \mathbb{R}^{w_v}. $$
We proceed by computing weights $$\alpha_{i,1},...,\alpha_{i,r} \in \mathbb{R}$$ by calculating the dot product between the query vector $q_i$ and the key vectors $k_1,...,k_r$ of all the other vectors in the input set:
$$\forall j \in \{1,...,r\}: \quad \alpha_{i,j} \coloneqq q_i^{T}k_j \in \mathbb{R}. $$
Each quantity $\alpha_{i,j}$ can be intuitively interpreted as a measure for how much attention the vector $\gamma(\mathcal{C}_{i})$ pays to the vector $\gamma(\mathcal{C}_{j})$ during its updating process. The attention weights are usually further normalised via a nonlinear softmax activation function:
$$(\bar{\alpha}_{i,1},...,\bar{\alpha}_{i,r}) \coloneqq \text{softmax} (\alpha_{i,1},...,\alpha_{i,r}), $$
such that 
$$\bar{\alpha}_{i,1},...,\bar{\alpha}_{i,r} > 0 \quad \text{and} \quad \sum_{j = 1}^{r} \bar{\alpha}_{i,j} = 1. $$ Finally, the updated representation of $\gamma(\mathcal{C}_{i})$ is given by a weighted sum of all value vectors:
$$\gamma(\mathcal{C}_{i})_{\text{upt}} \coloneqq \sum_{j = 1}^r \bar{\alpha}_{i,j} v_j \in \mathbb{R}^{w_v}. $$
The transformation
$$\mathbb{R}^w \supset \{\gamma(\mathcal{C}_{1}), ..., \gamma(\mathcal{C}_{r}) \} \mapsto \{\gamma(\mathcal{C}_{1})_{\text{upt}}, ..., \gamma(\mathcal{C}_{r})_{\text{upt}} \} \subset \mathbb{R}^{w_v} $$
is interpreted as the application of one self-attention layer to the set of input vectors $\{\gamma(\mathcal{C}_{1}), ..., \gamma(\mathcal{C}_{r}) \}.$ This layer can be trained like any other deep learning component by adapting its defining weight matrices $W^{Q}, W^{K}, W^{V}$ via some form of gradient descent. Several self-attention layers can naturally be stacked on top of each other to eventually generate a final vector-set representation of the input compound $\mathcal{R}$ in the form of iteratively updated substructure representations that encode structural and contextual information:
$$\{\gamma(\mathcal{C}_{1})_{\text{final}}, ..., \gamma(\mathcal{C}_{r})_{\text{final}} \} \subset \mathbb{R}^{w_{\text{final}}}.$$
Using a standard pooling function 
$$\bigoplus : \{ A \subset \mathbb{R}^{w_{\text{final}}} \ \vert \ A \ \text{is finite} \} \to \mathbb{R}^l, $$
i.e.~a permutation-invariant set function $\bigoplus$ such as summation, averaging or componentwise maximum, one can finally represent $\mathcal{R}$ as a single vector
$$ \bigoplus \{\gamma(\mathcal{C}_{1})_{\text{final}}, ..., \gamma(\mathcal{C}_{r})_{\text{final}} \} \in \mathbb{R}^l$$
that can be fed into a standard multilayer perceptron for further processing. Note that iteratively updating substructural embeddings via stacked self-attention layers and then aggregating the final substructural representations via a permutation-invariant set function formally defines a (trainable, differentiable) substructure-pooling method:
$$\Psi :  P(\mathfrak{C}) \to \mathbb{R}^l, \quad \Psi(\{\mathcal{C}_{1},...,\mathcal{C}_{r}\}) = \bigoplus \{\gamma(\mathcal{C}_{1})_{\text{final}}, ..., \gamma(\mathcal{C}_{r})_{\text{final}} \}.$$
The operator $\Psi$ once again satisfies Definition~\ref{def: substructure_pooling} introduced in Chapter~\ref{chap: ecfps_sort_and_slice}. This underlines the generality of our definition of substructure pooling which encompasses techniques such as hashing, unsupervised and supervised feature selection, and the trainable updating of sets of substructural embeddings via modern deep learning architectures.

Self-attention-based substructure pooling on top of structural fingerprints has several properties that could potentially make it an interesting featurisation method for chemical prediction tasks. The self-attention mechanism should provide a useful inductive bias to learn substructural representations that are influenced by molecular context, i.e.~by the presence or absence of other substructures. As a result, self-attention should explicitly support the learning of task-specific compound-level featurisations that depend not only on individually present substructures but also on their interactions. Note that this includes long-range interactions between substructures located at physically distant parts of the input compound. This might represent an important advantage over molecular featurisation via message-passing GNNs: the receptive field of GNNs is strictly local and thus does not allow for information flow between physically distant parts of an input compound during message-passing.

We conducted a literature search and were only able to identify one other work that has explored a technique similar to the one proposed in this section: \citet{kim2022substructure} investigate a dual architecture that combines a GNN branch operating on molecular graphs and a self-attention branch operating on substructural embeddings. They pre-train their architecture to predict precomputed physicochemical descriptors using a large corpus of unlabelled compounds and obtain encouraging results when fine-tuning their model on a range of supervised molecular property prediction tasks. Further work of this kind could attempt to refine substructural self-attention mechanisms, for example by developing more powerful pre-training schemes or by studying the effects of different types of initial substructural embeddings (such as one-hot embeddings versus physicochemical embeddings).

It might also be particularly interesting to explore the abilities of models involving self-attention-based substructure pooling to correctly predict \textit{non-additivity}~\citep{gogishvili2021nonadditivity,kwapien2022implications}. In its most narrow form, non-additivity refers to a phenomenon observed in protein-ligand binding where the change of two substructures in a ligand results in much higher or lower binding affinity than would be expected from the respective additive contributions of the single changes alone. From a theoretical point of view, self-attention-based substructure pooling appears to be well-suited to detect such non-additivity events via its ability to learn distinct representations for a given substructure conditional on the presence or absence of other substructures.

\newpage $\text{}$ 
\newpage
\chapter[Conclusions and Further Thoughts]{Conclusions and Further Thoughts} \label{chap: conclusions}

In this work, we have studied classical and graph-based molecular featurisation methods in a variety of important machine-learning scenarios for computational drug discovery. We have put a particular focus on the under-researched challenge of activity-cliff prediction which is of natural interest in compound optimisation and the elucidation of structure-activity relationships. We have (i) systematically explored the capabilities of physicochemical-descriptor vectors, extended-connectivity fingerprints and graph isomorphism networks for the prediction of quantitative structure-activity relationships, activity cliffs and potency directions, (ii) have designed a novel twin neural network model that can naturally learn to featurise compound pairs for the prediction of activity cliffs and potency directions, and (iii) have described an easily implementable method for the vectorisation of extended-connectivity fingerprints that robustly outperforms hashing at supervised molecular property prediction. We have also outlined two further research ideas in the area of molecular featurisation that can be seen as two distinct attempts to bring together the strengths of classical non-trainable featurisers and trainable deep learning components such as graph neural networks and self-attention.

Detailed conclusions from our main research projects can be found at the respective ends of~\Cref{chap: qsar_ac_study,chap: twin_net_ac_pred,chap: ecfps_sort_and_slice}. Overall, our investigations provide further evidence for the vital role that molecular featurisation plays in the performance of molecular machine learning tasks. In Chapter~\ref{chap: qsar_ac_study} we saw that switching from one featurisation technique to another can easily lead to substantial shifts in performance for both quantitative structure-activity relationship and activity-cliff prediction. The balanced activity-cliff classification performance of our twin neural network model from Chapter~\ref{chap: twin_net_ac_pred}, compared to the imbalanced performance of the evaluated baseline quantitative structure-activity relationship predictors at the same task, supports the idea that it might generally pay off to naturally adapt the featuriser to the given problem rather than trying to adapt the problem to a pre-existing featuriser. Our results from Chapter~\ref{chap: ecfps_sort_and_slice} show how even seemingly minor technical decisions such as the procedure chosen to vectorise a set of identified substructures can have a surprisingly large and consistent impact on the predictive accuracy of a structural fingerprint.

Extracting powerful features from arbitrary molecular structures is a difficult research challenge. Recent work has shown that in some cases self-supervised pre-training strategies on unlabelled molecular graphs can substantially boost the performance of graph neural networks, and graph isomorphism networks in particular~\citep{wang2021molclr,hu2019strategies}. Considering these results and the accessibility of large databases with millions of unlabelled compounds, it may be worthwhile to continue exploring the limits of self-supervised pre-training of graph neural networks in the molecular domain, for example by investigating the pre-training strategy we propose in Section~\ref{sec: gnn_pretraining_non_diff_to_diff}.

However, although in this work we have only experimented with graph isomorphism networks as prototypical examples of graph neural networks in the $1$-WL class, we still hypothesise that differentiable graph-based message-passing, while relatively useful in certain contexts such as activity-cliff prediction, might not yet be the correct learning paradigm to truly and substantially outperform the technically related and more traditional extended-connectivity fingerprints in the same way that convolutional neural networks have outperformed classical feature-engineering methods in computer vision. A fruitful area for future research might be the development of methods to overcome some of the technical shortcomings shared by both extended-connectivity fingerprints and graph neural networks such as a strictly local circular receptive field and limited theoretical expressivity. For example, a notable attempt in this direction has recently been made by~\citet{bouritsas2022improving} who managed to increase the theoretical expressivity of message-passing graph neural networks via a technique based on subgraph isomorphism counting. Another interesting avenue has been explored by~\citet{ying2021transformers} who introduced a transformer-based graph featuriser with a global receptive field that has achieved strong results across a variety of benchmarks.

It is also worth noting that molecular graphs are not entirely general but rather obey certain constraints dictated by the laws of chemistry; current message-passing graph neural networks, on the other hand, are highly general architectures that can essentially operate on any graph structure. One can speculate that it might be possible to somehow constrain the neural architecture of graph-based machine learning methods in a way that more directly leverages the chemical rules that govern the structure of molecules. 

Finally, it might be useful to consider that one of the central limitations of current molecular featurisation methods may not be in the technical details of the featurisation itself, but in the information content of the original molecular representation from which the features are extracted. In the vast majority of cases, molecular featurisations for supervised prediction tasks are derived either from molecular string representations such as SMILES strings or from molecular graphs. Both of these representations usually fully encode the chemical composition and two-dimensional connectivity structure of an input compound, along with simple 3D attributes such as tetrahedral R-S chirality and E-Z double bond geometry. While this appears comprehensive, it is possible to imagine that a real molecule might have other relevant physicochemical properties that cannot be easily derived from these pieces of information alone, such as more complex stereochemical features based on its ensemble of conformers or even quantum-chemical characteristics associated with its electronic structure. Developing novel featurisation methods adapted to more realistic molecular representations whose information content strictly surpasses that of molecular graphs and SMILES strings may be a promising area for future research.

\newpage $\text{}$ 
\newpage
\chapter*{Summary of Research Contributions}
\addcontentsline{toc}{chapter}{Summary of Research Contributions}

\section*{Published Peer-Reviewed Research Papers}
\begin{itemize}
	
	\item Markus Dablander, Thierry Hanser, Renaud Lambiotte, and Garrett M.~Morris. Exploring QSAR models for activity-cliff prediction. \textit{Journal of Cheminformatics}, 15(1),~47,~2023. \href{https://doi.org/10.1186/s13321-023-00708-w}{Link to paper.}
	
	\item Julius Berner, Markus Dablander, and Philipp Grohs. Numerically solving parametric families of high-dimensional Kolmogorov partial differential equations via deep learning. \textit{Advances in Neural Information Processing Systems}, 33, 16615-16627, 2020. \href{https://arxiv.org/pdf/2011.04602}{Link to paper.}
	
	My friend and colleague Julius Berner and I wrote this NeurIPS paper as shared first authors under the supervision of Prof.~Philipp Grohs from the University of Vienna. This independent research project was conducted by us in parallel to my main doctoral studies.
\end{itemize}

\section*{Technical Reports from Industrial Study Groups}

\begin{itemize}
	
	\item Ann Smith, Markus Dablander, Constantin Octavian Puiu, Brady Metherall, William Lee, Ruzanna Ab Razak, and Noriszura Ismail. Tourism Forecasting and Environment. \textit{Mathematics in Industry Reports}, 2023. \href{https://doi.org/10.33774/miir-2024-rqvbr}{Link to ESGI report.}
	
	\item Simone Appella, Anvarbek Atayev, Oliver Bond, Ben Collins, Markus Dablander, Nikolai Fadeev, Andrew Lacey, Piotr Morawiecki, Hilary Ockendon, Davide Polvara, Ellen Powell, Lorenzo Quintavalle Laval, Eddie Wilson, and Yang Zhou. Determining the conductance of networks created by randomly dispersed cylinders. \textit{Mathematics in Industry Reports}, 2021. \href{https://doi.org/10.33774/miir-2021-3pqt1-v2}{Link to ESGI report.}
\end{itemize}

\section*{Conference Presentations}

\begin{itemize}
	
	\item Markus Dablander, Thierry Hanser, Renaud Lambiotte, and Garrett M.~Morris. Exploring molecular machine learning models for activity-cliff prediction. Poster presentation at the 10th International Congress on Industrial and Applied Mathematics (ICIAM). In-person, Tokyo, 2023. \href{http://dx.doi.org/10.13140/RG.2.2.35914.34241}{Link to poster.}
	
	\item Markus Dablander, Thierry Hanser, Renaud Lambiotte, and Garrett M.~Morris. Siamese neural networks work for activity cliff prediction. Poster presentation at the 4th RSC-BMCS / RSC-CICAG Artificial Intelligence in Chemistry Symposium. Virtual, 2021. \href{http://dx.doi.org/10.13140/RG.2.2.18137.60000}{Link to poster.} 
	
	\item Julius Berner, Markus Dablander, and Philipp Grohs. Numerically solving parametric families of high-dimensional Kolmogorov partial differential equations via deep learning. Poster presentation at the Thirty-fourth Conference on Neural Information Processing Systems (NeurIPS). Virtual, 2020. \href{http://dx.doi.org/10.13140/RG.2.2.20514.85443}{Link to poster.}
	
\end{itemize}

\section*{Visited Industrial Study Groups}

\begin{itemize}
	
	\item ESGI 171 in Edinburgh, UK, in-person (2023).
	
	\item ESGI 156 in Ålesund, Norway, in-person (2022).
	
	\item ESGI 165 in Durham, UK, virtual (2021).
	
	\item ESGI 162 in Leeds, UK, virtual (2020).
\end{itemize}

\section*{Awards and Prizes}
\begin{itemize}
	
	\item Winner of InFoMM Doctoral Prize Scheme. Associated with InFoMM-funded post-doctoral research position at the Mathematical Institute, University of Oxford.
	
	\item Second Prize at the 2021 Smith Institute's TakeAIM Competition for showcasing the potential impact of my computational research on activity cliffs via a short text accessible to non-experts.
	
	\item Winner of the Royal Society of Chemistry Prize for the Best Scientific Poster at the 4th RSC-BMCS / RSC-CICAG Artificial Intelligence in Chemistry Symposium.
	
\end{itemize}

\section*{Published Code}
\begin{itemize}
	
	\item Codebase to reproduce and extend the computational experiments from our published paper~\textit{Exploring QSAR Models for Activity-Cliff Prediction}~\citep{dablander2023exploring}.  \href{https://github.com/MarkusFerdinandDablander/QSAR-activity-cliff-experiments}{Link to GitHub repository.}

\end{itemize}

\bibliographystyle{unsrtnat}
\bibliography{refs.bib}

\end{document}